\documentclass{article}

\usepackage{microtype}
\usepackage{graphicx}
\usepackage{subcaption}
\usepackage{booktabs} 

\usepackage{hyperref}

\usepackage{multirow}   
\usepackage{makecell}  
\usepackage{array}

\usepackage[preprint]{icml2026}

\usepackage{amsmath}
\usepackage{amssymb}
\usepackage{mathtools}
\usepackage{amsthm}

\usepackage{titletoc}
\usepackage{tikz}
\usetikzlibrary{arrows.meta,positioning,fit,calc,backgrounds}

\usepackage[capitalize,noabbrev]{cleveref}

\theoremstyle{plain}
\newtheorem{theorem}{Theorem}[section]
\newtheorem{proposition}[theorem]{Proposition}
\newtheorem{lemma}[theorem]{Lemma}
\newtheorem{corollary}[theorem]{Corollary}
\theoremstyle{definition}
\newtheorem{definition}[theorem]{Definition}
\newtheorem{assumption}[theorem]{Assumption}
\theoremstyle{remark}
\newtheorem{remark}[theorem]{Remark}

\newcommand{\tbd}[1]{#1}

\usepackage[textsize=tiny]{todonotes}

\icmltitlerunning{HyperMLP: An Integrated Perspective for Sequence Modeling}

\begin{document}

\twocolumn[
  \icmltitle{HyperMLP: An Integrated Perspective for Sequence Modeling}



  \icmlsetsymbol{equal}{*}

  \begin{icmlauthorlist}
    \icmlauthor{Jiecheng Lu}{yyy}
    \icmlauthor{Shihao Yang}{yyy}
  \end{icmlauthorlist}

  \icmlaffiliation{yyy}{Georgia Institute of Technology, Atlanta, US}

  \icmlcorrespondingauthor{Jiecheng Lu}{jlu414@gatech.edu}
  \icmlcorrespondingauthor{Shihao Yang}{
shihao.yang@isye.gatech.edu}

  \icmlkeywords{Machine Learning, ICML}

  \vskip 0.3in
]



\printAffiliationsAndNotice{}  
 
\begin{abstract}
Self-attention is often viewed as probabilistic query-key lookup, motivating designs that preserve normalized attention scores and fixed positional semantics. We advocate a simpler and more unified perspective: an autoregressive attention head can be viewed as a dynamic two-layer MLP whose weights are instantiated from the context history. From this view, attention scores form an ever-growing hidden representation, and standard MLP activations such as ReLU or GLU naturally implement input-conditioned selection over a context-dependent memory pool rather than a probability distribution. Based on this formulation, we introduce \textbf{HyperMLP} and \textbf{HyperGLU}, which learn dynamic mixing in both feature space and sequence space, using a reverse-offset (lag) layout to align temporal mixing with autoregressive semantics. We provide theoretical characterizations of the expressivity and implications of this structure, and empirically show that HyperMLP/HyperGLU consistently outperform strong softmax-attention baselines under matched parameter budgets. Code is available at \href{https://github.com/LJC-FVNR/HyperMLP}{\underline{this link}}.
\end{abstract}

\section{Introduction}

Transformers, built on self-attention, dominate sequence modeling and serve as the backbone of foundation models across language, vision, speech, and decision making \citep{vaswani2017attention,devlin2019bert,gpt3,dosovitskiy2020image,chen2021decision,touvron2023llamaopenefficientfoundation}. In autoregressive (AR) generation, attention enables parallel training and efficient inference: each new token attends to the prefix via an incrementally maintained KV cache. Stacked with residual connections, this prefix access supports long-range retrieval and is closely tied to in-context learning behaviors \citep{olsson2022induction,elhage2021mathematical}.

The empirical ``scaling laws'' have further pushed models toward larger parameter size training \citep{kaplan2020scalinglaws,hoffmann2022chinchilla}. 
However, recent industrial practice suggests that the marginal returns of continued scaling are increasingly difficult to sustain \citep{hu2024openai,gundlach2025meek}, bringing the question of how to improve capability per unit resource back to the research community. A common response is the efficiency direction: subquadratic attention or alternative sequence models to reduce per-token cost while preserving capability \citep{gu2023mamba,poli2023hyena}. Yet such changes often restrict the attention block's function class, leading to saturation and a persistent gap to full quadratic attention. This motivates a complementary expressivity direction: \textbf{revisiting what the attention block parameterizes} to improve expressive ability under the same parameter/compute budget.

Classically, attention is often introduced as a mechanism that looks fundamentally different from standard MLP/CNN/RNN: a content-addressable lookup in which a query matches against keys, softmax converts the resulting scores into a distribution over positions, and the output is an expectation-style read of the corresponding values \citep{vaswani2017attention,graves2014neural,miller2016keyvalue}. This probabilistic view shapes both mechanistic interpretation (e.g., attention maps as alignments or importance signals, with debated faithfulness) \citep{jain2019attention,abnar2020quantifying} and engineering practice, where many efficient variants aim to approximate the same probability-over-positions behavior \citep{child2019generating,beltagy2020longformer,zaheer2020big}. However, recent results suggest the probability-simplex constraint may not be essential and can be restrictive: alternative normalizations and signed/affine weightings can improve general ability \citep{richter2020normalized,ye2025differential,lv2024negativeweights}. These results motivate our stance: rather than treating attention scores as probabilities that must be preserved, we view an attention head as the simplest dynamic two-layer MLP, where the ``scores'' are an ever-growing hidden representation that can adopt whatever mixing geometry the data demands.

\textbf{The bitter lesson. }
Following Sutton's observation that scalable, data-driven methods tend to outperform hand-designed feature engineering \citep{sutton2019bitterlesson}, we advocate a integrated perspective: an autoregressive attention head can be simply viewed as a two-layer MLP whose weights are dynamically instantiated from the context $X_{1:t}$, and the ``attention scores'' form an ever-growing width-$t$ hidden representation. Recent evidence that lightweight sequence-axis operators (e.g., convolutions) improve attention when inserted around projections suggests that the sequence dimension itself can be an effective learnable mixing space \citep{so2021primer}. We therefore treat the score space like a standard MLP hidden space: we learn explicit sequence mixing to relax fixed positional coordinates, and we use familiar MLP activations (ReLU/GLU) with L2 normalization (similar to RMSNorm) instead of probability normalization \citep{shazeer2020glu,zhang2019rmsnorm,richter2020normalized}. This lens also directly explains several empirical design choices, such as which projections are most effective to compress/adapt in low-rank fine-tuning \citep{hu2021lora} and where gating is most beneficial \citep{hua2022gau,qiu2025gated}.\footnote{Unlike MLP-Mixer and gMLP, which apply MLPs directly along the temporal dimension and therefore do not admit efficient autoregressive use \citep{tolstikhin2021mlpmixer,liu2021payattention}, our temporal mixing is linear and embedded inside a factorized dynamic-weight parameterization, preserving AR complexity. 
Our perspective is also slightly different from classic fast-weight programming \citep{schmidhuber1992learning,schlag2021linear}: rather than parameterizing a single dynamic linear map, we factorized-parameterize a dynamic two-layer MLP whose hidden width grows with context length; see Fig.~\ref{fig:overview}(c--f).};

Building on this perspective, we propose \textbf{HyperMLP}, which learns input-conditioned mixing in both feature space and sequence space to instantiate flexible 2-layer MLP weights from the context. HyperMLP performs sequence mixing on the reverse-offset (lag) history $X_{t:1}$, aligning semantics across autoregressive steps. Using conditioned low-rank parameterizations for feature and sequence mixing and ReLU (or a GLU variant) for activation, we show theoretically and empirically that HyperMLP form an expressive superset of baseline ReLU attention and consistently outperform strong softmax attention baselines under matched parameter budgets.

Our contributions are: (i) reframing autoregressive attention as a dynamic two-layer MLP with width-$t$ hidden scores and ReLU/GLU-based input-conditioned selection; (ii) introducing HyperMLP/HyperGLU with learned feature/sequence mixing and a lag layout; (iii) providing theoretical characterizations that explain several prior attention design principles; and (iv) demonstrating consistent empirical gains over strong attention baselines at matched parameter budgets.

\section{Redesigning Attention from First Principles}
\label{sec:redesign}

\begin{figure*}[t]
\begin{center}
\centerline{\includegraphics[trim={10 0 30 0},clip,width=1\textwidth]{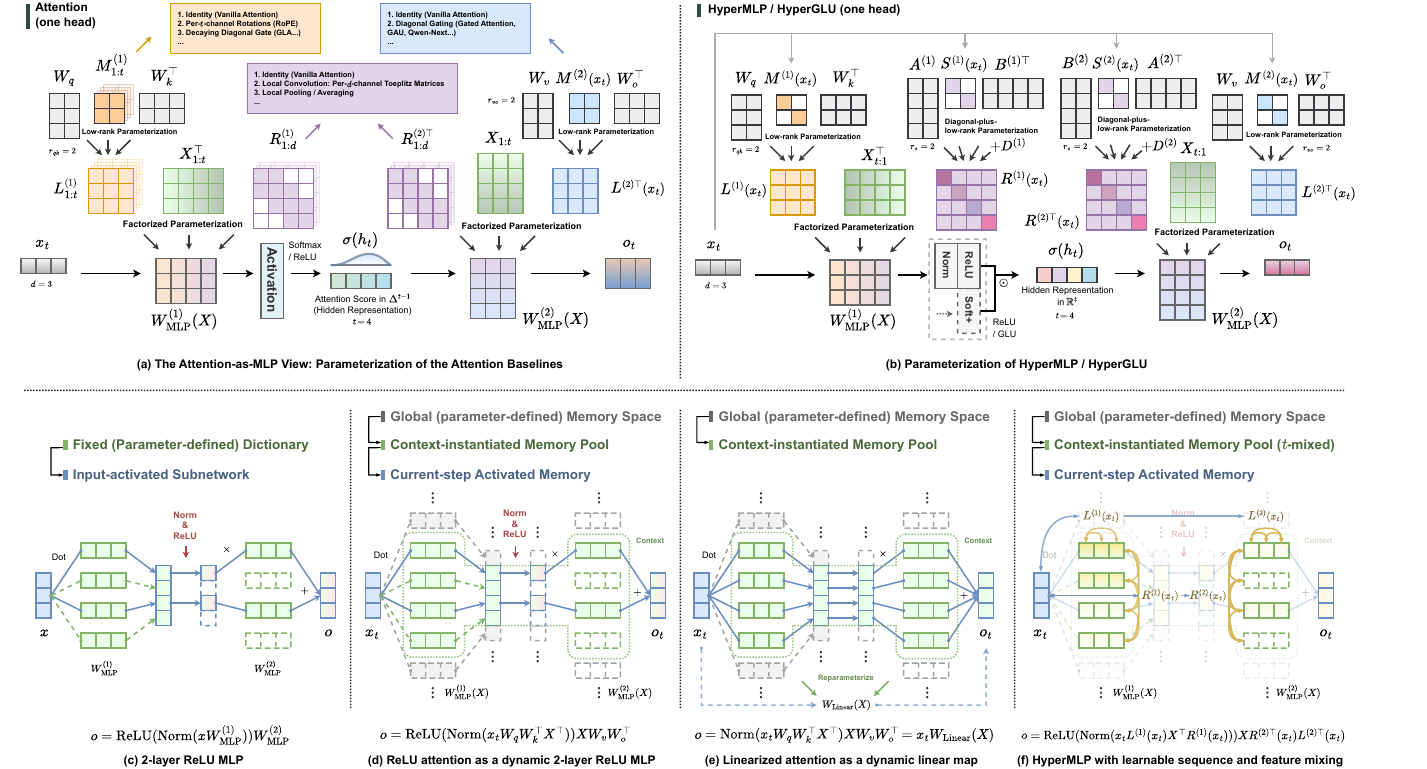}}
\caption{The integrated attention-as-MLP view: dynamic two-layer MLPs, memory instantiation, and HyperMLP.}
\label{fig:overview}
\end{center}
\vskip -0.35in
\end{figure*}

\subsection{ReLU MLPs select input-conditioned sub-networks}
We start from a depth-two ReLU MLP and omit biases as in common LLM practice:
\begin{equation}
o \;=\; \operatorname{ReLU}\!\big(x W^{(1)}_{\mathrm{MLP}}\big)\; W^{(2)}_{\mathrm{MLP}},
\qquad x\in\mathbb{R}^{1\times d}.
\label{eq:relu_mlp}
\end{equation}
As is well known, the ReLU non-linearity implements an input-conditioned gate over hidden units, so that each input effectively selects a linear sub-network of this two-layer map (see Fig. \ref{fig:overview}(c)); we give a derivation in Appendix~\ref{lem:relu-diagonal-gating}.

\subsection{Attention as a dynamic two-layer MLP}
We apply the same template to an autoregressive attention head.
Let $x_t\in\mathbb{R}^{1\times d}$ be the current token/state and
$X:=X_{1:t}=[x_1;x_2;\dots;x_t]\in\mathbb{R}^{t\times d}$ be the prefix.
For one head,
\[
q_t := x_t W_q,\qquad K := X W_k,\qquad V := X W_v,
\]
where \(W_q,\!W_k\!\in\!\mathbb{R}^{d \times d_{qk}}\) and \(W_v, W_o \in \mathbb{R}^{d \times d_{vo}}\),
write
\begin{equation}
o_t
=
\sigma\!\big(q_t K^\top\big) V\,W_o^\top
=
\sigma\!\big(x_t W_q W_k^\top X^\top\big) X W_v W_o^\top
\label{eq:classic_attention}
\end{equation}
with $\sigma(\cdot)$ typically $\operatorname{softmax}(\cdot)$.
We refer to the increasingly studied alternative that replaces softmax with a normalized ReLU-style map as \emph{ReLU attention}
\citep{richter2020normalized,zhang2021sparse,wortsman2023replacing,shen2023study,bai2023transformers}.
This choice determines whether the hidden vector is interpreted as probabilities (softmax) or as a gate for
input-conditioned selection (ReLU-style). Since ReLU attention often matches softmax empirically while being algebraically
simpler, we use the ReLU-style activation as default in our analysis; see \Cref{sec:experiments}.

The key observation is that~\eqref{eq:classic_attention} already has the similar form of a depth-two MLP, except that its weights are instantiated from the context $X$ (see Fig.~\ref{fig:overview}(a)). Define the effective (context-instantiated) matrices
\begin{align}
W^{(1)}_{\mathrm{MLP}}(X)\;&:=\;W_q W_k^\top X^\top\in\mathbb{R}^{d\times t},\\
W^{(2)}_{\mathrm{MLP}}(X)\;&:=\;X W_v W_o^\top\in\mathbb{R}^{t\times d}.
\label{eq:classic_dynamic_weights}\\
o_t&=\sigma\!\big(x_t W^{(1)}_{\mathrm{MLP}}(X)\big)\;W^{(2)}_{\mathrm{MLP}}(X),
\\
h_t&:=x_t W^{(1)}_{\mathrm{MLP}}(X)=q_t K^\top\in\mathbb{R}^{1\times t}.
\label{eq:classic_as_dynamic_mlp}
\end{align}
Thus the ``attention scores'' $h_t$ are simply a width-$t$ hidden pre-activation, and the head is a two-layer \emph{dynamic} MLP
whose hidden width grows with context length.

This viewpoint unifies many variants as edits to the same dynamic-MLP backbone (Fig.~\ref{fig:literature}; Fig.~\ref{fig:overview}(a)).
With RoPE \citep{su2024roformer}, the parameterization of $W^{(1)}_{\mathrm{MLP}}(X)$ is altered by inserting a block-diagonal
(per sequence channel of $X$) rotation,
\(
q_t k_i^\top
\;\leadsto\;
x_t\,W_q\;\mathcal{R}_{t,i}\;W_k^\top x_i^\top,
\)
while output gating \citep{qiu2025gated} inserts a diagonal gate on the readout side,
\(
X W_v W_o^\top
\;\leadsto\;
X W_v\,G(x_t)\,W_o^\top,
\,
G(x_t)\ \text{diagonal}.
\)
Under this view, multi-head attention is simply the sum of $n_{\mathrm{head}}$ parallel dynamic MLPs: concatenating weighted
values followed by a shared output projection $W_o$ is equivalent to absorbing $W_o$ into each head's parameterization
(Fig.~\ref{fig:intuition}(b)).

\begin{figure}[ht]
\begin{center}
\centerline{\includegraphics[trim={0 0 35 0},clip,width=1\columnwidth]{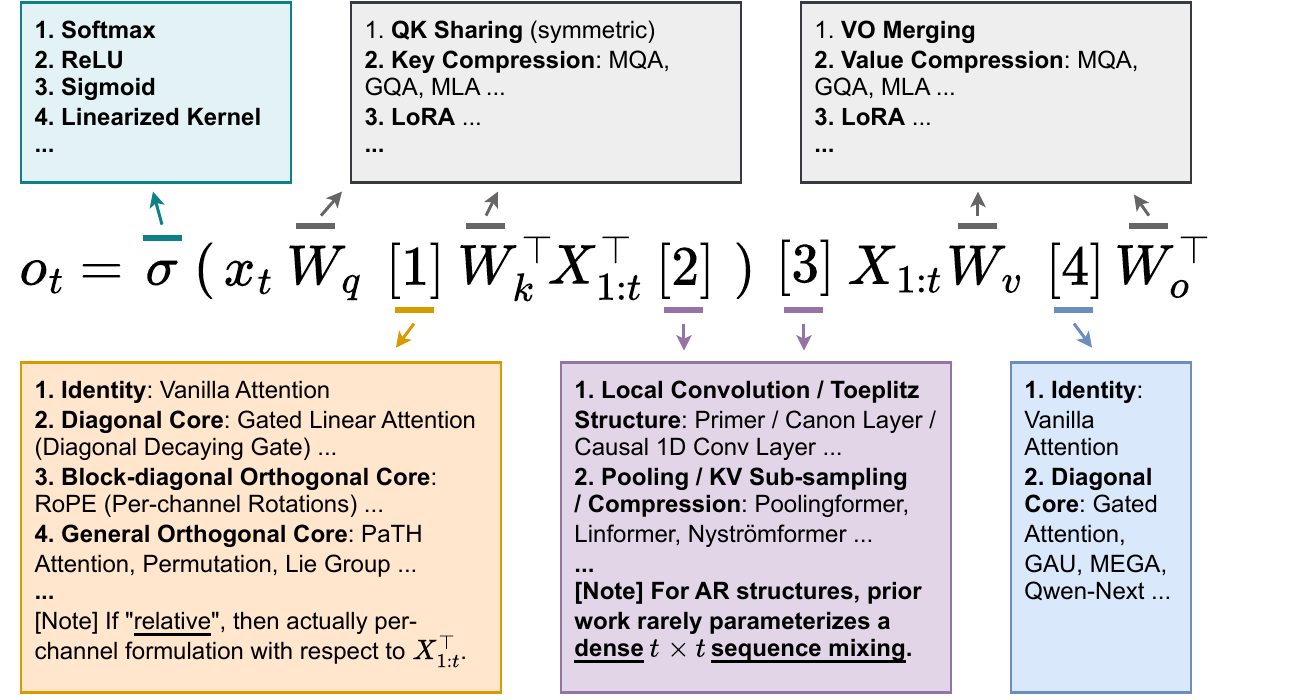}}
\caption{
Attention as a dynamic two-layer MLP: many attention variants can be viewed as edits in the same backbone: 
$\sigma$ (normalization/gating), QK feature mixing (sharing/compression/structured cores such as RoPE),
sequence-axis mixing on $X_{1:t}$ (e.g., convolution/pooling/low-rank mixing), and VO readout (merging/compression/gating).
}
\label{fig:literature}
\end{center}
\vskip -0.5in
\end{figure}

\subsection{Limitation of the classical form: a fixed positional basis in the hidden space}
Writing attention in the dynamic two-layer form~\eqref{eq:classic_as_dynamic_mlp} exposes a key limitation: although the weights are context-instantiated ($W^{(1)}_{\mathrm{MLP}}(X)=W_qW_k^\top X^\top$ and $W^{(2)}_{\mathrm{MLP}}(X)=XW_vW_o^\top$), the hidden representation $h_t\in\mathbb{R}^{1\times t}$ remains hard-tied to positions: coordinate $i$ corresponds to the interaction between $x_t$ and the $i$-th row of $X$. Since both factors pass through $X$ with no trainable mixing acting on $\mathbb{R}^t$, a single head can only gate and aggregate in this fixed positional basis, rather than learning a task-adapted basis in the hidden dimension as a standard MLP can. Recent results show that even lightweight local sequence mixing can yield substantial gains \citep{allen-zhu2025physics}, motivating explicit dense mixing along the hidden (sequence) dimension.

\subsection{HyperMLP: learning sequence mixing effectively}

\begin{figure*}[t]
\begin{center}
\centerline{\includegraphics[width=\textwidth]{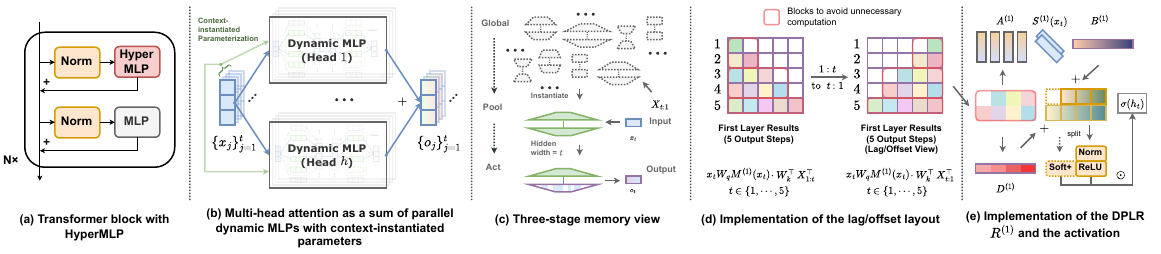}}
\caption{The integrated attention-as-MLP view: Dynamic MLP Heads, 3-stage memory, Lag Layout, and DPLR sequence Mixing. }
\label{fig:intuition}
\end{center}
\vskip -0.4in
\end{figure*}

We implement each head as a dynamic two-layer MLP whose weights are instantiated from the \textbf{lag-ordered} history
\(X_{t:1}=[x_t;x_{t-1};\dots;x_1]\in\mathbb{R}^{t\times d}\)
(see \Cref{thm:core_lag_layout}, \ref{lem:dplr-extension-consistency} and \ref{thm:offset-better} for why a reversed lag order (offset layout) better aligns the canonical length extension of sequence mixing with autoregressive truncation semantics).
Let \(d_{qk},d_{vo}\ll d\) denote the per-head left side ranks of dynamic MLP weights (typically \(d_{qk}\approx d_{vo}\approx d/n_{\mathrm{head}}\)).
For one head, we write
\begin{align}
h_t
&:=x_t\,W_{\mathrm{MLP}}^{(1)}(X_{t:1})\in\mathbb{R}^{1\times t},
\label{eq:hypermlp_ht}
\\
o_t
&:=\sigma(h_t)\,W_{\mathrm{MLP}}^{(2)}(X_{t:1})\in\mathbb{R}^{1\times d},
\label{eq:hypermlp_ot}
\end{align}
where the activation\footnote{Throughout this paper, the used \(\mathrm{L2Norm}_t(x) := x / \sqrt{\| x \|^2_2 + \varepsilon}\) is affine-free and rescales the entire length-$t$ vector; there is no per-coordinate gain or bias.} 
is
\(
\sigma(z)\;=\;\operatorname{ReLU}\!\big(\operatorname{L2Norm}_t(z)\big).
\)
The dynamic weights are instantiated by a factorized (core-on-context) parameterization:
\begin{align}
W_{\mathrm{MLP}}^{(1)}(X_{t:1})
&:=L^{(1)}(x_t)\,X_{t:1}^\top\,R^{(1)}(x_t)\in\mathbb{R}^{d\times t}
\label{eq:hypermlp_w1}
\\
W_{\mathrm{MLP}}^{(2)}(X_{t:1})
&:=R^{(2)\top}(x_t)\,X_{t:1}\,L^{(2)\top}(x_t)\in\mathbb{R}^{t\times d}
\label{eq:hypermlp_w2}
\end{align}
We write $W^{(\cdot)}(X_{t:1})$ as a slight abuse of notation $W^{(\cdot)}(X_{t:1}, x_t)$ since $x_t$ is the most recent row of $X_{t:1}$. Here \(L^{(1)}(\cdot),L^{(2)}(\cdot)\in\mathbb{R}^{d\times d}\) mix features (left space), and
\(R^{(1)}(\cdot),R^{(2)}(\cdot)\in\mathbb{R}^{t\times t}\) mix sequence slots (right space).
Both are input-conditioned with low-rank or diagonal-plus-low-rank (DPLR) forms:
\begin{equation*}
\begin{split}
L^{(1)}(x_t)
&:=W_q\,M^{(1)}(x_t)\,W_k^\top, \, \, \, \,
W_q,W_k\in\mathbb{R}^{d\times d_{qk}},
\\
L^{(2)\top}(x_t)
&:=W_v\,M^{(2)}(x_t)\,W_o^\top, \, \, \, \,
W_v,W_o\in\mathbb{R}^{d\times d_{vo}},
\\
R^{(j)}(x_t)
&:=D^{(j)}+A^{(j)}\,S^{(j)}(x_t)\,B^{(j)\top}, \, \, \, j\in\{1,2\},
\end{split}
\end{equation*}
with \begin{align*}
M^{(1)}(x_t):=\mathrm{Diag}\!\big(\phi(x_t W_M^{(1)})\big)\in\mathbb{R}^{d_{qk}\times d_{qk}},
\\
M^{(2)}(x_t):=\mathrm{Diag}\!\big(\phi(x_t W_M^{(2)})\big)\in\mathbb{R}^{d_{vo}\times d_{vo}},\\
S^{(j)}(x_t):=\mathrm{Diag}\!\big(\phi(x_t W_S^{(j)})\big)\in\mathbb{R}^{r_s\times r_s},\\
D^{(j)}:=I+\mathrm{Diag}(p^{(j)})\in\mathbb{R}^{t\times t}, \ \ A^{(j)},B^{(j)}\in\mathbb{R}^{t\times r_s},
\end{align*}
where $\phi(\cdot)=\mathrm{Sigmoid}(\cdot)$. Intuitively, \(M^{(1)}(x_t)\) and \(M^{(2)}(x_t)\) provide input-conditioned per-channel scaling inside low-rank feature mixing,
while \(R^{(j)}(x_t)\) learns input-conditioned temporal mixing through a rank-\(r_s\) update on top of a learned diagonal baseline.

\textbf{Notation and relation to prior variants. }
We keep the symbols \(W_q,W_k,W_v,W_o\) only to match attention conventions, but we emphasize a different interpretation:
they are simply trainable factors in a per-head low-rank parameterization of the feature-space mixing operators \(L^{(1)}\) and
\(L^{(2)}\); we do not ascribe special meaning to explicit ``queries/keys/values'' beyond intermediate computations
\citep{elhage2021mathematical}.
Viewing \eqref{eq:hypermlp_w1}, many familiar designs become special cases:
vanilla attention corresponds to $R^{(1)}=R^{(2)}=I$ with $\sigma=softmax$ (or a ReLU-based $\sigma$);
RoPE and output gating can be seen as inserting structured or diagonal cores into the $L$ feature mixings; convolution variants correspond to structured choices of $R$, see Fig.~\ref{fig:overview}(a-b).

\textbf{HyperGLU. }
HyperGLU replaces the ReLU hidden activation by a GLU-style modulation.
Concretely, we split the first-layer instantiation along the latent core rank (equivalently, allocate two
\(d_{qk}/2\)-dimensional cores, doubling the $n_{heads}$) to produce two
\(h_t^{\mathrm{scale}},h_t^{\mathrm{gate}}\in\mathbb{R}^{1\times t}\), and use
\begin{equation}
a_t
:=
\operatorname{Softplus}\!\big(h_t^{\mathrm{scale}}\big)\odot \operatorname{ReLU}\!\big(\operatorname{L2Norm}_t(h_t^{\mathrm{gate}})\big)
\label{eq:hyperglu}
\end{equation} with $o_t:=a_t\,W_{\mathrm{MLP}}^{(2)}(X_{t:1})$ so that selection and magnitude modulation can be decoupled (see Prop.~\ref{prop:core_hyperglu}). The parameterization of HyperMLP/GLU is illustrated in Fig.~\ref{fig:overview}(b).

\subsection{The Attention-as-MLP View: Pool instantiation, routing geometry, and autoregressive consistency}

This subsection makes precise what a HyperMLP  / attention as a dynamic 2-layer MLP head is doing at runtime.
The central object is the width-$t$ vector $h_t$: its sign pattern determines which context-dependent coordinates are selected
(routing), and the selected coordinates are then read out through a second dynamic matrix.
We use L2-based normalization only as a stabilizer; under a scalar normalization it does not change the selected set.

Throughout we adopt the regime where $\operatorname{L2Norm}_t$ is a positive scalar rescaling across the length-$t$ vector:
there exists $\rho_t:\mathbb{R}^{1\times t}\to(0,\infty)$ such that
$\operatorname{L2Norm}_t(z)=z/\rho_t(z)$.
\begin{equation}
\sigma(z)=\operatorname{ReLU}(\operatorname{L2Norm}_t(z))
=\frac{1}{\rho_t(z)}\,\operatorname{ReLU}(z),
\label{eq:core_gate_invariance}
\end{equation}
and $\mathbf{1}\{\sigma(z)>0\}=\mathbf{1}\{z>0\}$: routing is unchanged; only magnitudes are rescaled
(Appendix Lemma~\ref{lem:gate-invariance-reachable}).

\textbf{Three-stage memory view\footnote{
Autoregressive convention: at step $t$ we use the lag-ordered prefix
$X_{t:1}=[x_t;\dots;x_1]$ ($x_t$ is the newest row of $X_{t:1}$).
We sometimes write $(X,x)$ only to expose the factorization of dynamic weights; here $X=X_{t:1}$ and $x=x_t$ are not
independent.
}. }
As shown in Fig.~\ref{fig:overview}(c-f) and Fig.~\ref{fig:intuition}(c), in the attention-as-MLP view, one head passes through three memory stages: Global (parameter-defined) Memory Space $\to$ Context-instantiated Memory Pool $\to$ Current-step Activated Memory (Theorem~\ref{thm:three-stage-memory-multistage-pruning}). We
denote and summarize the forward as
$\Omega \to \mu^{\mathrm{pool}}_{X,x}\to \mu^{\mathrm{act}}_{X,x}$.

Let $\Omega:=\mathbb{R}^d\times\mathbb{R}^d$ with atoms $\omega=(u,v)$ and define
\begin{align}
\alpha_\theta(x;u)&:=x\,L^{(1)}_\theta(x)\,u^\top\in\mathbb{R},\\
\beta_\theta(x;v)&:=v\,L^{(2)\top}_\theta(x)\in\mathbb{R}^{1\times d}.
\end{align}
Given $X:=X_{t:1}$ and $x:=x_t$, define mixed contexts
\(
U:=R^{(1)\top}_\theta(x)X,\, V:=R^{(2)\top}_\theta(x)X,
\)
and slots $(u_i,v_i)=(U[i,:],V[i,:])$ for $i\in[t]$. The pool measure is
\[\textstyle
\mu^{\mathrm{pool}}_{X,x}:=\sum_{i=1}^t \delta_{(u_i,v_i)},
\]
and the activated measure is the restriction
\[
\mu^{\mathrm{act}}_{X,x}:=g_x\,\mu^{\mathrm{pool}}_{X,x},\qquad
g_x(u,v):=\mathbf{1}\{\alpha_\theta(x;u)>0\}.
\]
With this notation, the forward pass is a gated readout from the pool instantiated by the current prefix.

\begin{theorem}[Dynamic-head decomposition: MLP form and context-wide slots]
\label{thm:core_dynamic_head}
Assume $\operatorname{L2Norm}_t(z)=z/\rho_t(z)$ where $\rho_t(z)>0$ is a scalar.
The head in \eqref{eq:hypermlp_ht}--\eqref{eq:hypermlp_w2} satisfies:

\textbf{(i) Dynamic two-layer MLP.}
\begin{align}
o_t&=\sigma\!\big(x_tW_{\mathrm{MLP}}^{(1)}(X_{t:1})\big)\,
        W_{\mathrm{MLP}}^{(2)}(X_{t:1}),
\\
h_t&\in\mathbb{R}^{1\times t}\ \text{is the width-$t$ hidden pre-activation.}
\label{eq:core_dynamic_mlp_form}
\end{align}

\textbf{(ii) Pool $\to$ activated readout.}
With $(u_i,v_i)$ and $(\mu^{\mathrm{pool}}_{X,x_t},\mu^{\mathrm{act}}_{X,x_t})$ defined above,
\begin{align}
o_t&=\frac{1}{\rho_t(h_t)}\sum_{i=1}^{t}\alpha_\theta(x_t;u_i)_+\,\beta_\theta(x_t;v_i) \\
&=\frac{1}{\rho_t(h_t)}\int_{\Omega}\alpha_\theta(x_t;u)_+\,\beta_\theta(x_t;v)\;d\mu^{\mathrm{pool}}_{X,x_t}\\
&=
\frac{1}{\rho_t(h_t)}\int_{\Omega}\alpha_\theta(x_t;u)\,\beta_\theta(x_t;v)\;d\mu^{\mathrm{act}}_{X,x_t}.
\label{eq:core_slot_sum}
\end{align}
See integral notations in \eqref{eq:mech-alpha-beta}. The derivation is in~\ref{thm:three-stage-memory-multistage-pruning}.

\textbf{(iii) Sequence mixing builds context-wide slots.}
For each $i\in[t]$,
\(
u_i=\sum_{j=1}^t R^{(1)}_{j,i}(x_t)\,X_{t:1}[j,:],\ \ 
v_i=\sum_{j=1}^t R^{(2)}_{j,i}(x_t)\,X_{t:1}[j,:],
\label{eq:core_context_wide_slots}
\)
and the token-wise basis is the special case $R^{(1)}(x_t)=R^{(2)}(x_t)=I_t$.
\end{theorem}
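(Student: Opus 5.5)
The whole statement is bookkeeping: substitute the factorized weights \eqref{eq:hypermlp_w1}--\eqref{eq:hypermlp_w2} into \eqref{eq:hypermlp_ht}--\eqref{eq:hypermlp_ot}, then read off the coordinates of $h_t$ and the rows of $W^{(2)}_{\mathrm{MLP}}$ in terms of the mixed contexts $U,V$. I would prove the three parts in the order (iii), (i), (ii), since (iii) supplies the slot identities that (ii) needs.

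\textbf{Step 1 (part (iii)).} By definition $U = R^{(1)\top}(x_t)X_{t:1}$. Its $i$-th row is
\[
U[i,:]=\sum_j (R^{(1)\top})_{i,j}X_{t:1}[j,:]=\sum_j R^{(1)}_{j,i}X_{t:1}[j,:],
\]
and the same computation with $R^{(2)}$ gives $v_i$. Setting $R^{(j)}=I_t$ gives $u_i=v_i=X_{t:1}[i,:]$, which is the token-wise basis.

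\textbf{Step 2 (part (i)).} This is immediate from \eqref{eq:hypermlp_ht}--\eqref{eq:hypermlp_ot} once we note the shapes. The product $x_t\,W^{(1)}_{\mathrm{MLP}}$ has shape $1\times d$ times $d\times t$, so $h_t\in\mathbb{R}^{1\times t}$. The second-layer weight $W^{(2)}_{\mathrm{MLP}}$ has shape $t\times d$. Hence $o_t$ has exactly the two-layer MLP form, with weights that depend on $(X_{t:1},x_t)$.

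\textbf{Step 3 (part (ii)).} First I would compute the $i$-th coordinate of $h_t$. Note that $X_{t:1}^\top R^{(1)} = (R^{(1)\top}X_{t:1})^\top = U^\top$, so
\[
h_t = x_t L^{(1)}(x_t)\,U^\top, \qquad (h_t)_i = x_t L^{(1)}(x_t)\,u_i^\top = \alpha_\theta(x_t;u_i).
\]
Likewise $W^{(2)}_{\mathrm{MLP}} = V L^{(2)\top}(x_t)$, so its $i$-th row is $v_i L^{(2)\top}(x_t)=\beta_\theta(x_t;v_i)$.

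Next, by \eqref{eq:core_gate_invariance}, $\sigma(h_t)=\rho_t(h_t)^{-1}\operatorname{ReLU}(h_t)$, where $\rho_t(h_t)>0$ is a scalar and can be pulled out. Expanding the product of a row vector with a matrix row by row gives
\[
o_t=\rho_t(h_t)^{-1}\sum_i \alpha_\theta(x_t;u_i)_+\,\beta_\theta(x_t;v_i).
\]
For the integral forms:
\begin{itemize}
\item Integrating any function against a finite sum of Dirac masses $\sum_i\delta_{(u_i,v_i)}$ returns the sum of its values at the atoms. This gives the first integral identity, with repeated atoms counted with multiplicity.
\item For the second, write $a_+=a\,\mathbf{1}\{a>0\}$. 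Since $g_{x_t}(u,v)=\mathbf{1}\{\alpha_\theta(x_t;u)>0\}$, integrating against $g\,\mu^{\mathrm{pool}}$ is the same as multiplying the integrand by $g$ and integrating against $\mu^{\mathrm{pool}}$. This converts $\alpha_+\beta\,d\mu^{\mathrm{pool}}$ into $\alpha\beta\,d\mu^{\mathrm{act}}$.
\end{itemize}

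\textbf{Main obstacle.} None of these steps is hard. The only place needing care is the transpose bookkeeping in Step 3, namely recognizing $X_{t:1}^\top R^{(1)}$ as $U^\top$ and $R^{(2)\top}X_{t:1}$ as $V$, so that the index conventions $R_{j,i}$ in (iii) match. A smaller point is that the scalar normalizer $\rho_t$ depends on the whole vector $h_t$, but it is a single positive number for fixed $t$. It therefore commutes with the ReLU and factors out of the sum.
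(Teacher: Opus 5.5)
Your proposal is correct and follows the paper's proof. Part (i) is read off from the definitions. Part (ii) uses $\mathrm{ReLU}(\mathrm{L2Norm}_t(z))=\rho_t(z)^{-1}\mathrm{ReLU}(z)$ together with $X_{t:1}^\top R^{(1)}(x_t)=U^\top$ and $R^{(2)\top}(x_t)X_{t:1}=V$, giving $(h_t)_i=\alpha_\theta(x_t;u_i)$ and row $i$ of $W^{(2)}_{\mathrm{MLP}}$ equal to $\beta_\theta(x_t;v_i)$. Part (iii) is the row expansion of $U$ and $V$, and proving it first is only a cosmetic reordering.
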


\begin{proof}[Proof sketch]
(i) is immediate from~\eqref{eq:hypermlp_ht}--\eqref{eq:hypermlp_w2}.
(ii) uses $\mathrm{ReLU}(\mathrm{L2Norm}_t(z))=\rho_t(z)^{-1}\mathrm{ReLU}(z)$ and the identities
$X_{t:1}^\top R^{(1)}(x_t)=U^\top$, $R^{(2)\top}(x_t)X_{t:1}=V$; see \ref{thm:three-stage-memory-multistage-pruning} and Prop.~\ref{prop:mech-measure-readout}.
(iii) is the row expansion of $U=R^{(1)\top}(x_t)X_{t:1}$ and $V=R^{(2)\top}(x_t)X_{t:1}$; see Prop.~\ref{prop:imp2_slots_global}.
\end{proof}

Routing is the active-set partition induced by the sign pattern of $h_t$.
When the mixing is static in $x_t$, this partition is the usual hyperplane arrangement of a two-layer ReLU map; when the mixing
depends on $x_t$, the boundaries become curved level sets.

\begin{corollary}[Warped routing strictly generalizes polyhedral routing]
\label{cor:warped_routing}
Fix a context $X_{t:1}$.
If $R^{(1)}(\cdot)$ and $L^{(1)}(\cdot)$ are constant in $x_t$, then each gating
boundary $\{x_t:\ h_{t,i}=0\}$ is a hyperplane, hence the activated-set partition
of $\mathbb{R}^d$ is polyhedral.
If $R^{(1)}(x_t)$ (or $L^{(1)}(x_t)$) depends smoothly and nontrivially on $x_t$,
then the boundaries are generically curved level sets, yielding a
richer routing geometry.
\end{corollary}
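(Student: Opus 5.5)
The plan is to write the $i$-th pre-activation coordinate explicitly as a function of $x_t$ with the context held fixed, and then read off the two cases of Corollary~\ref{cor:warped_routing}. From \eqref{eq:hypermlp_ht}--\eqref{eq:hypermlp_w1},
\[
h_{t,i}(x_t)=x_t\,L^{(1)}(x_t)\,X_{t:1}^\top R^{(1)}(x_t)e_i=\alpha_\theta(x_t;u_i),
\qquad u_i=\big(R^{(1)\top}(x_t)X_{t:1}\big)[i,:],
\]
as in Theorem~\ref{thm:core_dynamic_head}(ii)--(iii). Two points about this formula matter.

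\emph{Routing is determined by sign alone.} By \eqref{eq:core_gate_invariance}, $\sigma$ only rescales by the positive scalar $\rho_t$. So the activated-set partition is exactly the partition of $\mathbb{R}^d$ into the cells $\{x_t:\operatorname{sign}(h_{t,i}(x_t))=s_i\ \forall i\}$, and the boundaries are the zero sets $Z_i=\{h_{t,i}=0\}$.

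\emph{What is held fixed.} Strictly, $x_t$ is the first row of $X_{t:1}$. Following the paper's $(X,x)$ convention, I treat the slot matrix $X_{t:1}$ as a fixed context and vary the query argument $x_t$ separately. I will state this interpretation explicitly at the start of the proof.

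\textbf{Static case.} If $L^{(1)}$ and $R^{(1)}$ are constant, set $c_i:=L^{(1)}X_{t:1}^\top R^{(1)}e_i\in\mathbb{R}^{d}$, a column vector fixed by the context. Then $h_{t,i}(x_t)=x_t c_i$ is linear in $x_t$.
\begin{itemize}
\item If $c_i\neq 0$, then $Z_i$ is a hyperplane through the origin.
\item If $c_i=0$, then $Z_i$ is all of $\mathbb{R}^d$ and unit $i$ is never active, so this degenerate case is trivial.
\end{itemize}
Each cell is an intersection of finitely many open or closed half-spaces, so the partition is polyhedral. This is the standard hyperplane arrangement of a two-layer ReLU map.

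\textbf{Input-dependent case.} Now $h_{t,i}(x)=x\,c_i(x)$ with $c_i(x)=L^{(1)}(x)X_{t:1}^\top R^{(1)}(x)e_i$, which is smooth because $\phi$ is the sigmoid. The steps are:
\begin{enumerate}
\item Show that generically $0$ is a regular value of $h_{t,i}$, so that $Z_i$ is a smooth hypersurface away from a measure-zero set. This uses Sard's theorem in the parameters $W_S^{(1)}$ and $W_M^{(1)}$.
\item Show the hypersurface is generically not a hyperplane. The cleanest route is an explicit witness rather than a full genericity argument, since "generically curved" is informal.
\end{enumerate}

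\textbf{The witness (main obstacle).} The hard step is certifying that $Z_i$ is curved for some admissible parameter choice. Since the set of parameters giving a flat $Z_i$ is cut out by analytic conditions, one non-flat witness then gives non-flatness generically (off a proper analytic subset). I would construct the witness as follows.
\begin{itemize}
\item Take $d=2$, $r_s=1$, constant $L^{(1)}$, and let only $S^{(1)}(x)=\phi(xw)$ vary.
\item Then $c_i(x)=a+\phi(xw)\,b$ for fixed vectors $a,b$, and $Z_i=\{x: x a+\phi(xw)\,x b=0\}$.
\item Choose $a=(1,0)^\top$, $b=(0,1)^\top$, and $w=(1,0)^\top$. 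The boundary becomes $x_1+\phi(x_1)x_2=0$, i.e.\ $x_2=-x_1/\phi(x_1)$.
\item This graph is not a line: its second derivative is nonzero at generic $x_1$. A short computation using $\phi'=\phi(1-\phi)$ confirms this.
\end{itemize}
Embedding this witness into the DPLR parameterization through suitable $A^{(1)}$, $B^{(1)}$, $D^{(1)}$ and a context $X_{t:1}$ realizing $a$ and $b$ completes the strict-generalization claim. The polyhedral family is recovered as the special case $W_S^{(1)}=W_M^{(1)}=0$, where the sigmoid gates are constant $1/2$.
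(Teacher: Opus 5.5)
Your proposal is correct and follows essentially the paper's route. The static case is exactly Theorem~\ref{thm:polyhedral}, and you also handle the degenerate $c_i=0$ column, which the paper ignores. The dynamic case rests on the implicit-function structure of Theorem~\ref{thm:warped}, and you certify curvature with the same sigmoid-warped DPLR witness the paper uses in Proposition~\ref{prop:param_budget_efficiency} and Example~1 of Appendix~\ref{sec:app:examples}: yours is the transposed variant $x_1+\phi(x_1)x_2=0$ rather than $x_2+\phi(cx_1)x_1=0$. That witness is a sharper certificate of non-flatness than the deformation-term heuristic of Proposition~\ref{prop:differential} cited in the paper's sketch.

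The only soft spot is your genericity upgrade. The claim that flatness of $Z_i$ is ``cut out by analytic conditions'' on the parameters is asserted rather than shown, and it is not obvious as stated. However, ``generically'' in the statement is informal and the paper proves no more, so the witness alone suffices for the strict-generalization claim.
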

\begin{proof}[Proof sketch]
Static case: Theorem~\ref{thm:polyhedral} (and Corollary~\ref{cor:R-static} for diagonal $R^{(1)}$).
Dynamic case: Theorem~\ref{thm:warped}, with deformation term in Proposition~\ref{prop:differential}.
\end{proof}

\textbf{Lag Order. }
In autoregressive generation the available prefix length varies, so we want the head to be consistent when extending the history by adding far-past tokens. Under the lag order, the canonical \emph{prefix} extension of sequence operators matches the autoregressive truncation window.

\begin{theorem}[Lag layout: extension consistency implies AR truncation invariance]
\label{thm:core_lag_layout}
Fix integers $1\le t<T$ and let $P_{t\to T}=\begin{bmatrix}I_t\\0\end{bmatrix}$.
Let $\widetilde X_{T:1}\in\mathbb{R}^{T\times d}$ be lag-ordered (newest$\to$oldest) and set
$x:=\widetilde X_{T:1}[1,:]\in\mathbb{R}^{1\times d}$.
Let the lag-prefix truncation (most recent $t$ rows) be
$X_{t:1}:=P_{t\to T}^\top \widetilde X_{T:1}=\widetilde X_{T:1}[1\!:\!t,:]\in\mathbb{R}^{t\times d}$,
so $x=X_{t:1}[1,:]$.

Assume extension consistency for $m\in\{1,2\}$:
\begin{equation}
R^{(m)}_T(x)=P_{t\to T}\,R^{(m)}_t(x)\,P_{t\to T}^\top,
\label{eq:core_extension_consistency}
\end{equation}
and padding invariance $\rho_T([z,0])=\rho_t(z)$. Then appending older history has no effect:
\(
o_T(x;\widetilde X_{T:1})=o_t(x;X_{t:1}).
\)
\end{theorem}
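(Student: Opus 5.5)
The plan is a direct block-matrix computation: push the extension-consistency identity \eqref{eq:core_extension_consistency} through the head's forward map \eqref{eq:hypermlp_ht}--\eqref{eq:hypermlp_w2}. Write $P:=P_{t\to T}$ and $\widetilde X:=\widetilde X_{T:1}$, so that $X_{t:1}=P^\top\widetilde X$. The key point is that only $R^{(m)}$ depends on the length. The feature mixings $L^{(1)}(x),L^{(2)}(x)$ depend only on $x$, and $x$ is the same row in both layouts because of the lag order. Hence $L^{(1)}(x)$ and $L^{(2)}(x)$ are literally identical in the length-$T$ and length-$t$ computations.

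\textbf{Step 1 (first layer).} Substituting \eqref{eq:core_extension_consistency} gives
\[
W^{(1)}_T=L^{(1)}(x)\,\widetilde X^\top P\,R^{(1)}_t(x)\,P^\top=L^{(1)}(x)\,X_{t:1}^\top R^{(1)}_t(x)\,P^\top=W^{(1)}_t\,P^\top .
\]
Therefore $h_T=x W^{(1)}_T=h_t P^\top=[\,h_t,\ 0\,]\in\mathbb{R}^{1\times T}$, i.e.\ the length-$T$ pre-activation is the length-$t$ one zero-padded on the far-past coordinates.

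\textbf{Step 2 (activation).} Using the scalar form \eqref{eq:core_gate_invariance} and $\mathrm{ReLU}(0)=0$, we get $\sigma(h_T)=\rho_T([h_t,0])^{-1}[\mathrm{ReLU}(h_t),0]$. By padding invariance $\rho_T([h_t,0])=\rho_t(h_t)$, so $\sigma(h_T)=\sigma(h_t)P^\top$.

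\textbf{Step 3 (second layer).} In the same way,
\[
W^{(2)}_T=R^{(2)\top}_T(x)\,\widetilde X L^{(2)\top}(x)=P\,R^{(2)\top}_t(x)\,P^\top\widetilde X\,L^{(2)\top}(x)=P\,W^{(2)}_t .
\]
Then $o_T=\sigma(h_T)W^{(2)}_T=\sigma(h_t)P^\top P\,W^{(2)}_t=\sigma(h_t)W^{(2)}_t=o_t$, since $P^\top P=I_t$.

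There is no real obstacle here. The only point needing care is the bookkeeping of transposes: extension consistency must be applied in transposed form $R_T^{(2)\top}=PR_t^{(2)\top}P^\top$ on the readout side, and the cancellation uses $P^\top P=I_t$ (not $PP^\top$, which is only a projection). I would also remark that the HyperGLU variant \eqref{eq:hyperglu} does not follow verbatim, since $\mathrm{Softplus}(0)\neq0$. It still goes through because the zero-padded gate kills those coordinates. The statement as posed, however, concerns the ReLU head only.
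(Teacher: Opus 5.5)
Your proof is correct and follows essentially the same route as the paper (Appendix Theorem~\ref{thm:offset-better}). The paper expresses the zero-padding through the mixed contexts $U_T=[U_t;0]$ and $V_T=[V_t;0]$, while you express it through the effective weights $W^{(1)}_T=W^{(1)}_tP^\top$ and $W^{(2)}_T=PW^{(2)}_t$; this is the same computation, and your HyperGLU remark is also sound, since the zero rows of $W^{(2)}_T=PW^{(2)}_t$ annihilate the padded activation coordinates regardless of their values.
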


\begin{proof}[Proof sketch]
This is Appendix Theorem~\ref{thm:offset-better} (see also Appendix Corollary~\ref{cor:truncation-invariance}).
For HyperMLP, the DPLR $R^{(m)}(\cdot)$ admits a prefix-extension-consistent construction satisfying
\eqref{eq:core_extension_consistency}; see Lemma~\ref{lem:dplr-extension-consistency}.
Padding invariance holds for $\rho_\ell(z)=\sqrt{\|z\|_2^2+\varepsilon}$ (\Cref{eq:L2norm_def}).\footnote{
In practice, we parameterize DPLR factors at a fixed maximum length and slice the length-$t$ prefix at runtime, realizing the canonical prefix embedding $R_T=P_{t\to T}R_tP_{t\to T}^\top$ used in the extension-consistency analysis.}
\end{proof}

\textbf{Implications of the attention-as-MLP view. } First, two design choices we used can be stated cleanly in this attention-as-MLP language: i) we use GLU activation since it separates selection from slot strength, and ii) under a fixed parameter budget it is better to preserve the second-layer/readout (VO) rank while shrinking the first-layer/routing (QK) rank.

\begin{proposition}[HyperGLU decouples routing and magnitude]
\label{prop:core_hyperglu}
Let $h_t^{\mathrm{gate}},h_t^{\mathrm{scale}}\in\mathbb{R}^{1\times t}$ be two
first-layer score vectors produced by the same instantiation form as $h_t$
(with split cores). With the notations in \eqref{eq:hyperglu}, the activated set depends only on the sign pattern of
$h_t^{\mathrm{gate}}$ (routing), while $\operatorname{Softplus}(h_t^{\mathrm{scale}})>0$
independently modulates magnitudes.
\end{proposition}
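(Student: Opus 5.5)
The plan is to compute the activation \eqref{eq:hyperglu} coordinatewise and use the scalar-rescaling regime \eqref{eq:core_gate_invariance}, which the paper adopts throughout. Write $\operatorname{L2Norm}_t(h_t^{\mathrm{gate}})=h_t^{\mathrm{gate}}/\rho_t(h_t^{\mathrm{gate}})$ with $\rho_t>0$ a scalar. Since $\operatorname{ReLU}(z/\rho)=\rho^{-1}\operatorname{ReLU}(z)$ for $\rho>0$, each coordinate of the activation is
\[
a_{t,i}=\frac{1}{\rho_t(h_t^{\mathrm{gate}})}\,\operatorname{Softplus}(h^{\mathrm{scale}}_{t,i})\,\big(h^{\mathrm{gate}}_{t,i}\big)_+ .
\]

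\textbf{Routing.} Define the activated set as $\{i:\ a_{t,i}\neq 0\}$, equivalently $\{i: a_{t,i}>0\}$, since every factor above is nonnegative. First, $\operatorname{Softplus}(s)=\log(1+e^s)>0$ for every real $s$, because $1+e^s>1$. Second, $1/\rho_t>0$. A product of strictly positive numbers with $(h^{\mathrm{gate}}_{t,i})_+$ is therefore nonzero exactly when $h^{\mathrm{gate}}_{t,i}>0$. This gives
\[
\mathbf 1\{a_{t,i}>0\}=\mathbf 1\{h^{\mathrm{gate}}_{t,i}>0\},
\]
independently of $h_t^{\mathrm{scale}}$. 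The activated set is thus a function of the sign pattern of $h_t^{\mathrm{gate}}$ alone. It is also the same set the ReLU head would select from $h_t^{\mathrm{gate}}$, by the gate-invariance statement \eqref{eq:core_gate_invariance} (Appendix Lemma~\ref{lem:gate-invariance-reachable}).

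\textbf{Magnitude.} On the activated set, $a_{t,i}=m_i\cdot g_i$, where $g_i=(h^{\mathrm{gate}}_{t,i})_+/\rho_t>0$ and $m_i=\operatorname{Softplus}(h^{\mathrm{scale}}_{t,i})\in(0,\infty)$. Because Softplus is a bijection from $\mathbb{R}$ onto $(0,\infty)$, $m_i$ can take any positive value by varying $h^{\mathrm{scale}}_{t,i}$, and doing so leaves the active set unchanged. Substituting into the slot-sum form of Theorem~\ref{thm:core_dynamic_head}(ii), the output becomes a readout over $\mu^{\mathrm{act}}$ determined by the gate branch, with per-slot weights $m_i g_i\,\beta_\theta(x_t;v_i)$.

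\textbf{Independence.} The remaining point is to make ``independently'' precise. The two score vectors come from split cores, i.e.\ disjoint column blocks of $W_q,W_k$ and of $W_M^{(1)}$. For fixed $x_t$ and $X_{t:1}$, $h_t^{\mathrm{gate}}$ therefore depends only on the gate-core parameters and $h_t^{\mathrm{scale}}$ only on the scale-core parameters. The sequence-mixing $R^{(1)}$ may be shared, but it enters only through the common slots $u_i$, which does not couple the two branches' parameters. So, as stated, perturbing the scale-core parameters changes magnitudes but never the routing. I expect no real obstacle beyond this bookkeeping: one must fix that ``activated set'' means the support of $a_t$ and keep the positive-scalar-normalization assumption explicit.
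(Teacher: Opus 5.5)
Your proof is correct and follows the paper's argument: the scalar $\operatorname{L2Norm}_t$ preserves the sign pattern of $h_t^{\mathrm{gate}}$ (Lemma~\ref{lem:gate-invariance-reachable}), the strictly positive Softplus factor cannot change the support, and substituting gives exactly the slot-sum \eqref{eq:hyperglu_sum_subsec}. Your parameter-level ``independence'' paragraph goes beyond the paper, which reads ``independently'' only at the level of the activation map; if $R^{(1)}$ is shared, its parameters do affect both branches, so the accurate claim is the one you end with, that perturbing the scale-core parameters alone never changes routing.
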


\begin{proof}[Proof sketch]
By~\eqref{eq:core_gate_invariance}, $\operatorname{L2Norm}_t$ preserves $\mathbf{1}\{h_t^{\mathrm{gate}}>0\}$, so routing is
set by the gate branch. The scale branch is strictly positive elementwise and therefore only modulates magnitudes.
See \S~\ref{subsec:hyperglu}, especially \Cref{eq:hyperglu_activation_subsec,eq:hyperglu_sum_subsec}.
\end{proof}

\begin{theorem}[Budget asymmetry in residual two-layer blocks]
\label{thm:core_budget_asymmetry}
Consider $f(x)=x+\sigma(xW_1)W_2$ with $x\in\mathbb{R}^{1\times d}$.
\textbf{(i)} If $\operatorname{rank}(W_2)\le r$, then $f(x)-x$ lies in a fixed subspace
of dimension at most $r$ for all $x$.
\textbf{(ii)} If $\operatorname{rank}(W_1)\le r$, then there exist
$L\in\mathbb{R}^{d\times r}$ and $\psi$ such that $f(x)=x+\psi(xL)$.
Hence, under a fixed parameter budget, shrinking the second-layer (VO) core
directly constrains the update subspace, while shrinking the first-layer (QK) core
primarily constrains conditioning/routing.
\end{theorem}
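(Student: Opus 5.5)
Both parts follow from factoring the low-rank weight matrix through its column space or row space, so the plan is two short linear-algebra arguments and then a reading of what they mean for the VO and QK cores.

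\textbf{Part (i).} Since $\operatorname{rank}(W_2)\le r$, the row space $\mathcal{S}:=\operatorname{row}(W_2)\subseteq\mathbb{R}^{1\times d}$ has dimension at most $r$. For every input $x$, the update $f(x)-x=\sigma(xW_1)W_2$ is a row vector of the form $a W_2$ with $a=\sigma(xW_1)$. Any such vector is a linear combination of the rows of $W_2$, so it lies in $\mathcal{S}$. This holds whatever $\sigma$ is and whatever $W_1$ is. The subspace $\mathcal{S}$ depends only on $W_2$ and not on $x$, which gives the fixed subspace claimed.

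\textbf{Part (ii).} Since $\operatorname{rank}(W_1)\le r$, write a rank factorization $W_1=LN$ with $L\in\mathbb{R}^{d\times r}$ and $N\in\mathbb{R}^{r\times m}$, where $m$ is the hidden width. One way is to take the columns of $L$ to be a basis of $\operatorname{col}(W_1)$, padded with zero columns if the rank is below $r$. Then $xW_1=(xL)N$. Define $\psi:\mathbb{R}^{1\times r}\to\mathbb{R}^{1\times d}$ by $\psi(z):=\sigma(zN)W_2$; this gives $f(x)=x+\psi(xL)$. The nonlinear branch therefore sees $x$ only through $r$ linear features, so $\operatorname{rank}(W_1)$ bounds how much of $x$ routing can condition on, not where the update lands.

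\textbf{Dynamic setting.} The argument needs no structure on $\sigma$ and does not use L2Norm. To apply it to the head, I would fix $x_t$ and the context, which makes $W^{(1)}_{\mathrm{MLP}}$ and $W^{(2)}_{\mathrm{MLP}}$ concrete matrices. Because $L^{(2)\top}=W_vM^{(2)}W_o^\top$, we get $\operatorname{row}(W^{(2)}_{\mathrm{MLP}})\subseteq\operatorname{row}(W_o^\top)$, a subspace of dimension $d_{vo}$ that does not depend on $x_t$ or $X_{t:1}$. This is where (i) becomes uniform over all inputs. Similarly, $W^{(1)}_{\mathrm{MLP}}$ factors through $W_q$, which plays the role of $L$ with $r=d_{qk}$. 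The only care needed is making sure the subspace in (i) is independent of the input, and the $W_o$ factor ensures this.
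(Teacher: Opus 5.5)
Your proof is correct and is essentially the paper's own argument (Appendix Theorems~\ref{thm:lowrank_W2_invariant} and~\ref{thm:lowrank_W1_quotient}): row-space containment $\sigma(xW_1)W_2\in\mathrm{Row}(W_2)$ for (i), and a rank factorization $W_1=LN$ with $\psi(z):=\sigma(zN)W_2$ for (ii). Your dynamic-head remark for (i) also matches the paper's $\mathrm{col}(W_o)$ argument in Proposition~\ref{prop:param_budget_efficiency}, but the analogue of (ii) holds there only with the dynamic weights frozen: the head's update also depends on $x_t$ through $M^{(1)}(x_t)$, $R^{(1)}(x_t)$, the second-layer factors, and the context row $x_t$ itself, so it is not a function of $x_tW_q$ alone.
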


\begin{proof}[Proof sketch]
(1) is Appendix Theorem~\ref{thm:lowrank_W2_invariant} and (2) is Appendix Theorem~\ref{thm:lowrank_W1_quotient}.
See \S~\ref{subsec:rank_allocation_mlp} and~\ref{subsec:param_budget_temporal}.
\end{proof}

Here, we present a set of theoretical results that follow naturally from the attention-as-MLP perspective. These results provide direct explanations for several empirical findings that previously required extensive experimentation to uncover. Further discussion and connections to the corresponding prior literature are provided in the appendix.

\begin{proposition}[Dynamic-MLP view explains several empirical design principles]
\label{prop:core_empirical_short}
Write one head as a residual dynamic 2-layer map
$f(x;X)=x+\sigma(xW^{(1)}(X,x))\,W^{(2)}(X,x)$.
Then:

\textbf{(i) Prefer $QK$ compression over $VO$ compression (budget allocation)} \citep{mi2025layerwise,bai2021binarybert}.
Shrinking/adapting the \emph{readout/action} side ($W^{(2)}$, i.e.\ $VO$) directly restricts the update subspace,
whereas shrinking $W^{(1)}$ (i.e.\ $QK$) primarily restricts routing/conditioning; see Theorem~\ref{thm:core_budget_asymmetry}
(and Section~\ref{subsec:rank_allocation_mlp}).

\textbf{(ii) LoRA and gates are most parameter-efficient on the readout side ($V/O$)} \citep{dixit2020howfine}.
A rank-$r$ adapter on $W^{(2)}$ yields
\begin{equation}
\Delta o=\sigma(xW^{(1)})\,\Delta W^{(2)} \in \mathrm{Row}(\Delta W^{(2)}),\ \ \dim\le r,
\label{eq:core_empirical_lora_short}
\end{equation}
so it adds up to $r$ new update directions. A $V\!\to\!O$-side gate changes only the readout map $\beta(x;v)$ while leaving the
address/routing map $\alpha(x;u)$ (and thus the active-set geometry) unchanged; see Appendix
Subsections~\ref{sec:app:imp_lora_vo} and~\ref{sec:app:imp_gated_vo} (and margin stability in Appendix Lemma~\ref{lem:ntk-gate-stability-margin}).

\textbf{(iii) Linear attention collapses routing/selection (See Fig.~\ref{fig:overview}(e))} \citep{katharopoulos2020transformers}.
Replacing activation by ungated normalization $\sigma_{\mathrm{lin}}(z)=\mathrm{L2Norm}_t(z)$ removes the
pool$\to$activated restriction: the output becomes a signed full-pool readout and the active set is always the full pool;
see Appendix Proposition~\ref{prop:impE_linear_no_prune}.

\textbf{(iv) Learnable registers enlarge the hidden pool} \citep{darcet2024vision}.
Appending $k$ register rows $R\in\mathbb{R}^{k\times d}$ increases the hidden width from $t$ to $t+k$ and (without sequence mixing)
adds exactly $k$ pool atoms:
$\mu^{\mathrm{pool}}_{X\Vert R}=\mu^{\mathrm{pool}}_{X}+\sum_{i=1}^k\delta_{(R[i,:],R[i,:])}$;
see Prop.~\ref{prop:imp3_registers_nomix} (and Prop.~\ref{prop:ever-growing-width}).
\end{proposition}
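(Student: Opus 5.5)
The plan is to prove the four items separately. Each reduces to an algebraic identity on the residual map $f(x;X)=x+\sigma(xW^{(1)}(X,x))\,W^{(2)}(X,x)$, together with the scalar-normalization identity \eqref{eq:core_gate_invariance} and the pool/activated decomposition of Theorem~\ref{thm:core_dynamic_head}(ii).

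\textbf{(i)} Invoke Theorem~\ref{thm:core_budget_asymmetry}, applied pointwise in the context. For fixed $(X,x)$ the head is an instance of $x+\sigma(xW_1)W_2$ with $W_1=W^{(1)}(X,x)$ and $W_2=W^{(2)}(X,x)=R^{(2)\top}XL^{(2)\top}$. Since $L^{(2)\top}=W_vM^{(2)}W_o^\top$, we have $\operatorname{Row}(W_2)\subseteq\operatorname{Row}(W_o^\top)$. This is a subspace of dimension at most $d_{vo}$, and it is fixed independently of $X$ and $x$. So part (i) of the theorem gives a context-independent update subspace. On the other side, $W_1=W_qM^{(1)}W_k^\top X^\top R^{(1)}$ factors through $xW_q\in\mathbb{R}^{1\times d_{qk}}$, together with the $x$-dependence entering through $M^{(1)},R^{(1)},L^{(2)},R^{(2)}$. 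Part (ii) of the theorem then gives the quotient statement, namely that routing depends on $x$ only via a rank-$d_{qk}$ projection.

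I expect this to be the main obstacle. The extra $x$-dependence through the conditioned cores means the clean form $\psi(xL)$ holds only for the routing pre-activation $h_t$, not for the full map. I would therefore state (i) as a claim about routing versus update subspace, which is exactly how the proposition phrases it.

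\textbf{(ii)} This is a direct computation. Replace $W^{(2)}$ by $W^{(2)}+\Delta W^{(2)}$ with $W^{(1)}$ unchanged. Then $h_t$, and hence $\sigma(h_t)$, are unchanged, so $\Delta o=\sigma(h_t)\Delta W^{(2)}$. Each row of this lies in $\operatorname{Row}(\Delta W^{(2)})$, which has dimension at most $r$; this is \eqref{eq:core_empirical_lora_short}. For a $V\!\to\!O$ gate, write $L^{(2)\top}\leadsto W_vG(x)M^{(2)}W_o^\top$. This changes only $\beta_\theta(x;v)$, while $\alpha_\theta(x;u)$ and therefore $g_x$ and $\mu^{\mathrm{act}}$ are untouched, by the definitions preceding Theorem~\ref{thm:core_dynamic_head}.

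\textbf{(iii)} Take $\sigma_{\mathrm{lin}}(z)=z/\rho_t(z)$. Redoing the derivation of Theorem~\ref{thm:core_dynamic_head}(ii) without the positive part gives
\[
o_t=\rho_t(h_t)^{-1}\int_\Omega \alpha_\theta(x;u)\,\beta_\theta(x;v)\,d\mu^{\mathrm{pool}}_{X,x}.
\]
Every slot with $\alpha\neq0$ therefore contributes, and the effective gate is identically $1$. The active set is the full pool and the readout is signed.

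\textbf{(iv)} Concatenate $X\Vert R$ with no sequence mixing, i.e.\ $R^{(j)}=I_{t+k}$. The hidden width becomes $t+k$. By Theorem~\ref{thm:core_dynamic_head}(iii) the slots are just the rows, so the pool measure splits additively as $\mu^{\mathrm{pool}}_X+\sum_{i=1}^k\delta_{(R[i,:],R[i,:])}$.
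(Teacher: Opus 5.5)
Your route is the same as the paper's. Items (ii)--(iv) are the same direct computations as in Appendix Subsections~\ref{sec:app:imp_lora_vo} and~\ref{sec:app:imp_gated_vo}, Proposition~\ref{prop:impE_linear_no_prune} and Proposition~\ref{prop:imp3_registers_nomix}. Item (i) is reduced to Theorem~\ref{thm:core_budget_asymmetry}, and that citation is all the paper's proof of (i) consists of.

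Your remark that $\operatorname{Row}(W^{(2)}(X,x))\subseteq\operatorname{Row}(W_o^\top)$ for every $(X,x)$ improves on that bare citation. Theorem~\ref{thm:core_budget_asymmetry}(i) concerns a static $W_2$, so applied pointwise it would only give an update subspace that moves with the context. The fixed bound, of dimension at most $d_{vo}$, is what the paper actually uses in the ``why not shrink VO'' step of Proposition~\ref{prop:param_budget_efficiency}.

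The routing half of your (i), however, asserts something false. Write
$h_t=(x_tW_q)\,M^{(1)}(x_t)\,W_k^\top X_{t:1}^\top R^{(1)}(x_t)$.
The right factor also depends on $x_t$:
\begin{itemize}
\item through $M^{(1)}(x_t)=\mathrm{Diag}(\phi(x_tW_M^{(1)}))$;
\item through $S^{(1)}(x_t)=\mathrm{Diag}(\phi(x_tW_S^{(1)}))$ inside $R^{(1)}(x_t)$;
\item in autoregressive use, through $x_tW_k$, since $x_t$ is the first row of $X_{t:1}$.
\end{itemize}
So $h_t$ is not of the form $\psi(x_tW_q)$. Theorem~\ref{thm:lowrank_W1_quotient} applied pointwise gives only a frozen-core statement, not ``routing depends on $x$ only via a rank-$d_{qk}$ projection.'' Your claim that the clean $\psi(xL)$ form survives for $h_t$ is wrong; you yourself list $M^{(1)}$ and $R^{(1)}$ as extra sources of $x$-dependence one sentence earlier.

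The fix is cheap; either of the following works:
\begin{itemize}
\item state the routing half with the input-conditioned cores frozen;
\item or state that $h_t$ depends on $x_t$ only through $x_t[W_q,W_M^{(1)},W_S^{(1)}]$ (plus $x_tW_k$ in the autoregressive case), and on older rows only through their $W_k$-projections.
\end{itemize}
The second version is a conditioning bottleneck of rank $O(d_{qk}+r_s)$ that shrinks with $d_{qk}$. Combined with your $\operatorname{col}(W_o)$ bound, in which $d_{qk}$ does not appear, this gives exactly the asymmetry the proposition claims.
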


\begin{proof}[Proof sketch]
(i) is Theorem~\ref{thm:core_budget_asymmetry}.
(ii) uses the row-space restriction of multiplying by a rank-$r$ matrix and the measure form where $V/O$-side gating modifies only
$\beta$ (\S~\ref{sec:app:imp_lora_vo},~\ref{sec:app:imp_gated_vo}).
(iii) is Prop.~\ref{prop:impE_linear_no_prune}.
(iv) is Prop.~\ref{prop:imp3_registers_nomix} plus Prop.~\ref{prop:ever-growing-width}.
\end{proof}


\begin{table*}[ht]
\centering
\scriptsize
\setlength{\tabcolsep}{7.5pt}
\renewcommand{\arraystretch}{0.8}
\caption{Empirical results on the MAD benchmark (left) and NanoGPT OpenWebText2 training (right).
The hyphenated \texttt{Label} encodes the model family and enabled mechanisms.
Each row corresponds to one model variant, specified by \texttt{Label}:
\texttt{S}/\texttt{R}/\texttt{G} denote Softmax, ReLU$\circ$L2Norm, and GLU, respectively.
Optional suffixes \texttt{p}, \texttt{c}, \texttt{g} denote RoPE, KV depthwise convolution, and QK/VO gating
($M^{(j)}(x_t)$), respectively.
\texttt{q}/\texttt{v} indicate matched-budget compression on QK / VO.
\texttt{1}, \texttt{2}, \texttt{12} indicate temporal mixing in $R^{(1)}$ / $R^{(2)}$ / both.
\texttt{o} indicates the reverse-offset (lag) layout.
\texttt{!} indicates over-parameterization (no compression). We fix the $n_{\mathrm{head}}$ to 2 for fair comparison.
}
\resizebox{\textwidth}{!}{%
\begin{tabular}{l|ccccccc|ccccc}
\toprule
\texttt{Label} & \makecell{Com-\\press} & \makecell{Fuzzy\\Recall} & \makecell{In-ctx\\Recall} & \makecell{Memorize\\Train Set} & \makecell{Noisy\\Recall} & \makecell{Selective\\Copy} & \makecell{MAD\\Avg} & \makecell{NanoGPT\\Loss (L5)} & \makecell{Steps\\loss$<$3.3} & \makecell{Steps\\loss$<$3.2} & \makecell{Steps\\loss$<$3.1} & \makecell{Steps\\loss$<$3.0} \\
\midrule
\makecell[l]{\texttt{S}\\(SoftmaxAttn)} & 35.19 & 8.92 & 83.68 & 29.24 & 85.03 & 62.08 & 50.69 & 3.9487 & 18 & \textemdash & \textemdash & \textemdash \\
\texttt{S-p} & 44.69 & 34.74 & 93.43 & 78.65 & 86.32 & 96.42 & 72.38 & 3.0669 & 11 & 20 & 47 & \textemdash \\
\texttt{S-c} & 41.31 & 62.44 & 99.99 & 78.31 & 99.99 & 51.05 & 72.18 & 3.0897 & 15 & 27 & 64 & \textemdash \\
\texttt{S-pc} & 51.62 & 51.10 & 99.99 & 86.81 & 99.99 & 91.39 & 80.15 & 3.0466 & 11 & 19 & 39 & \textemdash \\
\makecell[l]{\texttt{R}\\(ReLUAttn)} & 34.87 & 9.70 & 74.19 & 15.64 & 77.53 & 53.91 & 44.31 & 3.2573 & 50 & \textemdash & \textemdash & \textemdash \\
\texttt{R-p} & 43.35 & 31.38 & 93.65 & 68.83 & 92.81 & 97.34 & 71.23 & 3.0785 & 13 & 22 & 54 & \textemdash \\
\texttt{R-c} & 39.91 & 43.85 & 99.99 & 57.70 & 99.99 & 72.85 & 69.05 & 3.1070 & 17 & 31 & 80 & \textemdash \\
\texttt{R-pc} & 44.41 & 61.26 & 100.00 & 79.20 & 99.99 & 98.11 & 80.50 & 3.0481 & 11 & 19 & 37 & \textemdash \\
\texttt{S-g-q} & 39.08 & 51.52 & 99.99 & 78.51 & 99.99 & 54.88 & 70.66 & 3.0428 & 13 & 20 & 41 & \textemdash \\
\texttt{R-cg-q} & 36.49 & 32.57 & 100.0 & 57.70 & 99.99 & 73.90 & 66.78 & 3.0828 & 16 & 29 & 76 & 80 \\
\texttt{R-c-12o!} & 49.23 & 65.12 & 100.0 & 71.24 & 99.99 & 99.31 & 80.81 & 3.0590 & 10 & 17 & 36 & \textemdash \\
\makecell[l]{\texttt{R-cg-q-12o}\\(HyperMLP)} & 49.20 & 62.76 & 99.99 & 74.97 & 99.99 & 99.14 & 81.01 & 2.9956 & 9 & 14 & 26 & 72 \\
\texttt{R-pcg-q-12o} & 47.86 & 36.22 & 99.99 & 66.06 & 99.99 & 99.29 & 74.90 & 3.0212 & 10 & 15 & 28 & 75 \\
\texttt{S-c-q} & 40.42 & 60.37 & 99.99 & 77.53 & 99.99 & 44.30 & 70.43 & 3.1042 & 16 & 29 & 78 & \textemdash \\
\texttt{S-c-v} & 33.57 & 57.87 & 99.99 & 77.23 & 99.99 & 52.89 & 70.26 & 3.1384 & 18 & 33 & \textemdash & \textemdash \\
\texttt{R-c-q} & 37.28 & 36.27 & 99.99 & 55.98 & 100.00 & 74.33 & 67.31 & 3.1762 & 22 & 55 & \textemdash & \textemdash \\
\texttt{R-c-v} & 31.84 & 43.47 & 100.00 & 65.58 & 99.99 & 70.36 & 68.54 & 3.1323 & 20 & 44 & 80 & \textemdash \\
\texttt{G-cg-q} & 38.56 & 35.84 & 99.99 & 66.73 & 99.99 & 75.23 & 69.39 & 3.0530 & 12 & 21 & 42 & \textemdash \\
\texttt{G-c-12o!} & 49.81 & 65.77 & 100.00 & 72.98 & 100.00 & 99.67 & 81.37 & 3.0159 & 9 & 15 & 28 & 80 \\
\makecell[l]{\texttt{G-cg-q-12o}\\(HyperGLU)} & 49.60 & 61.50 & 99.99 & 76.07 & 99.99 & 99.58 & 81.12 & 2.9865 & 8 & 13 & 22 & 58 \\
\texttt{G-pcg-q-12o} & 49.90 & 29.12 & 99.98 & 70.35 & 99.99 & 99.28 & 74.77 & 2.9974 & 9 & 13 & 25 & 77 \\
\texttt{G-cg-q-1o} & 47.35 & 49.90 & 99.99 & 69.66 & 99.99 & 99.75 & 77.77 & 2.9907 & 9 & 13 & 24 & 68 \\
\texttt{R-cg-q-1o} & 47.34 & 57.69 & 99.99 & 70.62 & 99.99 & 98.97 & 79.10 & 3.0001 & 9 & 14 & 24 & 75 \\
\texttt{G-g-q-12o} & 50.49 & 58.88 & 99.99 & 76.95 & 99.97 & 99.25 & 80.92 & 2.9980 & 9 & 14 & 24 & 74 \\
\texttt{R-g-q-12o} & 49.14 & 61.98 & 99.99 & 73.55 & 99.97 & 99.06 & 80.61 & 3.0130 & 10 & 16 & 30 & \textemdash \\
\texttt{G-12o!} & 49.65 & 49.56 & 99.97 & 72.38 & 99.85 & 99.72 & 78.52 & 3.0386 & 12 & 17 & 33 & \textemdash \\
\texttt{G-1o!} & 47.15 & 32.11 & 99.96 & 71.72 & 99.96 & 99.67 & 75.09 & 3.0556 & 12 & 20 & 39 & \textemdash \\
\texttt{G-2o!} & 48.07 & 68.35 & 99.95 & 52.55 & 99.97 & 96.52 & 77.57 & 3.0631 & 16 & 27 & 55 & \textemdash \\
\texttt{R-12o!} & 49.36 & 61.37 & 99.95 & 82.01 & 99.97 & 98.28 & 81.82 & 3.0497 & 11 & 17 & 36 & \textemdash \\
\texttt{R-1o!} & 44.70 & 60.51 & 99.94 & 77.17 & 99.67 & 99.43 & 80.24 & 3.0699 & 12 & 20 & 47 & \textemdash \\
\texttt{R-2!} & 45.28 & 64.03 & 99.97 & 75.75 & 99.85 & 96.05 & 80.16 & 3.1022 & 17 & 30 & 72 & \textemdash \\
\texttt{G-12!} & 47.44 & 6.81 & 55.75 & 16.45 & 54.87 & 95.74 & 46.18 & 4.3662 & \textemdash & \textemdash & \textemdash & \textemdash \\
\texttt{R-12!} & 47.71 & 8.02 & 55.60 & 16.28 & 55.27 & 94.78 & 46.27 & 4.3567 & \textemdash & \textemdash & \textemdash & \textemdash \\
\bottomrule
\end{tabular}%
}%
\vskip -0.2in
\label{tab:empirical_results}
\end{table*}

\subsection{Parameter budgeting and efficient implementation}

\textbf{Parameter Allocation. } The analysis above clarifies the relative importance of parameters across different components.
In practice, we still recommend sufficient-rank factorizations such as
$d_{qk}\approx d_{vo}\approx d/n_{\mathrm{head}}$, which implies that HyperMLP will allocate a larger parameter share than
vanilla attention in Transformer blocks due to the additional low-rank sequence-mixing operators $R^{(j)}$. For fair comparison in experiments, however, we match the parameter budget of vanilla attention. Following the attention-as-MLP view, we shrink the first-layer (QK) rank to “pay” for temporal mixing: from a total budget of $4d^2$, we allocate $3d^2$ to the second layer (VO) since the $M^{(2)}(x_t)$ gate adds an extra $d^2$ relative to standard VO, and we use the remaining $d^2$ for the first layer (QK) together with the low-rank sequence mixing $R$. Concretely, we set the rank of $L^{(1)}$ to $d/(4n_{\mathrm{head}})$ or $d/(8n_{\mathrm{head}})$ and use $r_s=16$ to control the overhead. 

\noindent\textbf{Complexity.}
HyperMLP augments each head with two input-conditioned sequence-slot mixing operators
$R^{(1)}_t(x_t)$ and $R^{(2)}_t(x_t)$ in diagonal-plus-low-rank (DPLR) form with rank $r_s$.
These mixers can be applied without materializing any $t\times t$ matrices, incurring an additional
$O(tr_s)$ time and $O(r_s)$ extra per-step state per head on top of the standard projection and aggregation costs.
Over length-$T$ teacher forcing, this yields a total $O(T^2 r_s)$ overhead, so when $r_s\ll d_{qk},d_{vo}$ the asymptotic
training and autoregressive inference scaling matches quadratic attention up to lower-order terms. Since the additional complexity is per head, we recommend—and use in our experiments—fixing \(n_{\text{head}} = 2\).
A formal proof is provided in Appendix~\ref{prop:core_cost_short}.

Introducing sequence mixing fundamentally changes the attention operator, so HyperMLP cannot directly reuse existing efficient backends such as FlashAttention \citep{dao2022flashattention}. Achieving FlashAttention-level performance would require extensive operator fusion and hardware-aware kernel design, which is beyond the scope of this work. Accordingly, we focus on algorithmic expressivity rather than engineering optimization. Nevertheless, we provide a minimal, efficient block-wise implementation of HyperMLP (Fig.~\ref{fig:intuition}(d–e), \S~\ref{sec:impl}, and the code repository) that computes only the necessary blocks and leverages efficient GEMM operations after compilation. On modern hardware, this implementation achieves performance comparable in order of magnitude to many recent expressive attention variants. See Table \ref{tab:overall-efficiency} and \S \ref{app:comp_cost} for the detailed comparison of computational costs. 

\section{Empirical Evaluation}
\label{sec:experiments}

\begin{table*}[ht]
\centering
\setlength{\tabcolsep}{3pt}
\caption{Unified language modeling evaluation results across model families and scales.
Abbr: Acc\_n=normalized accuracy; EM=exact match; IFE-I/P = IFEval (Inst/Prompt, strict only).
Shots: MMLU-P=5, GPQA=0, BBH=3, MATH=4, MuSR=0; others 0-shot. \textbf{Bold} and \underline{underline} indicate group-wise best and second-best results, respectively.}
\resizebox{\textwidth}{!}{
\begin{tabular}{l *{7}{c} *{9}{c} *{2}{c}}
\toprule
\multirow{2}{*}{\makecell[l]{\textbf{Model}\\\textit{variant}}} &
\multicolumn{7}{c}{\textbf{Open LLM Leaderboard}} &
\multicolumn{9}{c}{\textbf{General Ability}} &
\multicolumn{2}{c}{\textbf{Ranking}} \\
\cmidrule(lr){2-8}\cmidrule(lr){9-17}\cmidrule(lr){18-19}
& \makecell{MMLU-P\\(Acc$\uparrow$)}
& \makecell{GPQA\\(Acc$_n$$\uparrow$)}
& \makecell{BBH\\(Acc$_n$$\uparrow$)}
& \makecell{MATH\\(EM$\uparrow$)}
& \makecell{MuSR\\(Acc$_n$$\uparrow$)}
& \makecell{IFE-I\\(strict$\uparrow$)}
& \makecell{IFE-P\\(strict$\uparrow$)}
& \makecell{ARC-C\\(Acc$_n$$\uparrow$)}
& \makecell{ARC-E\\(Acc$_n$$\uparrow$)}
& \makecell{HS\\(Acc$_n$$\uparrow$)}
& \makecell{PIQA\\(Acc$_n$$\uparrow$)}
& \makecell{BoolQ\\(Acc$\uparrow$)}
& \makecell{WinoG\\(Acc$\uparrow$)}
& \makecell{COPA\\(Acc$\uparrow$)}
& \makecell{OBQA\\(Acc$_n$$\uparrow$)}
& \makecell{SciQ\\(Acc$_n$$\uparrow$)}
& \makecell{Avg\\Rank$\downarrow$}
& \makecell{\#Top1\\$\uparrow$} \\
\midrule

\multicolumn{19}{l}{\textbf{1.3B Params -- 100B Tokens}} \\
DeltaNet & 0.109 & 0.263 & \underline{0.308} & 0.011 & 0.417 & \underline{0.288} & 0.165 & 0.266 & 0.522 & 0.502 & 0.704 & 0.611 & \underline{0.541} & 0.740 & 0.318 & 0.761 & 4.22 & 0 \\
GSA & 0.110 & \textbf{0.270} & 0.294 & \textbf{0.013} & \underline{0.438} & \textbf{0.300} & \underline{0.179} & \underline{0.287} & \underline{0.529} & \underline{0.510} & \textbf{0.712} & 0.541 & 0.536 & \underline{0.760} & \underline{0.330} & \underline{0.773} & \underline{2.84} & \underline{4} \\
RetNet & 0.110 & 0.252 & 0.293 & 0.001 & 0.384 & 0.056 & 0.024 & 0.271 & 0.489 & 0.480 & 0.701 & 0.583 & 0.533 & 0.710 & 0.324 & 0.736 & 6.69 & 0 \\
HGRN & 0.114 & \underline{0.269} & 0.297 & 0.008 & 0.409 & 0.253 & 0.122 & 0.271 & 0.518 & 0.481 & 0.707 & 0.584 & 0.515 & 0.700 & 0.326 & 0.695 & 5.12 & 0 \\
HGRN2 & \underline{0.115} & 0.254 & 0.295 & 0.002 & 0.350 & 0.223 & 0.129 & 0.282 & 0.504 & 0.317 & 0.671 & 0.416 & 0.522 & \textbf{0.770} & 0.328 & 0.378 & 5.84 & 1 \\
\shortstack[l]{SoftmaxAttn} & 0.114 & 0.259 & 0.296 & 0.011 & 0.365 & 0.270 & 0.141 & 0.280 & 0.492 & 0.492 & 0.705 & \textbf{0.621} & \textbf{0.552} & \underline{0.760} & 0.318 & 0.769 & 4.22 & 2 \\
GLA & 0.114 & 0.259 & 0.295 & 0.006 & 0.427 & 0.272 & 0.157 & 0.277 & 0.482 & 0.488 & 0.702 & 0.574 & \underline{0.541} & 0.690 & 0.326 & 0.721 & 5.19 & 0 \\
\textbf{HyperGLU} & \textbf{0.120} & 0.268 & \textbf{0.321} & \underline{0.012} & \textbf{0.468} & 0.286 & \textbf{0.189} & \textbf{0.366} & \textbf{0.611} & \textbf{0.516} & \underline{0.708} & \underline{0.613} & 0.531 & \underline{0.760} & \textbf{0.353} & \textbf{0.788} & \textbf{1.88} & \textbf{9} \\
\midrule
\multicolumn{19}{l}{\textbf{340M Params -- 15B Tokens}} \\
DiffTrans & 0.109 & 0.259 & 0.299 & 0.008 & 0.390 & 0.266 & 0.133 & \underline{0.289} & 0.531 & 0.408 & 0.668 & \textbf{0.603} & \textbf{0.534} & 0.690 & 0.330 & 0.734 & 4.13 & 2 \\
GatedDeltaNet & 0.113 & 0.260 & 0.296 & \textbf{0.010} & 0.421 & 0.258 & 0.133 & 0.276 & 0.527 & 0.396 & 0.662 & \underline{0.588} & \underline{0.527} & \underline{0.710} & \textbf{0.338} & 0.735 & 3.81 & 2 \\
DeltaNet & 0.112 & 0.260 & \underline{0.300} & \underline{0.009} & \underline{0.452} & \textbf{0.277} & 0.150 & 0.269 & 0.502 & 0.405 & 0.653 & 0.519 & 0.504 & 0.690 & 0.316 & 0.717 & 4.81 & 1 \\
GLA & 0.110 & 0.258 & 0.289 & 0.007 & 0.415 & 0.228 & 0.109 & 0.247 & 0.478 & 0.366 & 0.637 & 0.547 & 0.489 & 0.640 & 0.294 & 0.649 & 7.56 & 0 \\
SoftmaxAttn & 0.106 & \textbf{0.267} & 0.292 & \textbf{0.010} & 0.386 & 0.269 & 0.126 & 0.273 & 0.506 & 0.396 & 0.650 & 0.569 & 0.499 & \textbf{0.720} & 0.324 & 0.727 & 5.28 & 3 \\
\textbf{ReLUAttn+KVConv} & 0.111 & 0.253 & 0.290 & 0.008 & 0.423 & 0.229 & 0.130 & 0.268 & 0.525 & \underline{0.412} & \underline{0.669} & 0.524 & 0.520 & 0.660 & 0.324 & \textbf{0.774} & 5.13 & 1 \\
\textbf{HyperMLP} & \textbf{0.119} & \underline{0.262} & \underline{0.300} & \textbf{0.010} & \textbf{0.466} & \underline{0.273} & \underline{0.155} & 0.288 & \textbf{0.541} & 0.402 & 0.663 & 0.585 & 0.513 & \underline{0.710} & 0.320 & 0.755 & \underline{2.97} & \underline{4} \\
\textbf{HyperGLU} & \underline{0.117} & 0.259 & \textbf{0.313} & \underline{0.009} & 0.451 & \underline{0.273} & \textbf{0.157} & \textbf{0.304} & \underline{0.538} & \textbf{0.427} & \textbf{0.674} & 0.587 & 0.518 & \textbf{0.720} & \underline{0.336} & \underline{0.771} & \textbf{2.31} & \textbf{6} \\
\bottomrule
\end{tabular}
}
\vspace{-15pt}
\label{tab:unified-leaderboard-merged-onecol}
\end{table*}

\paragraph{3.a. Controlled Design Study}
\label{sec:design_study}

Before comparing HyperMLP/HyperGLU with strong external baselines, we conduct a
Controlled Design Study to isolate the effect of individual design choices under fixed budgets.
We report results on two complementary sources: (i) \textsc{MAD} \citep{poli2024mechanisticdesignscalinghybrid}, a set of mechanistic diagnostics, and
(ii) GPT-2 structure NanoGPT \cite{Karpathy2022} language modeling on OpenWebText2, reported by final loss (last-5 average) and simple training-progress statistics. For NanoGPT, we use the default settings and evaluate every 500 training steps; all reported ``steps'' refer to these evaluations, with loss measured on the sampled training subset. We train all of the settings for 80 steps. \textbf{See Fig.} \ref{fig:loss} \textbf{for detailed loss curves}. See \S \ref{app:implementation}-\ref{app:dataset} for details of implementations and datasets.

Each configuration is denoted by a compact label
\texttt{Base-Feat-Rank-TmixLayout}, which explicitly encodes the active mechanisms.
\texttt{Base}$\in\{\texttt{S},\texttt{R},\texttt{G}\}$ specifies softmax attention, ReLU attention
(ReLU$\circ$L2Norm), or GLU-style gating.
\texttt{Feat} optionally includes RoPE (\texttt{p}), KV depthwise convolution (\texttt{c}), and
QK/VO-side gating (\texttt{g}).
\texttt{Rank} appends \texttt{q} or \texttt{v} to indicate matched-budget compression on the QK or VO side.
\texttt{TmixLayout} uses \texttt{1}, \texttt{2}, or \texttt{12} to denote temporal mixing in $R^{(1)}$,
$R^{(2)}$, or both (DPLR), with an optional \texttt{o} indicating the reverse-offset (lag) layout.

Unless marked with \texttt{!}, all runs use the same training setup and strictly matched parameter
budgets, so differences reflect operator structure rather than scale.
Runs with \texttt{!} are over-parameterized upper bounds that relax the budget constraint and are used
only to estimate headroom.
The label system enables controlled, local comparisons—varying one factor at a time (e.g., feature
mods, QK vs.\ VO compression, or temporal mixing layout)—providing a clean attribution of empirical
effects in the following section.

Please note that we still use depthwise convolution in the default HyperMLP/HyperGLU, since it introduces very little additional parameter and computational cost, and when used together with our low-rank sequence mixing it can still provide a performance boost. In practice, we recommend combining it with DPLR mixing. Consistent with our attention-as-MLP perspective, we apply convolutions only to the two cores \(X_{t:1}\); We do the convolution first before we perform \(R\) mixing, and we do not apply any mixing to the input \(x_t\). In our experiments, we compare variants with (HyperMLP/HyperGLU) and without convolution (\texttt{R-g-q-12o}, \texttt{G-g-q-12o}). The results show that Hyper variants without convolution still significantly outperform ReLU/softmax attention across all metrics.

\textbf{Result analysis.}
\textbf{(1) Softmax is not essential:} under standard feature-side upgrades, ReLU attention matches softmax at essentially the same quality--efficiency point. In particular, \texttt{S-pc} and \texttt{R-pc} achieve comparable MAD \texttt{avg} ($80.15$ vs.\ $80.50$) and NanoGPT loss ($3.0466$ vs.\ $3.0481$), suggesting that probability-simplex normalization is not required for strong performance. \textbf{(2) Sequence-space mixing is the dominant gain:} enabling temporal mixing in the score/sequence dimension yields a large jump. With the same feature modifiers and QK-side compression, adding DPLR mixing with lag layout improves \texttt{R-cg-q} $\rightarrow$ \texttt{R-cg-q-12o} from MAD \texttt{avg} $66.78$ to $81.01$ (+$14.23$), with particularly large gains on retrieval/copy behaviors (e.g., \texttt{Fuzzy Recall} $32.57\!\rightarrow\!62.76$, \texttt{Selective Copy} $73.90\!\rightarrow\!99.14$). NanoGPT also improves (\texttt{loss} $3.0828\!\rightarrow\!2.9956$) and reaches low-loss thresholds substantially earlier (e.g., \texttt{loss}$<$3.1: $76\!\rightarrow\!26$ steps), consistent with a strictly richer dynamic-MLP function class from learned context-wide slots. \textbf{(3) Lag layout is necessary for AR consistency:} temporal mixing without the reverse-offset (lag) layout collapses. The over-parameterized ablations show a stark failure without \texttt{o}: \texttt{R-12!} has MAD \texttt{avg} $46.28$ and NanoGPT loss $4.3567$, while \texttt{R-12o!} recovers to MAD \texttt{avg} $81.82$ and loss $3.0497$ (similarly \texttt{G-12!}: $46.18$/\,$4.3662$ vs.\ \texttt{G-12o!}: $78.52$/\,$3.0386$). This supports the theory that extension-consistent temporal operators must align with AR truncation semantics. \textbf{(4) Two-sided mixing helps:} using mixing in both $R^{(1)}$ and $R^{(2)}$ is generally better than mixing only one side. For instance, \texttt{G-cg-q-12o} outperforms \texttt{G-cg-q-1o} on MAD \texttt{avg} ($81.12$ vs.\ $77.77$) with slightly lower NanoGPT loss ($2.9865$ vs.\ $2.9907$), matching the intuition that routing-side and readout-side temporal mixing provide complementary flexibility. \textbf{(5) Budget asymmetry (QK vs.\ VO) is visible in-task:} VO compression more directly harms tasks that require diverse update directions. This matches the predicted update-subspace bottleneck from shrinking the readout rank. \textbf{(6) HyperGLU is better:} the best matched-budget configuration is the GLU variant \texttt{G-cg-q-12o}, which achieves the lowest NanoGPT loss ($2.9865$) while matching the top MAD performance (\texttt{avg} $81.12$), consistent with decoupling routing (active set) from magnitude modulation. 

\textbf{3.b. Language Modeling.} \  We follow the experimental protocol of \citep{yang2024gateddeltanet,yang2024parallelizing}. Under identical training conditions, we train autoregressive language models with \textbf{1.3B} and \textbf{340M} parameters on the FineWeb-Edu dataset \citep{penedo2024fineweb}, using \textbf{100B} and \textbf{15B} training tokens, respectively. All models are optimized with AdamW using a peak learning rate of $4\times10^{-4}$, cosine decay with a 1B-token warmup, weight decay of 0.1, and gradient clipping at 1.0, with a global batch size of 0.5M tokens. We use the LLaMA-2 tokenizer with a 32K vocabulary and train with a context length of 4096. Evaluation follows the Open LLM Leaderboard together with a standard suite of general-ability benchmarks, summarized in Tab.~\ref{tab:unified-leaderboard-merged-onecol}. Additional evaluation details are provided in \S\ref{app:dataset}. See the description of the included baselines in \S \ref{app:implementation}. 

Across unified language-model evaluations, ReLU attention (ReLUAttn+KVConv) is already a strong non-softmax baseline, but HyperMLP and especially HyperGLU deliver consistent, high-level gains under matched parameter budgets. At 340M / 15B tokens, HyperMLP/HyperGLU improve overall standings versus the ReLU-attention baseline, indicating broadly better generalization across tasks rather than isolated wins. At the larger 1.3B / 100B tokens scale, HyperGLU further separates from softmax attention and other competitive sequence models, achieving the best aggregate ranking and the most category bests, suggesting the learned sequence mixing plus GLU-style routing yields robust improvements.

\section{Conclusion and Limitations}
We reframe autoregressive attention as a dynamic two-layer MLP whose context-instantiated scores form an ever-growing hidden representation. Based on this view, we propose HyperMLP/HyperGLU, which learns input-conditioned mixing in both feature and sequence spaces with a reverse-offset (lag) layout and low-rank parameterization. Across controlled studies and language-model evaluations under matched budgets, HyperMLP/HyperGLU consistently outperform strong softmax-attention Transformers, highlighting the benefits of learned temporal mixing and MLP-style routing. Although we provide a minimal efficient implementation, it cannot match highly optimized kernels such as FlashAttention. Additionally, due to resource constraints, we do not scale HyperMLP/HyperGLU to very large practical LLM sizes; validating performance at frontier scales is left for future work.




\section*{Impact Statement}
This paper proposes a new architectural perspective and attention variant for autoregressive sequence models. As a fundamental modeling component, HyperMLP/HyperGLU primarily affects upstream model expressivity rather than any specific downstream application. We conduct experiments only on publicly available benchmarks and datasets, without collecting or using personal or sensitive data. While improved sequence modeling and retrieval capabilities could be misused in downstream systems (e.g., for surveillance or misleading content), such risks depend on deployment choices beyond the scope of this work. We therefore emphasize responsible use, including appropriate evaluation, bias mitigation, privacy safeguards, and energy-aware training practices.


\bibliography{example_paper}
\bibliographystyle{icml2026}


\newpage
\appendix
\onecolumn

\section*{Appendix}

\section{Overview of the Supporting Theoretical Results in the Appendix}

In figure \ref{fig:theory_support_network}, we provide a logical map that organizes the appendix into three intertwined threads:
(i) Mechanism \& analysis (left) develops the dynamic-MLP and three-stage memory view (Global $\to$ Pool $\to$ Act), including gate invariance, measure/TV formulations, and NTK-style stability;
(ii) Geometry \& function classes (right) analyzes static vs.\ dynamic partition geometry, showing how sequence-space mixing yields warped routing and strictly richer function classes than token-wise polyhedral ReLU attention;
(iii) Parameterization \& efficiency (bottom) covers the lag layout, DPLR temporal mixing, rank allocation (shrinking QK rather than VO), HyperGLU, and the complexity/overhead bounds.
Each numbered box (M1--M8, G1--G5, D1--D6) points to the precise lemmas, propositions, and theorems that support the main text.

\begin{figure}[h]
\centering
\begin{tikzpicture}[
  font=\scriptsize,
  every node/.style={align=left},
  node distance=2.8mm,
  box/.style={
    draw, line width=.35pt, rounded corners=1pt,
    inner xsep=2.2pt, inner ysep=1.15pt
  },
  claim/.style={box, fill=black!7, text width=.44\linewidth},
  sub/.style={box, fill=white, text width=.44\linewidth},
  claimwide/.style={box, fill=black!7, text width=.92\linewidth},
  subwide/.style={box, fill=white, text width=.92\linewidth},
  group/.style={
    draw, dashed, line width=.35pt, rounded corners=1.5pt,
    inner xsep=2.4pt, inner ysep=2.4pt
  },
  gtitle/.style={font=\bfseries\tiny, fill=white, inner sep=1.2pt}
]

\node[claim] (core) {%
\textbf{Core mechanism claim}\\
Dyn-MLP view + 3-stage memory (Global$\to$Pool$\to$Act).\\
(Eq.~\eqref{eq:classic_as_dynamic_mlp}, Prop.~\ref{prop:dynamic_two_layer_form};
Thm.~\ref{thm:three-stage-memory-multistage-pruning}, Prop.~\ref{prop:mech-measure-readout})
};

\node[claim, right=3mm of core] (expr) {%
\textbf{Expressivity claim}\\
Seq-space mixing $\Rightarrow$ slot construction + warped routing; strict superset of token-wise polyhedral routing.\\
(Prop.~\ref{prop:imp2_slots_global}, Thm.~\ref{thm:warped}, Prop.~\ref{thm:app_strict_gap_summary})
};

\node[sub, below=of core] (m1) {%
\textbf{M1.} L2Norm sign invariance (gate invariance)\\
(Assump.~\ref{ass:l2norm-scalar-positive}, Lem.~\ref{lem:gate-invariance-reachable})
};
\node[sub, below=of m1] (m2) {%
\textbf{M2.} ReLU = diagonal gating\\
(Lem.~\ref{lem:relu-diagonal-gating})
};
\node[sub, below=of m2] (m3) {%
\textbf{M3.} Fixed 2-layer ReLU MLP selects a sub-network\\
(Thm.~\ref{thm:fixed-2layer-relu-subnetwork})%
};
\node[sub, below=of m3] (m4) {%
\textbf{M4.} Attention block as a dynamic 2-layer MLP\\
(Eqs.~\eqref{eq:ht_def}--\eqref{eq:ot_def}, Prop.~\ref{prop:dynamic_two_layer_form}, Eq.~\eqref{eq:app_eff_weights})
};
\node[sub, below=of m4] (m5) {%
\textbf{M5.} Three-stage memory \& multi-stage pruning\\
(Thm.~\ref{thm:three-stage-memory-multistage-pruning})
};
\node[sub, below=of m5] (m6) {%
\textbf{M6.} Ever-growing width \& infinite reachable atoms\\
(Props.~\ref{prop:ever-growing-width}, \ref{prop:infinite-dictionary})
};
\node[sub, below=of m6] (m7) {%
\textbf{M7.} Banach/TV view of readout \& complexity control\\
(Thms.~\ref{thm:bounded-linear-readout-reachable}, \ref{thm:tv-contraction-reachable}, \ref{thm:variation-banach-reachable})
};
\node[sub, below=of m7] (m8) {%
\textbf{M8.} NTK / lazy regime \& gate stability under margin\\
(Thm.~\ref{thm:ntk-gradient-flow}, Lem.~\ref{lem:ntk-gate-stability-margin})
};

\node[sub, below=of expr] (g1) {%
\textbf{G1.} Static geometry: polyhedral \emph{boundaries}\\
(CPWL if unnormalized; scalar L2Norm preserves boundaries but breaks CPWL)\\
(Thm.~\ref{thm:polyhedral}, Cor.~\ref{cor:R-static}, Rem.~\ref{rem:l2norm-breaks-cpwl})
};
\node[sub, below=of g1] (g2) {%
\textbf{G2.} Dynamic geometry: warped (curved) partitions\\
(Thm.~\ref{thm:warped})
};
\node[sub, below=of g2] (g3) {%
\textbf{G3.} Curvature sources (deformation terms)\\
(Prop.~\ref{prop:differential}, Rem.~\ref{rem:lowrank})
};
\node[sub, below=of g3] (g4) {%
\textbf{G4.} Strict class gap summary (polyhedral $\subset$ warped)\\
(Prop.~\ref{thm:app_strict_gap_summary})
};
\node[sub, below=of g4] (g5) {%
\textbf{G5.} Task-level implications of extra classes\\
(Sec.~\ref{sec:app:function-classes-and-tasks}, Tab.~\ref{tab:app_task_predictions})
};

\begin{scope}[on background layer]
  \node[group, fit=(core)(expr)(m8)(g5)] (topgroup) {};
\end{scope}

\node[gtitle, anchor=south west] at ($(topgroup.north west)+(2pt,3pt)$)
  {Mechanism \& analysis};
\node[gtitle, anchor=south east] at ($(topgroup.north east)+(-2pt,3pt)$)
  {Geometry \& function classes};

\node[claimwide, below=7.5mm of topgroup] (d0) {%
\textbf{Design \& efficiency claim}\\
Lag layout + DPLR mixing $\Rightarrow$ AR-aligned extension/truncation, matched-budget gain, controlled overhead.\\
(Thm.~\ref{thm:offset-better}, Lem.~\ref{lem:dplr-extension-consistency}, Cor.~\ref{cor:truncation-invariance};
Prop.~\ref{prop:param_budget_efficiency};
Lem.~\ref{lem:app_cost_vec_dplr}, Props.~\ref{prop:app_cost_infer}, \ref{prop:app_cost_train}, Tab.~\ref{tab:app_cost_summary})
};

\node[subwide, below=of d0] (d1) {%
\textbf{D1.} DPLR extension-consistency \& truncation invariance\\
(Lem.~\ref{lem:dplr-extension-consistency}, Cor.~\ref{cor:truncation-invariance}; padding invariance Eq.~\eqref{eq:PI})
};
\node[subwide, below=of d1] (d2) {%
\textbf{D2.} Offset (lag) layout aligns canonical extension with AR truncation\\
(Thm.~\ref{thm:offset-better})
};
\node[subwide, below=of d2] (d3) {%
\textbf{D3.} Rank allocation: shrink QK (not VO) under fixed budget\\
(Thms.~\ref{thm:lowrank_W2_invariant}, \ref{thm:lowrank_W1_quotient})
};
\node[subwide, below=of d3] (d4) {%
\textbf{D4.} Matched-budget expressivity gain from temporal mixing\\
(Prop.~\ref{prop:param_budget_efficiency})
};
\node[subwide, below=of d4] (d5) {%
\textbf{D5.} HyperGLU: decouple routing vs.\ slot strength\\
(Sec.~\ref{subsec:hyperglu}, Eqs.~\eqref{eq:hyperglu_activation_subsec}--\eqref{eq:hyperglu_sum_subsec},
Lem.~\ref{lem:gate-invariance-reachable})
};
\node[subwide, below=of d5] (d6) {%
\textbf{D6.} Efficiency: DPLR multiplication/overhead; ungated norm collapses routing; Implementation\\
(Lem.~\ref{lem:app_cost_vec_dplr}, Props.~\ref{prop:app_cost_infer}, \ref{prop:app_cost_train};
Prop.~\ref{prop:impE_linear_no_prune}; optional conv/FFT view Rem.~\ref{rem:app_cost_fft}; Sec.~\ref{sec:impl})
};

\begin{scope}[on background layer]
  \node[group, fit=(d0)(d6)] (botgroup) {};
\end{scope}
\node[gtitle, anchor=south west] at ($(botgroup.north west)+(2pt,3pt)$)
  {Parameterization \& efficiency};

\end{tikzpicture}
\caption{
Logical map of our theoretical results.
}

\label{fig:theory_support_network}
\end{figure}
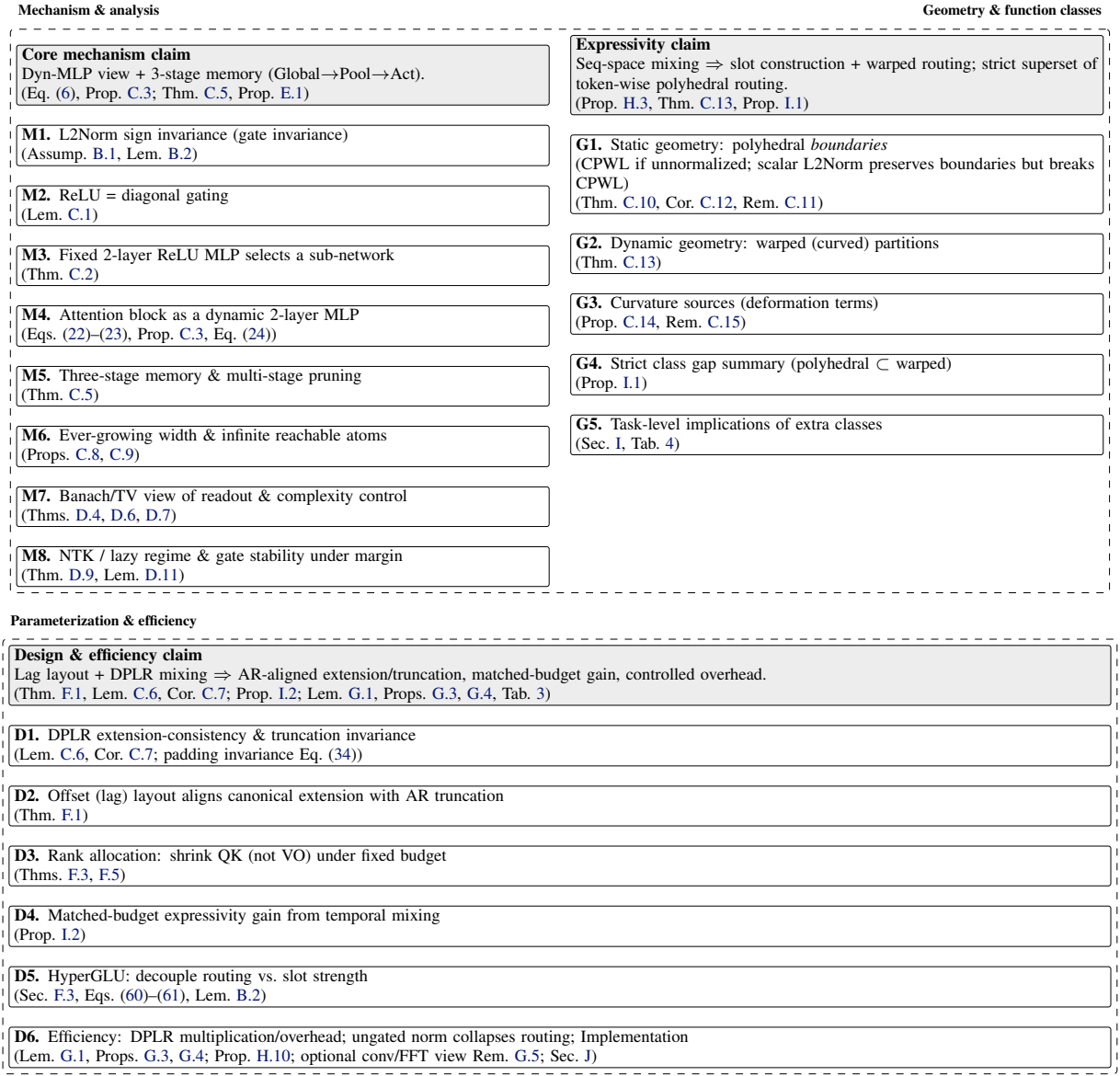

\section{Notation and Common Setup}
\label{sec:app:setup}

\paragraph{Row-vector convention and indexing.}
We represent tokens/states as row vectors.
For $d\in\mathbb{N}$, $x\in\mathbb{R}^{1\times d}$ and a length-$t$ context is
$X\in\mathbb{R}^{t\times d}$.
We write $[t]:=\{1,\ldots,t\}$ and denote the $i$-th row of $X$ by $X[i,:]$.

\paragraph{Lag (reverse-offset) layout.}
In autoregressive use at step $t$, we take the lag-ordered prefix
\[
X_{t:1}:=[x_t;x_{t-1};\ldots;x_1]\in\mathbb{R}^{t\times d}.
\]
Unless stated otherwise we abbreviate $X:=X_{t:1}$ and $x:=x_t$.\footnote{
In autoregressive inference $x$ is the newest row of $X$.
We sometimes write $(X,x)$ only to expose the factorization of dynamic weights; in analyses that fix $X$ and vary $x$, this means
varying only the most recent row.}

\paragraph{One-head dynamic block.}
Fix an output dimension $d$.
A single head is specified by feature-space mixing maps
$L^{(1)}(\cdot),L^{(2)}(\cdot):\mathbb{R}^{1\times d}\to\mathbb{R}^{d\times d}$
and sequence-space mixing maps
$R^{(1)}_t(\cdot),R^{(2)}_t(\cdot):\mathbb{R}^{1\times d}\to\mathbb{R}^{t\times t}$.
Given $(x,X)\in\mathbb{R}^{1\times d}\times\mathbb{R}^{t\times d}$, define
\begin{align}
h_t(x;X)
&:=x\,L^{(1)}(x)\,X^\top\,R^{(1)}_t(x)\in\mathbb{R}^{1\times t},
\label{eq:ht_def}\\
o_t(x;X)
&:=\sigma_t\!\big(h_t(x;X)\big)\;R^{(2)\top}_t(x)\,X\,L^{(2)\top}(x)\in\mathbb{R}^{1\times d}.
\label{eq:ot_def}
\end{align}
\label{eq:block-def-reachable} 
\label{eq:mech-block}          
\label{eq:imp1_block_nomix}

We also use the effective (dynamically instantiated) matrices
\begin{equation}
W^{(1)}_t(X,x):=L^{(1)}(x)\,X^\top\,R^{(1)}_t(x)\in\mathbb{R}^{d\times t},
\qquad
W^{(2)}_t(X,x):=R^{(2)\top}_t(x)\,X\,L^{(2)\top}(x)\in\mathbb{R}^{t\times d}.
\label{eq:app_eff_weights}
\end{equation}
Then $h_t=xW^{(1)}_t(X,x)$ and $o_t=\sigma_t(xW^{(1)}_t(X,x))\,W^{(2)}_t(X,x)$.

\paragraph{Activation and L2 normalization.}
Throughout the appendix we take
\begin{equation}
\sigma_t(z):=\mathrm{ReLU}\!\big(\mathrm{L2Norm}_t(z)\big),
\qquad z\in\mathbb{R}^{1\times t}.
\label{eq:app_sigma_def}
\end{equation}
\label{eq:imp1_sigma} 
Here $\mathrm{L2Norm}_t$ is \tbd{affine-free} across the length-$t$ vector (no per-coordinate gain/bias).

\begin{assumption}[Positive scalar L2 normalization across $t$]
\label{ass:l2norm-scalar-positive}
For each $t\ge 1$, there exists $\rho_t:\mathbb{R}^{1\times t}\to(0,\infty)$ such that
\[
\mathrm{L2Norm}_t(z)=\frac{z}{\rho_t(z)}.
\]
When we need a concrete choice, we use $\rho_t(z)=\sqrt{\|z\|_2^2+\varepsilon}$ with $\varepsilon>0$.
\end{assumption}
\label{ass:rms-scalar-reachable} 

\begin{lemma}[Gate invariance under scalar L2 normalization]
\label{lem:gate-invariance-reachable}
Under Assumption~\ref{ass:l2norm-scalar-positive},
\[
\mathbf{1}\{\mathrm{L2Norm}_t(z)>0\}=\mathbf{1}\{z>0\},
\qquad
\mathrm{ReLU}(\mathrm{L2Norm}_t(z))=\frac{1}{\rho_t(z)}\,\mathrm{ReLU}(z).
\]
\end{lemma}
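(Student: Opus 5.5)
The argument is a coordinatewise computation that uses only two facts from Assumption~\ref{ass:l2norm-scalar-positive}: $\mathrm{L2Norm}_t(z)=z/\rho_t(z)$, and $\rho_t(z)$ is a single strictly positive scalar shared by all $t$ coordinates. Write $c:=1/\rho_t(z)\in(0,\infty)$. Then $\mathrm{L2Norm}_t(z)_i=c\,z_i$ for every $i\in[t]$, so both identities reduce to scalar statements about multiplication by a positive constant.

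For the indicator identity, we fix $i$ and note that, since $c>0$, multiplication by $c$ is a strictly increasing bijection of $\mathbb{R}$ that fixes $0$. Hence $c\,z_i>0$ if and only if $z_i>0$. This gives $\mathbf{1}\{\mathrm{L2Norm}_t(z)_i>0\}=\mathbf{1}\{z_i>0\}$ coordinatewise, which is the claimed vector identity.

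For the ReLU identity, we use positive homogeneity of ReLU: for $c>0$ and $a\in\mathbb{R}$, $\max(ca,0)=c\max(a,0)$. We check this by cases. If $a>0$, both sides equal $ca$. If $a\le 0$, then $ca\le 0$ and both sides are $0$. Applying this with $a=z_i$ for each $i$ yields $\mathrm{ReLU}(\mathrm{L2Norm}_t(z))=\rho_t(z)^{-1}\mathrm{ReLU}(z)$.

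There is no real obstacle here. The only point to flag is that $\rho_t$ is a scalar depending on the whole vector $z$ but not on the coordinate index. This is exactly what the affine-free convention guarantees, and it is what makes the constant $c$ factor out uniformly. For the concrete choice $\rho_t(z)=\sqrt{\|z\|_2^2+\varepsilon}$ with $\varepsilon>0$, positivity holds even when $z=0$, so the assumption is satisfied.
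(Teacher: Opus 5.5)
Your proof is correct and takes the same route as the paper. The paper treats the lemma as immediate from $\mathrm{L2Norm}_t$ being multiplication by the single positive scalar $1/\rho_t(z)$: this preserves signs coordinatewise and factors out of $\mathrm{ReLU}$ by positive homogeneity, exactly as it is used in the proof of Theorem~\ref{thm:three-stage-memory-multistage-pruning} and in Remark~\ref{rem:rmsnorm-sign}.
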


\paragraph{Context instantiation and slot notation.}
Define the context-instantiation operators
\begin{equation}
\mathcal{E}^{(j)}_{t}(x)[X]:=R^{(j)\top}_t(x)\,X\in\mathbb{R}^{t\times d},
\qquad j\in\{1,2\}.
\label{eq:app_ctx_inst_ops}
\end{equation}
Write
\[
U:=\mathcal{E}^{(1)}_{t}(x)[X],\qquad V:=\mathcal{E}^{(2)}_{t}(x)[X],
\qquad
u_i:=U[i,:],\ v_i:=V[i,:]\ \ (i\in[t]).
\]

\paragraph{Measure/dictionary notation.}
Let $\Omega:=\mathbb{R}^d\times\mathbb{R}^d$ with $\omega=(u,v)\in\Omega$.
Define the addressing and readout maps
\begin{equation}
\alpha(x;u):=x\,L^{(1)}(x)\,u^\top\in\mathbb{R},
\qquad
\beta(x;v):=v\,L^{(2)\top}(x)\in\mathbb{R}^{1\times d}.
\label{eq:mech-alpha-beta}
\end{equation}
\label{eq:app_alpha_beta}       
\label{eq:alpha-beta-reachable} 
\label{eq:imp1_alpha_beta}      

Define the induced dictionary element
\begin{equation}
\Psi(x;(u,v)):=\alpha(x;u)_+\,\beta(x;v)\in\mathbb{R}^{1\times d},
\qquad a_+:=\max\{0,a\}.
\label{eq:mech-Psi}
\end{equation}

The \tbd{Context-instantiated Memory Pool} is the atomic measure
\begin{equation}
\mu^{\mathrm{pool}}_{X,x}:=\sum_{i=1}^t \delta_{(u_i,v_i)}\qquad\text{on }\Omega.
\label{eq:mech-mu-ctx}
\end{equation}
\label{eq:app_mu_pool}           
\label{eq:ctx-measure-reachable} 

The hard gate is $g_x(u,v):=\mathbf{1}\{\alpha(x;u)>0\}$ and the \tbd{Current-step Activated Memory} is the restricted measure
\begin{equation}
\mu^{\mathrm{act}}_{X,x}:=g_x\,\mu^{\mathrm{pool}}_{X,x}.
\label{eq:app_mu_act}
\end{equation}
\label{eq:imp1_pool_act_measures} 

We freely identify finite sums with integrals against $\mu^{\mathrm{pool}}_{X,x}$, e.g.,
$\int f\,d\mu^{\mathrm{pool}}_{X,x}=\sum_{i=1}^t f(u_i,v_i)$.
(Here $\mu^{\mathrm{pool}}_{X,x}$ and $\mu^{\mathrm{act}}_{X,x}$ are finite atomic measures, so the integral notation is
purely shorthand for a finite sum; for vector-valued $f$ it is understood componentwise.)

\section{Redesigning Attention from First Principles: Theoretical Supports}

Our methodology starts from a minimal observation: \tbd{ReLU networks compute by selecting a sub-network conditioned on the input}. We then re-interpret autoregressive ReLU attention as a \tbd{dynamic} two-layer ReLU MLP whose hidden width grows with the context length, and we formalize its memory behavior as a \tbd{multi-stage pruning} process: Global (parameter-defined) Memory Space $\to$ Context-instantiated Memory Pool $\to$ Current-step Activated Memory.
This section provides the formal backbone for these statements, using the matrix-chain notation of our model.

\subsection{ReLU as input-conditioned sub-network selection}
We start from a depth-two ReLU MLP and omit biases as in common LLM practice:
\begin{equation}
o \;=\; \operatorname{ReLU}\!\big(x W^{(1)}_{\mathrm{MLP}}\big)\; W^{(2)}_{\mathrm{MLP}},
\qquad x\in\mathbb{R}^{1\times d}.
\end{equation}
ReLU acts as diagonal gating (Lemma~\ref{lem:relu-diagonal-gating}): for any row vector $z\in\mathbb{R}^{1\times m}$,
\[
D(z)\;:=\;\operatorname{diag}\big(\mathbf{1}\{z_1>0\},\ldots,\mathbf{1}\{z_m>0\}\big)\in\mathbb{R}^{m\times m},
\]
so $\operatorname{ReLU}(z)=zD(z)$. Each input $x$ therefore induces a binary mask
$D_x:=D(xW^{(1)}_{\mathrm{MLP}})$ and
\begin{equation}
o \;=\; x\,W^{(1)}_{\mathrm{MLP}}\,D_x\,W^{(2)}_{\mathrm{MLP}}.
\label{eq:relu_subnet}
\end{equation}
Hence a fixed two-layer ReLU MLP can be viewed as learning a family of linear sub-networks that are
input-conditioned:
on any region of input space where the sign pattern (and thus $D_x$) is constant, the map reduces to a single linear
transformation, while zero entries of $D_x$ deactivate the corresponding hidden units and prune their incident connections \citep{saxe2022neural}.

\begin{lemma}[ReLU is diagonal gating]
\label{lem:relu-diagonal-gating}

For any row vector $z\in\mathbb{R}^{1\times m}$, define
\[
D(z):=\mathrm{diag}(\mathbf{1}\{z_1>0\},\ldots,\mathbf{1}\{z_m>0\})
\in\mathbb{R}^{m\times m}.
\]
Then
\[
\mathrm{ReLU}(z)=z\,D(z).
\]
\end{lemma}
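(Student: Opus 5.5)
The plan is to prove the identity $\mathrm{ReLU}(z)=z\,D(z)$ entrywise. Both sides are row vectors in $\mathbb{R}^{1\times m}$, so it suffices to show that their $j$-th coordinates agree for every $j\in[m]$.

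First, compute the $j$-th entry of the right-hand side. Because $D(z)$ is diagonal, the product $z\,D(z)$ has entries
\[
(z\,D(z))_j=\sum_{k=1}^m z_k\,D(z)_{kj}=z_j\,\mathbf{1}\{z_j>0\}.
\]
All off-diagonal terms vanish since $D(z)_{kj}=0$ for $k\neq j$.

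Second, compare with $\mathrm{ReLU}(z)_j=\max\{0,z_j\}$ through a two-case split:
\begin{itemize}
\item If $z_j>0$, the indicator equals $1$, so the right-hand side is $z_j=\max\{0,z_j\}$.
\item If $z_j\le 0$, the indicator equals $0$, so the right-hand side is $0=\max\{0,z_j\}$.
\end{itemize}

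The boundary case $z_j=0$ falls in the second case. Both sides are $0$ there, which is why the strict inequality in the definition of $D(z)$ causes no ambiguity.

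There is no genuine obstacle here. The only point needing care is that the diagonal structure of $D(z)$ decouples the coordinates, so the matrix identity reduces to the scalar identity $\max\{0,a\}=a\,\mathbf{1}\{a>0\}$. As a remark after the proof, I would note that $D(z)$ depends on $z$, so the identity is only piecewise linear in $z$. This is exactly what yields \eqref{eq:relu_subnet} once we substitute $z=xW^{(1)}_{\mathrm{MLP}}$.
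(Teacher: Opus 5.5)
Your proof is correct and matches the paper's argument: both verify the identity coordinatewise, splitting into the cases $z_j>0$ and $z_j\le 0$. Your explicit computation $(z\,D(z))_j=z_j\,\mathbf{1}\{z_j>0\}$ from the diagonal structure simply spells out a step the paper leaves implicit.
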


\begin{proof}
For each coordinate $i$, if $z_i>0$ then $(zD(z))_i=z_i=\max(0,z_i)$; otherwise $(zD(z))_i=0=\max(0,z_i)$. \qedhere
\end{proof}

\begin{theorem}[Fixed 2-layer ReLU MLP selects a sub-network]
\label{thm:fixed-2layer-relu-subnetwork}
Consider a two-layer ReLU MLP
\[
f(x)=\mathrm{ReLU}(xW_1+b_1)\,W_2+b_2,
\qquad x\in\mathbb{R}^{1\times d}.
\]
For every input $x$, define the diagonal binary mask
\[
D_x := D(xW_1+b_1)\in\mathbb{R}^{h\times h},
\]
where $D(\cdot)$ is as in Lemma~\ref{lem:relu-diagonal-gating}.
Then
\[
f(x) \;=\; (xW_1+b_1)\,D_x\,W_2+b_2
\;=\; xW_1D_xW_2 \;+\; b_1D_xW_2 \;+\; b_2.
\]
Moreover, on any region where $D_x$ is constant, $f$ reduces to an affine map.
\end{theorem}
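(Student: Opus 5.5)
The plan is to reduce everything to Lemma~\ref{lem:relu-diagonal-gating} applied to the hidden pre-activation $z:=xW_1+b_1\in\mathbb{R}^{1\times h}$. That lemma gives $\mathrm{ReLU}(z)=z\,D(z)$ with $D(z)$ the diagonal indicator matrix of the positive coordinates. By definition, $D_x=D(xW_1+b_1)$.

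The first step is the substitution. We get $f(x)=\mathrm{ReLU}(xW_1+b_1)W_2+b_2=(xW_1+b_1)D_xW_2+b_2$. Distributing the row vector $xW_1+b_1$ over the matrix product then yields $xW_1D_xW_2+b_1D_xW_2+b_2$. This uses only associativity and distributivity of matrix multiplication, so the displayed identity holds pointwise for every $x\in\mathbb{R}^{1\times d}$.

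The second step is the affine-on-regions claim. Fix any set $\mathcal{S}\subseteq\mathbb{R}^{1\times d}$ on which $D_x\equiv D$ for a fixed binary diagonal matrix $D$. On $\mathcal{S}$ the identity reads $f(x)=x\,A+c$, where $A:=W_1DW_2\in\mathbb{R}^{d\times d_{\mathrm{out}}}$ and $c:=b_1DW_2+b_2$. Since neither $A$ nor $c$ depends on $x$, the restriction $f|_{\mathcal{S}}$ is affine. The zero diagonal entries of $D$ remove exactly the corresponding rows of $W_2$ and columns of $W_1$, which is the ``sub-network'' interpretation.

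There is no genuine obstacle here. The only point needing care is that ``region where $D_x$ is constant'' should be read as any subset of the sign-pattern cell $\{x: \mathbf{1}\{xW_1+b_1>0\}=\text{fixed}\}$. It need not be open or convex for the statement to hold. I would also remark that these cells are intersections of open and closed half-spaces, hence polyhedral, anticipating Theorem~\ref{thm:polyhedral}.
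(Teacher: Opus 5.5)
Your proof is correct and matches the paper's argument: apply Lemma~\ref{lem:relu-diagonal-gating} to $z=xW_1+b_1$, substitute and distribute, and note that a constant mask $D_0$ makes $f(x)=xW_1D_0W_2+(b_1D_0W_2+b_2)$ affine. Your extra remarks are also accurate: the zeroed hidden units give the sub-network reading, and the sign-pattern cells are polyhedral.
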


\begin{proof}
Apply Lemma~\ref{lem:relu-diagonal-gating} with $z=xW_1+b_1$ to obtain $\mathrm{ReLU}(xW_1+b_1)=(xW_1+b_1)D_x$ and substitute into $f$. If $D_x\equiv D_0$ on a region, then $f(x)=xW_1D_0W_2+(b_1D_0W_2+b_2)$ is affine there, and any zero diagonal entry of $D_0$ forces the corresponding hidden unit output to be $0$ (hence its incident connections contribute $0$). \qedhere
\end{proof}

\subsection{Attention as a dynamic two-layer ReLU MLP}

\paragraph{Notation.}
We use the common setup in Appendix~\ref{sec:app:setup}. In particular, the one-head block is defined by
\eqref{eq:ht_def}--\eqref{eq:ot_def} with activation \eqref{eq:app_sigma_def}.

\begin{proposition}[Dynamic two-layer form]
\label{prop:dynamic_two_layer_form}
Define the effective (dynamically instantiated) weights
\[
W^{(1)}(X,x_t)\;:=\;L^{(1)}(x_t)\,X^\top\,R^{(1)}(x_t)\in\mathbb{R}^{d\times t},\qquad
W^{(2)}(X,x_t)\;:=\;R^{(2)\top}(x_t)\,X\,L^{(2)\top}(x_t)\in\mathbb{R}^{t\times d}.
\]
Then \eqref{eq:ht_def}--\eqref{eq:ot_def} are equivalent to
\[
o_t\;=\;\sigma\!\big(x_t\,W^{(1)}(X,x_t)\big)\;W^{(2)}(X,x_t).
\]
Hence, for fixed $X$, the computation $x_t\mapsto o_t$ can be written in the algebraic form of a width-$t$ two-layer ReLU
network with \tbd{dynamically instantiated} (input-conditioned) weights.
\end{proposition}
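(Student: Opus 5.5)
The plan is to prove the identity by direct substitution and associativity of matrix multiplication. Then I will read off the "dynamic two-layer" interpretation by fixing $X$ and applying Lemma~\ref{lem:relu-diagonal-gating}.

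First, I start from the definition \eqref{eq:ht_def}, $h_t=x_t\,L^{(1)}(x_t)\,X^\top R^{(1)}_t(x_t)$. Grouping the last three factors, this equals $x_t\big(L^{(1)}(x_t)X^\top R^{(1)}(x_t)\big)=x_tW^{(1)}(X,x_t)$. The dimensions check: $(1\times d)(d\times d)(d\times t)(t\times t)$ gives $1\times t$, and $W^{(1)}\in\mathbb{R}^{d\times t}$.

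Second, in \eqref{eq:ot_def} I group the trailing factors $R^{(2)\top}_t(x_t)\,X\,L^{(2)\top}(x_t)$ into $W^{(2)}(X,x_t)\in\mathbb{R}^{t\times d}$. Substituting the first step into $\sigma_t(h_t)$ then gives $o_t=\sigma(x_tW^{(1)}(X,x_t))\,W^{(2)}(X,x_t)$, which is exactly \eqref{eq:app_eff_weights}.

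For the interpretive clause, I fix $X$. I will treat $W^{(1)}$ and $W^{(2)}$ as matrices of width $t$ whose entries are functions of $(X,x_t)$. By Lemma~\ref{lem:gate-invariance-reachable}, $\sigma(z)=\rho_t(z)^{-1}\mathrm{ReLU}(z)$, and by Lemma~\ref{lem:relu-diagonal-gating}, $\mathrm{ReLU}(z)=zD(z)$. Together these give $o_t=\rho_t(h_t)^{-1}\,x_tW^{(1)}D(h_t)W^{(2)}$. This is the form \eqref{eq:relu_subnet} of Theorem~\ref{thm:fixed-2layer-relu-subnetwork}, with hidden width $t$ and a positive scalar rescaling. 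The only difference is that the weights depend on the input.

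There is essentially no obstacle here. The one point needing care is the interpretation under the autoregressive convention, since $x_t$ is itself the first row of $X=X_{t:1}$. I will state explicitly that "fixing $X$" means holding the older rows fixed and varying only the newest row. Under that convention the statement is an algebraic form identity, not a claim that $x_t\mapsto o_t$ is piecewise linear. That claim would fail, because the weights depend on $x_t$ and because $\rho_t$ is nonconstant.
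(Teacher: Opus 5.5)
Your proposal is correct and matches the paper's proof: both substitute $W^{(1)}(X,x_t)$ and $W^{(2)}(X,x_t)$ into \eqref{eq:ht_def}--\eqref{eq:ot_def} and regroup the factors by associativity. Your extra rewriting $o_t=\rho_t(h_t)^{-1}x_tW^{(1)}D(h_t)W^{(2)}$, which combines Lemma~\ref{lem:gate-invariance-reachable} and Lemma~\ref{lem:relu-diagonal-gating}, goes beyond what the paper writes but is valid under Assumption~\ref{ass:l2norm-scalar-positive}, and your reading of ``fixing $X$'' agrees with the paper's footnote on the autoregressive convention.
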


\begin{proof}
Substitute the definitions of $W^{(1)}(X,x_t)$ and $W^{(2)}(X,x_t)$ into \eqref{eq:ht_def}--\eqref{eq:ot_def}. \qedhere
\end{proof}

\subsection{Context instantiation and multi-stage pruning}

The dynamic two-layer view exposes an explicit memory interpretation. We first define the
\tbd{context-instantiation operators} induced by the sequence-space mixings.

\paragraph{Notation.}
We recall $\mathcal{E}^{(j)}_t$, the mixed contexts $(U,V)$, the slot rows $(u_i,v_i)$,
and the measures $(\mu^{\mathrm{pool}}_{X,x},\mu^{\mathrm{act}}_{X,x})$ from Appendix~\ref{sec:app:setup}.
We also adopt Assumption~\ref{ass:l2norm-scalar-positive}, so L2 normalization preserves the ReLU gate pattern
(Lemma~\ref{lem:gate-invariance-reachable}).

\paragraph{L2 normalization vs.\ probability normalization.}
Throughout this work, $\mathrm{L2Norm}_t$ denotes the \tbd{affine-free} L2 normalization applied across the length-$t$
vector (i.e., no elementwise gain/bias). With this convention, $\mathrm{L2Norm}_t$ acts as a positive scalar rescaling and
therefore preserves sign patterns.

\begin{assumption}[Scalar positive L2 normalization across $t$]
For each $t\ge 1$, there exists $\rho_t:\mathbb{R}^{1\times t}\to(0,\infty)$ such that
\[
\mathrm{L2Norm}_t(z)\;=\;\frac{z}{\rho_t(z)}.
\]
\end{assumption}

Under this assumption, normalization does not change which paths are activated (the sign pattern); it only rescales their
magnitudes. In particular, for the activation $\sigma_t(\cdot):=\mathrm{ReLU}(\mathrm{L2Norm}_t(\cdot))$ we have the
equivalent form
\[
\sigma_t(z)\;=\;\frac{1}{\rho_t(z)}\,\mathrm{ReLU}(z),
\]
which we will use below without repeatedly expanding $\mathrm{L2Norm}_t(\cdot)$.

\begin{theorem}[Three-stage memory and multi-stage pruning]
\label{thm:three-stage-memory-multistage-pruning}
Under Assumption~\ref{ass:l2norm-scalar-positive}, the output \eqref{eq:ot_def} admits the representations
\[
\boxed{
o_t
\;=\;
\frac{1}{\rho_t(h_t)}
\int_{\Omega} \alpha(x_t;u)\,\beta(x_t;v)\;d\mu^{\mathrm{act}}_{X,x_t}(u,v)
\;=\;
\frac{1}{\rho_t(h_t)}
\int_{\Omega} \alpha(x_t;u)_+\,\beta(x_t;v)\;d\mu^{\mathrm{pool}}_{X,x_t}(u,v).
}
\]
Equivalently, $o_t$ is a readout from the \tbd{Current-step Activated Memory} obtained by restricting the
\tbd{Context-instantiated Memory Pool}.

Moreover, in terms of \tbd{supports} (set-valued, ignoring multiplicity), we have the pruning chain
\[
\Omega \;\supset\; \mathrm{supp}\big(\mu^{\mathrm{pool}}_{X,x_t}\big)\;\supset\;\mathrm{supp}\big(\mu^{\mathrm{act}}_{X,x_t}\big),
\]
corresponding to: Global (parameter-defined) Memory Space $\to$ Context-instantiated Memory Pool $\to$ Current-step Activated Memory.
\end{theorem}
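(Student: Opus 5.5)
The argument is a direct algebraic unpacking of \eqref{eq:ht_def}--\eqref{eq:ot_def} via the slot notation of Appendix~\ref{sec:app:setup}. The order is: coordinate formula for $h_t$, then the readout expressed as a slot sum, then normalization, then the support chain.

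\textbf{Step 1: coordinates of $h_t$.} Using $U=R^{(1)\top}_t(x_t)X$, we have $X^\top R^{(1)}_t(x_t)=U^\top$. Hence $h_t=x_tL^{(1)}(x_t)U^\top$, so the $i$-th coordinate is
\[
h_{t,i}=x_tL^{(1)}(x_t)u_i^\top=\alpha(x_t;u_i)
\]
by \eqref{eq:mech-alpha-beta}.

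\textbf{Step 2: rows of the second factor.} Similarly, $W^{(2)}_t(X,x_t)=V L^{(2)\top}(x_t)$. Its $i$-th row is $v_iL^{(2)\top}(x_t)=\beta(x_t;v_i)$. For any row vector $a\in\mathbb{R}^{1\times t}$, this gives
\[
aW^{(2)}_t=\sum_{i=1}^t a_i\,\beta(x_t;v_i).
\]

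\textbf{Step 3: apply gate invariance.} Lemma~\ref{lem:gate-invariance-reachable} gives $\sigma_t(h_t)=\rho_t(h_t)^{-1}\mathrm{ReLU}(h_t)$. The $i$-th coordinate is therefore $\rho_t(h_t)^{-1}\alpha(x_t;u_i)_+$. Note that $\rho_t(h_t)$ is one scalar shared by all slots, so it factors out of the sum. Combining with Step 2,
\[
o_t=\rho_t(h_t)^{-1}\sum_{i=1}^t\alpha(x_t;u_i)_+\beta(x_t;v_i).
\]
By the identification of integrals against the atomic measure $\mu^{\mathrm{pool}}_{X,x_t}$ with finite sums, this is the second boxed expression.

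\textbf{Step 4: pass to the activated measure.} Write $\alpha_+=\alpha\cdot\mathbf{1}\{\alpha>0\}=\alpha\, g_{x_t}$. Integrating $\alpha\beta$ against $g_{x_t}\mu^{\mathrm{pool}}=\mu^{\mathrm{act}}$ reproduces exactly the same sum. Slots with $\alpha\le 0$ contribute zero in both forms, including the boundary case $\alpha=0$, where $\alpha_+=0$ and the gate is $0$. This gives the first boxed expression.

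\textbf{Step 5: support chain.} $\mathrm{supp}(\mu^{\mathrm{pool}})=\{(u_i,v_i)\}_{i\in[t]}\subset\Omega$ trivially. Multiplying by an indicator $g_{x_t}\in\{0,1\}$ can only delete atoms, so $\mathrm{supp}(\mu^{\mathrm{act}})=\{(u_i,v_i):\alpha(x_t;u_i)>0\}\subseteq\mathrm{supp}(\mu^{\mathrm{pool}})$. Two caveats apply. First, the stated $\supset$ should be read as non-strict inclusion, since with $t$ finite both inclusions can be equalities or fail to be strict in degenerate cases. Second, when different indices give coinciding atoms the supports are compared as sets ignoring multiplicity, while the measures add multiplicities, which keeps the sum identity valid.

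The only real obstacle is bookkeeping. One must verify the transpose identities $X^\top R^{(1)}=U^\top$ and $R^{(2)\top}XL^{(2)\top}=VL^{(2)\top}$ carefully. One must also handle the measure-theoretic conventions: repeated atoms, and reading $\supset$ as non-strict inclusion.
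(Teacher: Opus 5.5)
Your proof is correct and takes essentially the paper's route. You use the coordinate identities $(h_t)_i=\alpha(x_t;u_i)$ and $W^{(2)}_t=VL^{(2)\top}(x_t)$, factor out the shared scalar $1/\rho_t(h_t)$ via gate invariance, and use $\alpha\,\mathbf{1}\{\alpha>0\}=\alpha_+$ to pass between the pool and activated forms; the paper derives the activated form first and you derive the pool form first, which is immaterial. One small correction to your caveat: the first inclusion $\Omega\supset\mathrm{supp}(\mu^{\mathrm{pool}}_{X,x_t})$ is always strict, because the support has at most $t$ points while $\Omega=\mathbb{R}^d\times\mathbb{R}^d$ is infinite, so only the second inclusion can become an equality (exactly when every $\alpha(x_t;u_i)>0$).
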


\begin{proof}
By Assumption~\ref{ass:l2norm-scalar-positive}, $\mathrm{L2Norm}_t(h_t)=h_t/\rho_t(h_t)$ with $\rho_t(h_t)>0$, hence
\[
\sigma_t(h_t)
=\mathrm{ReLU}\!\left(\frac{h_t}{\rho_t(h_t)}\right)
=\frac{1}{\rho_t(h_t)}\,\mathrm{ReLU}(h_t)
=\frac{1}{\rho_t(h_t)}\,\big(h_t\odot \mathbf{1}\{h_t>0\}\big).
\]
Using $(h_t)_i=\alpha(x_t;u_i)$ and $\beta(x_t;v_i)=v_iL^{(2)\top}(x_t)$, we obtain
\[
o_t
=\sum_{i=1}^t \sigma_t(h_t)_i\,\beta(x_t;v_i)
=\frac{1}{\rho_t(h_t)}\sum_{i=1}^t \alpha(x_t;u_i)\,\mathbf{1}\{\alpha(x_t;u_i)>0\}\,\beta(x_t;v_i).
\]
Writing the finite sum as an integral against
$\mu^{\mathrm{pool}}_{X,x_t}=\sum_{i=1}^t \delta_{(u_i,v_i)}$
and absorbing the indicator into the restricted measure
$\mu^{\mathrm{act}}_{X,x_t}=g_{x_t}\mu^{\mathrm{pool}}_{X,x_t}$
yields the first boxed formula. The second boxed formula follows from the identity
$\alpha\,\mathbf{1}\{\alpha>0\}=\alpha_+$.
The support inclusions follow immediately from the definitions of $\mu^{\mathrm{pool}}$ and $\mu^{\mathrm{act}}$.
\qedhere
\end{proof}

\begin{lemma}[DPLR temporal-mixing parameterization of $R^{(m)}_t(x)$ admits prefix-extension consistency]
\label{lem:dplr-extension-consistency}
Fix integers $1\le t<T$ and let
\[
P_{t\to T}:=\begin{bmatrix}I_t\\ 0\end{bmatrix}\in\mathbb{R}^{T\times t}.
\]
Let $s(x):=\phi(xW_S)\in\mathbb{R}^{r_s}$ and $S(x):=\mathrm{Diag}(s(x))\in\mathbb{R}^{r_s\times r_s}$,
where $\phi$ is any fixed nonlinearity (e.g.\ sigmoid).

Given any length-$t$ DPLR parameters
\[
p_t\in\mathbb{R}^{t},\qquad A_t,B_t\in\mathbb{R}^{t\times r_s},
\qquad D_t:=I_t+\mathrm{Diag}(p_t),
\]
define the length-$T$ extension by
\[
p_T:=\begin{bmatrix}p_t\\ -\mathbf{1}_{T-t}\end{bmatrix}\in\mathbb{R}^{T},\qquad
A_T:=P_{t\to T}A_t\in\mathbb{R}^{T\times r_s},\qquad
B_T:=P_{t\to T}B_t\in\mathbb{R}^{T\times r_s},\qquad
D_T:=I_T+\mathrm{Diag}(p_T).
\]
Define
\[
R_t(x):=D_t + A_t\,S(x)\,B_t^\top\in\mathbb{R}^{t\times t},
\qquad
R_T(x):=D_T + A_T\,S(x)\,B_T^\top\in\mathbb{R}^{T\times T}.
\]
Then for all $x$,
\[
R_T(x)\;=\;P_{t\to T}\,R_t(x)\,P_{t\to T}^\top.
\]
\end{lemma}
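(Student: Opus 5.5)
The plan is a direct block-matrix computation: split $R_T(x)=D_T+A_TS(x)B_T^\top$ into its diagonal part and its low-rank part, and identify each with the corresponding piece of $P_{t\to T}R_t(x)P_{t\to T}^\top$. Linearity of $M\mapsto P_{t\to T}MP_{t\to T}^\top$ lets me treat the two pieces separately. Throughout I use that $P_{t\to T}MP_{t\to T}^\top$ is the $T\times T$ matrix whose top-left $t\times t$ block is $M$ and whose other entries vanish.

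\textbf{Diagonal part.} With $p_T=[p_t;-\mathbf{1}_{T-t}]$ we get
\[
D_T=I_T+\mathrm{Diag}(p_T)=\begin{bmatrix}I_t+\mathrm{Diag}(p_t)&0\\0&I_{T-t}-I_{T-t}\end{bmatrix}=\begin{bmatrix}D_t&0\\0&0\end{bmatrix}.
\]
The right-hand side is exactly $P_{t\to T}D_tP_{t\to T}^\top$. This is the only point where the particular choice $-\mathbf{1}$ for the padded diagonal entries matters. Its role is to cancel the identity on the extended coordinates, so that the canonical extension is zero rather than the identity there.

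\textbf{Low-rank part.} Substituting $A_T=P_{t\to T}A_t$ and $B_T=P_{t\to T}B_t$, and using associativity with $(P_{t\to T}B_t)^\top=B_t^\top P_{t\to T}^\top$, gives
\[
A_TS(x)B_T^\top=P_{t\to T}\big(A_tS(x)B_t^\top\big)P_{t\to T}^\top.
\]
This holds for every $x$, because $S(x)$ is a fixed $r_s\times r_s$ matrix that does not depend on $t$ or $T$. Adding the diagonal and low-rank identities gives $R_T(x)=P_{t\to T}R_t(x)P_{t\to T}^\top$.

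No step here is a real obstacle. The one detail to check carefully is the diagonal padding: the statement fixes $D=I+\mathrm{Diag}(p)$ rather than $\mathrm{Diag}(p)$, so the padded entries of $p_T$ must be $-1$ for the extension to be zero off the leading block. Everything else is bookkeeping of block shapes: $P_{t\to T}$ is $T\times t$, $A_t$ and $B_t$ are $t\times r_s$, and the nonlinearity $\phi$ plays no role.
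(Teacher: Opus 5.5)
Your proposal is correct and matches the paper's proof. The paper also splits $R_T(x)$ into the diagonal part and the low-rank part. It uses the $-\mathbf{1}$ padding of $p_T$ to get $D_T=P_{t\to T}D_tP_{t\to T}^\top$, and associativity to get $A_TS(x)B_T^\top=P_{t\to T}\big(A_tS(x)B_t^\top\big)P_{t\to T}^\top$, before adding the two identities.
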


\begin{proof}
First, since $p_{t+1:T}=-\mathbf{1}$, the diagonal matrix $D_T=I_T+\mathrm{Diag}(p_T)$ has diagonal entries
$(1+p_{t,1},\dots,1+p_{t,t},0,\dots,0)$, hence
\[
D_T=P_{t\to T}D_tP_{t\to T}^\top.
\]
Second,
\[
A_TS(x)B_T^\top
=(P_{t\to T}A_t)\,S(x)\,(P_{t\to T}B_t)^\top
=P_{t\to T}\big(A_tS(x)B_t^\top\big)P_{t\to T}^\top.
\]
Adding the two identities yields
$R_T(x)=P_{t\to T}R_t(x)P_{t\to T}^\top$.
\end{proof}

\begin{corollary}[Extension consistency, monotone expressivity, and non-instantiated pool slots]
\label{cor:truncation-invariance}
Fix integers $1\le t<T$. For each length $\ell\in\{t,T\}$, let
$F_\ell:\mathbb{R}^{1\times d}\times \mathbb{R}^{\ell\times d}\to\mathbb{R}^{1\times d}$
denote the length-$\ell$ block map
\[
F_\ell(x,X)\;:=\;o_\ell(x;X),
\]
where $o_\ell(x;X)$ is computed by~\eqref{eq:geom_h_def}--\eqref{eq:geom_o_def} with
$R^{(m)}_\ell(x)\in\mathbb{R}^{\ell\times \ell}$ for $m\in\{1,2\}$ and the activation
\[
\sigma_\ell(z)\;:=\;\frac{1}{\rho_\ell(z)}\,\mathrm{ReLU}(z),
\]
as in Assumption~\ref{ass:l2norm-scalar-positive}.

Let $P_{t\to T}\in\mathbb{R}^{T\times t}$ be the canonical injection
\[
P_{t\to T}\;:=\;\begin{bmatrix}I_t\\[1mm]0\end{bmatrix},
\qquad
P_{t\to T}^\top=\begin{bmatrix}I_t & 0\end{bmatrix},
\]
and define the reverse-offset truncation
\[
\tau_t(\widetilde X)\;:=\;P_{t\to T}^\top \widetilde X
=\widetilde X[1\!:\!t,\,:]\in\mathbb{R}^{t\times d}.
\]

Assume the \tbd{sequence-space mixings are extension-consistent} in the sense that for each $m\in\{1,2\}$,
for any chosen length-$t$ operators $R^{(m)}_t(\cdot)$ with our learnable DPLR parameterization, there exists a length-$T$ choice $R^{(m)}_T(\cdot)$ such that
\begin{equation}
\label{eq:EC}
R^{(m)}_T(x)\;=\;P_{t\to T}\,R^{(m)}_t(x)\,P_{t\to T}^\top,
\qquad \forall x,\ \ m\in\{1,2\}.
\end{equation}
Assume moreover the L2 normalization scalar satisfies a \tbd{padding invariance} property:
\begin{equation}
\label{eq:PI}
\rho_T([z,0])\;=\;\rho_t(z)\qquad \forall z\in\mathbb{R}^{1\times t},
\end{equation}
where $[z,0]\in\mathbb{R}^{1\times T}$ denotes zero-padding.

Then, for every current input $x\in\mathbb{R}^{1\times d}$ and every longer history
$\widetilde X\in\mathbb{R}^{T\times d}$, letting $X:=\tau_t(\widetilde X)$, we have
\[
F_T(x,\widetilde X)\;=\;F_t(x,X).
\]
In particular, any computation realizable at length $t$ is realizable at length $T$ (by an appropriate extension of
the sequence-mixing parameters), so the realizable function family is monotone in context length.

In the \tbd{memory-pool} formulation, write $(u_i^{(\ell)},v_i^{(\ell)})$ for the slots instantiated into the
\tbd{Context-instantiated Memory Pool} at length $\ell$, and define
\[
\mu^{\mathrm{pool},\ell}_{X,x}\;:=\;\sum_{i=1}^{\ell}\delta_{(u_i^{(\ell)},v_i^{(\ell)})}.
\]
Let $\mu^{\mathrm{act},\ell}_{X,x}$ denote the corresponding \tbd{Current-step Activated Memory} measure obtained by
hard-gating $\mu^{\mathrm{pool},\ell}_{X,x}$ via $g_x(u,v)=\mathbf{1}\{\alpha(x;u)>0\}$.

Under~\eqref{eq:EC}, the length-$T$ slots satisfy
\[
(u_i^{(T)},v_i^{(T)})=
\begin{cases}
(u_i^{(t)},v_i^{(t)}), & i\in[t],\\
(0,0), & i\in\{t+1,\dots,T\},
\end{cases}
\]
hence
\[
\mu^{\mathrm{pool},T}_{\widetilde X,x}
=
\mu^{\mathrm{pool},t}_{X,x}
+
\sum_{i=t+1}^{T}\delta_{(0,0)}
\qquad\text{and}\qquad
\mu^{\mathrm{act},T}_{\widetilde X,x}
=
\mu^{\mathrm{act},t}_{X,x}.
\]
Therefore, tokens appearing only in the discarded suffix $\widetilde X[t\!+\!1\!:\!T,:]$ induce no \tbd{nontrivial}
pool atoms (only dummy $(0,0)$ slots) and contribute exactly zero to the readout.
\end{corollary}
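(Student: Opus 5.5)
The plan is a direct matrix-chain computation: substitute the extension-consistency relation \eqref{eq:EC} into the length-$T$ block \eqref{eq:ht_def}--\eqref{eq:ot_def} and push the injection $P:=P_{t\to T}$ through each factor. The key identities are $P^\top P=I_t$ and $P^\top\widetilde X=\tau_t(\widetilde X)=X$. The feature maps $L^{(1)}(x),L^{(2)}(x)$ do not depend on length, and the current input $x$ is the same at both lengths, so only the sequence factors change.

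\textbf{Step 1 (slots).} I compute the mixed contexts at length $T$:
\[
U^{(T)}=R^{(1)\top}_T(x)\widetilde X=P\,R^{(1)\top}_t(x)\,P^\top\widetilde X=P\,R^{(1)\top}_t(x)\,X=P\,U^{(t)} .
\]
The same argument gives $V^{(T)}=P\,V^{(t)}$. Reading off rows gives the stated slot identity: the first $t$ rows agree with the length-$t$ slots, and rows $t+1,\dots,T$ are zero. The pool identity $\mu^{\mathrm{pool},T}=\mu^{\mathrm{pool},t}+\sum_{i>t}\delta_{(0,0)}$ then follows from definition \eqref{eq:mech-mu-ctx}.

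\textbf{Step 2 (scores and gate).} Next, $h_T=x L^{(1)}(x)U^{(T)\top}=x L^{(1)}(x)U^{(t)\top}P^\top=[h_t,0]$. Since $\alpha(x;0)=0$ is not $>0$, the gate $g_x$ kills every dummy atom, so $\mu^{\mathrm{act},T}=\mu^{\mathrm{act},t}$.

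\textbf{Step 3 (output).} By padding invariance \eqref{eq:PI}, $\rho_T(h_T)=\rho_t(h_t)$. This gives
\[
\sigma_T(h_T)=[\sigma_t(h_t),0]=\sigma_t(h_t)P^\top .
\]
Hence
\[
o_T=\sigma_t(h_t)P^\top P\,V^{(t)}L^{(2)\top}(x)=\sigma_t(h_t)V^{(t)}L^{(2)\top}(x)=o_t .
\]
Equivalently, I can use the measure form of Theorem~\ref{thm:three-stage-memory-multistage-pruning} together with $\beta(x;0)=0$. This proves $F_T(x,\widetilde X)=F_t(x,X)$ and that suffix tokens contribute nothing.

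\textbf{Step 4 (monotone expressivity).} Given any length-$t$ DPLR operators, Lemma~\ref{lem:dplr-extension-consistency} supplies a length-$T$ DPLR choice satisfying \eqref{eq:EC}, namely padding $p$ with $-\mathbf{1}$ and $A,B$ with zeros. Step 3 then shows the length-$T$ block reproduces the length-$t$ function. For the concrete $\rho_\ell(z)=\sqrt{\|z\|_2^2+\varepsilon}$, padding invariance is immediate.

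The computation itself is routine. The only point needing care is the bookkeeping of $P$ versus $P^\top$ under the row-vector convention: checking that $R^{(1)}_T$ enters $h_T$ as $R^{(1)}_T$ while $R^{(2)}_T$ enters $o_T$ transposed, and that both reduce correctly via $P^\top\widetilde X=X$.
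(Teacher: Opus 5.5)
Your proposal is correct and follows essentially the same route as the paper. Both push the canonical injection through the mixed contexts to get $U_T=[U_t;0]$ and $V_T=[V_t;0]$, deduce $h_T=[h_t,0]$, use padding invariance to obtain $\sigma_T(h_T)=[\sigma_t(h_t),0]$, and read off the slot, pool, and activated-measure identities from the zero rows, where $\alpha(x;0)=0$ fails the strict gate. Your appeal to Lemma~\ref{lem:dplr-extension-consistency} for the monotone-expressivity clause is a harmless extra check, since the corollary already assumes such an extension exists.
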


\begin{proof}
Fix $x\in\mathbb{R}^{1\times d}$ and $\widetilde X\in\mathbb{R}^{T\times d}$, and let $X:=\tau_t(\widetilde X)=P_{t\to T}^\top \widetilde X$.
Under~\eqref{eq:EC}, for each $m\in\{1,2\}$,
\[
R_T^{(m)\top}(x)\,\widetilde X
=
\big(P_{t\to T}R_t^{(m)}(x)P_{t\to T}^\top\big)^\top \widetilde X
=
P_{t\to T}\,R_t^{(m)\top}(x)\,P_{t\to T}^\top \widetilde X
=
P_{t\to T}\,R_t^{(m)\top}(x)\,X.
\]
Therefore, defining $U_t:=R_t^{(1)\top}(x)X,\,  V_t:=R_t^{(2)\top}(x)X;\,
U_T:=R_T^{(1)\top}(x)\widetilde X,\, V_T:=R_T^{(2)\top}(x)\widetilde X$,
we have
\[
U_T=\begin{bmatrix}U_t\\0\end{bmatrix},
\qquad
V_T=\begin{bmatrix}V_t\\0\end{bmatrix}.
\]
Consequently, the pre-activation at length $T$ is a zero-padded version of the length-$t$ one:
\[
h_T(x;\widetilde X)
=
x\,L^{(1)}(x)\,\widetilde X^\top R_T^{(1)}(x)
=
x\,L^{(1)}(x)\,X^\top R_t^{(1)}(x)\,P_{t\to T}^\top
=
[h_t(x;X),0].
\]
By Assumption~\ref{ass:l2norm-scalar-positive} (applied at the corresponding lengths),
\[
\sigma_T(h_T)
=
\frac{1}{\rho_T([h_t,0])}\,[\mathrm{ReLU}(h_t),0],
\qquad
\text{and by~\eqref{eq:PI}, }\ \rho_T([h_t,0])=\rho_t(h_t).
\]
Using $V_T=\big[\begin{smallmatrix}V_t\\0\end{smallmatrix}\big]$,
\[
\sigma_T(h_T)\,V_T
=
\frac{1}{\rho_t(h_t)}[\mathrm{ReLU}(h_t),0]\begin{bmatrix}V_t\\0\end{bmatrix}
=
\frac{1}{\rho_t(h_t)}\,\mathrm{ReLU}(h_t)\,V_t
=
\sigma_t(h_t)\,V_t.
\]
Multiplying by the shared $L^{(2)\top}(x)$ yields $F_T(x,\widetilde X)=F_t(x,X)$.

The slot identities follow from $U_T=[U_t;0]$ and $V_T=[V_t;0]$. In particular, for $i>t$ we have
$u_i^{(T)}=v_i^{(T)}=0$, hence $\alpha(x;u_i^{(T)})=0$ and the dummy atoms contribute zero; moreover they are excluded
from the \tbd{Current-step Activated Memory} measure because the hard gate is $\mathbf{1}\{\alpha(x;u)>0\}$.
\end{proof}

\subsection{Ever-growing hidden representation and an implicit infinite Global (parameter-defined) Memory Space}

Finally, the dynamic formulation clarifies why ReLU-based attention behaves fundamentally differently from a fixed-width ReLU MLP.

\begin{proposition}[Ever-growing hidden width and context-instantiated pool slots]
\label{prop:ever-growing-width}
For each context length $t\ge 1$, the hidden pre-activation satisfies $h_t\in\mathbb{R}^{1\times t}$ and the
\tbd{Context-instantiated Memory Pool} measure is atomic with at most $t$ atoms:
\[
\mu^{\mathrm{pool}}_{X,x_t}\;=\;\sum_{i=1}^t \delta_{(u_i,v_i)}.
\]
Consequently, the block can instantiate at most $t$ candidate memory slots (and then activates a subset of them) at step~$t$.
\end{proposition}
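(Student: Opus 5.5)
The claim is almost definitional, so the plan is to unwind the shapes in \eqref{eq:ht_def}--\eqref{eq:ot_def} and then reuse the slot representation from Theorem~\ref{thm:three-stage-memory-multistage-pruning}.

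\textbf{Step 1: the width of $h_t$.} Fix $t\ge 1$, $x_t\in\mathbb{R}^{1\times d}$ and $X=X_{t:1}\in\mathbb{R}^{t\times d}$. Multiplying out the dimensions of the chain in \eqref{eq:ht_def} gives:
\begin{itemize}
\item $x_tL^{(1)}(x_t)\in\mathbb{R}^{1\times d}$;
\item $X^\top\in\mathbb{R}^{d\times t}$;
\item $R^{(1)}_t(x_t)\in\mathbb{R}^{t\times t}$.
\end{itemize}
Hence $h_t\in\mathbb{R}^{1\times t}$. The only point to note is that the sequence mixer acts on $\mathbb{R}^t$, so it preserves the width rather than changing it.

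\textbf{Step 2: the pool has at most $t$ atoms.} Form the mixed contexts $U=R^{(1)\top}_t(x_t)X$ and $V=R^{(2)\top}_t(x_t)X$. Both lie in $\mathbb{R}^{t\times d}$, so there are exactly $t$ row pairs $(u_i,v_i)$, $i\in[t]$. By definition \eqref{eq:mech-mu-ctx}, the pool measure is $\mu^{\mathrm{pool}}_{X,x_t}=\sum_{i=1}^t\delta_{(u_i,v_i)}$. This is a finite atomic measure whose support has cardinality at most $t$, and equality can fail because distinct indices may produce coinciding pairs.

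\textbf{Step 3: the output reads only from these slots.} Invoke Theorem~\ref{thm:three-stage-memory-multistage-pruning}. It writes
\[
o_t=\rho_t(h_t)^{-1}\sum_{i=1}^t\alpha(x_t;u_i)_+\,\beta(x_t;v_i),
\]
so the output depends on the context only through these at most $t$ candidate slots. The activated measure is $\mu^{\mathrm{act}}=g_{x_t}\mu^{\mathrm{pool}}$ with $g_{x_t}\in\{0,1\}$. Its support is therefore contained in $\mathrm{supp}(\mu^{\mathrm{pool}})$, which is exactly the statement that the block activates a subset of the candidates.

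\textbf{Main obstacle.} There is no genuine difficulty here. The one thing to handle carefully is interpreting ``at most $t$ atoms'' correctly: it means support cardinality, since the total mass is exactly $t$ counted with multiplicity. Zero slots, such as the padding slots in Corollary~\ref{cor:truncation-invariance}, still count among the $t$ atoms but are never activated, because $\alpha=0$ fails the strict gate.
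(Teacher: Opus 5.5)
Your proposal is correct and takes essentially the paper's route: the paper's proof is also an immediate unwinding of definitions, namely that $h_t=x_tL^{(1)}(x_t)X^\top R^{(1)}(x_t)$ has length $t$ and that $\mu^{\mathrm{pool}}_{X,x_t}$ places one atom per row pair of $U=R^{(1)\top}(x_t)X$ and $V=R^{(2)\top}(x_t)X$. Your additional remarks are accurate refinements rather than a different argument: you distinguish support cardinality from mass $t$, and you note that activation is a restriction by the hard gate, so zero slots are never activated.
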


\begin{proof}
Immediate from the definitions: $h_t=x_tL^{(1)}(x_t)X^\top R^{(1)}(x_t)$ has length $t$, and
$\mu^{\mathrm{pool}}_{X,x_t}$ places one atom per row pair $(u_i,v_i)$ of
$U=R^{(1)\top}(x_t)X$ and $V=R^{(2)\top}(x_t)X$. \qedhere
\end{proof}

\begin{proposition}[Implicit infinite reachable atom set across contexts]
\label{prop:infinite-dictionary}
Assume the token/state space contains infinitely many distinct vectors (e.g., an infinite subset of $\mathbb{R}^d$).
Assume moreover that the parameterization (pure ReLU attention) of the sequence-mixing operators contains the identity mapping, i.e.,
for each $t$ there exists a choice of parameters such that
\(
R^{(1)}_\theta(x)\equiv I_t,\, R^{(2)}_\theta(x)\equiv I_t, \, \text{for all }x.
\)
Define the \tbd{reachable atom set} (across all contexts and lengths)
\[
\mathcal{A}_\theta
\;:=\;
\bigcup_{t\ge 1}\ \bigcup_{X\in\mathbb{R}^{t\times d}}\ \bigcup_{x\in\mathbb{R}^{1\times d}}
\ \mathrm{supp}\!\big(\mu^{\mathrm{pool}}_{X,x}\big)
\ \subseteq\ \Omega:=\mathbb{R}^d\times\mathbb{R}^d.
\]
Then $\mathcal{A}_\theta$ is infinite (indeed, uncountable whenever the token/state space contains a continuum).
\end{proposition}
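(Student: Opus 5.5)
The plan is to exhibit an explicit injection from the token/state space into $\mathcal{A}_\theta$. Using the assumed parameter choice with $R^{(1)}_\theta(x)\equiv I_t$ and $R^{(2)}_\theta(x)\equiv I_t$, the context-instantiation operators of \eqref{eq:app_ctx_inst_ops} become the identity. Hence $U=V=X$, and the slots are simply $(u_i,v_i)=(X[i,:],X[i,:])$. By definition of $\mu^{\mathrm{pool}}_{X,x}$ in \eqref{eq:mech-mu-ctx} (or by Prop.~\ref{prop:ever-growing-width}), every row pair of the mixed contexts is an atom, so $\mathrm{supp}(\mu^{\mathrm{pool}}_{X,x})=\{(X[i,:],X[i,:]):i\in[t]\}$. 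No multiplicities or cancellations arise, because a Dirac mass $\delta_{(u_i,v_i)}$ always has weight one.

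Concretely, I will take $t=1$. For any token/state $z$ in the assumed infinite set $\mathcal{Z}\subseteq\mathbb{R}^{1\times d}$, set $X:=z$ (a $1\times d$ context) and $x:=z$, which is consistent with the autoregressive convention that $x$ is the newest row of $X$. Then $\mathrm{supp}(\mu^{\mathrm{pool}}_{X,x})=\{(z,z)\}$, so the diagonal image $\{(z,z):z\in\mathcal{Z}\}$ is contained in $\mathcal{A}_\theta$. The map $z\mapsto(z,z)$ is injective into $\Omega=\mathbb{R}^d\times\mathbb{R}^d$. Therefore $|\mathcal{A}_\theta|\ge|\mathcal{Z}|$: the set is infinite, and it is uncountable whenever $\mathcal{Z}$ contains a continuum, since an injective image of an uncountable set is uncountable.

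There is no real obstacle. The only point needing care is how the parameters $\theta$ are quantified. The reachable set is defined for a fixed $\theta$, and the hypothesis supplies a $\theta$ under which the mixings are identically $I_t$ for every $t$ (and for $t=1$, which is all we need). So I will state explicitly that the claim concerns this identity-containing choice, as in pure ReLU attention; the choice of $L^{(1)},L^{(2)}$ is irrelevant because atoms are defined before gating or readout. Optionally, I would remark that with $t>1$ and distinct rows one obtains $t$ distinct atoms per context, matching the ever-growing width of Prop.~\ref{prop:ever-growing-width}.
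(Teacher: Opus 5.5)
Your proposal is correct and follows essentially the same argument as the paper. Both specialize to $t=1$ with identity mixings, note that the context $X=[z]$ (with $x=z$) yields the single atom $(z,z)$, and use injectivity of $z\mapsto(z,z)$ to carry infiniteness (or uncountability) of the token space over to $\mathcal{A}_\theta$; your remark on fixing the identity-containing $\theta$ just makes explicit what the paper leaves implicit.
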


\begin{proof}
It suffices to consider $t=1$. Under $R^{(1)}_\theta\equiv R^{(2)}_\theta\equiv I_1$ and any context matrix
$X=[x]\in\mathbb{R}^{1\times d}$, we have $u_1=v_1=x$, hence
$\mu^{\mathrm{pool}}_{X,x}=\delta_{(x,x)}$ and $\mathrm{supp}(\mu^{\mathrm{pool}}_{X,x})=\{(x,x)\}\subset\Omega$.
As $x$ ranges over an infinite set of distinct vectors, we obtain infinitely many distinct atoms $(x,x)$,
so $\mathcal{A}_\theta$ is infinite. If the token/hidden state space contains a continuum, the set $\{(x,x)\}$ is uncountable.
\qedhere
\end{proof}

\paragraph{Discussion.}
Above formalizes the key methodological shift: instead of treating attention weights as probabilities that must be
normalized, we treat the block as a dynamic ReLU MLP that
(i) specifies a \tbd{Global (parameter-defined) Memory Space} (an ambient atom space together with a parameter-defined
address/read dictionary),
(ii) instantiates a finite \tbd{Context-instantiated Memory Pool} from the current history, and
(iii) performs input-conditioned sub-network selection via \tbd{Current-step Activated Memory} over that pool.
Assumption~\ref{ass:l2norm-scalar-positive} makes explicit why replacing probability normalization with L2-based normalization can preserve the
\tbd{selection mechanism} (the activated set) while only rescaling magnitudes, aligning the block with standard MLP practice.

\subsection{Partition Geometry: From Polyhedral Splines to Warped Splines}
\label{sec:partition-geometry}

We summarize the geometric view of our dynamic two-layer block. For one head, given the current token
$x \in \mathbb{R}^{1\times d}$ and the (reverse-offset) context matrix $X:=X_{t:1}\in\mathbb{R}^{t\times d}$, define
\begin{align}
h(x;X) 
&:= x\,L^{(1)}(x)\,X^\top\,R^{(1)}(x)\in\mathbb{R}^{1\times t}, \label{eq:geom_h_def}\\
o(x;X) 
&:= \sigma\!\big(h(x;X)\big)\,R^{(2)\top}(x)\,X\,L^{(2)\top}(x)\in\mathbb{R}^{1\times d}. \label{eq:geom_o_def}
\end{align}
Throughout this section we focus on the \tbd{sign pattern} of $h(x;X)$, which fully determines the ReLU mask.
If $\mathrm{L2Norm}_t$ is a positive scalar rescaling across the length-$t$ vector, then it preserves sign patterns
and does not affect the partition geometry (Remark~\ref{rem:rmsnorm-sign}); however, it can change whether the map is
\tbd{piecewise linear} by introducing a nonconstant positive scalar factor (Remark~\ref{rem:l2norm-breaks-cpwl}).

\paragraph{Notation.}
For fixed $X$, let $h_j(x;X)$ denote the $j$-th coordinate of $h(x;X)$, and define the boundary set
\begin{equation}
\mathcal{B}_X := \bigcup_{j=1}^t \{x:\ h_j(x;X)=0\}.
\label{eq:boundary_set}
\end{equation}
On $\mathbb{R}^d\setminus \mathcal{B}_X$, the sign pattern
$s_X(x):=\mathbf{1}\{h(x;X)>0\}\in\{0,1\}^t$ is locally constant.

\subsubsection{Static mixing: polyhedral (piecewise-linear) geometry}

We first consider the ``static'' case, which recovers the classical polyhedral geometry of two-layer ReLU networks.

\begin{theorem}[Polyhedral spline under static mixing]
\label{thm:polyhedral}
Assume $L^{(1)}(x)\equiv L^{(1)}$, $L^{(2)}(x)\equiv L^{(2)}$ and
$R^{(1)}(x)\equiv R^{(1)}$, $R^{(2)}(x)\equiv R^{(2)}$ are constant w.r.t.\ $x$.
Define the effective context matrices
\begin{equation}
U := R^{(1)\top}X,\qquad V := R^{(2)\top}X,
\end{equation}
and let
\begin{equation}
B(X) := L^{(1)}U^\top = L^{(1)}X^\top R^{(1)} \in \mathbb{R}^{d\times t}.
\end{equation}
Let $b_j\in\mathbb{R}^{d}$ be the $j$-th column of $B(X)$ and define hyperplanes
$H_j := \{x:\ x\,b_j=0\}$.
Then, for fixed $X$, the map $x\mapsto o(x;X)$ is \tbd{continuous piecewise linear} (CPWL).
Moreover, on each open cell (possibly empty)
\begin{equation}
\mathcal{C}_s(X) := \Big(\bigcap_{j:s_j=1}\{x:\ x\,b_j>0\}\Big)\ \cap\
                 \Big(\bigcap_{j:s_j=0}\{x:\ x\,b_j<0\}\Big),
\qquad s\in\{0,1\}^t,
\label{eq:cell_def_static}
\end{equation}
the output is linear:
\begin{equation}
\forall x\in\mathcal{C}_s(X):\qquad
o(x;X)=x\,A_s(X),
\quad\text{where}\quad
A_s(X)=B(X)\,D_s\,V\,L^{(2)\top},
\ \ D_s:=\mathrm{diag}(s).
\label{eq:linear_piece}
\end{equation}
\end{theorem}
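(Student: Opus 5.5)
The plan is to reduce everything to Lemma~\ref{lem:relu-diagonal-gating} once the static mixing has been absorbed into fixed matrices. Since $L^{(1)},R^{(1)}$ do not depend on $x$, the pre-activation is $h(x;X)=x\,L^{(1)}X^\top R^{(1)}=x\,B(X)$. So, for fixed $X$, it is a \emph{linear} function of $x$, and its $j$-th coordinate is $x\,b_j$. Likewise, the second-layer factor $R^{(2)\top}XL^{(2)\top}=V L^{(2)\top}$ is a constant $t\times d$ matrix. Hence $x\mapsto o(x;X)$ is exactly a fixed bias-free two-layer network $x\mapsto\sigma(xB(X))\,VL^{(2)\top}$ of width $t$, and Theorem~\ref{thm:fixed-2layer-relu-subnetwork} (with $b_1=b_2=0$) applies directly.

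\textbf{Step 1 (linear pieces).} Fix $s\in\{0,1\}^t$ and take $x\in\mathcal{C}_s(X)$. By the definition of the cell, $\mathbf{1}\{x b_j>0\}=s_j$ for every $j$. Therefore the mask of Lemma~\ref{lem:relu-diagonal-gating} satisfies $D(xB(X))=D_s$, and
\[
\mathrm{ReLU}(xB(X))=x\,B(X)\,D_s .
\]
Multiplying on the right by $VL^{(2)\top}$ gives $o(x;X)=x\,B(X)D_sVL^{(2)\top}=xA_s(X)$, which is \eqref{eq:linear_piece}. The cells $\mathcal{C}_s(X)$ are open polyhedral cones, being finite intersections of open half-spaces. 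Together with the hyperplanes $H_j$ they cover $\mathbb{R}^d$, so this is the polyhedral partition.

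\textbf{Step 2 (continuity and CPWL).} The map $x\mapsto\mathrm{ReLU}(xB(X))$ is a composition of a linear map with the coordinatewise continuous piecewise-linear function $\max(0,\cdot)$, so it is CPWL. Composing on the right with the constant linear map $VL^{(2)\top}$ preserves this. On each $H_j$, coordinate $j$ equals $0$ whether it is gated on or off. Hence the formulas $xA_s$ on adjacent cells agree on their common boundary, which confirms continuity directly. A finite family of polyhedra (closures of the cells) covers $\mathbb{R}^d$, with a linear formula on each, which is the CPWL property.

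\textbf{Main obstacle.} The delicate point is the normalization. With $\sigma_t=\mathrm{ReLU}\circ\mathrm{L2Norm}_t$, Lemma~\ref{lem:gate-invariance-reachable} gives
\[
o(x;X)=\rho_t(xB(X))^{-1}\,xA_s(X)\quad\text{on }\mathcal{C}_s(X).
\]
This factor is positive and nonconstant, so the cells and boundaries are unchanged, but exact linearity fails on each cell (the point of Remark~\ref{rem:l2norm-breaks-cpwl}). I will therefore state the CPWL and linear-piece claims for the unnormalized activation $\sigma=\mathrm{ReLU}$, or equivalently for a constant $\rho$. For the normalized case I will record separately that the partition $\{\mathcal{C}_s(X)\}$ is identical, since the sign pattern is invariant, and that $o$ is the linear piece times the positive scalar $\rho_t(h)^{-1}$.
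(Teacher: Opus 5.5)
Your proposal is correct and follows the paper's own proof: the paper also writes $h(x;X)=xB(X)$, uses $\mathrm{ReLU}(xB(X))=xB(X)D_s$ on each cell $\mathcal{C}_s(X)$, gets continuity from continuity of ReLU and matrix products, and implicitly takes $\sigma=\mathrm{ReLU}$, deferring the normalized case to Remark~\ref{rem:l2norm-breaks-cpwl} exactly as you propose. One small point: your remark that the closures of the cells cover $\mathbb{R}^d$ fails if some $b_j=0$, since then every $\mathcal{C}_s(X)$ is empty, but your composition argument already gives CPWL in that case, so nothing is lost.
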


\begin{proof}
Under the stated assumptions,
$h(x;X)=x\,B(X)$ and $o(x;X)=\mathrm{ReLU}(xB(X))\,V\,L^{(2)\top}$.
On $\mathcal{C}_s(X)$, the sign pattern of $xB(X)$ is fixed to $s$, hence
$\mathrm{ReLU}(xB(X))=(xB(X))D_s$.
Substituting yields~\eqref{eq:linear_piece}.
Continuity follows from continuity of ReLU and matrix multiplication.
\end{proof}

\begin{remark}[Scalar L2 normalization preserves boundaries but generally breaks CPWL]
\label{rem:l2norm-breaks-cpwl}
In our implementation we often use the normalized activation
\[
\sigma_t(z)\;:=\;\mathrm{ReLU}(\mathrm{L2Norm}_t(z))
\;=\;\frac{1}{\rho_t(z)}\,\mathrm{ReLU}(z),
\qquad \rho_t(z)>0 \ \text{a scalar},
\]
which preserves sign patterns (hence leaves $\mathcal{B}_X$ unchanged) but introduces a nonconstant positive rescaling.
In the static setting of Theorem~\ref{thm:polyhedral}, replacing $\mathrm{ReLU}$ by $\sigma_t$ yields
\[
o_{\mathrm{norm}}(x;X)
=\sigma_t(xB(X))\,V\,L^{(2)\top}.
\]
On any cell $\mathcal{C}_s(X)$, since $\mathrm{ReLU}(xB(X))=(xB(X))D_s$, we obtain
\[
\forall x\in\mathcal{C}_s(X):\qquad
o_{\mathrm{norm}}(x;X)
=\frac{1}{\rho_t(xB(X))}\,x\,A_s(X).
\]
Thus the \tbd{partition geometry} (the hyperplanes $H_j$) is unchanged, but the map is generally \tbd{not linear} on
$\mathcal{C}_s(X)$ (hence not CPWL) unless $\rho_t$ is constant on that cell. When $\rho_t$ is smooth and bounded away from
$0$ (e.g., $\rho_t(z)=\sqrt{\|z\|_2^2+\varepsilon}$ with $\varepsilon>0$), $o_{\mathrm{norm}}(\cdot;X)$ is smooth on each cell.
\end{remark}

\paragraph{What $R^{(1)}$ changes in the static regime.}
Theorem~\ref{thm:polyhedral} shows that the \tbd{partition boundaries} are exactly the hyperplanes $H_j$,
whose normals are the columns of $B(X)=L^{(1)}X^\top R^{(1)}$.
Thus $R^{(1)}$ acts \tbd{in the hidden/sequence dimension} by mixing these normals.

\begin{corollary}[Diagonal vs.\ mixing effects of $R^{(1)}$ (static)]
\label{cor:R-static}
In Theorem~\ref{thm:polyhedral}, write $B(X)=B_0(X)R^{(1)}$ with $B_0(X):=L^{(1)}X^\top$.
If $R^{(1)}$ is diagonal with all diagonal entries nonzero, then the set of partition hyperplanes
$\{H_j\}_{j=1}^t$ is unchanged (each normal is only rescaled), i.e., the \tbd{geometry of the knots} is preserved.
In contrast, any off-diagonal mixing in $R^{(1)}$ replaces each normal by a linear combination of the base normals:
\begin{equation}
b_j = B_0(X)\,r_j = \sum_{i=1}^t (r_j)_i\,b^{(0)}_i,
\qquad r_j:=R^{(1)}[:,j],\ \ b^{(0)}_i:=B_0(X)[:,i],
\end{equation}
thereby changing the hyperplane arrangement and hence the partition geometry.
\end{corollary}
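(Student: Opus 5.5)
The corollary splits into two claims, and both follow from the identity $B(X)=B_0(X)R^{(1)}$. By the definition in Theorem~\ref{thm:polyhedral}, $B(X)=L^{(1)}X^\top R^{(1)}$, and $B_0(X):=L^{(1)}X^\top$. Reading this product column by column gives $b_j=B_0(X)\,r_j$ with $r_j=R^{(1)}[:,j]$. Expanding the matrix--vector product gives $b_j=\sum_i (r_j)_i\, b^{(0)}_i$, which is exactly the displayed formula. This identity is pure linear algebra and needs nothing beyond Theorem~\ref{thm:polyhedral}.

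\emph{Diagonal case.} Take $R^{(1)}=\mathrm{Diag}(c_1,\dots,c_t)$ with every $c_j\neq 0$. Then $r_j=c_j e_j$, so $b_j=c_j b^{(0)}_j$. The hyperplane $H_j=\{x:\ x b_j=0\}=\{x:\ c_j\, x b^{(0)}_j=0\}$ therefore coincides with $H^{(0)}_j:=\{x:\ x b^{(0)}_j=0\}$, and the arrangement $\{H_j\}$ equals the unmixed one. The boundary set $\mathcal{B}_X$ of \eqref{eq:boundary_set}, which is the union of these hyperplanes, is then unchanged. One caveat: when $c_j<0$ the sign pattern on each side is flipped, so the cells $\mathcal{C}_s(X)$ are relabeled, namely $s_j\mapsto 1-s_j$. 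The geometric partition itself is identical, and this matches the claim that the geometry of the knots is preserved. I would state this relabeling explicitly so that the assertion about the geometry is not confused with a claim that the labels are preserved.

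\emph{Mixing case.} When $R^{(1)}$ has off-diagonal entries, the normal $b_j$ is a genuine linear combination of several base normals. Its zero set is generally not any of the $H^{(0)}_i$, so the arrangement changes. The main obstacle is making "changes the geometry" precise, since degenerate configurations exist in which the arrangement does not change. Examples are base normals that are parallel, and combinations that happen to vanish. I plan to prove the claim in a generic sense. Choose $X$ so that the base normals $b^{(0)}_i$ are pairwise linearly independent; this is possible whenever $L^{(1)}$ has rank at least $2$ and the rows of $X$ are in general position. Then if $(r_j)_i\neq 0$ for two distinct indices $i$, the vector $b_j$ is not parallel to any single $b^{(0)}_k$. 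Hence $H_j\notin\{H^{(0)}_k\}$, and the hyperplane arrangement, and with it the partition, is strictly altered. Given the displayed formula, the corollary's stated content is really the identity, and I would present the genericity argument as the justification for the phrase "thereby changing."
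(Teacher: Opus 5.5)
Your core argument is the paper's proof. The paper makes exactly your two observations. The first is the column identity $b_j=B_0(X)r_j$, which follows from $B=B_0R^{(1)}$. The second is that for diagonal $R^{(1)}$, $b_j=c_jb^{(0)}_j$ and so $H_j=H^{(0)}_j$. The paper gives no argument for ``thereby changing.'' Your relabeling remark for $c_j<0$ is correct and is a refinement the paper omits. You are also right that the literal claim needs qualification: a permutation matrix has off-diagonal entries but only permutes the hyperplanes, and your two-nonzero-entries hypothesis correctly excludes that case.

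The genericity step you add is false once $t\ge 3$. Pairwise linear independence of the base normals does not stop a column with two nonzero entries from reproducing another base normal. For example, take $d=2$, $L^{(1)}=I_2$, and let the rows of $X$ be $e_1^\top,e_2^\top,(e_1+e_2)^\top$, so the base normals $e_1,e_2,e_1+e_2$ are pairwise independent. The invertible $R^{(1)}$ with columns $(1,1,0)^\top,(0,1,0)^\top,(1,-1,1)^\top$ gives $b_1=e_1+e_2$, $b_2=e_2$, $b_3=2e_1$. Hence $\{H_1,H_2,H_3\}=\{H^{(0)}_3,H^{(0)}_2,H^{(0)}_1\}$ and the arrangement is unchanged.

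Full column rank of $B_0(X)$ would suffice: then $b_j=\lambda b^{(0)}_k$ forces $B_0(X)(r_j-\lambda e_k)=0$, i.e.\ $r_j=\lambda e_k$. But this needs $\operatorname{rank}L^{(1)}\ge t$, which fails whenever $t>d_{qk}$.

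The repair is to argue genericity in $X$ for the fixed $R^{(1)}$. For each $k$, pick $i\neq k$ with $(r_j)_i\neq 0$, and hold the other rows fixed. Then $b_j=(r_j)_i\,L^{(1)}X[i,:]^\top+w$, where $w\in\operatorname{col}(L^{(1)})$ does not depend on $X[i,:]$, so $b_j$ sweeps all of $\operatorname{col}(L^{(1)})$. Meanwhile $b^{(0)}_k=L^{(1)}X[k,:]^\top$ does not depend on $X[i,:]$. So if $\operatorname{rank}L^{(1)}\ge 2$, the event $b_j\in\mathrm{span}(b^{(0)}_k)$ confines $X[i,:]$ to a proper affine subspace. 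Hence for almost every $X$, $b_j$ is nonzero and not parallel to any $b^{(0)}_k$, and all $b^{(0)}_k\neq 0$.

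To finish, you also need the step you only asserted. A hyperplane is not a finite union of proper subspaces, so $H_j\not\subseteq\bigcup_k H^{(0)}_k$. Therefore the boundary set $\mathcal{B}_X$, and with it the partition, genuinely changes.
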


\begin{proof}
If $R^{(1)}=\mathrm{diag}(d)$ with $d_j\neq 0$, then $b_j=d_j b^{(0)}_j$ and
$\{x:\ x\,b_j=0\}=\{x:\ x\,b^{(0)}_j=0\}$, so the hyperplanes are identical.
For general $R^{(1)}$, the column identity follows from $B=B_0R^{(1)}$.
\end{proof}

\subsubsection{Fully dynamic mixing: warped (piecewise-smooth) geometry}

We now turn to the regime relevant to our
\tbd{Global (parameter-defined) Memory Space $\to$ Context-instantiated Memory Pool $\to$ Current-step Activated Memory}
view: both feature mixing ($L$) and sequence mixing ($R$) depend on the current input $x$.
In this case the partition is no longer polyhedral; it becomes the pullback of the orthant partition
under a \tbd{nonlinear} map $x\mapsto h(x;X)$.

\begin{theorem}[Pullback orthant partition and piecewise-smoothness]
\label{thm:warped}
Fix $X$. Assume that each coordinate $h_j(\cdot;X)$ in~\eqref{eq:geom_h_def} is continuous.
Then $\mathbb{R}^d\setminus\mathcal{B}_X$ (with $\mathcal{B}_X$ from~\eqref{eq:boundary_set})
decomposes into open connected components on each of which the sign pattern
$s_X(x)=\mathbf{1}\{h(x;X)>0\}$ is constant.

If moreover $L^{(1)}(\cdot),R^{(1)}(\cdot),R^{(2)}(\cdot),L^{(2)}(\cdot)$ are $C^k$ ($k\ge 1$),
then $o(\cdot;X)$ in~\eqref{eq:geom_o_def} is $C^k$ on each connected component of
$\mathbb{R}^d\setminus\mathcal{B}_X$.
Furthermore, at any point $x_0\in\mathcal{B}_X$ where $h_j(x_0;X)=0$ and $\nabla_x h_j(x_0;X)\neq 0$,
the set $\{x:\ h_j(x;X)=0\}$ is a $C^k$ hypersurface in a neighborhood of $x_0$.
\end{theorem}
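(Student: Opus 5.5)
The argument splits into three parts: a topological statement about the partition, a regularity statement for $o$ away from the boundary, and a local statement about each boundary piece. For the first, fix $X$ and note that $\mathcal{B}_X=\bigcup_j h_j(\cdot;X)^{-1}(\{0\})$ is a finite union of preimages of the closed set $\{0\}$ under continuous maps. It is therefore closed, and $\mathbb{R}^d\setminus\mathcal{B}_X$ is open.

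An open subset of $\mathbb{R}^d$ is locally path connected, so its connected components are open. On the complement of $\mathcal{B}_X$ each $h_j$ is continuous and never zero. By the intermediate value theorem along paths (equivalently, because $\mathrm{sign}\circ h_j$ is a continuous map into the discrete set $\{-1,1\}$), each $h_j$ has constant sign on every connected component. Hence the full pattern $s_X(x)=\mathbf{1}\{h(x;X)>0\}$ is constant on each component. This is exactly the pullback of the open orthants of $\mathbb{R}^t$ under $x\mapsto h(x;X)$, refined into connected pieces.

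For smoothness, I would first observe that $h(x;X)=x\,L^{(1)}(x)\,X^\top R^{(1)}(x)$ is a product of $C^k$ matrix-valued maps with the linear map $x\mapsto x$ and the constant $X^\top$. It is therefore $C^k$, and likewise $W^{(2)}(x):=R^{(2)\top}(x)XL^{(2)\top}(x)$ is $C^k$. On a component $\mathcal{C}$ with pattern $s$, Lemma~\ref{lem:relu-diagonal-gating} together with Lemma~\ref{lem:gate-invariance-reachable} gives
\[
o(x;X)=\frac{1}{\rho_t(h(x;X))}\,h(x;X)\,D_s\,W^{(2)}(x),
\]
with $D_s=\mathrm{diag}(s)$ constant on $\mathcal{C}$. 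This expression is $C^k$ provided $\rho_t$ is $C^k$ and positive. For the concrete choice $\rho_t(z)=\sqrt{\|z\|_2^2+\varepsilon}$ this holds, since it is $C^\infty$ and bounded below by $\sqrt{\varepsilon}>0$, so $1/\rho_t$ is $C^\infty$. The main subtlety is that the statement does not state this hypothesis explicitly. I would make it explicit: assume $\rho_t$ is $C^k$ on $\mathbb{R}^{1\times t}$, which Assumption~\ref{ass:l2norm-scalar-positive} with the concrete $\rho_t$ satisfies, or take $\sigma=\mathrm{ReLU}$, in which case $\rho_t\equiv 1$.

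For the boundary claim, take $x_0$ with $h_j(x_0;X)=0$ and $\nabla_x h_j(x_0;X)\neq 0$, and apply the implicit function theorem for $C^k$ functions to the scalar map $h_j(\cdot;X)$. Choose a coordinate $x_m$ with $\partial_{x_m}h_j(x_0)\neq0$. Then near $x_0$ the zero set is the graph of a $C^k$ function of the remaining $d-1$ coordinates, i.e.\ a $C^k$ embedded hypersurface. Equivalently, $0$ is a regular value of $h_j$ locally, so the regular level set theorem applies.

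I would close by noting how this fits with the static case. When $L^{(1)}$ and $R^{(1)}$ are constant, $h_j(x)=x\,b_j$ and the hypersurface is the hyperplane $H_j$ of Theorem~\ref{thm:polyhedral}. When they vary with $x$, the level sets are generically curved, as asserted in Corollary~\ref{cor:warped_routing}.
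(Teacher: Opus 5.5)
Your proposal is correct and follows essentially the same route as the paper. Both arguments use local sign constancy from continuity (you phrase it via $\mathrm{sign}\circ h_j$ into a discrete set), a fixed mask $D_s$ on each component that makes $o(\cdot;X)$ a product of $C^k$ maps, and the implicit function theorem for the boundary pieces. Your explicit requirement that $\rho_t$ be $C^k$ and positive, which holds for $\rho_t(z)=\sqrt{\|z\|_2^2+\varepsilon}$, is a worthwhile clarification. The paper's ``product/compose of $C^k$ maps'' step relies on this hypothesis without stating it.
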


\begin{proof}
Let $x_0\notin\mathcal{B}_X$. Then $h_j(x_0;X)\neq 0$ for all $j$.
By continuity of each $h_j(\cdot;X)$, there exists a neighborhood of $x_0$ on which every $h_j$
preserves its sign, hence the sign pattern is constant locally.
Therefore the sign pattern is constant on each connected component of $\mathbb{R}^d\setminus\mathcal{B}_X$.

On such a component, the ReLU mask is constant: there exists $s\in\{0,1\}^t$ such that
$\mathrm{ReLU}(h(x;X))=h(x;X)D_s$ holds throughout the component.
Substituting into~\eqref{eq:geom_o_def} shows $o(\cdot;X)$ is a product/compose of $C^k$ maps,
hence $C^k$ on the component.

Finally, if $h_j(\cdot;X)$ is $C^k$ and $\nabla_x h_j(x_0;X)\neq 0$ at a zero $h_j(x_0;X)=0$,
the implicit function theorem implies $\{x:\ h_j(x;X)=0\}$ is a $C^k$ hypersurface near $x_0$.
\end{proof}

\paragraph{Local geometry: ``moving'' hyperplanes and curvature sources.}
Theorem~\ref{thm:warped} implies that, in the fully dynamic regime, partition boundaries are generally \tbd{curved}
(level sets), not fixed hyperplanes. Locally, however, every smooth boundary admits a tangent hyperplane.

\begin{proposition}[Local linearization and deformation terms]
\label{prop:differential}
Fix $X$ and define
\[
z(x;X):=x\,L^{(1)}(x)\,X^\top\in\mathbb{R}^{1\times t},
\qquad\text{so that}\qquad
h(x;X)=z(x;X)\,R^{(1)}(x).
\]
Then the first-order differential (with $X$ fixed) is
\begin{equation}
dh
=
(dx)\,L^{(1)}(x)\,X^\top\,R^{(1)}(x)
+\ x\,(dL^{(1)}(x))\,X^\top\,R^{(1)}(x)
+\ x\,L^{(1)}(x)\,X^\top\,(dR^{(1)}(x)).
\label{eq:dh}
\end{equation}
Consequently, even when the ``base'' term $(dx)L^{(1)}X^\top R^{(1)}$ would induce polyhedral boundaries,
input-dependent feature mixing ($dL^{(1)}$) and, critically, input-dependent sequence mixing ($dR^{(1)}$)
introduce additional deformation terms that bend and move the partition surfaces.
\end{proposition}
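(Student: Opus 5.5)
The identity \eqref{eq:dh} is a product rule. I would fix $X$ and treat $h(x;X)=x\,L^{(1)}(x)\,X^\top R^{(1)}(x)$ as a product of four matrix-valued functions of $x$: the identity map $x\mapsto x$, $L^{(1)}(\cdot)$, the constant $X^\top$, and $R^{(1)}(\cdot)$. I assume, as in Theorem~\ref{thm:warped}, that $L^{(1)}$ and $R^{(1)}$ are $C^1$ (for HyperMLP they are compositions of linear maps with sigmoids, so this holds).

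First I apply the Leibniz rule for differentials of products of matrix-valued maps, $d(FG)=(dF)G+F(dG)$, iterated across the chain. The differential of $x\mapsto x$ is $dx$, and the constant factor $X^\top$ contributes nothing. This leaves exactly three terms, one per varying factor, which are the three summands of \eqref{eq:dh}. Equivalently, I can write $h=z\,R^{(1)}$ with $z=xL^{(1)}(x)X^\top$. Then $dh=(dz)R^{(1)}+z\,dR^{(1)}$ and $dz=(dx)L^{(1)}X^\top+x(dL^{(1)})X^\top$, and substituting the second into the first gives the same three terms. Here $dL^{(1)}(x)$ means the directional derivative $DL^{(1)}(x)[dx]$, a $d\times d$ matrix linear in $dx$, and likewise for $dR^{(1)}$.

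For the consequence, I compare with Theorem~\ref{thm:polyhedral}. If $L^{(1)}$ and $R^{(1)}$ are constant, the last two terms vanish and $h$ is linear in $x$, namely $h=xB(X)$, so the boundaries $\{h_j=0\}$ are the hyperplanes $H_j$. In the dynamic case I read off from \eqref{eq:dh} the gradient $\nabla_x h_j(x_0)$ at a boundary point. It equals the static normal $b_j(x_0)$ plus correction vectors coming from $dL^{(1)}$ and $dR^{(1)}$. By Theorem~\ref{thm:warped}, the tangent hyperplane of the level set at $x_0$ therefore differs from the frozen-coefficient hyperplane $\{x:\,xL^{(1)}(x_0)X^\top R^{(1)}(x_0)e_j=0\}$ whenever these corrections are not parallel to $b_j$. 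Since the normal now varies with $x_0$, the surface moves and bends, which is the statement that the extra terms are ``deformation terms''.

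I expect the only delicate point to be the interpretive last claim, which is qualitative, rather than the calculation. To make it concrete I would give a simple witness. Take $L^{(1)}=I$, $X$ with a single row $e_1$, and a scalar mixing $R^{(1)}(x)=1+c\,\phi(xw)$. Then $h=x_1(1+c\,\phi(xw))$, and choosing $c<-1$ makes the zero set include the curved surface $\phi(xw)=-1/c$ in addition to the hyperplane $x_1=0$. I would stop at this illustration rather than claim curvature in full generality, matching the ``generically'' wording of Corollary~\ref{cor:warped_routing}.
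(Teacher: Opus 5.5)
Your derivation of \eqref{eq:dh} is correct and matches the paper's proof: apply the product rule to $h=z\,R^{(1)}$, then to $z=x\,L^{(1)}(x)\,X^\top$, and substitute. The paper stops there and treats the ``consequently'' sentence as interpretation, supported by Theorem~\ref{thm:warped} and the examples in Section~\ref{sec:app:examples}. So the statement itself needs nothing more.

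The witness you add for the ``bending'' claim does not bend. With $t=1$, $L^{(1)}=I$ and $R^{(1)}(x)=1+c\,\phi(xw)$, the score $h=x_1\bigl(1+c\,\phi(xw)\bigr)$ is a linear form times a function of the single linear form $xw$. Its zero set is therefore $\{x_1=0\}\cup\{x:\ xw=\phi^{-1}(-1/c)\}$, and since $\phi$ is strictly monotone the second piece is an affine hyperplane, not a curved surface. Your own gradient reasoning confirms this:
\begin{itemize}
\item On the second piece, the frozen normal $e_1\bigl(1+c\,\phi(x_0w)\bigr)$ vanishes, and $\nabla_x h(x_0)=c\,(x_0)_1\,\phi'(x_0w)\,w$ is always parallel to $w$.
\item On $\{x_1=0\}$, the correction term is identically zero.
\end{itemize}

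This also shows that your middle paragraph's inference is too quick. Corrections not parallel to $b_j$ only show that the level set differs from the frozen-coefficient hyperplane at $x_0$. They do not show that the normal varies along the surface, and you never establish that it does. Your example exhibits \emph{moving}, namely an affine boundary off the origin, which is impossible when $h=xB(X)$ is linear. It does not exhibit bending.

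To get genuine curvature from $dR^{(1)}$, the input-dependent coefficient must combine two \emph{different} base scores additively. This needs $t\ge 2$ and off-diagonal mixing. The paper's construction takes $X=[e_1^\top;e_2^\top]$ and $R^{(1)}(x)=I_2+e_1\,\phi(cx_1)\,e_2^\top$. This gives $h_2=x_2+\phi(cx_1)\,x_1$, whose zero set $\{x_2=-x_1\phi(cx_1)\}$ has normal $\bigl(\phi(cx_1)+c\,x_1\phi'(cx_1),\,1,0,\dots,0\bigr)$, which varies along the curve.
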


\begin{proof}
Apply the product rule to $h=z\,R^{(1)}$ to get $dh=(dz)R^{(1)}+z(dR^{(1)})$.
Then apply the product rule to $z=x\,L^{(1)}\,X^\top$ to get
$dz=(dx)L^{(1)}X^\top + x(dL^{(1)})X^\top$, and substitute.
\end{proof}

\begin{remark}[Low-rank dynamic $R$ yields controlled smooth deformations]
\label{rem:lowrank}
In HyperMLP, $R^{(1)}(x)=D+A\,S(x)\,B^\top$ with diagonal $S(x)$ (e.g., $S(x)=\mathrm{Diag}(\phi(xW_S))$).
Then $dR^{(1)}(x)=A\,(dS(x))\,B^\top$ has rank at most $r_s$.
Combined with~\eqref{eq:dh}, this shows the geometry is deformed through a \tbd{low-dimensional}, smooth mechanism,
suggesting a favorable expressivity--stability tradeoff: we can warp the gating surfaces without introducing
arbitrary high-curvature boundaries.
\end{remark}

\begin{remark}[L2 normalization and sign patterns]
\label{rem:rmsnorm-sign}
If $\mathrm{L2Norm}_t(h)=h/\rho(h)$ with $\rho(h)>0$ a scalar, then
$\mathbf{1}\{\mathrm{L2Norm}_t(h)>0\}=\mathbf{1}\{h>0\}$.
Hence inserting $\mathrm{L2Norm}_t$ before ReLU does not change the partition geometry; it only rescales magnitudes.
\end{remark}

\paragraph{Takeaway.}
Static attention-like blocks induce \tbd{polyhedral} (piecewise-linear) partitions tied to a fixed hyperplane arrangement.
Our fully dynamic design replaces this with a \tbd{warped orthant partition}: the context instantiates a finite
\tbd{Context-instantiated Memory Pool} of slots, while input-dependent feature and sequence mixing (especially $R^{(1)}(x)$)
smoothly deforms the activation boundaries that determine the \tbd{Current-step Activated Memory}, yielding a
piecewise-smooth, input-conditioned ``spline'' geometry.

\section{Functional-Analytic and NTK Connections}

This section summarizes how our \tbd{memory-measure} formulation of the
\tbd{Global (parameter-defined) Memory Space $\to$ Context-instantiated Memory Pool $\to$ Current-step Activated Memory}
decomposition relates to (i) Banach-space function classes (variation-norm/Barron-type views of two-layer models) and
(ii) the Neural Tangent Kernel (NTK) / lazy-training regime. 

\subsection{A Banach-space view on \tbd{reachable} Context-instantiated Memory Pool measures}
\label{sec:banach-reachable}

\paragraph{Notation.}
We use the common setup in Appendix~\ref{sec:app:setup}. In particular, the block is defined by
\eqref{eq:ht_def}--\eqref{eq:ot_def}, and the pool/activated measures and dictionary maps are
\eqref{eq:app_mu_pool}--\eqref{eq:app_mu_act} and \eqref{eq:app_alpha_beta}--\eqref{eq:mech-Psi}.

\paragraph{Restricting to a \tbd{reachable} compact support.}
The key point is that the instantiated pool atoms $(u_j,v_j)$ do not range over all of the
\tbd{Global (parameter-defined) Memory Space} $\Omega$; they are restricted
by the norms of $(x_t,X)$ and the operator norms of the mixing maps.

\begin{assumption}[Bounded contexts and (measurable) uniformly bounded mixings]
\label{ass:bounded-reachable}
Fix a bounded input domain $\mathcal{X}\subset\mathbb{R}^{1\times d}$ and a bounded set of context matrices
$\mathcal{C}_t\subset\mathbb{R}^{t\times d}$ with
\[
B_X\;:=\;\sup_{X\in\mathcal{C}_t}\|X\|_{F}\;<\infty,
\qquad
B_x\;:=\;\sup_{x\in\mathcal{X}}\|x\|_2\;<\infty.
\]
Assume the mixing operators are Borel measurable and uniformly bounded on $\mathcal{X}$:
\[
\sup_{x\in\mathcal{X}}\|R^{(j)}_\theta(x)\|_{\mathrm{op}}\;\le\;C_{Rj},\qquad
\sup_{x\in\mathcal{X}}\|L^{(j)}_\theta(x)\|_{\mathrm{op}}\;\le\;C_{Lj},
\qquad j\in\{1,2\}.
\]
\end{assumption}

\begin{lemma}[Reachable atoms lie in a compact set]
\label{lem:reachable-compact}
Under Assumption~\ref{ass:bounded-reachable}, for all $x\in\mathcal{X}$ and $X\in\mathcal{C}_t$,
the instantiated atoms satisfy
\[
\|u_j\|_2\le C_{R1}B_X,\qquad \|v_j\|_2\le C_{R2}B_X,\qquad j\in[t].
\]
Hence $\mu^{\mathrm{pool}}_{X,x}$ is supported on the compact set
\[
\Omega_t
\;:=\;
\Big\{(u,v)\in\mathbb{R}^d\times\mathbb{R}^d:\ \|u\|_2\le C_{R1}B_X,\ \|v\|_2\le C_{R2}B_X\Big\}
\ \subset\ \Omega.
\]
\end{lemma}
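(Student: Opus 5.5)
The bound will follow from expressing each atom as a row of a mixed context and combining two elementary norm inequalities. By the definitions in Appendix~\ref{sec:app:setup}, $U=\mathcal{E}^{(1)}_t(x)[X]=R^{(1)\top}_\theta(x)\,X$ and $u_j=U[j,:]$, and likewise $V=R^{(2)\top}_\theta(x)\,X$ with $v_j=V[j,:]$. The argument is identical for $u_j$ and $v_j$, so I treat the $u$ case and then substitute $R^{(2)}$, $C_{R2}$ to get the $v$ case.

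\textbf{Step 1: bound a row by the Frobenius norm.} Any row of a matrix satisfies $\|U[j,:]\|_2\le\|U\|_F$, because $\|U\|_F^2=\sum_i\|U[i,:]\|_2^2$. Hence it suffices to bound $\|U\|_F$.

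\textbf{Step 2: use the operator-norm inequality.} For any compatible matrices, $\|AB\|_F\le\|A\|_{\mathrm{op}}\|B\|_F$. To see this, write $\|AB\|_F^2=\sum_k\|A\,B[:,k]\|_2^2$ and bound each column term by $\|A\|_{\mathrm{op}}^2\|B[:,k]\|_2^2$. Transposition does not change the spectral norm, so $\|R^{(1)\top}_\theta(x)\|_{\mathrm{op}}=\|R^{(1)}_\theta(x)\|_{\mathrm{op}}\le C_{R1}$ for every $x\in\mathcal{X}$ by Assumption~\ref{ass:bounded-reachable}. For $X\in\mathcal{C}_t$ we also have $\|X\|_F\le B_X$. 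Combining these gives
\[
\|u_j\|_2\;\le\;\|U\|_F\;\le\;\|R^{(1)}_\theta(x)\|_{\mathrm{op}}\,\|X\|_F\;\le\;C_{R1}B_X .
\]
The same chain with $R^{(2)}_\theta$ yields $\|v_j\|_2\le C_{R2}B_X$.

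\textbf{Step 3: support and compactness.} Every atom $(u_j,v_j)$ of $\mu^{\mathrm{pool}}_{X,x}=\sum_j\delta_{(u_j,v_j)}$ therefore lies in $\Omega_t$, so $\mathrm{supp}(\mu^{\mathrm{pool}}_{X,x})\subseteq\Omega_t$. The set $\Omega_t$ is a product of two closed Euclidean balls in $\mathbb{R}^d$. It is closed and bounded, hence compact by Heine--Borel.

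No step is a real obstacle; the only point needing care is matching the norms. The rows are in $\ell_2$, the context bound is Frobenius, and the mixing bound is operator norm, which is exactly the combination $\|AB\|_F\le\|A\|_{\mathrm{op}}\|B\|_F$ handles. The $L^{(j)}$ bounds and $B_x$ are not needed for this lemma.
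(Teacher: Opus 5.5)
Your proof is correct and follows the paper's argument: bound each row by $\|U\|_F$, then use $\|R^{(1)\top}_\theta(x)X\|_F\le\|R^{(1)}_\theta(x)\|_{\mathrm{op}}\|X\|_F\le C_{R1}B_X$, and argue identically for $V$. Your explicit Heine--Borel step for the compactness of $\Omega_t$ fills in a detail the paper leaves implicit.
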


\begin{proof}
Since $U=R^{(1)\top}_\theta(x)X$, we have $\|U\|_F\le \|R^{(1)}_\theta(x)\|_{\mathrm{op}}\|X\|_F\le C_{R1}B_X$.
For each row, $\|u_j\|_2\le \|U\|_F\le C_{R1}B_X$. The argument for $V$ is identical.
\qedhere
\end{proof}

\begin{lemma}[Measurability and boundedness of the feature map on $\mathcal{X}\times\Omega_t$]
\label{lem:Psi-bounded-reachable}
Under Assumption~\ref{ass:bounded-reachable}, the feature map
$\Psi_\theta(x;(u,v)):=\alpha_\theta(x;u)_+\,\beta_\theta(x;v)$
is Borel measurable and bounded on $\mathcal{X}\times\Omega_t$. In particular,
\[
\|\Psi_\theta\|_{\infty,2}
:=\sup_{x\in\mathcal{X},(u,v)\in\Omega_t}\|\Psi_\theta(x;(u,v))\|_2
\ \le\
B_x\,C_{L1}\,C_{L2}\,(C_{R1}B_X)\,(C_{R2}B_X)
\ <\infty.
\]
\end{lemma}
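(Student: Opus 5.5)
The plan has two parts: measurability, then an explicit sup bound obtained by chaining operator-norm inequalities.

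\textbf{Measurability.} The map $(x,u,v)\mapsto \alpha_\theta(x;u)_+\,\beta_\theta(x;v)$ is built from the Borel-measurable matrix-valued maps $x\mapsto L^{(1)}_\theta(x)$ and $x\mapsto L^{(2)}_\theta(x)$ of Assumption~\ref{ass:bounded-reachable}. It also uses the continuous maps $(u,v)\mapsto u,v$ and finitely many multiplications and additions of entries. Each entry of $x\,L^{(1)}_\theta(x)\,u^\top$ is therefore a finite sum of products of Borel functions on $\mathcal{X}\times\Omega_t$, hence Borel. Composing with the continuous map $a\mapsto a_+$ preserves measurability. The same reasoning applies to each coordinate of $\beta_\theta(x;v)=v\,L^{(2)\top}_\theta(x)$, and the product of Borel functions is Borel, which settles the first claim.

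\textbf{Boundedness.} I bound the two factors separately and multiply, using $\|\Psi_\theta\|_2=|\alpha_\theta|_+\,\|\beta_\theta\|_2\le|\alpha_\theta|\,\|\beta_\theta\|_2$. For the addressing factor, Cauchy--Schwarz together with submultiplicativity of the operator norm gives
\[
|\alpha_\theta(x;u)|=|x\,L^{(1)}_\theta(x)\,u^\top|\le \|x\|_2\,\|L^{(1)}_\theta(x)\|_{\mathrm{op}}\,\|u\|_2\le B_x\,C_{L1}\,C_{R1}B_X .
\]
The last step uses $\|x\|_2\le B_x$ on $\mathcal{X}$, the uniform bound $C_{L1}$, and $\|u\|_2\le C_{R1}B_X$ on $\Omega_t$ from Lemma~\ref{lem:reachable-compact}. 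For the readout factor,
\[
\|\beta_\theta(x;v)\|_2\le \|v\|_2\,\|L^{(2)\top}_\theta(x)\|_{\mathrm{op}}\le C_{R2}B_X\,C_{L2},
\]
using $\|L^{\top}\|_{\mathrm{op}}=\|L\|_{\mathrm{op}}$. Taking the product and then the supremum over $\mathcal{X}\times\Omega_t$ gives exactly the stated constant $B_x C_{L1}C_{L2}(C_{R1}B_X)(C_{R2}B_X)$, which is finite.

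\textbf{Main obstacle.} The only delicate point is measurability. The mixings are assumed only Borel, not continuous, so I cannot appeal to continuity and must argue entrywise through products and sums of Borel functions. The row-vector convention also needs care: $x\,L\,u^\top$ is a scalar bilinear form, which is why Cauchy--Schwarz applies directly. The norm estimate itself is routine.
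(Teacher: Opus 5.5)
Your proposal is correct and follows essentially the same route as the paper. Measurability comes from the Borel measurability of $x\mapsto L^{(j)}_\theta(x)$ together with matrix products, and the bound comes from $(\alpha_\theta)_+\le|\alpha_\theta|\le\|x\|_2\|L^{(1)}_\theta(x)\|_{\mathrm{op}}\|u\|_2$ and $\|\beta_\theta\|_2\le\|v\|_2\|L^{(2)}_\theta(x)\|_{\mathrm{op}}$, with $\|u\|_2,\|v\|_2$ controlled on $\Omega_t$ via Lemma~\ref{lem:reachable-compact}. Your entrywise measurability argument is just a more detailed version of the paper's one-line justification, and the only slip is notational: $|\alpha_\theta|_+$ should read $(\alpha_\theta)_+$.
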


\begin{proof}
Measurability follows from measurability of $x\mapsto L^{(j)}_\theta(x)$ and matrix products.
For the bound, $\alpha_+(x;u)\le |\alpha_\theta(x;u)|
\le \|x\|_2\|L^{(1)}_\theta(x)\|_{\mathrm{op}}\|u\|_2
\le B_x C_{L1}\|u\|_2$, and
$\|\beta_\theta(x;v)\|_2\le \|v\|_2\|L^{(2)}_\theta(x)\|_{\mathrm{op}}\le C_{L2}\|v\|_2$.
Use Lemma~\ref{lem:reachable-compact} to bound $\|u\|_2,\|v\|_2$ on $\Omega_t$ and multiply the inequalities.
\qedhere
\end{proof}

\paragraph{Feature map and bounded linear readout on $\mathcal{M}(\Omega_t)$.}
Let $\mathcal{M}(\Omega_t)$ be the Banach space of finite signed (Radon) measures on $\Omega_t$
equipped with the total variation norm $\|\cdot\|_{\mathrm{TV}}$.
Define the feature map
\[
\Psi_\theta(x;(u,v))\;:=\;\alpha_\theta(x;u)_+\,\beta_\theta(x;v)\in\mathbb{R}^{1\times d},
\qquad \alpha_+(x;u):=\max\{0,\alpha_\theta(x;u)\}.
\]
Define the (vector-valued) linear operator $T_{\theta,t}:\mathcal{M}(\Omega_t)\to L_\infty(\mathcal{X};\mathbb{R}^{d})$ by
\begin{equation}
(T_{\theta,t}\mu)(x)\;:=\;\int_{\Omega_t}\Psi_\theta(x;(u,v))\;d\mu(u,v),
\qquad x\in\mathcal{X},
\label{eq:T-operator-reachable}
\end{equation}
where the integral is understood componentwise (equivalently as a Bochner integral).

\begin{theorem}[Bounded linear readout over reachable measures]
\label{thm:bounded-linear-readout-reachable}
Under Assumption~\ref{ass:bounded-reachable}, the operator $T_{\theta,t}$ is well-defined, linear, and bounded:
\[
\|T_{\theta,t}\mu\|_{\infty,2}\;\le\;\|\Psi_\theta\|_{\infty,2}\,\|\mu\|_{\mathrm{TV}},
\qquad
\|\Psi_\theta\|_{\infty,2}:=\sup_{x\in\mathcal{X},(u,v)\in\Omega_t}\|\Psi_\theta(x;(u,v))\|_2,
\]
where $\|\Psi_\theta\|_{\infty,2}<\infty$ by Lemma~\ref{lem:Psi-bounded-reachable}.
Moreover, under Assumption~\ref{ass:rms-scalar-reachable}, for any $x_t\in\mathcal{X}$ and $X\in\mathcal{C}_t$,
the block output satisfies
\begin{equation}
o_t
\;=\;
\frac{1}{\rho(h_t)}\,(T_{\theta,t}\mu^{\mathrm{pool}}_{X,x_t})(x_t),
\qquad h_t\text{ as in }\eqref{eq:block-def-reachable}.
\label{eq:banach-readout-reachable}
\end{equation}
\end{theorem}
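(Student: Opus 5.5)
The plan has three parts: well-definedness, boundedness, and the identity \eqref{eq:banach-readout-reachable}, all resting on Lemmas~\ref{lem:reachable-compact} and~\ref{lem:Psi-bounded-reachable}.

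\textbf{Well-definedness and linearity.} Fix $x\in\mathcal{X}$ and $\mu\in\mathcal{M}(\Omega_t)$. By Lemma~\ref{lem:Psi-bounded-reachable}, each component of $(u,v)\mapsto\Psi_\theta(x;(u,v))$ is Borel measurable and bounded on the compact set $\Omega_t$. It is therefore integrable against the finite signed measure $\mu$, via its Jordan decomposition $\mu=\mu^+-\mu^-$ with $|\mu|(\Omega_t)=\|\mu\|_{\mathrm{TV}}<\infty$. This defines $(T_{\theta,t}\mu)(x)$ componentwise, and linearity in $\mu$ is linearity of the integral.

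\textbf{Boundedness and membership in $L_\infty$.} Applying the vector-valued integral inequality to the Bochner integral gives, for each $x$,
\[
\|(T_{\theta,t}\mu)(x)\|_2\le\int_{\Omega_t}\|\Psi_\theta(x;(u,v))\|_2\,d|\mu|\le\|\Psi_\theta\|_{\infty,2}\|\mu\|_{\mathrm{TV}}.
\]
Taking the supremum over $x$ yields the operator bound. For measurability in $x$, I will use joint Borel measurability of $\Psi_\theta$ on $\mathcal{X}\times\Omega_t$ (again Lemma~\ref{lem:Psi-bounded-reachable}) together with a Fubini--Tonelli argument applied to $\mu^\pm$ separately. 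This is legitimate because the integrand is bounded and the measures are finite. I expect this measurability step to be the only mildly delicate point; everything else is routine bookkeeping.

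\textbf{The readout identity.} Take $x_t\in\mathcal{X}$ and $X\in\mathcal{C}_t$. Lemma~\ref{lem:reachable-compact} places every atom $(u_i,v_i)$ in $\Omega_t$, so $\mu^{\mathrm{pool}}_{X,x_t}=\sum_i\delta_{(u_i,v_i)}$ belongs to $\mathcal{M}(\Omega_t)$ and has $\|\mu^{\mathrm{pool}}_{X,x_t}\|_{\mathrm{TV}}=t$. Integrating $\Psi_\theta$ against this measure gives
\[
(T_{\theta,t}\mu^{\mathrm{pool}}_{X,x_t})(x_t)=\sum_i\alpha_\theta(x_t;u_i)_+\beta_\theta(x_t;v_i).
\]
Under Assumption~\ref{ass:rms-scalar-reachable}, this sum equals $\rho_t(h_t)\,o_t$ by Theorem~\ref{thm:three-stage-memory-multistage-pruning}. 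Dividing by $\rho_t(h_t)>0$ gives \eqref{eq:banach-readout-reachable}.
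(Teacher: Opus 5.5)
Your proposal is correct and follows essentially the same route as the paper: integrate the bounded measurable $\Psi_\theta$ against the finite measure, apply the estimate $\|(T_{\theta,t}\mu)(x)\|_2\le\int\|\Psi_\theta(x;\cdot)\|_2\,d|\mu|\le\|\Psi_\theta\|_{\infty,2}\|\mu\|_{\mathrm{TV}}$, and obtain the identity by writing the finite slot sum as an integral against $\mu^{\mathrm{pool}}_{X,x_t}$. The differences are minor: you cite Theorem~\ref{thm:three-stage-memory-multistage-pruning} for the readout identity, whereas the paper re-derives it from gate invariance and $(h_t)_j=\alpha_\theta(x_t;u_j)$; you also explicitly check measurability in $x$ and that the atoms lie in $\Omega_t$ (Lemma~\ref{lem:reachable-compact}), both of which the paper leaves implicit.
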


\begin{proof}
By Lemma~\ref{lem:Psi-bounded-reachable}, $\Psi_\theta$ is measurable and bounded on $\mathcal{X}\times\Omega_t$,
hence the (Bochner/componentwise) integral in~\eqref{eq:T-operator-reachable} is well-defined for any
$\mu\in\mathcal{M}(\Omega_t)$ and is absolutely convergent. The stated bound follows from the standard estimate
\[
\|(T_{\theta,t}\mu)(x)\|_2
\le \int_{\Omega_t}\|\Psi_\theta(x;(u,v))\|_2\,d|\mu|(u,v)
\le \|\Psi_\theta\|_{\infty,2}\,\|\mu\|_{\mathrm{TV}}.
\]
For~\eqref{eq:banach-readout-reachable}, expand the forward pass:
$o_t=\sum_{j=1}^t \mathrm{ReLU}(\mathrm{L2Norm}_t(h_t))_j\,\beta_\theta(x_t;v_j)$.
By Lemma~\ref{lem:gate-invariance-reachable},
$\mathrm{ReLU}(\mathrm{L2Norm}_t(h_t))=\rho(h_t)^{-1}\mathrm{ReLU}(h_t)$.
Since $(h_t)_j=\alpha_\theta(x_t;u_j)$, we get
\[
o_t=\frac{1}{\rho(h_t)}\sum_{j=1}^t \alpha_\theta(x_t;u_j)_+\,\beta_\theta(x_t;v_j)
=\frac{1}{\rho(h_t)}\int_{\Omega_t}\Psi_\theta(x_t;(u,v))\,d\mu^{\mathrm{pool}}_{X,x_t}(u,v),
\]
which is exactly~\eqref{eq:banach-readout-reachable}.
\qedhere
\end{proof}

\begin{remark}[Banach/variation view for the pre-normalized readout]
\label{rem:banach_prenorm}
Our linear operator $T_{\theta,t}$ captures the \emph{pre-normalized} measure readout.
Define the pre-normalized output
\[
\widetilde o_t \;:=\; \rho_t(h_t)\,o_t,
\qquad\text{where}\qquad
o_t=\frac{1}{\rho_t(h_t)}\,(T_{\theta,t}\mu^{\mathrm{pool}}_{X,x_t})(x_t)
\]
and $h_t$ is the length-$t$ pre-activation vector.
Then
\[
\widetilde o_t \;=\; (T_{\theta,t}\mu^{\mathrm{pool}}_{X,x_t})(x_t),
\]
so $\widetilde o_t$ is a bounded linear functional of the instantiated pool measure
(and thus lies in $\mathrm{Ran}(T_{\theta,t})$ for fixed $t$).

The actual head output rescales this readout by the positive scalar $1/\rho_t(h_t)$.
For our choice $\rho_t(z)=\sqrt{\|z\|_2^2+\varepsilon}$ with $\varepsilon>0$, we have
$\rho_t(z)\ge \sqrt{\varepsilon}$ for all $z$, hence
\[
\|o_t\|_2
\;\le\; \frac{1}{\sqrt{\varepsilon}}\|\widetilde o_t\|_2
\;\le\; \frac{1}{\sqrt{\varepsilon}}\|\Psi_\theta\|_{\infty,2}\,
\|\mu^{\mathrm{pool}}_{X,x_t}\|_{\mathrm{TV}}.
\]
Therefore, all TV/variation-norm bounds proved for the linear readout extend to the
normalized head up to a fixed multiplicative constant (and the scalar normalization
does not affect gating/sign patterns; cf.\ Lemma~\ref{lem:gate-invariance-reachable}).
\end{remark}

\begin{theorem}[Restriction/activation is a TV contraction]
\label{thm:tv-contraction-reachable}
Let $\mu\in\mathcal{M}(\Omega_t)$ and let $g:\Omega_t\to\mathbb{R}$ be measurable with $|g|\le 1$.
Define $(g\mu)(A):=\int_A g\,d\mu$. Then
\[
\|g\mu\|_{\mathrm{TV}}\;\le\;\|\mu\|_{\mathrm{TV}}.
\]
In particular, for the hard gate $g_{x}(u,v):=\mathbf{1}\{\alpha_\theta(x;u)>0\}$,
the \tbd{Current-step Activated Memory} measure $\mu^{\mathrm{act}}:=g_x\,\mu^{\mathrm{pool}}$ satisfies
$\|\mu^{\mathrm{act}}\|_{\mathrm{TV}}\le \|\mu^{\mathrm{pool}}\|_{\mathrm{TV}}$.
\end{theorem}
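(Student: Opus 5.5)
The argument is a direct application of the Jordan--Hahn decomposition together with the fact that $|g\mu|$ is dominated by $|g|\,|\mu|$. First I will verify that $g\mu$ lies in $\mathcal{M}(\Omega_t)$. Since $g$ is measurable and bounded, it is $|\mu|$-integrable on any Borel set. Countable additivity of $A\mapsto\int_A g\,d\mu$ then follows from dominated convergence, with dominating function $|g|\le 1$ and $|\mu|$ finite. Hence $g\mu$ is a finite signed measure.

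For the norm bound I will use the variational characterization of total variation:
\[
\|\nu\|_{\mathrm{TV}}=\sup\Big\{\textstyle\sum_k|\nu(A_k)|:\ \{A_k\}\ \text{a finite measurable partition of }\Omega_t\Big\}.
\]
For any such partition,
\[
\textstyle\sum_k\big|\int_{A_k}g\,d\mu\big|\le\sum_k\int_{A_k}|g|\,d|\mu|\le\sum_k|\mu|(A_k)=|\mu|(\Omega_t)=\|\mu\|_{\mathrm{TV}}.
\]
Taking the supremum over partitions gives $\|g\mu\|_{\mathrm{TV}}\le\|\mu\|_{\mathrm{TV}}$. Alternatively, I could write $\mu=\mu^+-\mu^-$ and observe that $|g\mu|\le|g|(\mu^++\mu^-)$ setwise. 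The only technical point is the inequality $|\int_A g\,d\mu|\le\int_A|g|\,d|\mu|$, which follows from the Jordan decomposition.

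For the specialization to the hard gate $g_x(u,v)=\mathbf{1}\{\alpha_\theta(x;u)>0\}$, I need to check measurability and the bound $|g_x|\le1$. For fixed $x$, the map $u\mapsto\alpha_\theta(x;u)=xL^{(1)}_\theta(x)u^\top$ is linear in $u$, hence continuous. So $\{(u,v):\alpha_\theta(x;u)>0\}$ is open, and its indicator is Borel with values in $\{0,1\}$.

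Applying the general result with $\mu=\mu^{\mathrm{pool}}_{X,x}$ gives the claim for $\mu^{\mathrm{act}}$. Here $\mu^{\mathrm{pool}}_{X,x}$ is supported in $\Omega_t$ by Lemma~\ref{lem:reachable-compact}, so it is a finite positive atomic measure in $\mathcal{M}(\Omega_t)$. As a sanity check, in this atomic case both sides are explicit: $\|\mu^{\mathrm{act}}\|_{\mathrm{TV}}=\#\{i:\alpha_\theta(x;u_i)>0\}\le t=\|\mu^{\mathrm{pool}}\|_{\mathrm{TV}}$.

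I do not expect a real obstacle. The only step needing care is measurability of $g_x$, and that is settled by continuity of $\alpha_\theta(x;\cdot)$.
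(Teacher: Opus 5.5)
Your proposal is correct and follows essentially the same route as the paper, whose proof (given for the identical Lemma~\ref{lem:mech-tv-contraction}) bounds $\sum_i|(g\mu)(A_i)|\le\sum_i\int_{A_i}|g|\,d|\mu|\le|\mu|(\Omega_t)$ over measurable partitions and takes the supremum. Your extra checks fill in details the paper leaves implicit: that $g\mu$ is a finite signed measure, that the hard gate is Borel measurable, and that $\mu^{\mathrm{pool}}\in\mathcal{M}(\Omega_t)$ via Lemma~\ref{lem:reachable-compact}.
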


\paragraph{Induced variation norm on the reachable function class.}
Define the reachable function class $\mathcal{F}_{\theta,t}:=\mathrm{Ran}(T_{\theta,t})\subset L_\infty(\mathcal{X};\mathbb{R}^d)$
and equip it with the quotient norm
\[
\|f\|_{\mathrm{var},\theta,t}\;:=\;\inf\{\|\mu\|_{\mathrm{TV}}:\ T_{\theta,t}\mu=f\}.
\]

\begin{theorem}[Variation norm yields a Banach function space]
\label{thm:variation-banach-reachable}
The space $(\mathcal{F}_{\theta,t},\|\cdot\|_{\mathrm{var},\theta,t})$ is a Banach space and satisfies
\[
\|f\|_{\infty,2}\;\le\;\|\Psi_\theta\|_{\infty,2}\,\|f\|_{\mathrm{var},\theta,t},
\qquad \forall f\in\mathcal{F}_{\theta,t}.
\]
\end{theorem}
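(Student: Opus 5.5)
The claim is a standard quotient-space fact, and I will prove it that way. Let $N:=\ker T_{\theta,t}\subset\mathcal{M}(\Omega_t)$. By Theorem~\ref{thm:bounded-linear-readout-reachable}, $T_{\theta,t}:\mathcal{M}(\Omega_t)\to L_\infty(\mathcal{X};\mathbb{R}^d)$ is bounded with $\|T_{\theta,t}\mu\|_{\infty,2}\le\|\Psi_\theta\|_{\infty,2}\|\mu\|_{\mathrm{TV}}$. Since $T_{\theta,t}$ is continuous, $N$ is a closed subspace of the Banach space $(\mathcal{M}(\Omega_t),\|\cdot\|_{\mathrm{TV}})$. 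Hence the quotient $\mathcal{M}(\Omega_t)/N$, with norm $\|[\mu]\|:=\inf_{\nu\in N}\|\mu+\nu\|_{\mathrm{TV}}$, is a Banach space. This is the standard fact that quotients of Banach spaces by closed subspaces are complete.

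Next I will show that the induced map $\widetilde T:\mathcal{M}(\Omega_t)/N\to\mathcal{F}_{\theta,t}$, $[\mu]\mapsto T_{\theta,t}\mu$, is a linear bijection that carries the quotient norm exactly onto $\|\cdot\|_{\mathrm{var},\theta,t}$.

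\textbf{Well defined and injective.} This holds by construction, since $[\mu]$ is determined by $T_{\theta,t}\mu$ and vice versa.

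\textbf{Surjective.} This holds by the definition $\mathcal{F}_{\theta,t}=\mathrm{Ran}(T_{\theta,t})$.

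\textbf{Isometric.} For $f=T_{\theta,t}\mu$, the set $\{\mu':T_{\theta,t}\mu'=f\}$ is exactly the coset $\mu+N$. Hence $\|f\|_{\mathrm{var},\theta,t}=\inf_{\nu\in N}\|\mu+\nu\|_{\mathrm{TV}}=\|[\mu]\|$.

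Because $\widetilde T$ is a linear isometric isomorphism, $\|\cdot\|_{\mathrm{var},\theta,t}$ is a genuine norm; in particular, definiteness follows from the closedness of $N$. Completeness then transfers from $\mathcal{M}(\Omega_t)/N$, so $(\mathcal{F}_{\theta,t},\|\cdot\|_{\mathrm{var},\theta,t})$ is Banach.

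For the embedding inequality, fix $f\in\mathcal{F}_{\theta,t}$ and any $\mu$ with $T_{\theta,t}\mu=f$. Theorem~\ref{thm:bounded-linear-readout-reachable} gives $\|f\|_{\infty,2}\le\|\Psi_\theta\|_{\infty,2}\|\mu\|_{\mathrm{TV}}$. Taking the infimum over all such $\mu$ yields $\|f\|_{\infty,2}\le\|\Psi_\theta\|_{\infty,2}\|f\|_{\mathrm{var},\theta,t}$.

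The only step that is not purely formal is the completeness of $\mathcal{M}(\Omega_t)$ under the TV norm, together with the closedness of $N$.

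\textbf{Completeness of $\mathcal{M}(\Omega_t)$.} This is classical: the finite signed Radon measures on a compact metric space form the dual of $C(\Omega_t)$ by the Riesz representation theorem, and duals are complete. Compactness of $\Omega_t$ is supplied by Lemma~\ref{lem:reachable-compact}. Alternatively, one can verify directly that a TV-Cauchy sequence converges setwise, uniformly over Borel sets, and that the limit is a countably additive signed measure.

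\textbf{Closedness of $N$.} This needs the boundedness of $T_{\theta,t}$, which in turn relies on $\Psi_\theta$ being bounded and measurable on $\mathcal{X}\times\Omega_t$ (Lemma~\ref{lem:Psi-bounded-reachable}). This is exactly where Assumption~\ref{ass:bounded-reachable} enters.
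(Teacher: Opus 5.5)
Your proof is correct and follows the same route as the paper: you identify $(\mathcal{F}_{\theta,t},\|\cdot\|_{\mathrm{var},\theta,t})$ isometrically with the quotient $\mathcal{M}(\Omega_t)/\ker T_{\theta,t}$, and you get the embedding inequality by taking the infimum over representing measures in the bound of Theorem~\ref{thm:bounded-linear-readout-reachable}. You also spell out, correctly, the points the paper leaves implicit: closedness of the kernel via boundedness of $T_{\theta,t}$, completeness of $\mathcal{M}(\Omega_t)$ under the TV norm, and definiteness of the variation norm.
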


\begin{proof}
The inequality follows directly from Theorem~\ref{thm:bounded-linear-readout-reachable} by taking the infimum over representing measures.
Banachness follows from the fact that $\mathcal{F}_{\theta,t}$ endowed with $\|\cdot\|_{\mathrm{var},\theta,t}$
is isometrically isomorphic to the quotient Banach space $\mathcal{M}(\Omega_t)/\ker(T_{\theta,t})$,
and quotients of Banach spaces are Banach.
\qedhere
\end{proof}

\subsection{NTK and the lazy-training regime}

We next summarize the NTK connection for our dynamic block. Fix $t$ and let $z$ denote the (vectorized) input comprising $(x_t,X)\in\mathbb{R}^{(t+1)\times d}$, so $z\in\mathbb{R}^{(t+1)d}$. Here we treat the block as a function of two independent arguments $(x,X)$ (query and memory). Let $f_\theta(z):=o_t$ denote the block output (possibly after head aggregation). Assume $f_\theta$ is differentiable in $\theta$ at the inputs of interest (this holds almost surely under continuous initialization, since nondifferentiability occurs only on measure-zero boundaries of ReLU gates).

\begin{definition}[Neural Tangent Kernel (matrix-valued)]
Let $J_\theta(z):=\nabla_\theta f_\theta(z)\in\mathbb{R}^{d\times |\theta|}$ be the Jacobian.
Define the (matrix-valued) NTK
\[
\Theta_\theta(z,z')\;:=\;J_\theta(z)\,J_\theta(z')^\top\in\mathbb{R}^{d\times d}.
\]
\end{definition}

\begin{theorem}[Exact prediction dynamics under gradient flow]
\label{thm:ntk-gradient-flow}
Consider squared loss on a dataset $\{(z_i,y_i)\}_{i=1}^n$:
\[
\mathcal{L}(\theta)=\frac12\sum_{i=1}^n \|f_\theta(z_i)-y_i\|_2^2,
\qquad
\dot{\theta}(t)=-\nabla_\theta \mathcal{L}(\theta(t)).
\]
Then for any input $z$,
\[
\frac{d}{dt}f_{\theta(t)}(z)
=
-\sum_{i=1}^n \Theta_{\theta(t)}(z,z_i)\,\big(f_{\theta(t)}(z_i)-y_i\big).
\]
\end{theorem}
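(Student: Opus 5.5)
The identity follows from the chain rule applied along the gradient-flow trajectory, followed by substitution of the explicit gradient of the squared loss. No structure specific to the dynamic two-layer block is needed beyond differentiability of $\theta\mapsto f_\theta(z)$ at the inputs involved. This is the standing assumption stated just before the definition of the NTK (nondifferentiability only on measure-zero ReLU-gate boundaries).

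\textbf{Step 1 (chain rule in time).} Fix an input $z$. Along the trajectory, $t\mapsto f_{\theta(t)}(z)$ is the composition of $t\mapsto\theta(t)$ with $\theta\mapsto f_\theta(z)$. Writing $J_\theta(z)\in\mathbb{R}^{d\times|\theta|}$ for the Jacobian, we get
\[
\frac{d}{dt}f_{\theta(t)}(z)=J_{\theta(t)}(z)\,\dot\theta(t),
\]
with $\dot\theta$ viewed as a column vector in $\mathbb{R}^{|\theta|}$.

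\textbf{Step 2 (gradient of the loss).} Differentiate $\mathcal{L}(\theta)=\tfrac12\sum_i\|f_\theta(z_i)-y_i\|_2^2$ term by term. This gives
\[
\nabla_\theta\mathcal{L}(\theta)=\sum_{i=1}^n J_\theta(z_i)^\top\big(f_\theta(z_i)-y_i\big),
\]
where the residuals are treated as column vectors. The only care needed is to keep the row-vector convention for $f_\theta$ consistent with the column convention for gradients, so that the transposes line up. Hence
\[
\dot\theta(t)=-\sum_i J_{\theta(t)}(z_i)^\top\big(f_{\theta(t)}(z_i)-y_i\big).
\]

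\textbf{Step 3 (substitute and recognize the kernel).} Plugging Step 2 into Step 1 and using linearity gives
\[
\frac{d}{dt}f_{\theta(t)}(z)=-\sum_i J_{\theta(t)}(z)\,J_{\theta(t)}(z_i)^\top\big(f_{\theta(t)}(z_i)-y_i\big).
\]
By definition, $J_{\theta(t)}(z)\,J_{\theta(t)}(z_i)^\top=\Theta_{\theta(t)}(z,z_i)$, which is the claimed formula.

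\textbf{Main obstacle.} The only real subtlety is justifying the chain rule, since the ReLU gates make $f_\theta$ only almost-everywhere differentiable. I would assume, as the paper does, that $f_\theta(z)$ and each $f_\theta(z_i)$ are differentiable at every $\theta(t)$ along the flow, and that the flow exists (for example, absolutely continuous and solving the ODE). Under that assumption, Step 1 holds pointwise in $t$.

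If one wants more, I would note that the identity holds at every time where the trajectory avoids gate boundaries; under a margin condition (cf. Lemma~\ref{lem:ntk-gate-stability-margin}) this is all $t$ in an interval. Otherwise I would interpret the identity with Clarke generalized Jacobians, but I would keep this to a remark rather than part of the main argument.
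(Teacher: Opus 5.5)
Your proposal is correct and matches the paper's proof: chain rule $\frac{d}{dt}f_{\theta(t)}(z)=J_{\theta(t)}(z)\dot\theta(t)$, the explicit gradient $\nabla_\theta\mathcal{L}=\sum_i J_\theta(z_i)^\top(f_\theta(z_i)-y_i)$, then substitution and recognizing $J_{\theta(t)}(z)J_{\theta(t)}(z_i)^\top=\Theta_{\theta(t)}(z,z_i)$. Your caveat about differentiability along the trajectory is the same standing assumption the paper states just before defining the NTK.
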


\begin{proof}
By the chain rule,
$\frac{d}{dt}f_{\theta(t)}(z)=J_{\theta(t)}(z)\,\dot{\theta}(t)$.
Also,
$\nabla_\theta \mathcal{L}(\theta)=\sum_{i=1}^n J_\theta(z_i)^\top\,(f_\theta(z_i)-y_i)$.
Substitute $\dot{\theta}(t)=-\nabla_\theta\mathcal{L}(\theta(t))$ and regroup:
\[
\frac{d}{dt}f_{\theta(t)}(z)
=
-\sum_{i=1}^n J_{\theta(t)}(z)\,J_{\theta(t)}(z_i)^\top\,(f_{\theta(t)}(z_i)-y_i)
=
-\sum_{i=1}^n \Theta_{\theta(t)}(z,z_i)\,(f_{\theta(t)}(z_i)-y_i).
\]
\qedhere
\end{proof}

\begin{corollary}[Kernel (lazy) regime]
If along training $\Theta_{\theta(t)}(z,z')\approx \Theta_{\theta(0)}(z,z')$ for all relevant $(z,z')$, then the prediction dynamics are approximately those of kernel gradient flow driven by the fixed kernel $\Theta_{\theta(0)}$. In particular, when $\Theta_{\theta(t)}\equiv \Theta_{\theta(0)}$ is exactly constant, the dynamics are linear in function space and coincide with kernel regression/gradient flow in the (matrix-valued) RKHS induced by $\Theta_{\theta(0)}$.
\end{corollary}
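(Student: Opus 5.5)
The statement immediately above is the Kernel (lazy) regime corollary. I would derive it directly from Theorem~\ref{thm:ntk-gradient-flow}, treating the exact prediction dynamics as a coupled ODE system in function space.

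The first step is to restrict to the training inputs. I stack the residuals $r(t):=\big(f_{\theta(t)}(z_i)-y_i\big)_{i=1}^n\in\mathbb{R}^{nd}$. I also form the block Gram matrix $\mathbf{\Theta}_t\in\mathbb{R}^{nd\times nd}$ with $(i,j)$ block $\Theta_{\theta(t)}(z_i,z_j)$. Theorem~\ref{thm:ntk-gradient-flow} then gives $\dot r(t)=-\mathbf{\Theta}_t\,r(t)$, and for an arbitrary input $z$ it gives $\tfrac{d}{dt}f_{\theta(t)}(z)=-\sum_i\Theta_{\theta(t)}(z,z_i)r_i(t)$.

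The second step compares this with the frozen-kernel flow. Define $\dot{\bar r}=-\mathbf{\Theta}_0\bar r$ with $\bar r(0)=r(0)$, and $\tfrac{d}{dt}\bar f(z)=-\sum_i\Theta_{\theta(0)}(z,z_i)\bar r_i$, which is kernel gradient flow with the fixed kernel $\Theta_{\theta(0)}$. Set $e=r-\bar r$. Then $\dot e=-\mathbf{\Theta}_0 e-(\mathbf{\Theta}_t-\mathbf{\Theta}_0)r$. Because $\mathbf{\Theta}_0=J J^\top\succeq0$, the semigroup $e^{-s\mathbf{\Theta}_0}$ is a contraction. Variation of constants together with Grönwall then gives $\|e(t)\|\le\int_0^t\|\mathbf{\Theta}_s-\mathbf{\Theta}_0\|\,\|r(s)\|\,ds$. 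Here $\|r(s)\|\le\|r(0)\|$, since $\tfrac{d}{dt}\|r\|^2=-2r^\top\mathbf{\Theta}_t r\le0$. So on a finite horizon the error is controlled by $\sup_s\|\mathbf{\Theta}_s-\mathbf{\Theta}_0\|$, and integrating the $z$-equation gives the same type of bound for $f_{\theta(t)}(z)-\bar f(z)$. This makes the informal $\approx$ precise.

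The third step is the exact case $\Theta_{\theta(t)}\equiv\Theta_{\theta(0)}$, where $e\equiv0$. The residual equation is then linear, with solution $r(t)=e^{-t\mathbf{\Theta}_0}r(0)$. Integrating, $f_{\theta(t)}(z)=f_{\theta(0)}(z)-\Theta_0(z,Z)\,\mathbf{\Theta}_0^{+}\big(I-e^{-t\mathbf{\Theta}_0}\big)r(0)$, using that $r(0)-r(t)$ lies in the range of $\mathbf{\Theta}_0$. This is kernel gradient flow in the matrix-valued RKHS of $\Theta_{\theta(0)}$, and as $t\to\infty$ it converges to the minimum-norm kernel interpolant added to $f_{\theta(0)}$.

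The main obstacle is only interpretive. The qualifier "approximately" has to be read as a uniform bound on kernel drift over a finite horizon, because nothing in the setup guarantees that this drift is small. For our block, ReLU gate flips also make the Jacobian discontinuous. I would therefore state the corollary conditionally, as above, and point to Lemma~\ref{lem:ntk-gate-stability-margin} for when gates, and hence the kernel, stay fixed.
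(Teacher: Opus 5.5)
Your argument is correct and follows the paper's route. The paper gives no separate proof: it reads the corollary off Theorem~\ref{thm:ntk-gradient-flow} by freezing the kernel. That turns the exact identity into the affine function-space ODE $\frac{d}{dt}f_t(z)=-\sum_i\Theta_{\theta(0)}(z,z_i)\big(f_t(z_i)-y_i\big)$, i.e.\ kernel gradient flow for the PSD matrix-valued kernel $\Theta_{\theta(0)}(z,z')=J_{\theta(0)}(z)J_{\theta(0)}(z')^\top$, and the paper leaves ``$\approx$'' informal. What you add is a quantitative meaning for ``$\approx$'': the Duhamel bound $\|e(t)\|\le t\,\|r(0)\|\sup_{s\le t}\|\mathbf{\Theta}_s-\mathbf{\Theta}_0\|$. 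It is valid, but it grows with the horizon, and for test inputs it also needs drift control of the cross-kernel $\Theta(z,z_i)$.

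Three small fixes:
\begin{enumerate}
\item In your closed form, $r(0)-r(t)\in\operatorname{range}(\mathbf{\Theta}_0)$ is not what makes the pseudo-inverse legitimate. The integral $\int_0^t r(s)\,ds$ carries the component $t\,P\,r(0)$, with $P$ the orthogonal projector onto $\ker\mathbf{\Theta}_0$. This component drops out only because $\Theta_{\theta(0)}(z,Z)=J_{\theta(0)}(z)\mathbf{J}_0^\top$ annihilates $\ker\mathbf{\Theta}_0=\ker\mathbf{J}_0^\top$, where $\mathbf{J}_0$ is the stacked training Jacobian.
\item The $t\to\infty$ limit is an interpolant only when $r(0)\in\operatorname{range}(\mathbf{\Theta}_0)$ (e.g.\ $\mathbf{\Theta}_0\succ0$); otherwise it is the minimum-norm least-squares fit.
\item Lemma~\ref{lem:ntk-gate-stability-margin} does not make the kernel fixed. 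With the gate pattern frozen, $J_\theta$ still varies with $\theta$ through multilinear factors such as $W_qM^{(1)}(x)W_k^\top$ and $W_vM^{(2)}(x)W_o^\top$, the sigmoid cores, and the scalar $1/\rho_t(h_t)$. Gate stability only removes the discontinuity source. The paper also uses the implication in the opposite direction: small parameter movement plus a margin yields stable gates, not stable gates yielding a constant kernel.
\end{enumerate}
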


\paragraph{A deterministic gate-stability lemma (linking to Current-step Activated Memory selection).}
The NTK regime is often associated with small parameter movement. A direct consequence in our formulation is stability of the
\tbd{Current-step Activated Memory selection} whenever pre-activations stay away from $0$.

\begin{lemma}[Current-step Activated Memory stability under a margin]
\label{lem:ntk-gate-stability-margin}
Fix an input $(x_t,X)$ and define $h_t(\theta)$ by~\eqref{eq:block-def-reachable}. Suppose that at some reference parameters $\theta_0$ there exists $\delta>0$ such that
\[
\min_{j\in[t]} |(h_t(\theta_0))_j|\;\ge\;\delta.
\]
If $\theta$ satisfies $\|h_t(\theta)-h_t(\theta_0)\|_\infty<\delta$, then the gate pattern is unchanged:
\[
\mathbf{1}\{h_t(\theta)>0\}\;=\;\mathbf{1}\{h_t(\theta_0)>0\},
\]
and hence the \tbd{Current-step Activated Memory} restriction (as defined in Theorem~\ref{thm:tv-contraction-reachable}) selects the same subset of atoms.
\end{lemma}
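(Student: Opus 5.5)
The proof is a coordinatewise sign-preservation argument on $h_t$, followed by substitution into the slot readout of Theorem~\ref{thm:three-stage-memory-multistage-pruning}. No structure of the parameterization beyond the definition of $h_t$ is needed. The perturbation enters only through the sup-norm hypothesis, so we never need to relate $\theta$ to $\theta_0$ directly.

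\textbf{Step 1 (coordinatewise sign preservation).} Fix $j\in[t]$ and write $a:=(h_t(\theta_0))_j$, $b:=(h_t(\theta))_j$. The hypotheses give $|a|\ge\delta$ and $|b-a|\le\|h_t(\theta)-h_t(\theta_0)\|_\infty<\delta$.

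If $a>0$, then $a\ge\delta$, so $b>a-\delta\ge 0$, and hence $b>0$.

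If $a<0$, then $a\le-\delta$, so $b<a+\delta\le 0$, and hence $b<0$. In particular $b\neq 0$, which is what excludes the boundary case.

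Note that $a=0$ is impossible by the margin assumption. Therefore $\mathbf{1}\{b>0\}=\mathbf{1}\{a>0\}$ for every $j$, which is exactly the claimed equality of indicator vectors.

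\textbf{Step 2 (translate to the activated memory).} Recall the identity $(h_t)_j=\alpha_\theta(x_t;u_j)$ from the proof of Theorem~\ref{thm:three-stage-memory-multistage-pruning}. The hard gate is $g_{x_t}(u_j,v_j)=\mathbf{1}\{(h_t)_j>0\}$. So the set of slot indices $\{j:\ g_{x_t}(u_j,v_j)=1\}$ that $\mu^{\mathrm{act}}$ retains from $\mu^{\mathrm{pool}}$ is the same under $\theta$ and $\theta_0$.

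If the normalized activation is used, Lemma~\ref{lem:gate-invariance-reachable} shows that $\mathrm{L2Norm}_t$ does not alter these indicators. The conclusion therefore holds equally for $\sigma_t$.

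\textbf{Caveat and main obstacle.} There is no real obstacle; the only care needed is the phrasing "selects the same subset of atoms." The atoms themselves, $(u_j,v_j)$ with $U=R^{(1)\top}_\theta(x_t)X$, may move with $\theta$. So the statement should be read as saying that the same index set of slots is retained: the restriction pattern applied to the pool is identical, not the atom locations. I would state this explicitly to avoid overclaiming.
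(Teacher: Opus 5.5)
Your proof is correct and matches the paper's argument: the bound $|(h_t(\theta))_j-(h_t(\theta_0))_j|<\delta\le|(h_t(\theta_0))_j|$ prevents any coordinate from crossing zero, so the mask and the induced restriction coincide. Your caveat that what is preserved is the retained index set, not the atom locations $(u_j,v_j)$, is a correct and worthwhile sharpening of the phrase ``same subset of atoms,'' since the atoms move with $\theta$ through $R^{(1)}_\theta$ and $R^{(2)}_\theta$.
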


\begin{proof}
For each coordinate $j$, $|(h_t(\theta))_j-(h_t(\theta_0))_j|<\delta\le |(h_t(\theta_0))_j|$ implies $(h_t(\theta))_j$ cannot cross $0$, so its sign is preserved. Therefore the indicator mask (and the induced restriction) is identical. \qedhere
\end{proof}

\paragraph{Summary.}
Theorems~\ref{thm:bounded-linear-readout-reachable}--\ref{thm:ntk-gradient-flow} formalize two complementary perspectives:
(i) our forward pass is a bounded linear readout over a Banach space of measures (with restriction/activation as a TV contraction), aligning with variation-norm tools for two-layer models; and
(ii) in regimes where training linearizes the model (NTK/lazy training), learning is governed by a fixed kernel in an RKHS, and the \tbd{Current-step Activated Memory selection} can become stable under a margin condition (Lemma~\ref{lem:ntk-gate-stability-margin}), thereby limiting changes in dynamic routing on the training set.

\section{Mechanistic Implications: Context-Instantiated Dictionaries and Multi-Stage Selection}
\label{sec:mechanistic-implications}

This section distills the \tbd{measure} and \tbd{operator} views developed in the previous section into a
mechanistic picture that is both (i) mathematically checkable and (ii) directly interpretable.
The key message is: parameters define a Global (parameter-defined) Memory Space (how to use memory), the context instantiates a finite Context-instantiated Memory Pool (what memory is),
and ReLU induces Current-step Activated Memory by selecting over that pool.

\paragraph{Standing assumption (L2 normalization sign invariance).}
Throughout this section we adopt the common regime where $\mathrm{L2Norm}_t$ is a positive scalar rescaling across the
length-$t$ vector:
there exists a map $\rho:\mathbb{R}^{1\times t}\to(0,\infty)$ such that
\[
\mathrm{L2Norm}_t(z)=\frac{z}{\rho(z)}.
\]
Consequently,
$\mathrm{ReLU}(\mathrm{L2Norm}_t(z))=\rho(z)^{-1}\mathrm{ReLU}(z)$ and the ReLU gate pattern is unchanged.

\subsection{Dictionary--measure decomposition of the block}
\label{sec:mech-dict-measure}

\paragraph{Notation.}
We use the common setup in Appendix~\ref{sec:app:setup}. In particular, the block is \eqref{eq:ht_def}--\eqref{eq:ot_def},
the dictionary element is \eqref{eq:mech-Psi}, and the pool/activated measures are \eqref{eq:app_mu_pool}--\eqref{eq:app_mu_act}.

\paragraph{Context-instantiation operators and pool slots.}
Define the context-instantiation operators
\[
\mathcal{E}^{(j)}_{\theta,t}(x)[X]:=R^{(j)\top}_\theta(x)\,X\in\mathbb{R}^{t\times d},\qquad j\in\{1,2\},
\]
and set
\[
U_\theta(x;X):=\mathcal{E}^{(1)}_{\theta,t}(x)[X],\qquad
V_\theta(x;X):=\mathcal{E}^{(2)}_{\theta,t}(x)[X].
\]
Write their rows as $u_j:=U_\theta(x;X)[j,:]\in\mathbb{R}^{1\times d}$ and
$v_j:=V_\theta(x;X)[j,:]\in\mathbb{R}^{1\times d}$.

\paragraph{Global (parameter-defined) Memory Space and parameter-defined dictionary elements.}
Let
\[
\Omega:=\mathbb{R}^d\times\mathbb{R}^d,
\]
whose elements $\omega=(u,v)$ represent an address-side vector $u$ and a content-side vector $v$.
Define the (parameter-induced) addressing and readout maps
\begin{equation}
\alpha_\theta(x;u):=x\,L^{(1)}_\theta(x)\,u^\top\in\mathbb{R},
\qquad
\beta_\theta(x;v):=v\,L^{(2)\top}_\theta(x)\in\mathbb{R}^{1\times d},
\end{equation}
and the corresponding (vector-valued) dictionary element as in \eqref{eq:mech-Psi}.
The family $\{\Psi_\theta(\cdot;\omega):\omega\in\Omega\}$ is a \tbd{parameter-defined global dictionary}
over the \tbd{Global (parameter-defined) Memory Space} $\Omega$.

\paragraph{Context-instantiated Memory Pool as an atomic measure.}
Define the atomic (finite) measure as in \eqref{eq:mech-mu-ctx}, which records the \tbd{Context-instantiated Memory Pool} of $(u_j,v_j)$ slots instantiated from $(X,x)$ through sequence-space mixing.
Note that in the fully dynamic regime, $R^{(j)}_\theta(x)$ depends on $x$, hence $\mu^{\mathrm{pool}}_{X,x}$ is
\tbd{input-dependent}; the statements below are therefore understood \tbd{pointwise} in $(X,x)$.

\begin{proposition}[Pointwise measure readout and current-step activation]
\label{prop:mech-measure-readout}
Let $h(x;X)$ and $o(x;X)$ be defined by~\eqref{eq:mech-block}.
Under the standing L2-normalization sign-invariance assumption,
\begin{equation}
o(x;X)
=
\frac{1}{\rho(h(x;X))}
\int_{\Omega}\Psi_\theta(x;(u,v))\;d\mu^{\mathrm{pool}}_{X,x}(u,v).
\label{eq:mech-readout}
\end{equation}
\label{eq:imp1_measure_readout}
Equivalently,
\[
o(x;X)=\frac{1}{\rho(h(x;X))}\sum_{j=1}^t \alpha_\theta(x;u_j)_+\,\beta_\theta(x;v_j).
\]
\end{proposition}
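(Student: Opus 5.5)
The plan is a direct algebraic unfolding of the block definition \eqref{eq:mech-block} (i.e.\ \eqref{eq:ht_def}--\eqref{eq:ot_def}), followed by rewriting the resulting finite sum in measure notation; it is essentially a restatement of Theorem~\ref{thm:three-stage-memory-multistage-pruning} in dictionary form.

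First I identify the hidden coordinates with addressing values. Writing $U=R^{(1)\top}_\theta(x)X$, we have $X^\top R^{(1)}_\theta(x)=U^\top$, so $h(x;X)=x\,L^{(1)}_\theta(x)\,U^\top$. Its $j$-th coordinate is therefore $x\,L^{(1)}_\theta(x)\,u_j^\top=\alpha_\theta(x;u_j)$ by \eqref{eq:mech-alpha-beta}. In the same way, the second-layer matrix is $R^{(2)\top}_\theta(x)X L^{(2)\top}_\theta(x)=V L^{(2)\top}_\theta(x)$. Its $j$-th row is $v_jL^{(2)\top}_\theta(x)=\beta_\theta(x;v_j)$.

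Next I apply the standing sign-invariance assumption, or equivalently Lemma~\ref{lem:gate-invariance-reachable}, which gives $\sigma(h)=\rho(h)^{-1}\mathrm{ReLU}(h)$. The row-vector/matrix product $\sigma(h)\,(VL^{(2)\top})$ then expands as a sum over rows:
\[
o(x;X)=\rho(h)^{-1}\sum_{j=1}^t \alpha_\theta(x;u_j)_+\,\beta_\theta(x;v_j).
\]
Each summand is exactly $\Psi_\theta(x;(u_j,v_j))$ from \eqref{eq:mech-Psi}. Finally, I use the convention that integration against the atomic measure $\mu^{\mathrm{pool}}_{X,x}=\sum_j\delta_{(u_j,v_j)}$ is the finite sum, understood componentwise for vector-valued integrands. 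This yields \eqref{eq:mech-readout}.

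There is no real obstacle. The only point needing care is multiplicity: if two slots coincide, the atoms must be counted with multiplicity, which the sum-of-Diracs definition already does. One should also note that the scalar $\rho(h(x;X))$ depends on the whole vector $h$ and so sits outside the integral; this is legitimate because it is constant in $(u,v)$ for fixed $(x,X)$.
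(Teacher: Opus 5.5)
Your proof is correct and follows the same route as the paper's: identify $h_j=\alpha_\theta(x;u_j)$ via $X^\top R^{(1)}_\theta(x)=U^\top$, and identify the rows of the second layer as $\beta_\theta(x;v_j)$ via $R^{(2)\top}_\theta(x)X=V$. You then apply $\sigma(h)=\rho(h)^{-1}\mathrm{ReLU}(h)$ and rewrite the finite sum as an integral against the atomic pool measure; your remarks on multiplicity and on $\rho(h(x;X))$ sitting outside the integral are correct.
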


\begin{proof}
Since $X^\top R^{(1)}_\theta(x)=(R^{(1)\top}_\theta(x)X)^\top=U_\theta(x;X)^\top$, we have
$(h(x;X))_j=\alpha_\theta(x;u_j)$ for all $j$.
Moreover, $R^{(2)\top}_\theta(x)X=V_\theta(x;X)$ gives
\[
o(x;X)=\sum_{j=1}^t \sigma(h(x;X))_j\,\beta_\theta(x;v_j).
\]
By the standing assumption, $\sigma(h)=\rho(h)^{-1}\mathrm{ReLU}(h)=\rho(h)^{-1}(h)_+$ elementwise, hence
\[
o(x;X)=\frac{1}{\rho(h(x;X))}\sum_{j=1}^t \alpha_\theta(x;u_j)_+\,\beta_\theta(x;v_j).
\]
Writing the finite sum as an integral against the atomic measure~\eqref{eq:mech-mu-ctx} yields~\eqref{eq:mech-readout}.
\qedhere
\end{proof}

\begin{lemma}[Pushforward form of pool instantiation]
\label{lem:mech-pushforward}
Let $\nu_t:=\sum_{j=1}^t\delta_j$ be the counting measure on $[t]$.
Define the (instantiation) map
\[
\Gamma_{\theta}(x;X):[t]\to \Omega,\qquad
\Gamma_{\theta}(x;X)(j):=(u_j,v_j),
\]
where $(u_j,v_j)$ are induced by $U_\theta(x;X)$ and $V_\theta(x;X)$.
Then
\[
\mu^{\mathrm{pool}}_{X,x}=(\Gamma_{\theta}(x;X))_\# \nu_t.
\]
\end{lemma}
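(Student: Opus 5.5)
The plan is to verify the measure identity directly: both sides are finite atomic measures on $\Omega$, so they agree once their values on an arbitrary Borel set $A\subseteq\Omega$ agree. Recall that the pushforward is $(\Gamma)_\#\nu_t(A):=\nu_t(\Gamma^{-1}(A))$. Here $\Gamma_\theta(x;X)$ is a map from the finite set $[t]$, equipped with its discrete $\sigma$-algebra, into $\Omega$. Every subset of $[t]$ is measurable, so $\Gamma_\theta(x;X)$ is automatically measurable and the pushforward is well defined without any regularity of $R^{(j)}_\theta$. All statements are pointwise in $(X,x)$, so $x$ and $X$ stay fixed throughout.

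The computation takes one line per side. For the pushforward,
\[
(\Gamma_\theta(x;X))_\#\nu_t(A)=\nu_t\big(\{j\in[t]:(u_j,v_j)\in A\}\big)=\sum_{j=1}^t \mathbf{1}\{(u_j,v_j)\in A\}.
\]
For the pool measure in \eqref{eq:mech-mu-ctx},
\[
\mu^{\mathrm{pool}}_{X,x}(A)=\sum_{j=1}^t\delta_{(u_j,v_j)}(A)=\sum_{j=1}^t \mathbf{1}\{(u_j,v_j)\in A\}.
\]
The two expressions coincide for every $A$, which proves the identity.

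As a consistency check, I would note the integral form. For any measurable $f$, the change-of-variables formula gives $\int f\,d(\Gamma_\#\nu_t)=\int_{[t]}f\circ\Gamma\,d\nu_t=\sum_j f(u_j,v_j)$. This matches the shorthand convention from Appendix~\ref{sec:app:setup}, so Proposition~\ref{prop:mech-measure-readout} can equivalently be read as an integral of $\Psi_\theta(x;\Gamma(j))$ against the counting measure on $[t]$.

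The only point needing care is multiplicity. If two slots coincide, $(u_j,v_j)=(u_k,v_k)$, the pushforward correctly assigns mass $2$ to that atom, exactly as the sum of Dirac masses does. Because the computation counts preimages rather than distinct image points, no injectivity assumption on $\Gamma$ is needed. There is no substantive obstacle beyond this.
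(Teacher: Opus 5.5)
Your proof is correct and follows the same route as the paper: evaluate both sides on an arbitrary measurable $A\subseteq\Omega$ and note that $\nu_t(\Gamma^{-1}(A))=\sum_{j=1}^t\mathbf{1}\{(u_j,v_j)\in A\}=\sum_{j=1}^t\delta_{(u_j,v_j)}(A)$. Your added remarks on measurability, the change-of-variables form, and multiplicity are correct but not needed for the identity.
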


\begin{proof}
For any measurable $A\subseteq\Omega$,
$(\Gamma_\#\nu_t)(A)=\nu_t(\Gamma^{-1}(A))=\sum_{j=1}^t \mathbf{1}\{\Gamma(j)\in A\}
=\sum_{j=1}^t \delta_{(u_j,v_j)}(A)=\mu^{\mathrm{pool}}_{X,x}(A)$.
\qedhere
\end{proof}

\subsection{Global (parameter-defined) Memory Space vs.\ instantiated pools}
\label{sec:mech-global-vs-pool}

The decomposition above suggests three distinct objects:
\begin{itemize}
\item \textbf{Global (parameter-defined) Memory Space:} $\Omega=\mathbb{R}^d\times\mathbb{R}^d$ (potential atoms), together with
      parameter-induced dictionary elements $\Psi_\theta(\cdot;\omega)$.
\item \textbf{Context-instantiated Memory Pool:} the (multi)set of slots $\{(u_j,v_j)\}_{j=1}^t$, equivalently
      $\mathrm{supp}(\mu^{\mathrm{pool}}_{X,x})$ (with multiplicities).
\item \textbf{Current-step Activated Memory:} the subpool selected by ReLU gating (equivalently, $\mathrm{supp}(\mu^{\mathrm{act}}_{X,x})$).
\end{itemize}

\begin{definition}[Current-step activation restriction and activated pool]
\label{def:mech-proc}
Define the hard gate
\[
g_x(u,v):=\mathbf{1}\{\alpha_\theta(x;u)>0\}\in\{0,1\},
\]
and the \tbd{Current-step Activated Memory} measure
\[
\mu^{\mathrm{act}}_{X,x}:=g_x\,\mu^{\mathrm{pool}}_{X,x}.
\]
The \tbd{activated pool} is $\mathrm{supp}(\mu^{\mathrm{act}}_{X,x})\subseteq \mathrm{supp}(\mu^{\mathrm{pool}}_{X,x})$.
\end{definition}

\begin{remark}[Multi-stage selection as support pruning]
\label{rem:mech-support-chain}
By construction,
\[
\Omega
\;\supset\;
\mathrm{supp}(\mu^{\mathrm{pool}}_{X,x})
\;\supset\;
\mathrm{supp}(\mu^{\mathrm{act}}_{X,x}),
\]
capturing the chain Global (parameter-defined) Memory Space $\to$ Context-instantiated Memory Pool $\to$ Current-step Activated Memory.
\end{remark}

\begin{remark}[Finite pool per step, potentially infinite across contexts]
\label{rem:mech-finite-vs-infinite}
For any fixed $t$ and any $(X,x)$, the \tbd{Context-instantiated Memory Pool} contains at most $t$ atoms (counting multiplicities), hence
$|\mathrm{supp}(\mu^{\mathrm{pool}}_{X,x})|\le t$ and $|\mathrm{supp}(\mu^{\mathrm{act}}_{X,x})|\le t$.
In contrast, across all contexts and lengths, the \tbd{reachable atom set}
\[
\mathcal{A}_\theta:=\bigcup_{t\ge 1}\ \bigcup_{X\in\mathbb{R}^{t\times d}}\ \bigcup_{x\in\mathbb{R}^{1\times d}}
\mathrm{supp}(\mu^{\mathrm{pool}}_{X,x})
\subseteq \Omega
\]
can be infinite (indeed uncountable under mild conditions).
Thus ``infinite dictionary'' refers to capacity \tbd{across contexts}, not to per-step computation.
\end{remark}

\subsection{Activation gating as restriction and complexity control via TV/variation norms}
\label{sec:mech-tv-variation}

We now connect the mechanistic decomposition to the functional-analytic framework.
To avoid pathologies from $\Omega=\mathbb{R}^{2d}$ being unbounded, we follow the previous section and work on a
\tbd{reachable} compact set of atoms.

\paragraph{Reachable atom set.}
Assume $(x,X)$ range over bounded sets and the mixing operators are uniformly bounded in operator norm; then every
instantiated atom $(u_j,v_j)$ lies in a compact set $\Omega_t\subseteq\Omega$ (as shown in the previous section).
Let $\mathcal{M}(\Omega_t)$ be the Banach space of finite signed Radon measures on $\Omega_t$ with total variation norm
$\|\cdot\|_{\mathrm{TV}}$.

\paragraph{Pointwise linear functionals.}
For each fixed $x\in\mathcal{X}$, define the linear functional on measures
\[
\ell_{\theta,x}:\mathcal{M}(\Omega_t)\to\mathbb{R}^{1\times d},
\qquad
\ell_{\theta,x}(\mu):=\int_{\Omega_t}\Psi_\theta(x;\omega)\,d\mu(\omega),
\]
(where the integral is understood componentwise, equivalently as a Bochner integral).
Linearity follows immediately from linearity of the integral in $\mu$.

\begin{lemma}[Restriction (activation gating) is a TV contraction]
\label{lem:mech-tv-contraction}
Let $\mu\in\mathcal{M}(\Omega_t)$ and let $g:\Omega_t\to\mathbb{R}$ be measurable with $|g|\le 1$.
Define $(g\mu)(A):=\int_A g\,d\mu$. Then
\[
\|g\mu\|_{\mathrm{TV}}\le \|\mu\|_{\mathrm{TV}}.
\]
In particular, $\|\mu^{\mathrm{act}}_{X,x}\|_{\mathrm{TV}}\le \|\mu^{\mathrm{pool}}_{X,x}\|_{\mathrm{TV}}$.
\end{lemma}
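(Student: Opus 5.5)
We use the Jordan decomposition of the total variation measure. Let $|\mu|$ denote the total variation measure of $\mu$, so that $\|\mu\|_{\mathrm{TV}}=|\mu|(\Omega_t)$. The key identity is that the weighted measure $g\mu$ has variation $|g\mu| = |g|\,|\mu|$. We do not need the full equality; the inequality $|g\mu|(A)\le\int_A |g|\,d|\mu|$ for every measurable $A$ is enough.

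To prove this inequality, take any finite measurable partition $\{A_k\}$ of $A$ and bound
\[
\sum_k |(g\mu)(A_k)| = \sum_k \Big|\int_{A_k} g\,d\mu\Big| \le \sum_k \int_{A_k}|g|\,d|\mu| = \int_A |g|\,d|\mu|.
\]
The middle step uses the standard estimate $|\int f\,d\mu|\le\int|f|\,d|\mu|$, which follows from writing $d\mu = h\,d|\mu|$ with $|h|=1$ (Radon--Nikodym/polar decomposition). Taking the supremum over partitions gives the claimed bound on $|g\mu|(A)$.

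Applying this with $A=\Omega_t$ and using $|g|\le 1$, we get
\[
\|g\mu\|_{\mathrm{TV}}\le\int_{\Omega_t}|g|\,d|\mu|\le|\mu|(\Omega_t)=\|\mu\|_{\mathrm{TV}}.
\]
Well-definedness of $g\mu$ as a finite signed measure follows because $g$ is bounded and measurable, so $g\in L^1(|\mu|)$.

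For the particular case, $g_x(u,v)=\mathbf{1}\{\alpha_\theta(x;u)>0\}$ is measurable because $\alpha_\theta(x;\cdot)$ is linear in $u$, hence continuous, and it satisfies $0\le g_x\le1$. Also $\mu^{\mathrm{pool}}_{X,x}\in\mathcal{M}(\Omega_t)$ by Lemma~\ref{lem:reachable-compact}. The general bound therefore gives $\|\mu^{\mathrm{act}}_{X,x}\|_{\mathrm{TV}}\le\|\mu^{\mathrm{pool}}_{X,x}\|_{\mathrm{TV}}$.

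Alternatively, since both measures are atomic and nonnegative, this case can be checked directly: the norms are the number of activated slots and $t$, respectively. No step is a real obstacle; the only care needed is the polar decomposition used in the partition estimate.
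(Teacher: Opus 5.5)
Your proof is correct and is essentially the paper's argument: bound $\sum_k |(g\mu)(A_k)|$ over a measurable partition by $\int |g|\,d|\mu| \le |\mu|(\Omega_t)$, then take the supremum over partitions. Your extra details fill in points the paper leaves implicit: the polar-decomposition justification of $|\int f\,d\mu|\le\int|f|\,d|\mu|$, the well-definedness of $g\mu$, and the measurability of $g_x$.
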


\begin{proof}
This is the standard total-variation contraction under multiplication by a bounded function:
for any measurable partition $\{A_i\}$,
\[
\sum_i |(g\mu)(A_i)|
=\sum_i\left|\int_{A_i} g\,d\mu\right|
\le \sum_i\int_{A_i} |g|\,d|\mu|
\le \sum_i\int_{A_i} 1\,d|\mu|
=|\mu|(\Omega_t)=\|\mu\|_{\mathrm{TV}}.
\]
Taking the supremum over partitions yields the claim.
\qedhere
\end{proof}

\begin{remark}[Atomic measures: TV equals pool size]
\label{rem:mech-tv-atomic}
For the unweighted atomic \tbd{Context-instantiated Memory Pool} measure
$\mu^{\mathrm{pool}}_{X,x}=\sum_{j=1}^t \delta_{(u_j,v_j)}$,
\[
\|\mu^{\mathrm{pool}}_{X,x}\|_{\mathrm{TV}}=t,
\qquad
\|\mu^{\mathrm{act}}_{X,x}\|_{\mathrm{TV}}
=\#\{j\in[t]:\alpha_\theta(x;u_j)>0\},
\]
i.e., TV counts the number of instantiated (resp.\ activated) slots, including multiplicities.
\end{remark}

\begin{proposition}[A priori output bound via TV mass]
\label{prop:mech-output-bound}
Assume $\Psi_\theta$ is bounded on $\mathcal{X}\times\Omega_t$ with
$\|\Psi_\theta\|_{\infty,2}:=\sup_{x\in\mathcal{X},\omega\in\Omega_t}\|\Psi_\theta(x;\omega)\|_2<\infty$.
Then for any $(X,x)$ with $\mathrm{supp}(\mu^{\mathrm{pool}}_{X,x})\subseteq\Omega_t$,
\[
\|\ell_{\theta,x}(\mu^{\mathrm{pool}}_{X,x})\|_2
\le
\|\Psi_\theta\|_{\infty,2}\,\|\mu^{\mathrm{pool}}_{X,x}\|_{\mathrm{TV}}
=
t\,\|\Psi_\theta\|_{\infty,2}.
\]
Moreover, by TV contraction,
\[
\|\ell_{\theta,x}(\mu^{\mathrm{act}}_{X,x})\|_2
\le
\|\Psi_\theta\|_{\infty,2}\,\|\mu^{\mathrm{act}}_{X,x}\|_{\mathrm{TV}}
\le
\|\Psi_\theta\|_{\infty,2}\,\|\mu^{\mathrm{pool}}_{X,x}\|_{\mathrm{TV}}.
\]
\end{proposition}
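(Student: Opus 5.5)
The plan is to reduce both inequalities to the standard estimate for integrating a bounded vector-valued function against a finite signed measure. Then I will plug in the atomic structure of the pool measure from Remark~\ref{rem:mech-tv-atomic} and the contraction of Lemma~\ref{lem:mech-tv-contraction}.

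\textbf{Step 1: the pool bound.} For fixed $x\in\mathcal{X}$ and any $\mu\in\mathcal{M}(\Omega_t)$, the Bochner (componentwise) integral satisfies
\[
\|\ell_{\theta,x}(\mu)\|_2\le\int_{\Omega_t}\|\Psi_\theta(x;\omega)\|_2\,d|\mu|(\omega)\le\|\Psi_\theta\|_{\infty,2}\,|\mu|(\Omega_t)=\|\Psi_\theta\|_{\infty,2}\,\|\mu\|_{\mathrm{TV}}.
\]
This is the same estimate already used in the proof of Theorem~\ref{thm:bounded-linear-readout-reachable}. If one wants to avoid Bochner machinery, the atomic case suffices here. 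In that case the integral is the finite sum $\sum_j c_j\Psi_\theta(x;\omega_j)$, and the bound is just the triangle inequality with $\|\mu\|_{\mathrm{TV}}=\sum_j|c_j|$.

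Apply this with $\mu=\mu^{\mathrm{pool}}_{X,x}$. This is legitimate because the support hypothesis places every atom in $\Omega_t$, so the measure lies in $\mathcal{M}(\Omega_t)$. Remark~\ref{rem:mech-tv-atomic} gives $\|\mu^{\mathrm{pool}}_{X,x}\|_{\mathrm{TV}}=t$. One small point to address: coinciding atoms add their masses with positive coefficients, so the TV norm is still $t$ counting multiplicity.

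\textbf{Step 2: the activated bound.} Apply the same estimate with $\mu=\mu^{\mathrm{act}}_{X,x}=g_x\mu^{\mathrm{pool}}_{X,x}$. The gate $g_x$ is measurable because it is the indicator of $\{\alpha_\theta(x;u)>0\}$ and $\alpha_\theta(x;\cdot)$ is linear, hence continuous, in $u$. It also satisfies $0\le g_x\le1$. Lemma~\ref{lem:mech-tv-contraction} then yields the second inequality $\|\mu^{\mathrm{act}}\|_{\mathrm{TV}}\le\|\mu^{\mathrm{pool}}\|_{\mathrm{TV}}$.

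The main (mild) obstacle is this measurability and support bookkeeping, namely checking that the restricted measure is still a finite signed Radon measure on $\Omega_t$. For atomic measures this is immediate, since $\mu^{\mathrm{act}}=\sum_j g_x(u_j,v_j)\delta_{(u_j,v_j)}$. Everything else follows directly from results already established.
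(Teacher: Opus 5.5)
Your proposal is correct and matches the paper's proof. The paper likewise proves $\|\ell_{\theta,x}(\mu)\|_2\le\|\Psi_\theta\|_{\infty,2}\,\|\mu\|_{\mathrm{TV}}$ for arbitrary $\mu\in\mathcal{M}(\Omega_t)$, applies it to $\mu^{\mathrm{pool}}_{X,x}$ and $\mu^{\mathrm{act}}_{X,x}$, and cites Lemma~\ref{lem:mech-tv-contraction} for the final inequality. Your extra bookkeeping fills in details the paper leaves implicit: $g_x$ is measurable as the indicator of an open half-space, and $\|\mu^{\mathrm{pool}}_{X,x}\|_{\mathrm{TV}}=t$ with multiplicity because all coefficients are nonnegative.
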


\begin{proof}
For any fixed $x$,
\[
\|\ell_{\theta,x}(\mu)\|_2
=\left\|\int_{\Omega_t}\Psi_\theta(x;\omega)\,d\mu(\omega)\right\|_2
\le \int_{\Omega_t}\|\Psi_\theta(x;\omega)\|_2\,d|\mu|(\omega)
\le \|\Psi_\theta\|_{\infty,2}\,|\mu|(\Omega_t)
=\|\Psi_\theta\|_{\infty,2}\,\|\mu\|_{\mathrm{TV}}.
\]
Apply to $\mu^{\mathrm{pool}}_{X,x}$ and $\mu^{\mathrm{act}}_{X,x}$ and use Lemma~\ref{lem:mech-tv-contraction}.
\qedhere
\end{proof}

\begin{remark}[Variation-norm interpretation (static vs.\ dynamic instantiation)]
\label{rem:mech-variation}
When the sequence-space mixings $R^{(1)}_\theta,R^{(2)}_\theta$ are \tbd{static} in $x$,
the pool measure $\mu^{\mathrm{pool}}_{X}$ depends only on $X$, and the map
$x\mapsto \ell_{\theta,x}(\mu^{\mathrm{pool}}_{X})$ lies in the range of the bounded linear operator
$T_{\theta,t}:\mathcal{M}(\Omega_t)\to L_\infty(\mathcal{X};\mathbb{R}^d)$ defined by
$(T_{\theta,t}\mu)(x):=\ell_{\theta,x}(\mu)$.
In this regime, the induced quotient (variation) norm satisfies
$\|T_{\theta,t}\mu\|_{\mathrm{var}}\le \|\mu\|_{\mathrm{TV}}$, so the instantiated pool size $t$
provides an explicit complexity bound.

In the fully dynamic regime, $\mu^{\mathrm{pool}}_{X,x}$ may depend on $x$,
so the representation is best understood \tbd{pointwise}:
for each fixed $x$, the readout is a bounded linear functional of the instantiated measure, and activation remains a TV contraction.
This preserves the mechanistic interpretation (space/pool/activation) while the ``function-class'' viewpoint
requires handling input-dependent measures.
\end{remark}

\subsection{Long-context behavior as enlarging candidate pools (not enforcing probabilities)}
\label{sec:mech-long-context}

The dictionary--measure decomposition clarifies a long-context mechanism that is orthogonal to probability normalization.

\paragraph{Candidate pool growth.}
At step $t$, the model instantiates a \tbd{Context-instantiated Memory Pool} with at most $t$ atoms and then selects a
\tbd{Current-step Activated Memory} subpool.
Thus increasing context length enlarges the \tbd{candidate pool} (the number of available slots) even before activation:
\[
\|\mu^{\mathrm{pool}}_{X,x}\|_{\mathrm{TV}}=t,
\qquad
\|\mu^{\mathrm{act}}_{X,x}\|_{\mathrm{TV}}\le t.
\]

\paragraph{A minimal monotonicity principle (under extension consistency).}
To formalize the statement ``longer contexts cannot reduce expressivity,'' one may impose an extension-consistency condition:
for $t<T$, there exists an embedding $\iota_{t\to T}:\mathbb{R}^{t\times d}\to\mathbb{R}^{T\times d}$ such that
the length-$T$ block can ignore the additional $T-t$ tokens (e.g., by padding zeros and choosing sequence mixings that
act as identity on the first $t$ slots and do not mix in the padded slots).
Under this condition, any computation realizable at length $t$ is realizable at length $T$ by an appropriate extension,
so the attainable set of outputs expands with $t$.

\paragraph{Role of sequence-space mixing.}
The instantiation map $\Gamma_{\theta}(x;X)$ (Lemma~\ref{lem:mech-pushforward}) makes explicit how
sequence-space mixing changes which atoms are reachable from a given context: $R^{(1)}_\theta(x)$ and $R^{(2)}_\theta(x)$
alter $(u_j,v_j)$ by mixing the rows of $X$ before they are used for addressing and readout.
Learning these mixings therefore modifies the \tbd{reachable Context-instantiated Memory Pool} and, in the fully dynamic regime,
deforms the activation geometry (cf.\ the ``warped partition'' analysis earlier), providing a direct mechanism for expressivity gains
that does not rely on preserving any probabilistic interpretation of weights.

\subsection{Connection back to NTK: stable activation under small parameter movement}
\label{sec:mech-ntk-selection}

The NTK/lazy-training discussion implies a complementary mechanistic statement:
when pre-activations stay away from zero by a margin, the \tbd{Current-step Activated Memory} is stable.

\begin{remark}[Gate stability implies activated-pool stability]
Fix $(X,x)$ and consider the pre-activation vector $h(\theta):=h(x;X)$.
If there exists $\delta>0$ such that $\min_{j\in[t]} |h(\theta_0)_j|\ge\delta$ at some reference $\theta_0$ and
$\|h(\theta)-h(\theta_0)\|_\infty<\delta$, then the sign pattern $\mathbf{1}\{h(\theta)>0\}$ is unchanged.
Equivalently, the restriction $\mu^{\mathrm{act}}_{X,x}=g_x\mu^{\mathrm{pool}}_{X,x}$ selects the same subset of atoms.
Thus, in regimes where training induces small changes in $h$ on the data (as in lazy/NTK analyses),
the model's \tbd{routing/selection} over the \tbd{Context-instantiated Memory Pool} can remain stable.
\end{remark}

\paragraph{Takeaway.}
The block can be viewed as operating on a \tbd{Global (parameter-defined) Memory Space} $\Omega$ specified by parameters,
instantiating a finite \tbd{Context-instantiated Memory Pool} via $\mu^{\mathrm{pool}}_{X,x}$, and selecting a
\tbd{Current-step Activated Memory} subpool via the restriction $\mu^{\mathrm{act}}_{X,x}=g_x\mu^{\mathrm{pool}}_{X,x}$.
The Banach/TV tools quantify this activation as a contraction (complexity does not increase under gating),
while the NTK margin condition quantifies when the activated pool remains stable over training.

\section{Parameterization of HyperMLP}
\subsection{Offset (lag) Layout}
\begin{theorem}[Offset (lag) layout aligns canonical extension with autoregressive truncation]
\label{thm:offset-better}
Fix integers $1\le t<T$. Let $\widetilde X^{\rightarrow}\in\mathbb{R}^{T\times d}$ denote a length-$T$ history in
\tbd{forward} order (oldest$\to$newest),
\[
\widetilde X^{\rightarrow} := X_{1:T} = \begin{bmatrix}x_1\\ \vdots\\ x_T\end{bmatrix},
\]
and let $\widetilde X^{\leftarrow}\in\mathbb{R}^{T\times d}$ denote the same history in \tbd{offset/lag} order (newest$\to$oldest),
\[
\widetilde X^{\leftarrow} := X_{T:1} = \begin{bmatrix}x_T\\ x_{T-1}\\ \vdots\\ x_1\end{bmatrix}.
\]

For each length $\ell\in\{t,T\}$, consider the (one-head) dynamic block map
\begin{equation}
\label{eq:block_len}
h_\ell(x;X)
:= x\,L^{(1)}(x)\,X^\top\,R^{(1)}_\ell(x)\in\mathbb{R}^{1\times \ell},
\qquad
o_\ell(x;X)
:= \sigma_\ell\!\big(h_\ell(x;X)\big)\,R^{(2)\top}_\ell(x)\,X\,L^{(2)\top}(x),
\end{equation}
where $L^{(1)}(\cdot),L^{(2)}(\cdot)\in\mathbb{R}^{d\times d}$ are \tbd{length-independent} feature-space mixings, and
$R^{(1)}_\ell(\cdot),R^{(2)}_\ell(\cdot)\in\mathbb{R}^{\ell\times \ell}$ are length-$\ell$ sequence-space mixings.

The activation is $\sigma_\ell(z)=\mathrm{ReLU}(\mathrm{L2Norm}_\ell(z))$ applied elementwise across the length-$\ell$ vector.
Assume $\mathrm{L2Norm}_\ell$ is a positive scalar rescaling with padding invariance: there exists $\varepsilon>0$ such that
\begin{equation}
\label{eq:L2norm_def}
\mathrm{L2Norm}_\ell(z)=\frac{z}{\rho_\ell(z)},
\qquad
\rho_\ell(z):=\sqrt{\|z\|_2^2+\varepsilon},
\qquad z\in\mathbb{R}^{1\times \ell},
\end{equation}
so in particular $\rho_\ell(z)>0$ and $\rho_T([z,0])=\rho_t(z)$ for all $z\in\mathbb{R}^{1\times t}$.

Let the canonical injection be
\[
P_{t\to T}:=\begin{bmatrix}I_t\\ 0\end{bmatrix}\in\mathbb{R}^{T\times t},
\qquad
P_{t\to T}^\top=\begin{bmatrix}I_t & 0\end{bmatrix}.
\]
Assume the \tbd{sequence-space mixings are prefix-extension consistent}:
for each $m\in\{1,2\}$ and all $x$,
\begin{equation}
\label{eq:EC_offset_thm}
R^{(m)}_T(x) \;=\; P_{t\to T}\,R^{(m)}_t(x)\,P_{t\to T}^\top.
\end{equation}

Then the following hold.

\textbf{(i) Offset layout gives autoregressive truncation invariance.}
Let $X^{\leftarrow}:=P_{t\to T}^\top \widetilde X^{\leftarrow}=\widetilde X^{\leftarrow}[1\!:\!t,:]$ (the \tbd{most recent} $t$ tokens, in lag order).
Then for all $x$,
\[
o_T(x;\widetilde X^{\leftarrow}) \;=\; o_t(x;X^{\leftarrow}).
\]
Equivalently, extending the context by appending \tbd{older} tokens (far past) can be made to have \tbd{exactly zero} effect.

\textbf{(ii) The same canonical extension yields the \tbd{wrong} invariance under forward layout.}
Let $X^{\rightarrow}:=P_{t\to T}^\top \widetilde X^{\rightarrow}=\widetilde X^{\rightarrow}[1\!:\!t,:]$ (the \tbd{oldest} $t$ tokens, in forward order).
Then for all $x$,
\[
o_T(x;\widetilde X^{\rightarrow}) \;=\; o_t(x;X^{\rightarrow}),
\]
i.e., invariance to keeping the \tbd{oldest} $t$ tokens (dropping the newest), not invariance to keeping the \tbd{most recent} $t$ tokens.

Consequently, for any parameterization that is naturally \tbd{prefix-stationary across lengths} (i.e., implements
$R_T$ by embedding $R_t$ into the \tbd{top-left} block via~\eqref{eq:EC_offset_thm}, as in many length-agnostic constructions),
the offset/lag layout is the coordinate system in which this canonical extension-consistency matches autoregressive truncation semantics:
\[
\text{``add more far-past tokens''} \ \Longleftrightarrow\ \text{``append rows at the end'' (no reindexing of recent lags).}
\]
\end{theorem}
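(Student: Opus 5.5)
The plan is to reduce both parts to a single layout-agnostic identity, which is essentially the computation already carried out in the proof of Corollary~\ref{cor:truncation-invariance}. Fix any $\widetilde X\in\mathbb{R}^{T\times d}$ and set $X:=P_{t\to T}^\top\widetilde X$. I will show $o_T(x;\widetilde X)=o_t(x;X)$ for all $x$, using only \eqref{eq:EC_offset_thm} and padding invariance. The proof never uses which row ordering $\widetilde X$ carries, so (i) and (ii) follow by specialising $\widetilde X$ to $\widetilde X^{\leftarrow}$ or $\widetilde X^{\rightarrow}$. The final ``consequently'' remark is then interpretive: it only records what $P_{t\to T}^\top\widetilde X$ is in each layout.

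\textbf{Key identity.} First, transpose \eqref{eq:EC_offset_thm} to obtain
\[
R^{(m)\top}_T(x)\widetilde X=P_{t\to T}R^{(m)\top}_t(x)P_{t\to T}^\top\widetilde X=P_{t\to T}R^{(m)\top}_t(x)X .
\]
This says the mixed contexts $U_T,V_T$ equal $[U_t;0]$ and $[V_t;0]$. Second, substitute into \eqref{eq:block_len}:
\[
h_T(x;\widetilde X)=xL^{(1)}(x)U_T^\top=xL^{(1)}(x)U_t^\top P_{t\to T}^\top=[h_t(x;X),0].
\]
Third, by \eqref{eq:L2norm_def}, $\rho_T([z,0])=\sqrt{\|z\|_2^2+\varepsilon}=\rho_t(z)$, and ReLU fixes zeros, so $\sigma_T(h_T)=[\sigma_t(h_t),0]$. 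Fourth, multiply by $V_T=[V_t;0]$ and then by the length-independent $L^{(2)\top}(x)$; this gives $\sigma_t(h_t)V_tL^{(2)\top}(x)=o_t(x;X)$.

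\textbf{Specialisations.} For (i), $P_{t\to T}^\top\widetilde X^{\leftarrow}$ consists of the first $t$ rows of $X_{T:1}$, namely $x_T,\dots,x_{T-t+1}$. These are the most recent $t$ tokens in lag order, which is exactly $X^{\leftarrow}$. For (ii), $P_{t\to T}^\top\widetilde X^{\rightarrow}=[x_1;\dots;x_t]=X^{\rightarrow}$, the oldest $t$ tokens. In that layout the same identity therefore discards the newest tokens, which is the wrong invariance for autoregressive use.

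\textbf{Main obstacle.} The algebra is routine. The delicate point is bookkeeping about the argument $x$. In the autoregressive setting $x$ is itself the newest row, and in forward layout it is not a row of $X^{\rightarrow}$. I will therefore state explicitly that the theorem treats $x$ as a free argument, as the statement ``for all $x$'' does, so that $L^{(j)}(x)$ and the $S(x)$ inside $R^{(m)}(x)$ are identical on both sides. I will also check that $R^{(m)}_t$ is applied to the correctly sized $t\times d$ matrix in both layouts. Beyond that, the only care needed is confirming that the transpose of the extension identity is what acts on $\widetilde X$ from the left.
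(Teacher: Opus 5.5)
Your proposal is correct and takes the same route as the paper. The paper also proves the layout-agnostic identity $o_T(x;\widetilde X)=o_t(x;P_{t\to T}^\top\widetilde X)$: it transposes the extension identity, obtains the zero-padded mixed contexts $U_T=[U_t;0]$, $V_T=[V_t;0]$ and the pre-activation $h_T=[h_t,0]$, applies padding invariance of $\rho$, and then specialises to $\widetilde X^{\leftarrow}$ and $\widetilde X^{\rightarrow}$.
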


\begin{proof}
We prove the more general identity that under~\eqref{eq:EC_offset_thm},
\begin{equation}
\label{eq:general_identity}
o_T(x;\widetilde X)=o_t\!\big(x;P_{t\to T}^\top \widetilde X\big)\qquad \forall x,\ \forall \widetilde X\in\mathbb{R}^{T\times d},
\end{equation}
from which (i) and (ii) follow by choosing $\widetilde X=\widetilde X^{\leftarrow}$ or $\widetilde X=\widetilde X^{\rightarrow}$ and interpreting the truncation.

Fix $\widetilde X\in\mathbb{R}^{T\times d}$ and let $X:=P_{t\to T}^\top \widetilde X\in\mathbb{R}^{t\times d}$.
Under~\eqref{eq:EC_offset_thm}, for each $m\in\{1,2\}$,
\[
R_T^{(m)\top}(x)\,\widetilde X
=
\big(P_{t\to T}R_t^{(m)}(x)P_{t\to T}^\top\big)^\top \widetilde X
=
P_{t\to T}\,R_t^{(m)\top}(x)\,P_{t\to T}^\top \widetilde X
=
P_{t\to T}\,R_t^{(m)\top}(x)\,X.
\]
Hence the mixed contexts at length $T$ are zero-padded versions of those at length $t$:
\[
U_T:=R_T^{(1)\top}(x)\widetilde X=\begin{bmatrix}U_t\\ 0\end{bmatrix},
\qquad
V_T:=R_T^{(2)\top}(x)\widetilde X=\begin{bmatrix}V_t\\ 0\end{bmatrix},
\]
where $U_t:=R_t^{(1)\top}(x)X$ and $V_t:=R_t^{(2)\top}(x)X$.

Using $\,\widetilde X^\top R_T^{(1)}(x)=U_T^\top\,$ and $\,X^\top R_t^{(1)}(x)=U_t^\top\,$, the pre-activations satisfy
\[
h_T(x;\widetilde X)
=
x\,L^{(1)}(x)\,\widetilde X^\top R_T^{(1)}(x)
=
x\,L^{(1)}(x)\,U_T^\top
=
\big[x\,L^{(1)}(x)\,U_t^\top,\ 0\big]
=
\big[h_t(x;X),\ 0\big].
\]
By~\eqref{eq:L2norm_def}, $\sigma_\ell(z)=\rho_\ell(z)^{-1}\mathrm{ReLU}(z)$ and
$\rho_T([h_t,0])=\rho_t(h_t)$, hence
\[
\sigma_T\!\big(h_T(x;\widetilde X)\big)
=
\frac{1}{\rho_T([h_t,0])}\,[\mathrm{ReLU}(h_t),0]
=
\big[\sigma_t(h_t(x;X)),\ 0\big].
\]
Therefore,
\[
\sigma_T(h_T)\,V_T
=
[\sigma_t(h_t),0]\begin{bmatrix}V_t\\0\end{bmatrix}
=
\sigma_t(h_t)\,V_t.
\]
Multiplying the (shared) feature-space readout $L^{(2)\top}(x)$ yields
\[
o_T(x;\widetilde X)
=
\sigma_T(h_T)\,V_T\,L^{(2)\top}(x)
=
\sigma_t(h_t)\,V_t\,L^{(2)\top}(x)
=
o_t(x;X),
\]
which proves~\eqref{eq:general_identity}. Statements (i) and (ii) follow immediately by substituting
$X=P_{t\to T}^\top \widetilde X^{\leftarrow}$ and $X=P_{t\to T}^\top \widetilde X^{\rightarrow}$, respectively.
\end{proof}

\begin{remark}[Why this is the right notion of ``better'']
The theorem does \tbd{not} claim that forward order is intrinsically incapable of representing the same functions.
Rather, it isolates a precise coordinate mismatch under the canonical (top-left) extension~\eqref{eq:EC_offset_thm}:
the identity $o_T(x;\widetilde X)=o_t(x;P_{t\to T}^\top\widetilde X)$ always keeps the \tbd{prefix} of the row ordering.
Under lag order, this prefix is the \tbd{most recent} tokens (autoregressive truncation window); under forward order, it is the
\tbd{oldest} tokens.
Forward order can recover the same autoregressive semantics by using a \tbd{suffix} embedding
$Q_{t\to T}=\begin{bmatrix}0\\ I_t\end{bmatrix}$ and the corresponding extension rule
$R_T^{(m)}(x)=Q_{t\to T}R_t^{(m)}(x)Q_{t\to T}^\top$,
but this requires an explicit choice that is \tbd{not} the canonical top-left embedding.
\end{remark}

\subsection{Parameter Budget}

\subsubsection{Rank allocation in a residual two-layer MLP: compress $W_1$ or $W_2$?}
\label{subsec:rank_allocation_mlp}

We consider the residual MLP block (row-vector convention)
\begin{equation}
\label{eq:res_mlp}
f(x) \;=\; x \;+\; \sigma(xW_1)\,W_2,
\end{equation}
where $x\in\mathbb{R}^{1\times d}$, $W_1\in\mathbb{R}^{d\times h}$, $W_2\in\mathbb{R}^{h\times d}$, and
$\sigma:\mathbb{R}^{1\times h}\to\mathbb{R}^{1\times h}$ is an arbitrary (possibly nonlinear) map
(e.g., $\sigma=\mathrm{ReLU}\circ\mathrm{Norm}$).
We identify $\mathbb{R}^{1\times d}$ with $\mathbb{R}^d$ equipped with the standard Euclidean inner product.

\begin{theorem}[Output-subspace invariance under low-rank $W_2$]
\label{thm:lowrank_W2_invariant}
Assume $\mathrm{rank}(W_2)\le r$ and define $U \coloneqq \mathrm{Row}(W_2)\subseteq \mathbb{R}^d$.
Then for all $x$,
\begin{equation}
\label{eq:W2_invariant}
f(x)-x \in U.
\end{equation}
Equivalently, letting $\Pi_{U^\perp}$ be the orthogonal projection onto $U^\perp$,
\begin{equation}
\label{eq:W2_proj}
\Pi_{U^\perp} f(x) = \Pi_{U^\perp} x \qquad \forall x.
\end{equation}
In particular, the MLP branch leaves invariant a subspace of dimension
$\dim(U^\perp)=d-\mathrm{rank}(W_2)\ge d-r$.
\end{theorem}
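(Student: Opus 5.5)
The argument is a direct row-space computation, so I would organize it in three short steps.

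First, I would rewrite the residual branch. Put $a(x):=\sigma(xW_1)\in\mathbb{R}^{1\times h}$. Then $f(x)-x=a(x)W_2$. The nonlinearity $\sigma$ and the first layer $W_1$ are completely arbitrary here, so nothing about them will be used. The point is only that the branch output is a row vector of the form $aW_2$.

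Second, I would apply the elementary fact that for any $a\in\mathbb{R}^{1\times h}$,
\[
aW_2=\sum_{k=1}^h a_k\,W_2[k,:],
\]
which is a linear combination of the rows of $W_2$. Hence $aW_2\in\mathrm{Row}(W_2)=U$, and this gives~\eqref{eq:W2_invariant} for every $x$. The dimension bound follows from $\dim U=\mathrm{rank}(W_2)\le r$, so that
\[
\dim U^\perp=d-\mathrm{rank}(W_2)\ge d-r .
\]

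Third, I would obtain the projected form~\eqref{eq:W2_proj}. Since $\Pi_{U^\perp}$ is linear and vanishes on $U$,
\[
\Pi_{U^\perp}f(x)=\Pi_{U^\perp}x+\Pi_{U^\perp}\big(f(x)-x\big)=\Pi_{U^\perp}x .
\]
The converse implication is equally immediate: $\Pi_{U^\perp}(f(x)-x)=0$ is equivalent to $f(x)-x\in (U^\perp)^\perp=U$ in finite dimensions. This yields the claimed equivalence, and it also shows that the MLP branch acts as the identity on the $U^\perp$ component, i.e., it leaves that invariant subspace untouched.

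There is no real obstacle. The only care needed is the row-vector convention: the relevant subspace is the row space of $W_2$, not its column space. The identification of $\mathbb{R}^{1\times d}$ with $\mathbb{R}^d$ stated just before the theorem makes the orthogonal projection well defined.
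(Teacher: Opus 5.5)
Your proposal is correct and follows the paper's proof essentially line by line: you write $f(x)-x=\sigma(xW_1)W_2$, note that any $uW_2$ is a linear combination of the rows of $W_2$, apply $\Pi_{U^\perp}$, and read off the dimension from $\dim U=\mathrm{rank}(W_2)\le r$. You also check the converse direction of the stated equivalence ($\Pi_{U^\perp}(f(x)-x)=0\Rightarrow f(x)-x\in U$), which the paper leaves implicit.
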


\begin{proof}
Let $\Delta(x)\coloneqq f(x)-x=\sigma(xW_1)W_2$.
For any $u\in\mathbb{R}^{1\times h}$, the product $uW_2$ is a linear combination of the rows of $W_2$,
hence $uW_2\in \mathrm{Row}(W_2)=U$. Taking $u=\sigma(xW_1)$ yields $\Delta(x)\in U$, proving \eqref{eq:W2_invariant}.
Applying $\Pi_{U^\perp}$ gives \eqref{eq:W2_proj}. The dimension claim follows from
$\dim(U)=\mathrm{rank}(W_2)\le r$.
\end{proof}

\begin{corollary}[Invariant linear functionals]
\label{cor:W2_invariant_functionals}
Under the assumptions of Theorem~\ref{thm:lowrank_W2_invariant}, for any $v\in U^\perp$ and any $x$,
$\langle f(x),v\rangle = \langle x,v\rangle$.
\end{corollary}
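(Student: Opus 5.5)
The corollary follows directly from Theorem~\ref{thm:lowrank_W2_invariant}, and I will argue it in one short chain. Fix $v\in U^\perp$ and any $x$. By Theorem~\ref{thm:lowrank_W2_invariant}, $\Delta(x):=f(x)-x=\sigma(xW_1)W_2$ lies in $U=\mathrm{Row}(W_2)$. Since $v\in U^\perp$, this gives $\langle \Delta(x),v\rangle=0$.

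Next I expand $\langle f(x),v\rangle=\langle x+\Delta(x),v\rangle$ using bilinearity of the standard inner product on $\mathbb{R}^{1\times d}\cong\mathbb{R}^d$. This yields $\langle x,v\rangle+\langle\Delta(x),v\rangle=\langle x,v\rangle$, which is the claim.

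As an alternative route, the identity $\Pi_{U^\perp}f(x)=\Pi_{U^\perp}x$ from \eqref{eq:W2_proj} gives the same result. Because $v\in U^\perp$, self-adjointness of the orthogonal projection gives $\langle y,v\rangle=\langle y,\Pi_{U^\perp}v\rangle=\langle \Pi_{U^\perp}y,v\rangle$ for every $y$. Applying this to $y=f(x)$ and $y=x$ gives equality.

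No step is a real obstacle. The only point needing care is the identification of row vectors with $\mathbb{R}^d$, so that $\langle a,b\rangle=ab^\top$ and $\mathrm{Row}(W_2)$ is understood as a subspace of $\mathbb{R}^d$; the paper already fixes this convention before Theorem~\ref{thm:lowrank_W2_invariant}. No assumption on $\sigma$ is needed, since the argument uses only the final multiplication by $W_2$.
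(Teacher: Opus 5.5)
Your proposal is correct and is essentially the paper's own proof: the paper also uses $f(x)-x\in U$ from Theorem~\ref{thm:lowrank_W2_invariant} together with $v\in U^\perp$ to conclude $\langle f(x)-x,v\rangle=0$. Your alternative route through self-adjointness of $\Pi_{U^\perp}$ is also valid, but it is not needed.
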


\begin{proof}
Since $f(x)-x\in U$ and $v\in U^\perp$, we have $\langle f(x)-x,v\rangle=0$.
\end{proof}

\begin{theorem}[Quotient structure under low-rank $W_1$]
\label{thm:lowrank_W1_quotient}
Assume $\mathrm{rank}(W_1)\le r$, i.e., $W_1$ admits a factorization $W_1=LR$ with
$L\in\mathbb{R}^{d\times r}$ and $R\in\mathbb{R}^{r\times h}$.
Then there exists a function $\psi:\mathbb{R}^{1\times r}\to\mathbb{R}^{1\times d}$ such that
\begin{equation}
\label{eq:W1_factor}
f(x) \;=\; x \;+\; \psi(xL) \qquad \forall x.
\end{equation}
Consequently, for any $\delta\in\mathbb{R}^{1\times d}$ satisfying $\delta W_1=0$
(equivalently, $\delta\in\mathrm{Null}(W_1^\top)$),
\begin{equation}
\label{eq:W1_invariance}
f(x+\delta) \;=\; f(x) + \delta \qquad \forall x.
\end{equation}
Moreover, $\mathrm{Null}(W_1^\top)$ has dimension $d-\mathrm{rank}(W_1)\ge d-r$.
\end{theorem}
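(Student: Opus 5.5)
The whole argument rests on one observation: the branch input enters only through the product $xW_1$. I will substitute the factorization $W_1=LR$ into the residual block \eqref{eq:res_mlp} and define $\psi$ explicitly by
\[
\psi(z):=\sigma(zR)\,W_2,\qquad z\in\mathbb{R}^{1\times r}.
\]
Then $\sigma(xW_1)W_2=\sigma((xL)R)W_2=\psi(xL)$, which gives \eqref{eq:W1_factor}. This step needs no regularity of $\sigma$, since $\sigma$ is an arbitrary map on $\mathbb{R}^{1\times h}$. The existence of the factorization is itself standard: any matrix of rank $k\le r$ factors through $\mathbb{R}^{k}$, for instance via a rank factorization or a thin SVD. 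If $k<r$, I pad $L$ with zero columns and $R$ with zero rows to reach inner dimension exactly $r$.

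For the invariance \eqref{eq:W1_invariance}, I will not route through $L$. Instead I will argue directly from linearity. If $\delta W_1=0$, then $(x+\delta)W_1=xW_1$, so the branch term $\sigma((x+\delta)W_1)W_2$ equals $\sigma(xW_1)W_2$. Therefore
\[
f(x+\delta)=x+\delta+\sigma(xW_1)W_2=f(x)+\delta .
\]
Here I should be careful with notation. In the row-vector convention, the condition $\delta W_1=0$ means $W_1^\top\delta^\top=0$, i.e.\ $\delta^\top$ lies in the null space of $W_1^\top$, which matches the statement's $\mathrm{Null}(W_1^\top)$. This set is the left null space of $W_1$.

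The dimension claim follows from rank–nullity applied to $W_1^\top\in\mathbb{R}^{h\times d}$, viewed as a map on $\mathbb{R}^d$:
\[
\dim\mathrm{Null}(W_1^\top)=d-\mathrm{rank}(W_1^\top)=d-\mathrm{rank}(W_1)\ge d-r .
\]

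None of these steps is a genuine obstacle. The only point needing care is bookkeeping: matching the transpose and row-vector conventions so that $\{\delta:\delta W_1=0\}$ is identified with the stated null space, and handling the case $\mathrm{rank}(W_1)<r$ in the factorization.
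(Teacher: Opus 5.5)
Your proposal is correct and follows the paper's proof essentially step for step: the same explicit choice $\psi(p)=\sigma(pR)W_2$, the same direct invariance argument from $(x+\delta)W_1=xW_1$, and rank--nullity for the dimension count. Your extra bookkeeping, namely zero-padding the factorization when $\mathrm{rank}(W_1)<r$ and identifying $\{\delta:\delta W_1=0\}$ with $\mathrm{Null}(W_1^\top)$ in the row-vector convention, fills in details the paper leaves implicit.
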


\begin{proof}
Substitute $W_1=LR$ into \eqref{eq:res_mlp}:
$f(x)=x+\sigma((xL)R)W_2$.
Define $\psi(p)\coloneqq \sigma(pR)W_2$, giving \eqref{eq:W1_factor}.
If $\delta W_1=0$, then $(x+\delta)W_1=xW_1$ and thus $\sigma((x+\delta)W_1)=\sigma(xW_1)$, i.e.,
$f(x+\delta)=x+\delta+\sigma(xW_1)W_2=f(x)+\delta$.
The dimension statement is standard: $\dim\mathrm{Null}(W_1^\top)=d-\mathrm{rank}(W_1)$.
\end{proof}

\paragraph{Discussion.}
Theorems~\ref{thm:lowrank_W2_invariant} and \ref{thm:lowrank_W1_quotient} reveal a structural asymmetry:
low-rank $W_2$ enforces a \tbd{hard} restriction on the \tbd{output update} $\Delta(x)$, confining it to a fixed subspace $U$,
whereas low-rank $W_1$ enforces a restriction on the \tbd{conditioning variables} of the update, since $\Delta(x)$ depends only on $xL$.
In residual blocks, the identity shortcut transmits all components of $x$ unchanged, so restricting the conditioning (via $W_1$)
is often empirically less damaging than restricting the action subspace (via $W_2$), although the relative impact can still be task-dependent.

\subsubsection{Parameter Budget: Paying for Temporal Mixing by Shrinking the First (QK) Layer}
\label{subsec:param_budget_temporal}

HyperMLP introduces additional sequence-space capacity through the learned mixing operators
$R^{(1)}(x)$ and $R^{(2)}(x)$, implemented via a diagonal-plus-low-rank parameterization.
Conceptually, we view this added capacity as a benefit rather than a drawback.
In an ideal setting, all mixing operators would be parameterized directly with respect to the token dimension,
without aggressive rank compression.
From this viewpoint, allocating more parameters to attention or HyperMLP blocks than to static MLP blocks is a natural and
desirable choice, since sequence modeling inherently requires richer, context-conditioned structure.

However, for the purpose of controlled and fair empirical comparisons, we adopt a fixed parameter budget in our experiments.
Under this constraint, the practical question becomes:
which part of the dynamic two-layer block should be compressed to make room for learnable temporal mixing?

The dynamic-MLP view makes the answer immediate.
Per head, the block is a residual depth-two map whose \tbd{first} layer instantiates addressing/selection
(the ``QK'' path that produces the length-$t$ hidden vector), and whose \tbd{second} layer instantiates readout/action
(the ``VO'' path that maps the selected hidden coordinates back into $\mathbb{R}^d$).
The rank-allocation results above formalize a strong asymmetry:
in a residual two-layer block $f(x)=x+\sigma(xW_1)W_2$, a low-rank $W_2$ enforces an \tbd{output-subspace invariance}
(Theorem~\ref{thm:lowrank_W2_invariant}), i.e., the update $f(x)-x$ is confined to a fixed low-dimensional subspace.
In contrast, a low-rank $W_1$ induces a \tbd{quotient} structure (Theorem~\ref{thm:lowrank_W1_quotient}): the update depends only on a
low-dimensional projection of the input, while the residual connection preserves all other directions unchanged.
Translated to our setting, shrinking the second-layer rank $d_{vo}$ would directly restrict the space of possible token updates,
whereas shrinking the first-layer rank $d_{qk}$ primarily limits the conditioning variables used for routing over the
(context-instantiated) pool.
Therefore, to ``pay'' for temporal mixing while preserving action capacity, we reduce the \tbd{first-layer/QK} rank.

Concretely, throughout our experiments we use the following allocation:
\begin{equation}
d_{qk} \;=\; \frac{d}{4\,n_{\mathrm{head}}},\qquad
d_{vo} \;=\; \frac{d}{n_{\mathrm{head}}},\qquad
r_s \;=\; 16,
\label{eq:param_budget_choice}
\end{equation}
so that the first-layer core is $4\times$ narrower than the second-layer core (per head), freeing capacity for the two
sequence-mixing operators.
Finally, since each head incurs an additional $O(t\,r_s)$ sequence-mixing cost (Appendix~\ref{sec:app:cost}),
we fix $n_{\mathrm{head}}=2$ for HyperMLP/HyperGLU, i.e., we use two parallel dynamic-MLP heads and aggregate them in the
standard multi-head manner.

\subsection{Why HyperGLU: Decoupling Routing (Selection) from Slot Strength (Weighting)}
\label{subsec:hyperglu}

\paragraph{Motivation from the ReLU gating identity.}
A depth-two ReLU MLP reuses the same first-layer pre-activation to serve two roles.
In our row-vector convention, for $z\in\mathbb{R}^{1\times t}$ we can write
$\mathrm{ReLU}(z)=z\,D(z)$ where $D(z)=\mathrm{diag}(\mathbf{1}\{z_1>0\},\ldots,\mathbf{1}\{z_t>0\})$.
Thus the sign pattern of $z$ selects which hidden coordinates are active, while the values of $z$ on the active coordinates
also determine their coefficients in the readout.
This coupling is fundamental to piecewise-linear networks, but in dynamic, context-instantiated blocks it can be useful to
separate ``which slots are used'' from ``how strongly they are used''.

\paragraph{Coupling in HyperMLP under the measure view.}
Under the standing assumption that $\mathrm{L2Norm}_t$ is a positive scalar rescaling across the length-$t$ vector
(Assumption~\ref{ass:l2norm-scalar-positive}), Theorem~\ref{thm:three-stage-memory-multistage-pruning} gives the
HyperMLP readout
\begin{equation}
o_t
=
\frac{1}{\rho_t(h_t)}
\sum_{i=1}^{t}
\alpha_\theta(x_t;u_i)_+\,\beta_\theta(x_t;v_i),
\qquad
\alpha_\theta(x_t;u_i)=(h_t)_i,
\label{eq:hypermlp_coupling}
\end{equation}
where $(u_i,v_i)$ are the slots instantiated from the mixed contexts
$U=R^{(1)\top}_\theta(x_t)X_{t:1}$ and $V=R^{(2)\top}_\theta(x_t)X_{t:1}$, and
$\beta_\theta(x_t;v_i)=v_iL^{(2)\top}_\theta(x_t)$.
Equation~\eqref{eq:hypermlp_coupling} makes the coupling explicit:
the activated set is determined by the sign pattern $\mathbf{1}\{\alpha_\theta(x_t;u_i)>0\}$,
but the same scalar $\alpha_\theta(x_t;u_i)$ also sets the magnitude through $\alpha_+$.

\paragraph{HyperGLU: factorizing selection and strength modulation.}
HyperGLU introduces two first-layer score vectors,
\begin{equation}
h_t^{\mathrm{gate}} := x_t\,W_{\mathrm{MLP,gate}}^{(1)}(X_{t:1})\in\mathbb{R}^{1\times t},
\qquad
h_t^{\mathrm{scale}} := x_t\,W_{\mathrm{MLP,scale}}^{(1)}(X_{t:1})\in\mathbb{R}^{1\times t},
\label{eq:hyperglu_two_scores_subsec}
\end{equation}
and defines the hidden activation by
\begin{equation}
a_t
:=
\mathrm{Softplus}\!\big(h_t^{\mathrm{scale}}\big)\ \odot\
\mathrm{ReLU}\!\big(\mathrm{L2Norm}_t(h_t^{\mathrm{gate}})\big),
\qquad
o_t := a_t\,W_{\mathrm{MLP}}^{(2)}(X_{t:1}).
\label{eq:hyperglu_activation_subsec}
\end{equation}
Because $\mathrm{L2Norm}_t$ is a positive scalar rescaling, it does not change signs
(Lemma~\ref{lem:gate-invariance-reachable}).
Therefore the activated set is determined only by the sign pattern of $h_t^{\mathrm{gate}}$, while
$\mathrm{Softplus}(h_t^{\mathrm{scale}})>0$ provides an additional, independent modulation of the strengths of the active slots.

Using the same sign-invariance assumption, \eqref{eq:hyperglu_activation_subsec} expands to
\begin{equation}
o_t
=
\frac{1}{\rho_t(h_t^{\mathrm{gate}})}
\sum_{i=1}^{t}
\mathrm{Softplus}\!\big(h_{t,i}^{\mathrm{scale}}\big)\,
\big(h_{t,i}^{\mathrm{gate}}\big)_+\,
\beta_\theta(x_t;v_i),
\label{eq:hyperglu_sum_subsec}
\end{equation}
which shows that HyperGLU keeps the same slot-selection mechanism (through $(h_{t,i}^{\mathrm{gate}})_+$)
and adds a separate positive multiplier for each selected slot.

\paragraph{Implementation via splitting the first-layer core rank.}
In our implementation, $h_t^{\mathrm{gate}}$ and $h_t^{\mathrm{scale}}$ are produced by splitting the HyperMLP first-layer
feature core along the rank dimension (two $d_{qk}/2$ cores, equivalently doubling the first-layer head count),
while using the same second-layer instantiation $W_{\mathrm{MLP}}^{(2)}(X_{t:1})$.
This preserves the factorized dynamic-weight structure and adds a targeted increase in flexibility where the routing and
coefficient formation occur.

\paragraph{Connection to stability of the activated set.}
Since the activated set depends only on the sign pattern of $h_t^{\mathrm{gate}}$, any margin condition of the form
$\min_i |(h_t^{\mathrm{gate}})_i|\ge \delta$ implies stability of the selected slots under sufficiently small perturbations
(Lemma~\ref{lem:ntk-gate-stability-margin}).
In such regimes, the scale branch can still adjust slot strengths without requiring changes to the selection boundaries,
which provides a concrete mechanism by which HyperGLU can improve optimization and performance.

\paragraph{Takeaway.}
HyperGLU addresses a specific coupling in ReLU-style dynamic blocks: the same quantity both selects slots and sets their
coefficients.
By introducing a separate scale branch, HyperGLU keeps ReLU-based selection while enabling independent, smooth control of the
strength of selected slots, mirroring the practical benefits of GLU-style modulation in standard Transformer MLPs.

\section{Computational Overhead of HyperMLP vs.\ (ReLU/Softmax) Attention}
\label{sec:app:cost}

This section summarizes the \tbd{idealized} computational overhead introduced by HyperMLP/HyperGLU compared to
standard quadratic attention heads (either softmax-normalized or ReLU-normalized), under the intended regime
$r_s\ll d_{qk},d_{vo}$ and assuming we never materialize dense $t\times t$ sequence-mixing matrices.

\paragraph{Setup (one head).}
At step $t$, let the lag-ordered prefix be $X_{t:1}=[x_t;\dots;x_1]\in\mathbb{R}^{t\times d}$.
Let $d_{qk}$ and $d_{vo}$ denote the per-head feature ranks (typically $d_{qk}\approx d_{vo}\approx d/n_{\text{head}}$),
and let $r_s$ be the rank of sequence-space mixing.
Write
\begin{equation}
z_t \;:=\; x_t\,L^{(1)}(x_t)\,X_{t:1}^\top \in \mathbb{R}^{1\times t},
\label{eq:app_cost_scores}
\end{equation}
so that vanilla attention corresponds to reading out from $z_t$ (with $\sigma=\mathrm{softmax}$ or ReLU-based normalization)
without explicit sequence-space mixing.

\subsection{Baseline: quadratic attention (softmax or ReLU-normalized)}

A standard quadratic attention head (softmax or ReLU-normalized) can be written in the dynamic-MLP form
\begin{equation}
o_t^{\mathrm{attn}}
\;=\;
\sigma(z_t)\;X_{t:1}\,L^{(2)\top}(x_t),
\qquad
\sigma=\mathrm{softmax}\ \text{or}\ \mathrm{ReLU}(\mathrm{L2Norm}_t(\cdot)).
\label{eq:app_cost_attn}
\end{equation}
The dominant work is forming the length-$t$ score vector $z_t$ and aggregating values:
\begin{equation}
\mathrm{Time}_{\mathrm{attn}}(t)
\;=\;
O(t\,d_{qk}) \;+\; O(t\,d_{vo}) \;+\; O(t),
\label{eq:app_cost_attn_time_step}
\end{equation}
where the $O(t)$ term accounts for softmax (exp + normalization) or for ReLU($\mathrm{L2Norm}$).
Over a length-$T$ sequence (teacher forcing), the arithmetic cost is
\begin{equation}
\mathrm{Time}^{\mathrm{attn}}_{\mathrm{train}}(T)
\;=\;
O(T^2\,d_{qk}) \;+\; O(T^2\,d_{vo}),
\label{eq:app_cost_attn_time_train}
\end{equation}
and modern implementations (e.g.\ FlashAttention-style kernels) can avoid storing the full $T\times T$ score matrix.

\subsection{HyperMLP: additional sequence-space mixing}

HyperMLP inserts \tbd{two} sequence-space operators (one before and one after $\sigma$):
\begin{equation}
h_t \;:=\; z_t\,R^{(1)}_t(x_t),\qquad
w_t \;:=\; \sigma(h_t)\,R^{(2)\top}_t(x_t),\qquad
o_t \;:=\; w_t\,X_{t:1}\,L^{(2)\top}(x_t).
\label{eq:app_cost_hyper_chain}
\end{equation}
Each $R^{(j)}_t(x_t)\in\mathbb{R}^{t\times t}$ uses the diagonal-plus-low-rank (DPLR) parameterization
\begin{equation}
R^{(j)}_t(x_t)
\;=\;
D^{(j)}_{1:t}
\;+\;
A^{(j)}_{1:t}\,\mathrm{Diag}(s^{(j)}_t)\,B^{(j)\top}_{1:t},
\qquad
s^{(j)}_t:=\phi(x_tW_S^{(j)})\in\mathbb{R}^{r_s},
\label{eq:app_cost_dplr_def}
\end{equation}
where $D^{(j)}_{1:t}$ is diagonal and $A^{(j)}_{1:t},B^{(j)}_{1:t}\in\mathbb{R}^{t\times r_s}$ are the lag-prefix
slices of learnable parameters.

\subsubsection{A basic DPLR multiplication identity}

\begin{lemma}[Vector--DPLR multiplication]
\label{lem:app_cost_vec_dplr}
Let $R=D+A\,\mathrm{Diag}(s)\,B^\top\in\mathbb{R}^{t\times t}$ with $D$ diagonal, $A,B\in\mathbb{R}^{t\times r_s}$ and
$s\in\mathbb{R}^{r_s}$.
Then for any $y\in\mathbb{R}^{1\times t}$,
\begin{equation}
yR
=
yD
+
\big((yA)\odot s^\top\big)\,B^\top,
\qquad
yR^\top
=
yD
+
\big((yB)\odot s^\top\big)\,A^\top.
\label{eq:app_cost_vec_dplr}
\end{equation}
Therefore $yR$ (or $yR^\top$) can be computed in $O(tr_s)$ time using $O(r_s)$ additional temporary memory (beyond storing $y$).
\end{lemma}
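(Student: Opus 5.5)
The identity is a direct algebraic expansion of the DPLR product. The only real content is keeping the row-vector conventions straight and then counting operations.

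For the first identity, distribute:
\[
yR = yD + y\,A\,\mathrm{Diag}(s)\,B^\top .
\]
The row vector $yA\in\mathbb{R}^{1\times r_s}$ multiplied on the right by $\mathrm{Diag}(s)$ scales its $k$-th entry by $s_k$. So $yA\,\mathrm{Diag}(s) = (yA)\odot s^\top$, where $s^\top$ denotes $s$ written as a $1\times r_s$ row. Substituting gives $yR = yD + ((yA)\odot s^\top)B^\top$.

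For the transposed identity, write
\[
R^\top = D^\top + (A\,\mathrm{Diag}(s)\,B^\top)^\top = D + B\,\mathrm{Diag}(s)\,A^\top,
\]
using that $D$ and $\mathrm{Diag}(s)$ are diagonal and hence symmetric. The same distribution argument then yields $yR^\top = yD + ((yB)\odot s^\top)A^\top$.

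For the cost claim, evaluate the right-hand side from left to right without ever forming the $t\times t$ matrix $R$. The steps are:
\begin{itemize}
\item $yD$ is an entrywise product of $y$ with the diagonal of $D$: $O(t)$ time.
\item $yA$ takes $r_s$ inner products of length $t$: $O(tr_s)$ time, producing an $r_s$-vector.
\item The Hadamard product with $s^\top$ costs $O(r_s)$.
\item Multiplying the resulting $1\times r_s$ vector by $B^\top$ costs $O(tr_s)$.
\item The final addition costs $O(t)$.
\end{itemize}
The total is $O(tr_s)$.

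For memory, the only intermediate quantities beyond $y$ and the output are the $r_s$-vectors $yA$ and $(yA)\odot s^\top$, which can be computed in place. That gives $O(r_s)$ extra temporary memory. The output vector, like $y$, is length $t$ and is not counted as temporary, per the statement's "beyond storing $y$". The transposed case is identical with $A$ and $B$ swapped.

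The only step needing care is the convention $yA\,\mathrm{Diag}(s) = (yA)\odot s^\top$ and the symmetric-diagonal argument behind the transpose formula. No step presents a genuine obstacle.
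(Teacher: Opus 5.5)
Your proposal is correct and follows the same route as the paper, which also uses associativity to write $yA\,\mathrm{Diag}(s)B^\top=((yA)\odot s^\top)B^\top$ and counts cost via the two products $yA$ and $(\cdot)B^\top$. You add two details the paper leaves implicit: the symmetric-diagonal argument behind the $R^\top$ formula, and the explicit $O(r_s)$ memory accounting.
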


\begin{proof}
Associativity gives $yASB^\top=(yA)\mathrm{Diag}(s)B^\top=((yA)\odot s^\top)B^\top$, and similarly for $R^\top$.
The cost is dominated by the two matrix--vector products $yA$ and $(\cdot)B^\top$.
\end{proof}

\subsection{Autoregressive inference overhead}

\begin{proposition}[DPLR temporal mixing keeps attention-style scaling complexity]
\label{prop:core_cost_short}
Per head, HyperMLP adds two sequence-slot mixes
$y\mapsto yR^{(1)}_t(x_t)$ and $y\mapsto yR^{(2)\top}_t(x_t)$
with $R^{(j)}_t(x_t)=D^{(j)}+A^{(j)}S^{(j)}(x_t)B^{(j)\top}$ (rank $r_s$).
Without materializing $t\times t$ matrices:
\begin{align}
\mathrm{Time}_{\mathrm{HyperMLP}}(t)
=
O\!\big(t(d_{qk}+d_{vo})\big)+O(t r_s),
\\
\mathrm{ExtraState}=O(r_s)\ \ \text{(per head, per step)}.
\label{eq:core_cost_short_infer}
\end{align}
Over length-$T$ teacher forcing:
\begin{equation}
\mathrm{Time}^{\mathrm{HyperMLP}}_{\mathrm{train}}(T)
=
O\!\big(T^2(d_{qk}+d_{vo})\big)+O(T^2 r_s),
\label{eq:core_cost_short_train}
\end{equation}
so under $r_s\ll d_{qk},d_{vo}$ the overhead is lower-order and the asymptotic scaling matches quadratic attention.
\end{proposition}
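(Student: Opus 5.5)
The plan is to walk through the per-step chain \eqref{eq:app_cost_hyper_chain} and bound each stage, using Lemma~\ref{lem:app_cost_vec_dplr} for the two mixing steps. Every other stage is already present in the attention baseline \eqref{eq:app_cost_attn_time_step}. At step $t$, I first form the query-side row $q:=x_tL^{(1)}(x_t)$ in its factored form $x_tW_q M^{(1)}(x_t)$, followed by the product with $W_k^\top X_{t:1}^\top$. The keys $X_{t:1}W_k$ are cached, exactly as in attention, so $z_t$ costs $O(t\,d_{qk})$. The input-conditioned quantities $M^{(j)}(x_t)$ and $s^{(j)}_t=\phi(x_tW_S^{(j)})$ cost $O(d\,d_{qk}+d\,d_{vo}+d\,r_s)$, which does not depend on $t$. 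This is charged to the per-token projection cost that attention also pays.

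Next come the two mixes. Applying Lemma~\ref{lem:app_cost_vec_dplr} with $y=z_t$ and $R=R^{(1)}_t(x_t)$ gives $h_t=z_tD^{(1)}+((z_tA^{(1)})\odot s^{(1)\top}_t)B^{(1)\top}$. This costs $O(t)$ for the diagonal part and $O(tr_s)$ for the two thin products, and it needs only an $r_s$-vector of temporaries. Computing $\sigma(h_t)$ (ReLU after a scalar L2 normalization) costs $O(t)$. Applying the lemma again in its transposed form with $y=\sigma(h_t)$ gives $w_t$ in $O(tr_s)$. The readout $w_t X_{t:1}L^{(2)\top}(x_t)$ then uses the cached values $X_{t:1}W_v$ and costs $O(t\,d_{vo})$ plus $O(d\,d_{vo})$ for the output projection. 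Adding these pieces gives \eqref{eq:core_cost_short_infer}. At no point is a $t\times t$ matrix formed: $A^{(j)}_{1:t},B^{(j)}_{1:t}$ are slices of stored parameters, and $D^{(j)}_{1:t}$ is a vector.

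The main obstacle is the extra-state claim. The $KV$ cache grows as $O(t(d_{qk}+d_{vo}))$ just as in attention, and the DPLR factors are parameters, not state. So the only new per-step buffers are $s^{(j)}_t$, $z_tA^{(1)}$ and $\sigma(h_t)B^{(2)}$, each of size $r_s$, which gives $O(r_s)$. The delicate point is that one cannot reuse $z_{t-1}A^{(1)}$ recursively, because the lag layout shifts which rows of $A^{(1)}$ align with which tokens. I will therefore recompute these $r_s$-vectors at each step rather than carry them across steps. The $O(tr_s)$ time term already pays for that recomputation.

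For teacher forcing, I sum the per-step bound over $t=1,\dots,T$. This gives $\sum_t O(t(d_{qk}+d_{vo}+r_s))=O(T^2(d_{qk}+d_{vo}))+O(T^2r_s)$, plus $O(T d(d_{qk}+d_{vo}+r_s))$ for the projections, which is lower order in $T$. That is \eqref{eq:core_cost_short_train}. Finally, when $r_s\ll d_{qk},d_{vo}$, the $O(T^2r_s)$ term is dominated by $O(T^2(d_{qk}+d_{vo}))$, so the asymptotic scaling matches \eqref{eq:app_cost_attn_time_train}.
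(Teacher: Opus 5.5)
Your proposal is correct and follows essentially the same route as the paper. Both arguments apply the vector--DPLR identity (Lemma~\ref{lem:app_cost_vec_dplr}) twice per step on top of the attention-style score formation and cached-value readout, count only $r_s$-sized vectors (the gates $s^{(j)}_t$ and the lemma's temporaries) as extra state, and sum $O(t(d_{qk}+d_{vo}+r_s))$ over $t=1,\dots,T$ to obtain the training bound.
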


\begin{proof}[Proof sketch]
Apply the vector-DPLR identity (Lemma~\ref{lem:app_cost_vec_dplr}) twice per step to obtain $O(t r_s)$ time and
$O(r_s)$ temporary memory, and sum $\sum_{t=1}^T O(t r_s)=O(T^2 r_s)$.
See Propositions~\ref{prop:app_cost_infer}--\ref{prop:app_cost_train} and Table~\ref{tab:app_cost_summary}.
As an implementation remark, the low-rank sequence mixing induced by $R$ admits an alternative convolution/FFT-based realization
in $O(T\log T)$ time for long kernels (Remark~\ref{rem:app_cost_fft}), without changing the asymptotic conclusions here.
\end{proof}

\begin{proposition}[Inference time and overhead (one head)]
\label{prop:app_cost_infer}
Assume a standard KV cache stores the lag-ordered projected keys/values (or equivalent sufficient statistics) so that forming
$z_t$ in~\eqref{eq:app_cost_scores} costs $O(t\,d_{qk})$ and the value aggregation in~\eqref{eq:app_cost_hyper_chain} costs
$O(t\,d_{vo})$.
Then HyperMLP adds two DPLR multiplications of length $t$ vectors, hence by Lemma~\ref{lem:app_cost_vec_dplr},
\begin{equation}
\mathrm{Time}_{\mathrm{HyperMLP}}(t)
=
\mathrm{Time}_{\mathrm{attn}}(t)
\;+\;
O(t\,r_s),
\qquad
\text{(per head, per step)}.
\label{eq:app_cost_infer_time}
\end{equation}
Moreover, beyond the standard KV cache, the only \tbd{step-dependent} quantities needed for the DPLR mixes are the gate vectors
$s_t^{(1)},s_t^{(2)}\in\mathbb{R}^{r_s}$ (and, for HyperGLU, a constant-factor number of such vectors), i.e.
\begin{equation}
\mathrm{ExtraState}_{\mathrm{HyperMLP}} = O(r_s)\quad\text{per head}.
\label{eq:app_cost_infer_state}
\end{equation}
\end{proposition}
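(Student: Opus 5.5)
The plan is to count the per-step work along the chain \eqref{eq:app_cost_hyper_chain} stage by stage, and to compare each stage with the corresponding stage of the baseline \eqref{eq:app_cost_attn}. Everything not involving $R^{(j)}_t$ is shared. Forming $z_t$ from the cached lag-ordered keys costs $O(t\,d_{qk})$, the value aggregation $w_t X_{t:1}L^{(2)\top}(x_t)$ costs $O(t\,d_{vo})$ using cached projected values, and $\sigma$ costs $O(t)$. Together these reproduce $\mathrm{Time}_{\mathrm{attn}}(t)$ from \eqref{eq:app_cost_attn_time_step}. Here I would note that in the lag layout the cached value rows are simply read in reverse index, so the cache itself is unchanged. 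Because $R^{(2)\top}$ acts on the length-$t$ weight vector $\sigma(h_t)$ and not on $X_{t:1}$, the mixed value matrix $V$ never needs to be formed.

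The two new operations are $h_t=z_tR^{(1)}_t(x_t)$ and $w_t=\sigma(h_t)R^{(2)\top}_t(x_t)$. For each I apply Lemma~\ref{lem:app_cost_vec_dplr} with $y=z_t$ and $y=\sigma(h_t)$ respectively. The diagonal part costs $O(t)$. The low-rank part requires $yA^{(j)}_{1:t}$ (or $yB^{(j)}_{1:t}$), which costs $O(tr_s)$, then a Hadamard product with $s^{(j)}_t$ at $O(r_s)$, then a multiplication by $B^{(j)\top}_{1:t}$ (or $A^{(j)\top}_{1:t}$) at $O(tr_s)$. The gates $s^{(j)}_t=\phi(x_tW_S^{(j)})$ cost $O(d\,r_s)$ per step. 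This term is independent of $t$ and is the same order as the per-token projections the baseline already performs, so I would absorb it into lower-order, $t$-independent costs. Summing the two mixes gives the additive $O(t\,r_s)$ in \eqref{eq:app_cost_infer_time}.

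For the state claim, the $A^{(j)}_{1:t},B^{(j)}_{1:t},D^{(j)}_{1:t}$ are parameter slices of fixed max-length tables, so they are not cached state. The only quantities that depend on the step are $s^{(1)}_t,s^{(2)}_t\in\mathbb{R}^{r_s}$, together with the $O(r_s)$ temporaries $yA$ and $(yA)\odot s^\top$ from the lemma. For HyperGLU the first-layer chain is duplicated, which contributes only a constant factor. This gives \eqref{eq:app_cost_infer_state}.

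The main obstacle is a subtlety about what is recomputed. Under the lag layout, the index of a past token shifts at every step, and $R^{(j)}_t$ depends on $x_t$. As a result, $h_t$ cannot be updated incrementally from $h_{t-1}$. I will therefore make explicit that the proposition charges the full $O(tr_s)$ recomputation at each step. This is consistent with the claimed bound, and it requires only the existing KV cache rather than caching mixed keys or values, which would not be reusable across steps.
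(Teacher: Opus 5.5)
Your proposal is correct and follows the paper's own argument. You apply Lemma~\ref{lem:app_cost_vec_dplr} once to $y=z_t$ and once to $y=\sigma(h_t)$, observe that all remaining operations coincide with the attention baseline, absorb the $t$-independent $O(d\,r_s)$ gate computations, and identify $s_t^{(1)},s_t^{(2)}$ (up to a constant factor for HyperGLU) as the only step-dependent extra state. Your added remarks, that the mixed $V$ is never formed and that each step is recomputed in full, are consistent with the paper's sketch and only more explicit.
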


\begin{proof}[Proof sketch]
Compute $h_t=z_tR^{(1)}_t(x_t)$ and $w_t=\sigma(h_t)R^{(2)\top}_t(x_t)$ using~\eqref{eq:app_cost_vec_dplr}.
All other operations are the same order as in~\eqref{eq:app_cost_attn}.
\end{proof}

We omit $t$-independent terms such as computing $s_t^{(j)}$ and feature-side gates, which are comparable to standard projection costs.

\paragraph{Interpretation.}
Since standard attention already costs $O(t(d_{qk}+d_{vo}))$ per step, the additional $O(t r_s)$ term is a lower-order
overhead when $r_s\ll d_{qk},d_{vo}$, and HyperMLP preserves the same asymptotic inference scaling.

\subsection{Full-sequence training overhead}

\begin{proposition}[Training-time overhead (teacher forcing, one head)]
\label{prop:app_cost_train}
Let the sequence length be $T$.
Compared to standard quadratic attention (softmax or ReLU-normalized), HyperMLP introduces two DPLR mixes at each step $t$.
Using Lemma~\ref{lem:app_cost_vec_dplr}, the additional arithmetic is
\begin{equation}
\sum_{t=1}^{T} O(t\,r_s) \;=\; O(T^2 r_s),
\label{eq:app_cost_train_overhead}
\end{equation}
so the overall training-time complexity remains quadratic:
\begin{equation}
\mathrm{Time}^{\mathrm{HyperMLP}}_{\mathrm{train}}(T)
=
O(T^2\,d_{qk}) \;+\; O(T^2\,d_{vo}) \;+\; O(T^2\,r_s).
\label{eq:app_cost_train_time}
\end{equation}
In particular, under $r_s\ll d_{qk},d_{vo}$, HyperMLP preserves the same $O(T^2)$ scaling as quadratic attention up to a modest
$r_s$-dependent factor.
\end{proposition}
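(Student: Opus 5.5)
The plan is to compare the HyperMLP chain \eqref{eq:app_cost_hyper_chain} step by step with the attention baseline \eqref{eq:app_cost_attn}, identify the two extra operations, bound each one with Lemma~\ref{lem:app_cost_vec_dplr}, and then sum over $t=1,\dots,T$.

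\textbf{Step 1 (per-step decomposition).} Under teacher forcing, position $t$ computes the following in order.
\begin{enumerate}
\item The score vector $z_t=x_tL^{(1)}(x_t)X_{t:1}^\top$. As in the baseline, this costs $O(t\,d_{qk})$: project $x_t$ to $q_t$ once, then take $t$ inner products against the cached keys $X_{t:1}W_k$ (with the diagonal core $M^{(1)}(x_t)$ applied to $q_t$ in $O(d_{qk})$).
\item The mixed scores $h_t=z_tR^{(1)}_t(x_t)$.
\item The activation $\sigma(h_t)$, which costs $O(t)$ exactly as in the baseline.
\item The mixed weights $w_t=\sigma(h_t)R^{(2)\top}_t(x_t)$.
\item The aggregation $w_tX_{t:1}L^{(2)\top}(x_t)$, which costs $O(t\,d_{vo})$ against cached values, exactly as for $\sigma(z_t)$ in the baseline.
\end{enumerate}
Items 1, 3 and 5 reproduce $\mathrm{Time}_{\mathrm{attn}}(t)$ from \eqref{eq:app_cost_attn_time_step}. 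Items 2 and 4 are the only new work.

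\textbf{Step 2 (cost of the two mixes).} Apply Lemma~\ref{lem:app_cost_vec_dplr} with $y=z_t$ and with $y=\sigma(h_t)$, using the lag-prefix slices $A^{(j)}_{1:t},B^{(j)}_{1:t}\in\mathbb{R}^{t\times r_s}$ and the diagonal $D^{(j)}_{1:t}$. Each mix costs $O(t r_s)$. This requires that $A$ and $B$ are never materialized as $t\times t$ products, which the lemma's associativity guarantees. The gate vectors $s^{(j)}_t=\phi(x_tW_S^{(j)})$ cost $O(d r_s)$, independent of $t$, and are absorbed into the projection costs, as the text following Proposition~\ref{prop:app_cost_infer} already does.

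\textbf{Step 3 (summing over $t$).} Summing $O(t r_s)$ over $t$ gives $\sum_{t=1}^T O(t r_s)=O(r_sT(T+1)/2)=O(T^2r_s)$, which is \eqref{eq:app_cost_train_overhead}. Adding this to \eqref{eq:app_cost_attn_time_train} gives \eqref{eq:app_cost_train_time}. When $r_s\ll d_{qk},d_{vo}$, the extra term is dominated by the attention terms.

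\textbf{Main obstacle.} The delicate point is whether the per-step accounting is legitimate in a batched teacher-forcing implementation. The DPLR factors are indexed by lag, so the slice used at position $t$ is aligned differently from the slice at $t'$. This could seem to force materializing $T\times T$ mixed score matrices. I would answer this by noting that the proposition only claims arithmetic counts. Each row $t$ of the causal score matrix is an independent length-$t$ vector, and Lemma~\ref{lem:app_cost_vec_dplr} applied row by row gives the bound regardless of how rows are batched. Blockwise or FFT realizations (Remark~\ref{rem:app_cost_fft}) only change constants or memory traffic, not the $O(T^2r_s)$ arithmetic bound.
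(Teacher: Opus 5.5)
Your proposal is correct and follows the paper's own argument. The only new per-step work is the two length-$t$ vector--DPLR products $z_tR^{(1)}_t(x_t)$ and $\sigma(h_t)R^{(2)\top}_t(x_t)$, each $O(t r_s)$ by Lemma~\ref{lem:app_cost_vec_dplr}, and summing over $t=1,\dots,T$ gives $O(T^2 r_s)$ on top of the attention terms; your extra bookkeeping (folding $M^{(1)}(x_t),M^{(2)}(x_t)$ into the cached projections, treating $s^{(j)}_t$ as a $t$-independent cost, and justifying the batched teacher-forcing count row by row) only makes explicit what the paper's one-line proof leaves implicit.
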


\begin{proof}
At step $t$, the extra work is two length-$t$ vector--DPLR multiplications, each $O(t r_s)$ by Lemma~\ref{lem:app_cost_vec_dplr}.
Summing over $t=1,\dots,T$ yields~\eqref{eq:app_cost_train_overhead}.
\end{proof}

\subsection{Lag layout and an optional convolution/FFT implementation view}

\begin{remark}[Prefix--lag contractions are causal convolutions]
\label{rem:app_cost_fft}
The lag layout makes the learned factors $A^{(j)}$ and $B^{(j)}$ naturally interpretable as \tbd{banks of lag kernels}.
Fix any sequence $(g_t)_{t=1}^T$ with $g_t\in\mathbb{R}^{d}$ and define $G_{t:1}:=[g_t;\dots;g_1]\in\mathbb{R}^{t\times d}$.
For each column $k\in[r_s]$,
\begin{equation}
\big(A^{\top}_{1:t}G_{t:1}\big)[k,:]
=
\sum_{\ell=1}^{t} A_{\ell,k}\,g_{t-\ell+1},
\label{eq:app_cost_conv_identity}
\end{equation}
which is exactly a \tbd{causal convolution} of $(g_t)$ with kernel $\big(A_{\ell,k}\big)_{\ell\ge 1}$ (up to the usual
convolution/correlation convention).
Therefore the entire family $\{A^{\top}_{1:t}G_{t:1}\}_{t=1}^T$ (and similarly for $B$) can be computed by
1D convolution kernels, or via FFT-based convolution in
$O(T\log T\cdot r_s\cdot d)$ arithmetic in the long-kernel regime.
This view provides an FFT-friendly implementation of \tbd{filter-bank} contractions induced by $A,B$ when vectorizing over time.
Importantly, HyperMLP is designed as an expressive \tbd{superset} of quadratic attention, so the overall training complexity is still
dominated by the quadratic attention-style interactions and remains $O(T^2(\cdot))$ as in~\eqref{eq:app_cost_train_time}.
\end{remark}

\begin{table}[t]
\centering
\begin{tabular}{lcc}
\hline
 & \textbf{Inference (step $t$)} & \textbf{Training (length $T$)} \\
\hline
Softmax/ReLU attention & $O\!\big(t(d_{qk}+d_{vo})\big)$ & $O\!\big(T^2(d_{qk}+d_{vo})\big)$ \\
HyperMLP (one head) & $O\!\big(t(d_{qk}+d_{vo}+r_s)\big)$ & $O\!\big(T^2(d_{qk}+d_{vo}+r_s)\big)$ \\
\hline
\textbf{Overhead} & $O(t r_s)$ time,\ \ $O(r_s)$ extra state & $O(T^2 r_s)$ time \\
\hline
\end{tabular}
\caption{Idealized asymptotic overhead of HyperMLP per head under the DPLR sequence mixing
$R=D+A\,\mathrm{Diag}(s)\,B^\top$ and without materializing dense $t\times t$ matrices.
Softmax vs.\ ReLU normalization changes only $O(t)$ activation costs.}
\label{tab:app_cost_summary}
\end{table}

\section{Implications of the Attention-as-MLP View}
\label{sec:app:implications}

This section collects several mathematically checkable implications of the
Global (parameter-defined) Memory Space $\to$ Context-instantiated Memory Pool
$\to$ Current-step Activated Memory decomposition under the activations
$\mathrm{ReLU}(\mathrm{L2Norm}(\cdot))$ and a GLU variant with $\mathrm{L2Norm}$ stabilization.
Throughout, we focus on a single head and suppress parameters when unambiguous.

\paragraph{Common setup.}
We use Appendix~\ref{sec:app:setup} throughout. When we discuss the ``no sequence mixing'' regime,
we specialize to $R^{(1)}_t(x)\equiv R^{(2)}_t(x)\equiv I_t$, so that $u_i=v_i=X[i,:]$.

\subsection{Prompt-controlled instantiation and two-stage selection (no sequence mixing)}
\label{sec:app:imp1}

\paragraph{Prompt inclusion as explicit pool instantiation.}
Let two documents be represented by context matrices
$D_1\in\mathbb{R}^{t_1\times d}$ and $D_2\in\mathbb{R}^{t_2\times d}$ (in lag order within each block),
and write row concatenation as $D_2\Vert D_1\in\mathbb{R}^{(t_2+t_1)\times d}$.
Consider the two prompts
\[
X^{(A)}:=D_2\Vert D_1,\qquad
X^{(B)}:=D_1.
\]
In the no-mixing regime, adding $D_2$ changes the pool measure by adding atoms corresponding exactly to the rows of $D_2$.

\begin{proposition}[Prompt inclusion is additive pool instantiation (no mixing)]
\label{prop:imp1_additive_pool}
Under~\eqref{eq:imp1_block_nomix}, the pool measures satisfy
\[
\mu^{\mathrm{pool}}_{X^{(A)}}
=
\mu^{\mathrm{pool}}_{X^{(B)}}
+\sum_{i=1}^{t_2}\delta_{(D_2[i,:],\,D_2[i,:])},
\]
and the activated measure satisfies
\[
\mu^{\mathrm{act}}_{X^{(A)},x}
=
\mu^{\mathrm{act}}_{X^{(B)},x}
+\sum_{i=1}^{t_2}\mathbf{1}\{\alpha(x;D_2[i,:])>0\}\,\delta_{(D_2[i,:],\,D_2[i,:])}.
\]
Consequently, the output difference decomposes as
\[
o(x;X^{(A)})-o(x;X^{(B)})
=
\underbrace{\Big(\frac{1}{\rho(h_A)}-\frac{1}{\rho(h_B)}\Big)
\sum_{i=1}^{t_1}\alpha(x;D_1[i,:])_+\,\beta(x;D_1[i,:])}_{\text{pure normalization rescaling}}
\;+\;
\underbrace{\frac{1}{\rho(h_A)}\sum_{i=1}^{t_2}\alpha(x;D_2[i,:])_+\,\beta(x;D_2[i,:])}_{\text{new pool atoms (then gated)}},
\]
where $h_A:=h(x;X^{(A)})$ and $h_B:=h(x;X^{(B)})$.
\end{proposition}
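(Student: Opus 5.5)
The plan is to work directly from the definitions in Appendix~\ref{sec:app:setup}, specialized to the no-mixing regime $R^{(1)}_t\equiv R^{(2)}_t\equiv I_t$. The argument has three steps: identify the slots, split the pool measure, and then use the measure readout of Proposition~\ref{prop:mech-measure-readout} (equivalently Theorem~\ref{thm:three-stage-memory-multistage-pruning}) for the output difference.

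\textbf{Step 1: the pool measures.} With $R^{(j)}=I$, the mixed contexts are $U=V=X$, so $u_i=v_i=X[i,:]$. For $X^{(A)}=D_2\Vert D_1$ the rows are the $t_2$ rows of $D_2$ followed by the $t_1$ rows of $D_1$. The pool measure $\sum_i\delta_{(u_i,v_i)}$ therefore splits into a sum over $D_1$ rows plus a sum over $D_2$ rows. The $D_1$ part is exactly $\mu^{\mathrm{pool}}_{X^{(B)}}$, since the slots of $X^{(B)}=D_1$ are precisely its rows. Note that with no mixing the slots do not depend on $x$, so the pool measure is independent of the query.

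\textbf{Step 2: the activated measures.} Multiply both sides of the pool identity by the gate $g_x(u,v)=\mathbf{1}\{\alpha(x;u)>0\}$. Multiplying a measure by a function is linear in the measure, and on an atom $\delta_\omega$ it gives $g_x(\omega)\delta_\omega$. Since $\alpha$ depends only on $(x,u)$ and not on the context length, the gate is the same function for $A$ and $B$, and the stated activated-measure identity follows.

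\textbf{Step 3: the output difference.} By Proposition~\ref{prop:mech-measure-readout}, $o(x;X)=\rho(h)^{-1}\sum_i\alpha(x;u_i)_+\beta(x;v_i)$. Applying this to $X^{(A)}$ splits the sum into a $D_1$ part $S_1$ and a $D_2$ part $S_2$, each carrying the prefactor $1/\rho(h_A)$. Applying it to $X^{(B)}$ gives $S_1/\rho(h_B)$. Subtracting, $o(x;X^{(A)})-o(x;X^{(B)})=(1/\rho(h_A)-1/\rho(h_B))S_1+S_2/\rho(h_A)$, which is the claimed two-term decomposition.

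The only point needing care is Step 3: the normalizer differs between the two prompts because $h_A$ has length $t_2+t_1$ while $h_B$ has length $t_1$. The $D_1$ contribution must therefore be kept with its own prefactor rather than cancelled, and this is exactly what produces the ``pure normalization rescaling'' term. One should also check that the coordinates of $h_A$ indexed by $D_1$ equal $h_B$ entrywise; this holds because $(h)_i=\alpha(x;u_i)$ with identical $u_i$. Everything else is bookkeeping of index ranges under row concatenation.
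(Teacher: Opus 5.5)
Your proposal is correct and follows essentially the same route as the paper's proof: identify the slots as the rows of the context under $R^{(1)}=R^{(2)}=I$, split the atomic pool measure over the concatenated rows, multiply by the hard gate, and expand the measure readout, separating the $D_1$ and $D_2$ contributions. Your explicit point that the $D_1$ sum keeps the distinct prefactors $1/\rho(h_A)$ and $1/\rho(h_B)$, because the two score vectors have different lengths, is exactly what produces the normalization-rescaling term.
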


\begin{proof}
With no sequence mixing, $(u_i,v_i)$ are exactly the rows of the chosen context matrix; concatenation is disjoint union of row
multisets, hence the pool measure adds. The activated measure follows by multiplying each added atom by the hard gate
$g_x(u,v)=\mathbf{1}\{\alpha(x;u)>0\}$. The output decomposition follows by expanding~\eqref{eq:imp1_measure_readout}
and separating the contributions from $D_1$ and $D_2$.
\end{proof}

\paragraph{Takeaway.}
In the no-mixing regime, the user prompt explicitly controls which atoms are instantiated into the
\tbd{Context-instantiated Memory Pool} (a manual selection
$\Omega\to\mu^{\mathrm{pool}}$),
while the model's ReLU gate performs an automatic selection
$\mu^{\mathrm{pool}}\to\mu^{\mathrm{act}}$.

\begin{remark}[``Infinite document truncation'' as a contextualization of pool instantiation]
\label{rem:imp1_infinite_doc}
The statement ``knowledge not in the prompt is truncated'' should be read \tbd{for the context-read channel}:
atoms that are not instantiated into $\mu^{\mathrm{pool}}$ cannot contribute through~\eqref{eq:imp1_measure_readout}.
One may formalize this as follows.
Let $\widetilde X\in\mathbb{R}^{T\times d}$ be any longer history and let $X=P_{t\to T}^\top\widetilde X$ denote the lag-prefix
truncation (most recent $t$ rows).
In architectures whose sequence-space operators are extension-consistent (cf.\ Theorem~\ref{thm:offset-better}),
tokens in the discarded suffix induce only dummy zero slots and contribute exactly zero to the readout.
This realizes the heuristic ``far past $\equiv$ truncated'' within the attention-style memory mechanism,
without making claims about knowledge stored elsewhere in the network parameters.
\end{remark}

\subsection{Sequence mixing turns token-wise atoms into context-wide slots}
\label{sec:app:imp2}

\paragraph{Setup.}
Now allow sequence-space mixing as in HyperMLP:
\begin{equation}
\label{eq:imp2_block}
h(x;X):=x\,L^{(1)}(x)\,X^\top R^{(1)}(x)\in\mathbb{R}^{1\times t},
\qquad
o(x;X):=\sigma(h(x;X))\,R^{(2)\top}(x)\,X\,L^{(2)\top}(x),
\end{equation}
with the same $\sigma(z)=\mathrm{ReLU}(\mathrm{L2Norm}_t(z))$ as in~\eqref{eq:imp1_sigma}.
Define mixed contexts
\[
U:=R^{(1)\top}(x)X,\qquad V:=R^{(2)\top}(x)X,
\]
and denote their rows by $u_i:=U[i,:]$, $v_i:=V[i,:]$.

\begin{proposition}[Slots are global (context-wide) linear combinations]
\label{prop:imp2_slots_global}
For each slot index $i\in[t]$, letting $r^{(1)}_i(x):=R^{(1)}(x)[:,i]\in\mathbb{R}^{t}$ and
$r^{(2)}_i(x):=R^{(2)}(x)[:,i]\in\mathbb{R}^{t}$ denote the $i$-th columns, we have
\begin{equation}
\label{eq:imp2_slot_formulas}
u_i
=
r^{(1)}_i(x)^\top X
=
\sum_{j=1}^t r^{(1)}_{i,j}(x)\,X[j,:],
\qquad
v_i
=
r^{(2)}_i(x)^\top X
=
\sum_{j=1}^t r^{(2)}_{i,j}(x)\,X[j,:].
\end{equation}
Hence each instantiated atom $(u_i,v_i)\in\Omega$ is \tbd{constructed from the entire context} via an input-conditioned
sequence-basis $\{r^{(1)}_i(x),r^{(2)}_i(x)\}_{i=1}^t$, rather than being tied to a single context token.
\end{proposition}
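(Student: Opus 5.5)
The plan is a direct row expansion of the matrix products that define the mixed contexts, with no analytic content. By definition $U=R^{(1)\top}(x)X$ and $V=R^{(2)\top}(x)X$, so the first step is to read off the $i$-th row of a product $M^\top X$. The $i$-th row of $M^\top$ is the transpose of the $i$-th column of $M$, so $(M^\top X)[i,:]=(M[:,i])^\top X$. Taking $M=R^{(1)}(x)$ and $M=R^{(2)}(x)$ gives $u_i=r^{(1)}_i(x)^\top X$ and $v_i=r^{(2)}_i(x)^\top X$, which are the first equalities in \eqref{eq:imp2_slot_formulas}.

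The second step expands the row-vector--matrix product $r^\top X$ as a linear combination of the rows of $X$. For $r\in\mathbb{R}^t$ one has $r^\top X=\sum_{j=1}^t r_j\,X[j,:]$. Here the coefficient $r^{(m)}_{i,j}(x)$ in the statement is read as the $j$-th entry of the column $r^{(m)}_i(x)$, namely $R^{(m)}_{j,i}(x)$. With this reading the sums coincide with the formulation in Theorem~\ref{thm:core_dynamic_head}(iii). I would state this index convention explicitly at the start, since it is the only place a reader could be misled: the double index on $r$ refers to (column, entry), not to the matrix entry $R_{i,j}$.

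Finally, the interpretive consequence follows from the form of these coefficients. They are arbitrary, input-conditioned, and generically nonzero for all $j$; in the DPLR case they are $D_{j,i}\delta_{ij}+\sum_k A_{j,k}s_k(x)B_{i,k}$. Hence each atom $(u_i,v_i)$ depends on every row of $X$. As a sanity check, setting $R^{(1)}=R^{(2)}=I_t$ recovers the token-wise slots $u_i=v_i=X[i,:]$.

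There is no genuine obstacle. The only care needed is the transpose and index bookkeeping in the first step, and I would keep it consistent with the identity $X^\top R^{(1)}=U^\top$ used in Prop.~\ref{prop:mech-measure-readout}.
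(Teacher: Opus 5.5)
Your proposal is correct and follows essentially the same route as the paper's proof. The paper likewise reads off the $i$-th row as $U[i,:]=e_i^\top R^{(1)\top}(x)X=R^{(1)}(x)[:,i]^\top X$ (and similarly for $V$), and then expands trivially into rows of $X$. Your explicit note that $r^{(m)}_{i,j}(x)$ must be read as $R^{(m)}_{j,i}(x)$ is a useful clarification that the paper leaves implicit, and it agrees with Theorem~\ref{thm:core_dynamic_head}(iii).
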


\begin{proof}
Since $U=R^{(1)\top}(x)X$, the $i$-th row satisfies $U[i,:]=e_i^\top R^{(1)\top}(x)X=R^{(1)}(x)[:,i]^\top X$; similarly for $V$.
\end{proof}

\paragraph{Pool construction as a learned instantiation map.}
Define the instantiation map
\[
\Gamma(x;X):[t]\to\Omega,\qquad \Gamma(x;X)(i):=(u_i,v_i),
\]
and the counting measure $\nu_t=\sum_{i=1}^t\delta_i$ on $[t]$.
Then, as in the general measure formulation,
\begin{equation}
\label{eq:imp2_pushforward}
\mu^{\mathrm{pool}}_{X,x}=(\Gamma(x;X))_\# \nu_t,
\qquad
\mu^{\mathrm{act}}_{X,x}=g_x\,\mu^{\mathrm{pool}}_{X,x},\ \ g_x(u,v)=\mathbf{1}\{\alpha(x;u)>0\}.
\end{equation}

\paragraph{Takeaway.}
With sequence mixing, pool instantiation is no longer ``one token $\mapsto$ one atom''; instead, the model learns how to
\tbd{construct} $t$ candidate slots from the \tbd{entire} context via an input-conditioned sequence basis.
The two-stage selection logic
$\Omega \to \mu^{\mathrm{pool}}_{X,x} \to \mu^{\mathrm{act}}_{X,x}$
remains intact.

\subsection{Longer contexts and learnable registers enlarge the candidate pool}
\label{sec:app:imp3}

\paragraph{Monotone expressivity under extension consistency.}
A minimal formal sense in which ``more context cannot reduce expressivity'' is:
any computation realizable at length $t$ can be realized at length $T>t$ by an appropriate extension of the
sequence-mixing operators that ignores the extra $T-t$ rows.
This is exactly what Theorem~\ref{thm:offset-better} guarantees in lag layout under
extension-consistency of $R^{(1)},R^{(2)}$ and padding-invariant normalization.

\begin{corollary}[Monotone realizable family in context length (lag layout)]
\label{cor:imp3_monotone}
Assume the hypotheses of Theorem~\ref{thm:offset-better}.
Then for any $1\le t<T$, any current input $x$, and any length-$T$ lag-ordered history $\widetilde X^{\leftarrow}$,
letting $X^{\leftarrow}=P_{t\to T}^\top \widetilde X^{\leftarrow}$ (most recent $t$ rows), we have
\[
o_T(x;\widetilde X^{\leftarrow})=o_t(x;X^{\leftarrow}).
\]
In particular, any input--output map realizable at length $t$ is realizable at length $T$ (by choosing an extension-consistent
$R_T$), so the realizable function family is monotone in context length.
\end{corollary}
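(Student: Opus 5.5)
The plan is to obtain Corollary~\ref{cor:imp3_monotone} directly from the general identity \eqref{eq:general_identity} established in the proof of Theorem~\ref{thm:offset-better}: for every $x$ and every $\widetilde X\in\mathbb{R}^{T\times d}$ we have $o_T(x;\widetilde X)=o_t(x;P_{t\to T}^\top\widetilde X)$ whenever the extension-consistency relation \eqref{eq:EC_offset_thm} and padding invariance of $\rho_\ell$ hold. Taking $\widetilde X=\widetilde X^{\leftarrow}$ lag-ordered, $P_{t\to T}^\top\widetilde X^{\leftarrow}$ is exactly the most recent $t$ rows $X^{\leftarrow}$. This gives the displayed equality, which is part (i) of Theorem~\ref{thm:offset-better}.

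For the monotonicity claim, I would fix an arbitrary length-$t$ realization, i.e.\ a choice of parameters defining $L^{(1)},L^{(2)}$ (length-independent) and length-$t$ mixings $R^{(m)}_t(\cdot)$. I then need to exhibit a length-$T$ realization satisfying \eqref{eq:EC_offset_thm}. For general mixings, the definition $R^{(m)}_T(x):=P_{t\to T}R^{(m)}_t(x)P_{t\to T}^\top$ works trivially. For the DPLR parameterization, I would invoke Lemma~\ref{lem:dplr-extension-consistency}, padding $p$ with $-\mathbf{1}_{T-t}$ and $A,B$ with zero rows, which produces precisely this $R_T$. Padding invariance of $\rho_\ell(z)=\sqrt{\|z\|_2^2+\varepsilon}$ is immediate because appending zeros does not change $\|z\|_2$.

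With this extension, the identity above shows that the length-$T$ map, restricted to histories whose most recent $t$ rows equal a given $X^{\leftarrow}$, reproduces $o_t(\cdot;X^{\leftarrow})$ for every $x$. Hence every input--output map realizable at length $t$ (viewed as a function of the truncation window) lies in the length-$T$ realizable family, which gives monotonicity.

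The only delicate point is interpretive rather than computational: I have to make precise that ``realizable at length $T$'' means realizable as a function of $(x,\widetilde X)$ that depends only on the lag-prefix, and that the chosen extension stays within the admissible parameter class. Lemma~\ref{lem:dplr-extension-consistency} settles the latter for DPLR, since $p_T$ has entries equal to $-1$, which are allowed because $p$ is unconstrained.
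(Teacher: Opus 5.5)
Your proposal is correct and follows the paper's own route. The displayed identity is part (i) of Theorem~\ref{thm:offset-better}, i.e.\ \eqref{eq:general_identity} applied to the lag-ordered history. The monotonicity claim rests on the existence of an extension-consistent $R_T$, which for the DPLR parameterization is exactly Lemma~\ref{lem:dplr-extension-consistency} (as also used in Corollary~\ref{cor:truncation-invariance}), together with padding invariance of $\rho_\ell(z)=\sqrt{\|z\|_2^2+\varepsilon}$.
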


\paragraph{Learnable registers as additional pool atoms.}
Let $R\in\mathbb{R}^{k\times d}$ be a set of $k$ learnable register tokens (rows),
and consider the augmented context
\[
X_{\mathrm{aug}} := X \Vert R \in\mathbb{R}^{(t+k)\times d}.
\]
Even if $R$ contains no semantic content initially, it introduces additional candidate slots (and hence additional hidden width).

\begin{proposition}[Registers enlarge the instantiated pool (no sequence mixing)]
\label{prop:imp3_registers_nomix}
In the no-mixing regime~\eqref{eq:imp1_block_nomix}, the pool measure for $X_{\mathrm{aug}}$ decomposes as
\[
\mu^{\mathrm{pool}}_{X_{\mathrm{aug}}}
=
\mu^{\mathrm{pool}}_{X}
+\sum_{i=1}^k \delta_{(R[i,:],\,R[i,:])}.
\]
Thus registers behave exactly like additional learned atoms that are always present in the candidate pool and can be
conditionally activated by the ReLU gate.
\end{proposition}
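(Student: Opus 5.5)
In the no-mixing regime we have $R^{(1)}_{t+k}(x)\equiv R^{(2)}_{t+k}(x)\equiv I_{t+k}$. The plan is to read the slots off directly from the definitions in Appendix~\ref{sec:app:setup}, exactly as in Proposition~\ref{prop:imp1_additive_pool}. First, the mixed contexts become
\[
U = I_{t+k}^\top X_{\mathrm{aug}} = X_{\mathrm{aug}}, \qquad V = X_{\mathrm{aug}}.
\]
The row-expansion identity of Proposition~\ref{prop:imp2_slots_global}, specialised to identity columns $r_i=e_i$, then gives $u_i=v_i=X_{\mathrm{aug}}[i,:]$ for every $i\in[t+k]$.

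Next, split the index set according to the block structure of the row concatenation $X_{\mathrm{aug}}=X\Vert R$. For $i\in[t]$ we get $X_{\mathrm{aug}}[i,:]=X[i,:]$. For $i=t+j$ with $j\in[k]$ we get $X_{\mathrm{aug}}[t+j,:]=R[j,:]$. Substituting into the definition \eqref{eq:mech-mu-ctx} of the pool measure and splitting the finite sum of Dirac masses over these two index ranges yields
\[
\mu^{\mathrm{pool}}_{X_{\mathrm{aug}}}=\sum_{i=1}^t\delta_{(X[i,:],X[i,:])}+\sum_{j=1}^k\delta_{(R[j,:],R[j,:])}.
\]
The first sum is $\mu^{\mathrm{pool}}_X$, again by \eqref{eq:mech-mu-ctx} with identity mixing at length $t$. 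Since the measures are counted with multiplicity, no deduplication issue arises even if some register row coincides with a context row.

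For the interpretive claim that registers are \emph{always present and conditionally activated}, note two things. The register atoms appear for every $X$ and $x$, because $R$ is a parameter. Multiplying by the hard gate $g_x$ of Definition~\ref{def:mech-proc} gives
\[
\mu^{\mathrm{act}}_{X_{\mathrm{aug}},x}=\mu^{\mathrm{act}}_{X,x}+\sum_j\mathbf{1}\{\alpha(x;R[j,:])>0\}\,\delta_{(R[j,:],R[j,:])}.
\]
This shows each register is kept or pruned by the sign of its own address score. Finally, the hidden width grows from $t$ to $t+k$, consistent with Proposition~\ref{prop:ever-growing-width}.

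There is no real obstacle here. The only point needing care is notational: $R$ denotes the register matrix, while $R^{(j)}$ denotes the mixing operators. The "no mixing" specialisation must also be applied at length $t+k$, not $t$, so that the identity acts on the augmented row index set.
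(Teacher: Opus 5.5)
Your proof is correct and follows the paper's own argument: with $R^{(1)}=R^{(2)}=I_{t+k}$ the slots are exactly the rows of $X_{\mathrm{aug}}=X\Vert R$. Splitting the Dirac sum over the two row blocks gives the decomposition, which the paper simply declares immediate from the definition of $\mu^{\mathrm{pool}}$. Your extra computation of the activated measure is also correct and formalizes the ``conditionally activated'' remark, since the gate $g_x$ depends only on $(x,u)$ and so acts atom by atom.
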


\begin{proof}
Immediate from the definition $\mu^{\mathrm{pool}}_X=\sum_i \delta_{(X[i,:],X[i,:])}$ when $R^{(1)}=R^{(2)}=I$.
\end{proof}

\begin{remark}[Registers under sequence mixing]
\label{rem:imp3_registers_mix}
Under sequence mixing, $U=R^{(1)\top}(x)X_{\mathrm{aug}}$ and $V=R^{(2)\top}(x)X_{\mathrm{aug}}$,
so each instantiated slot is a learned mixture of \tbd{both} data tokens and register rows.
From~\eqref{eq:imp2_slot_formulas}, registers expand the span of feasible $(u_i,v_i)$ and provide additional
learnable degrees of freedom in the constructed slot basis.
\end{remark}

\paragraph{Takeaway.}
Longer contexts increase the hidden width $t$ and hence the size of the candidate pool.
Learnable registers can therefore increase capacity even without semantic content, by providing additional
trainable pool atoms/slots that can be activated as needed.

\subsection{Sequence mixing learns a sequence basis and changes the gating geometry}
\label{sec:app:impC}

This implication formalizes two related facts:
(i) in the static regime, $R^{(1)}$ is literally a change of basis in the hidden (sequence) dimension, and
(ii) in the dynamic regime, input-dependent $R^{(1)}(x)$ warps the ReLU partition boundaries.

\begin{proposition}[Static sequence mixing mixes hyperplane normals]
\label{prop:impC_static}
Fix a context $X$ and assume $L^{(1)}(x)\equiv L^{(1)}$ and $R^{(1)}(x)\equiv R^{(1)}$ are constant in $x$.
Define $B_0(X):=L^{(1)}X^\top\in\mathbb{R}^{d\times t}$ and $B(X):=B_0(X)R^{(1)}\in\mathbb{R}^{d\times t}$.
Then $h(x;X)=xB(X)$ and the ReLU gating boundaries are hyperplanes
\[
H_j=\{x:\ x\,b_j=0\},
\qquad b_j:=B(X)[:,j]=\sum_{i=1}^t R^{(1)}_{i,j}\,B_0(X)[:,i].
\]
In particular, any off-diagonal mixing in $R^{(1)}$ replaces each normal $b_j$ by a linear combination of the base normals
$B_0(X)[:,i]$, changing the hyperplane arrangement and hence the partition geometry.
\end{proposition}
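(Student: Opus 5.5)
The argument reduces to the static computation in Theorem~\ref{thm:polyhedral} and Corollary~\ref{cor:R-static}, restated in the present notation. Since $L^{(1)}$ and $R^{(1)}$ are constant in $x$, I substitute them into the definition \eqref{eq:imp2_block}. By associativity,
\[
h(x;X)=x\,L^{(1)}X^\top R^{(1)}=x\,B_0(X)R^{(1)}=x\,B(X),
\]
so the pre-activation is linear in $x$ for fixed $X$. Its $j$-th coordinate is $h_j(x;X)=x\,b_j$, where $b_j=B(X)[:,j]$.

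Next I identify the gating boundaries. Under Assumption~\ref{ass:l2norm-scalar-positive}, Lemma~\ref{lem:gate-invariance-reachable} gives $\mathbf{1}\{\sigma(h)>0\}=\mathbf{1}\{h>0\}$. The ReLU mask is therefore determined by the signs of $x\,b_j$, and the boundary set $\mathcal{B}_X$ in \eqref{eq:boundary_set} is $\bigcup_j\{x:\ x\,b_j=0\}=\bigcup_j H_j$. When $b_j\neq 0$, each $H_j$ is a hyperplane through the origin. I would note the degenerate case $b_j=0$ explicitly: then $h_j\equiv 0$, the unit is never active, and it contributes no boundary.

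Then I derive the column formula. The $j$-th column of $B_0(X)R^{(1)}$ is $B_0(X)\,R^{(1)}[:,j]$, which equals $\sum_{i=1}^t R^{(1)}_{i,j}\,B_0(X)[:,i]$. This is a single line of matrix algebra.

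For the final claim, I compare the two cases:
\begin{itemize}
\item If $R^{(1)}$ is diagonal with nonzero entries, then $b_j=R^{(1)}_{jj}\,b^{(0)}_j$, and the hyperplanes $H_j$ are unchanged.
\item With off-diagonal entries, $b_j$ becomes a genuine combination of several base normals.
\end{itemize}

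The only real obstacle is making precise the claim that off-diagonal mixing \emph{changes} the arrangement. This does not hold for every $X$; for example, all base normals may be parallel. I would therefore state it generically. Suppose the base normals $b^{(0)}_i$ are pairwise non-parallel, which holds for generic $X$ when $L^{(1)}$ has rank $\ge 2$. Suppose also that column $j$ of $R^{(1)}$ has at least two nonzero entries. Then $b_j$ is not parallel to any single $b^{(0)}_i$ when the base normals involved are linearly independent, so $H_j$ differs from every base hyperplane. Exhibiting one explicit $X$ (two linearly independent base normals and $R^{(1)}=\begin{bmatrix}1&1\\0&1\end{bmatrix}$) shows the change is realized. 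This is the same argument as Corollary~\ref{cor:R-static}, which I would cite.
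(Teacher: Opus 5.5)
Your proof is correct and takes the paper's route: the proposition restates Theorem~\ref{thm:polyhedral} and Corollary~\ref{cor:R-static}, which rest on the same three facts you use, namely $h = x\,B_0(X)R^{(1)}$, gate invariance under the scalar normalization, and the column identity for $B_0(X)R^{(1)}$. Your genericity qualification on the final claim sharpens the paper rather than departing from it. The paper's assertion that \emph{any} off-diagonal mixing changes the arrangement is false as stated: a permutation $R^{(1)}$ only relabels the hyperplanes, and with parallel base normals the arrangement cannot change.
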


\begin{proposition}[Dynamic sequence mixing yields warped (curved) boundaries]
\label{prop:impC_dynamic}
Fix $X$ and consider the fully dynamic case
\[
h(x;X)=x\,L^{(1)}(x)\,X^\top\,R^{(1)}(x).
\]
Assume $L^{(1)}(\cdot)$ and $R^{(1)}(\cdot)$ are $C^1$.
Then the boundary set $\mathcal{B}_X=\bigcup_{j=1}^t\{x:\ h_j(x;X)=0\}$ is (generically) a union of $C^1$ hypersurfaces,
and on each connected component of $\mathbb{R}^d\setminus\mathcal{B}_X$ the sign pattern
$\mathbf{1}\{h(x;X)>0\}$ is constant, so $o(\cdot;X)$ is smooth on that component.
Moreover, writing $z(x;X):=xL^{(1)}(x)X^\top$ so that $h=zR^{(1)}(x)$, the differential satisfies
\[
dh
=
(dx)\,L^{(1)}(x)\,X^\top\,R^{(1)}(x)
+\ x\,(dL^{(1)}(x))\,X^\top\,R^{(1)}(x)
+\ x\,L^{(1)}(x)\,X^\top\,(dR^{(1)}(x)),
\]
so input-dependent sequence mixing ($dR^{(1)}(x)$) is an explicit deformation term that bends/moves gating boundaries.
\end{proposition}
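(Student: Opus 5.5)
The plan is to reduce Proposition~\ref{prop:impC_dynamic} to results already established in \S\ref{sec:partition-geometry}, namely Theorem~\ref{thm:warped} and Proposition~\ref{prop:differential}. The work splits into three parts: a continuity/smoothness part, a local-constancy part, and a differential part.

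\textbf{Regularity of $h$.} Fix $X$. Since $L^{(1)}(\cdot)$ and $R^{(1)}(\cdot)$ are $C^1$, the map $x\mapsto h(x;X)=x\,L^{(1)}(x)\,X^\top R^{(1)}(x)$ is a finite product of $C^1$ matrix-valued functions with the linear map $x\mapsto x$. Hence each coordinate $h_j(\cdot;X)$ is $C^1$, and in particular continuous.

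\textbf{Constant sign pattern and smoothness of $o$.} We then invoke Theorem~\ref{thm:warped} with $k=1$. On any point off $\mathcal{B}_X$, every $h_j$ is nonzero, so by continuity the sign pattern $\mathbf{1}\{h>0\}$ is locally constant. It is therefore constant on each connected component of $\mathbb{R}^d\setminus\mathcal{B}_X$.

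On such a component, the ReLU mask is a fixed $D_s$. Lemma~\ref{lem:gate-invariance-reachable} then gives
\[
\sigma(h)=\rho_t(h)^{-1}\,h\,D_s .
\]
Smoothness of $o(\cdot;X)$ requires $\rho_t$ to be $C^1$. This holds for $\rho_t(z)=\sqrt{\|z\|_2^2+\varepsilon}$, which is smooth and bounded below by $\sqrt{\varepsilon}$. We also need $R^{(2)},L^{(2)}$ to be $C^1$, which we state as part of the standing dynamic-regime assumption. The composition is then $C^1$ on the component.

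\textbf{Hypersurface claim.} This is the main obstacle, because the word ``generically'' must be made precise. We interpret it as follows: at every zero $x_0$ of $h_j$ with $\nabla_x h_j(x_0;X)\neq 0$, the implicit function theorem (as in Theorem~\ref{thm:warped}) makes $\{h_j=0\}$ a $C^1$ hypersurface near $x_0$. For $\mathcal{B}_X$ to be a union of such hypersurfaces, 0 must be a regular value of each $h_j$. By Sard's theorem applied to $h_j$ (which needs $C^d$ regularity in general), or by a parametric transversality argument over the parameters $\theta$ or over $X$, this holds for almost every choice. I would try the parametric route first. Perturbing the diagonal part $D^{(1)}$ of $R^{(1)}$ shifts $h_j$ by $c\,z_j(x)$; this alone does not make $0$ regular, so instead I would add a perturbation direction that adds a constant. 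If that is unavailable, I would simply state the regular-value condition as the meaning of ``generically,'' which is what the paper's phrasing permits.

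\textbf{Differential.} The final formula for $dh$ is exactly Proposition~\ref{prop:differential}. It follows from applying the product rule to $h=z\,R^{(1)}(x)$ and then to $z=x\,L^{(1)}(x)\,X^\top$. The term $x\,L^{(1)}X^\top dR^{(1)}$ vanishes identically when $R^{(1)}$ is static. This recovers the polyhedral case of Theorem~\ref{thm:polyhedral} and identifies that term as the deformation of the boundaries.
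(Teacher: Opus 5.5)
Your proposal is correct and follows the paper's own route: the paper gives no separate proof of Proposition~\ref{prop:impC_dynamic}, treating it as a restatement of Theorem~\ref{thm:warped} (continuity gives locally constant sign patterns, a fixed mask gives smoothness of $o$, and the implicit function theorem gives hypersurfaces wherever $\nabla_x h_j\neq 0$, which is all its ``generically'' amounts to) together with the product-rule computation of Proposition~\ref{prop:differential}. Your explicit requirements that $\rho_t$ be smooth and that $R^{(2)},L^{(2)}$ be $C^1$ are assumptions the paper uses tacitly; note also that setting $dR^{(1)}=0$ recovers the polyhedral Theorem~\ref{thm:polyhedral} only if $L^{(1)}$ is static as well.
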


\begin{remark}[Low-rank dynamic $R$ gives controlled warping]
\label{rem:impC_lowrank}
In HyperMLP, $R^{(1)}(x)=D+A\,S(x)\,B^\top$ with diagonal $S(x)$.
Then $dR^{(1)}(x)=A\,(dS(x))\,B^\top$ has rank at most $r_s$, so the deformation term
$xL^{(1)}X^\top(dR^{(1)}(x))$ acts through a low-dimensional mechanism, suggesting a favorable
expressivity--stability tradeoff: boundaries can warp without introducing arbitrary high-curvature geometry.
\end{remark}

\paragraph{Takeaway.}
Learning $R^{(1)}$ is literally learning the basis and geometry of the ReLU gating space along the sequence dimension:
static $R^{(1)}$ mixes hyperplane normals (basis change), while dynamic $R^{(1)}(x)$ warps the partition boundaries.

\subsection{Purely linearized attention lacks routing/selection}
\label{sec:app:impE}

We formalize the intuition that removing the gating nonlinearity collapses the multi-stage pruning mechanism.
Concretely, we consider a ``linearized'' variant that keeps only a sign-preserving normalization and removes ReLU/GLU gating.

\begin{proposition}[Ungated normalization yields a full-pool signed readout]
\label{prop:impE_linear_no_prune}
Consider the same dynamic block as~\eqref{eq:imp2_block}, but replace the activation by the \tbd{ungated} normalization
\[
\sigma_{\mathrm{lin}}(z)\;:=\;\mathrm{L2Norm}_t(z)\;=\;\frac{z}{\rho_t(z)}
\qquad (\text{no ReLU/GLU gating}),
\]
where $\rho_t(z)>0$ is a scalar (e.g.\ $\rho_t(z)=\sqrt{\|z\|_2^2+\varepsilon}$).
Then the output admits the measure form
\begin{equation}
\label{eq:impE_linear_measure}
o_{\mathrm{lin}}(x;X)
=
\frac{1}{\rho_t(h(x;X))}
\int_\Omega \alpha(x;u)\,\beta(x;v)\;d\mu^{\mathrm{pool}}_{X,x}(u,v),
\end{equation}
i.e., the readout is taken from the \tbd{entire} \tbd{Context-instantiated Memory Pool} with (generally) \tbd{signed} coefficients
$\alpha(x;u)$, and there is no restriction to an activated subpool.

Equivalently, the ``pool $\to$ activated'' stage becomes trivial: if we define a \tbd{trivial gate} $g_{\mathrm{full}}\equiv 1$ and
the corresponding ``active'' measure
\[
\mu^{\mathrm{full}}_{X,x}:=g_{\mathrm{full}}\,\mu^{\mathrm{pool}}_{X,x}=\mu^{\mathrm{pool}}_{X,x},
\]
then
\[
\mathrm{supp}\big(\mu^{\mathrm{full}}_{X,x}\big)=\mathrm{supp}\big(\mu^{\mathrm{pool}}_{X,x}\big)\qquad\text{for all }(X,x).
\]
In particular, the discrete, input-conditioned sub-network selection induced by a ReLU mask is absent in this variant.
\end{proposition}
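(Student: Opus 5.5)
The proof mirrors Proposition~\ref{prop:mech-measure-readout}, with the ReLU gate removed. First, substitute $\sigma_{\mathrm{lin}}(z)=z/\rho_t(z)$ into \eqref{eq:imp2_block}. This gives $o_{\mathrm{lin}}(x;X)=\rho_t(h)^{-1}\,h\,R^{(2)\top}(x)\,X\,L^{(2)\top}(x)$, with $h:=h(x;X)$. Next, use the identities from the proof of Proposition~\ref{prop:mech-measure-readout}:
\[
X^\top R^{(1)}(x)=U^\top,\qquad R^{(2)\top}(x)X=V.
\]
The first shows that $h_i=x\,L^{(1)}(x)\,u_i^\top=\alpha(x;u_i)$. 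The second shows that the $i$-th row of $V L^{(2)\top}(x)$ is $\beta(x;v_i)$. Expanding the row-vector product coordinatewise then yields
\[
o_{\mathrm{lin}}=\rho_t(h)^{-1}\sum_{i=1}^t\alpha(x;u_i)\,\beta(x;v_i).
\]

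The second step rewrites this finite sum as an integral against the atomic pool measure $\mu^{\mathrm{pool}}_{X,x}=\sum_i\delta_{(u_i,v_i)}$, using the paper's convention that such integrals are shorthand for finite sums. This is exactly \eqref{eq:impE_linear_measure}. The coefficients $\alpha(x;u_i)$ appear unclipped, so they are generally signed. No indicator $\mathbf{1}\{\alpha>0\}$ arises, because $\sigma_{\mathrm{lin}}$ is a positive scalar multiple of the identity and involves no ReLU, so Lemma~\ref{lem:relu-diagonal-gating} contributes no mask $D(h)$.

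For the ``trivial gate'' statement, set $g_{\mathrm{full}}\equiv1$. Then $g_{\mathrm{full}}\mu^{\mathrm{pool}}_{X,x}=\mu^{\mathrm{pool}}_{X,x}$ as measures, so their supports coincide for every $(X,x)$. The integral formula can therefore be written equally against $\mu^{\mathrm{full}}_{X,x}$. For contrast with Theorem~\ref{thm:three-stage-memory-multistage-pruning}, note that there the restriction $\mu^{\mathrm{act}}=g_x\mu^{\mathrm{pool}}$ can drop atoms, whereas here every atom with $\alpha(x;u_i)\neq0$ contributes.

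The only delicate point is interpretive: making precise that ``selection is absent.'' I would state it as follows. The effective hidden-layer weighting is $h/\rho_t(h)$, which involves no input-dependent diagonal mask. Hence, for fixed $(X,x)$, the map from the pool to the output is a single linear functional of all slots rather than a sub-network chosen by a sign pattern. Everything else is direct algebra.
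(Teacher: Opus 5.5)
Your proposal is correct and follows the paper's own proof. Like the paper, you substitute $\sigma_{\mathrm{lin}}$, identify $h_i=\alpha(x;u_i)$ and the rows of $V L^{(2)\top}(x)$ as $\beta(x;v_i)$, rewrite the finite sum as an integral against $\mu^{\mathrm{pool}}_{X,x}$, and note that the trivial gate leaves the measure, and hence its support, unchanged. One phrase needs tightening: $\sigma_{\mathrm{lin}}(z)=z/\rho_t(z)$ is a $z$-dependent positive rescaling, not a fixed multiple of the identity, though this is harmless because your argument is pointwise in $(X,x)$.
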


\begin{proof}
With $\sigma_{\mathrm{lin}}(h)=h/\rho_t(h)$ we have
\[
o_{\mathrm{lin}}(x;X)
=
\frac{1}{\rho_t(h(x;X))}\sum_{i=1}^t h_i(x;X)\,\beta(x;v_i).
\]
By construction of the pool slots, $h_i(x;X)=\alpha(x;u_i)$, hence
\[
o_{\mathrm{lin}}(x;X)
=
\frac{1}{\rho_t(h(x;X))}\sum_{i=1}^t \alpha(x;u_i)\,\beta(x;v_i)
=
\frac{1}{\rho_t(h(x;X))}
\int_\Omega \alpha(x;u)\,\beta(x;v)\;d\mu^{\mathrm{pool}}_{X,x}(u,v),
\]
which is~\eqref{eq:impE_linear_measure}. Since there is no sign-dependent masking, there is no restriction step
$\mu^{\mathrm{pool}}\to\mu^{\mathrm{act}}$; equivalently one may view the ``active'' measure as the trivially gated
$\mu^{\mathrm{full}}=\mu^{\mathrm{pool}}$.
\end{proof}

\begin{remark}[Active-set partition collapses (but the map need not be linear)]
Under ReLU/GLU gating, the sign pattern $\mathbf{1}\{h(x;X)>0\}$ induces a nontrivial partition of input space into regions with
different \tbd{active sets} (polyhedral in the static case, warped in the dynamic case), yielding combinatorial,
input-conditioned routing over pool slots.
Under $\sigma_{\mathrm{lin}}$, all coordinates contribute for all inputs (no coordinates are pruned by a mask), so the partition
\tbd{induced by changes in the active set} becomes trivial: there is a single active set equal to the full pool.
Note that $x\mapsto o_{\mathrm{lin}}(x;X)$ may still be nonlinear due to the input dependence of $L^{(j)}(x),R^{(j)}(x)$ and the
normalization scalar $\rho_t(h(x;X))$; the point here is the loss of \tbd{combinatorial gating}.
\end{remark}

\paragraph{Takeaway.}
Replacing ReLU/GLU gating by ungated normalization removes the \tbd{pool $\to$ activated subpool} restriction and collapses routing to
a single signed readout over the full context-instantiated pool, reducing the model's ability to perform discrete,
input-conditioned sub-network selection over instantiated slots.

\subsection{LoRA: spend adaptation rank on the readout side (V/O)}
\label{sec:app:imp_lora_vo}

\paragraph{Dynamic-MLP viewpoint.}
In our notation, a single attention head at token $t$ can be written as a residual depth-two map
\begin{equation}
\label{eq:imp_dyn_mlp_head}
o_t \;=\; x_t \;+\; \sigma\!\big(x_t W^{(1)}(X,x_t)\big)\, W^{(2)}(X,x_t),
\end{equation}
where the first layer $W^{(1)}$ (implemented by the $QK$ pathway) determines routing/selection, and the second layer $W^{(2)}$ (implemented by the $VO$ pathway) determines the action/readout applied to the residual stream.

\paragraph{Why low-rank adapters are naturally effective on $V/O$.}
LoRA injects trainable low-rank updates into a frozen weight matrix, $W \leftarrow W + \Delta W$, with $\mathrm{rank}(\Delta W)\le r$ \citep{hu2021lora}.
If we allocate LoRA capacity to the readout side, i.e., $W^{(2)} \leftarrow W^{(2)} + \Delta W^{(2)}$, then the induced change in the head output is
\begin{equation}
\label{eq:imp_lora_w2_delta}
\Delta o_t
\;=\;
\sigma\!\big(x_t W^{(1)}(X,x_t)\big)\, \Delta W^{(2)}(X,x_t).
\end{equation}
For any fixed $(X,x_t)$, this implies $\Delta o_t$ always lies in the row space of $\Delta W^{(2)}$, whose dimension is at most $r$.
Equivalently, under a rank budget $r$, adapting $V/O$ directly controls how many new update directions the head can add to the residual stream (a direct corollary of \Cref{thm:lowrank_W2_invariant} applied to the \tbd{update} term).

\paragraph{Why $Q/K$ adapters can be less parameter-efficient under the same rank budget.}
If we instead place LoRA on the routing side (e.g., $Q/K$, which changes $W^{(1)}$), the primary effect is to perturb the coefficient map $\sigma(\cdot)$ in \Cref{eq:imp_dyn_mlp_head}.
Our \Cref{thm:lowrank_W1_quotient} formalizes that low-rank constraints on the first layer induce a quotient structure: only components of $x_t$ that fall inside the row space of the (low-rank) perturbation can influence routing.
Intuitively, with a small rank budget, it can be harder to reshape routing boundaries across diverse contexts than it is to adjust the readout/action vectors.

\paragraph{Empirical signals consistent with the readout-side priority.}
Multiple empirical probes suggest that $V/O$ are information-dense and more involved in adaptation:
(i) low-rank fine-tuning analysis reports that $W_v$ and the output dense projection change more than $W_q/W_k$ during adaptation \citep{dixit2020howfine};
(ii) rank-profiling for LLM compression finds the effective rank distribution is unbalanced, with $V$ having substantially higher effective rank than $Q/K$ \citep{mi2025layerwise}.
At the same time, extreme quantization results highlight that if the routing path is overly constrained, accuracy can degrade sharply, indicating that $Q/K$ can become a bottleneck in aggressive regimes \citep{bai2021binarybert}.
Taken together, these observations align with the dynamic-MLP view: under a limited adaptation budget, prioritizing $V/O$ often yields a strong return, while $Q/K$ should not be compressed or adapted too aggressively.

\subsection{Gated attention: gates are most expressive on the readout side (between $V$ and $O$)}
\label{sec:app:imp_gated_vo}

We write a one-head attention-style block (no explicit sequence mixing) in our dynamic-MLP notation as
\begin{equation}
\label{eq:imp_attn_as_mlp_nomix}
h_t \;:=\; x_t\,L^{(1)}\,X^\top \in \mathbb{R}^{1\times t},
\qquad
o_t \;:=\; \sigma(h_t)\,X\,L^{(2)\top} \in \mathbb{R}^{1\times d},
\end{equation}
where $X:=X_{t:1}\in\mathbb{R}^{t\times d}$ is the lag-ordered prefix and, for standard attention factors,
$L^{(1)} = W_q W_k^\top$ and $L^{(2)\top}=W_v W_o^\top$ (per head).

A gate placed ``between $V$ and $O$'' corresponds to making the readout-side feature mixing operator input-conditioned:
\begin{equation}
\label{eq:imp_gate_between_vo}
L^{(2)\top}\ \leadsto\ L^{(2)\top}(x_t)
\;:=\;
W_v\,G(x_t)\,W_o^\top,
\qquad
G(x_t)=\mathrm{Diag}(g(x_t))\in\mathbb{R}^{d_{vo}\times d_{vo}},
\end{equation}
so the head becomes
\begin{equation}
\label{eq:imp_gated_vo_block}
o_t^{\mathrm{gate}}
\;=\;
\sigma(h_t)\,X\,L^{(2)\top}(x_t).
\end{equation}
In the measure formulation (Appendix Theorem~\ref{thm:three-stage-memory-multistage-pruning}), this gate changes only the readout map
$\beta(x_t;v)=v\,L^{(2)\top}(x_t)$ while leaving the addressing map
$\alpha(x_t;u)=x_t\,L^{(1)}u^\top$ (and thus the ReLU active-set geometry induced by $h_t$) unchanged:
\begin{equation}
\label{eq:imp_gated_vo_measure}
o_t^{\mathrm{gate}}
=
\frac{1}{\rho_t(h_t)}
\sum_{i=1}^{t}
\alpha(x_t;u_i)_+\,\underbrace{\big(v_i\,L^{(2)\top}(x_t)\big)}_{\beta(x_t;v_i)\ \text{is gated}},
\end{equation}
with $(u_i,v_i)$ instantiated from the context (and from $R^{(1)},R^{(2)}$ if present).
Thus a $V/O$-side gate is an input-conditioned modulation of the second-layer/readout weights in the dynamic-MLP view: it
changes what selected slots do in the residual stream without requiring changes to the routing boundaries (the margin-based
gate-stability Lemma~\ref{lem:ntk-gate-stability-margin}).

This interpretation matches empirical findings in gated-attention designs: the most effective placement is to gate on the $V\!\to\!O$
side, and making the gate depend on the current token $x_t$ is particularly beneficial \citep{qiu2025gated}.
Finally, note that our HyperMLP parameterization already subsumes this mechanism: the dynamic diagonal core
$M^{(2)}(x_t)$ inside
$L^{(2)\top}(x_t)=W_v\,M^{(2)}(x_t)\,W_o^\top$ is exactly a $V/O$-side gate, while HyperMLP further
adds learnable sequence-space mixing via $R^{(2)}(x_t)$.

\section{Additional Function Classes of HyperMLP and Task-Level Implications}
\label{sec:app:function-classes-and-tasks}

This appendix section summarizes (i) \tbd{which} function classes HyperMLP can represent beyond token-wise ReLU attention,
(ii) concrete examples of such functions, and (iii) what these extra classes suggest about performance on six canonical
context-based tasks.

\subsection{From token-wise polyhedral routing to context-wide warped routing}
\label{sec:app:polyhedral-vs-warped}

We fix a context length $t$ and use lag order $X:=X_{t:1}\in\mathbb{R}^{t\times d}$.
Throughout, $\mathrm{L2Norm}_t$ denotes an affine-free positive scalar rescaling across the length-$t$ vector, i.e.,
$\mathrm{L2Norm}_t(z)=z/\rho_t(z)$ for some $\rho_t(z)>0$, so it preserves sign patterns.

\paragraph{Token-wise ReLU attention (polyhedral class).}
A minimal ReLU-attention-style head with no sequence mixing can be written as
\begin{equation}
\label{eq:app_relu_attn}
o_{\mathrm{att}}(x;X)
\;:=\;
\sigma_t\!\big(x\,L\,X^\top\big)\;X\,\widetilde L^\top,
\qquad
\sigma_t(z):=\mathrm{ReLU}(\mathrm{L2Norm}_t(z)),
\end{equation}
where $L,\widetilde L\in\mathbb{R}^{d\times d}$ are constant in $x$.
For fixed $X$, the pre-activation is $h_{\mathrm{att}}(x;X)=xB(X)$ with $B(X):=LX^\top\in\mathbb{R}^{d\times t}$, hence the
ReLU boundary set is a finite hyperplane arrangement
\begin{equation}
\label{eq:app_poly_boundary}
\mathcal{B}^{\mathrm{att}}_X
:=
\bigcup_{j=1}^t\{x:\ (h_{\mathrm{att}})_j(x;X)=0\}
=
\bigcup_{j=1}^t\{x:\ x\,b_j=0\}.
\end{equation}
Equivalently, token-wise ReLU attention induces a \tbd{polyhedral} (CPWL) partition of the $x$-space for each fixed $X$.

\paragraph{HyperMLP (warped spline class).}
A one-head HyperMLP block computes
\begin{equation}
\label{eq:app_hypermlp}
h(x;X)
:=x\,L^{(1)}(x)\,X^\top\,R^{(1)}(x)\in\mathbb{R}^{1\times t},
\qquad
o_{\mathrm{hyp}}(x;X)
:=\sigma_t\!\big(h(x;X)\big)\;R^{(2)\top}(x)\,X\,L^{(2)\top}(x),
\end{equation}
where $L^{(1)}(\cdot),L^{(2)}(\cdot)$ are feature-space mixings and $R^{(1)}(\cdot),R^{(2)}(\cdot)$ are sequence-space mixings
(diagonal-plus-low-rank in our parameterization).
For fixed $X$, the boundaries are the level sets $\{x:\ h_j(x;X)=0\}$, which are generically \tbd{curved} and define a
\tbd{warped orthant partition} (Appendix Theorem~\ref{thm:warped}).

\paragraph{Two sources of extra function classes.}
The strict generalization can be decomposed into two structural mechanisms:

\begin{enumerate}
\item \textbf{(S) Slot construction / learned sequence basis.}
HyperMLP instantiates pool slots as \tbd{context-wide} linear combinations
\begin{equation}
\label{eq:app_slot_construction}
u_i(x;X)=r^{(1)}_i(x)^\top X,\qquad v_i(x;X)=r^{(2)}_i(x)^\top X,
\end{equation}
instead of tying each slot to a single token $X[i,:]$.
This enables direct computation with \tbd{latent} (span-/entity-level) memory atoms such as $c^\top X$.

\item \textbf{(G) Warped routing / curved gating geometry.}
Input-dependent mixing (especially $R^{(1)}(x)$) turns polyhedral gating boundaries into smooth, deformable level sets.
Thus HyperMLP can implement acceptance/routing regions that are naturally curved in representation space.
\end{enumerate}

\begin{proposition}[Strict expressivity gap (summary)]
\label{thm:app_strict_gap_summary}
For fixed $X$, token-wise ReLU attention induces polyhedral (hyperplane-arrangement) gating geometry in $x$ as in
\eqref{eq:app_poly_boundary}.
HyperMLP contains this class and can additionally realize warped (piecewise-smooth) spline maps whose boundary sets contain
curved hypersurfaces not representable by any finite union of hyperplanes.
\end{proposition}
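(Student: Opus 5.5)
The proof has two parts: a containment and a strictness witness.

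For containment, I would specialize the HyperMLP block \eqref{eq:app_hypermlp} to the token-wise attention head \eqref{eq:app_relu_attn}. Take $L^{(1)}(x)\equiv L$ and $L^{(2)}(x)\equiv\widetilde L$. Take $R^{(1)}(x)\equiv R^{(2)}(x)\equiv I_t$. In the DPLR parameterization this corresponds to $p^{(j)}=0$ and $A^{(j)}=0$. Feature gating is constant: either absorb constant sigmoid outputs into $W_q,W_k,W_v,W_o$ by rescaling, or take the general $L$ form of the statement. Then $o_{\mathrm{hyp}}=o_{\mathrm{att}}$ identically. By Theorem~\ref{thm:polyhedral} and Remark~\ref{rem:l2norm-breaks-cpwl}, its boundary set is the hyperplane arrangement \eqref{eq:app_poly_boundary}.

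For strictness, I would build one explicit instance whose boundary set contains a curved hypersurface. The simplest choice is $t=1$ or $t=2$ with fixed $X$ and static $L^{(1)}$. Let $R^{(1)}(x)=D+A\,S(x)B^\top$ with $r_s=1$.

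Take $t=2$ and $X=[e_1;e_2]$, with $L^{(1)}=I$, so $z(x;X)=(x_1,x_2)$. Set $D=\mathrm{diag}(1,1)$ and $A=e_2$, $B=e_1$. Then
\[
h_1(x)=x_1+x_2\,\phi(xw),\qquad h_2(x)=x_2,
\]
with $\phi$ the sigmoid and $w\in\mathbb{R}^d$ chosen, e.g., $w=e_2$. Scaling constants can be handled by letting $p$ be free.

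The zero set of $h_1$ is $\{x_1=-x_2\,\phi(x_2)\}$. This is a graph over $x_2$. Its gradient $(1,\ \phi(x_2)+x_2\phi'(x_2),0,\dots)$ never vanishes, so by Theorem~\ref{thm:warped} it is a $C^\infty$ hypersurface. The function $g(s)=s\,\phi(s)$ has $g''(0)=2\phi'(0)=1/2\neq0$, so the hypersurface is genuinely non-planar near the origin.

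The main obstacle is showing that this curved surface is not contained in any finite union of hyperplanes. I would argue by Baire or analyticity. Suppose $\Gamma=\{x_1=-g(x_2)\}\subseteq\bigcup_{k=1}^N H_k$. Then some $H_k$ contains a relatively open piece of $\Gamma$, because a finite union of closed sets covering an open arc must have one member with nonempty interior in it. On that piece $g$ coincides with an affine function, so $g''\equiv0$ on an interval. But $g$ is real-analytic, so $g$ would be affine everywhere, which contradicts $g''(0)\neq0$.

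Finally, I would note that the boundary actually matters for the map. The point is that $o_{\mathrm{hyp}}$ is nonsmooth across $\Gamma$. To ensure this, pick $R^{(2)}=I$ and $L^{(2)}$ such that $\beta(x;v_1)\neq0$, so the kink of $\mathrm{ReLU}(h_1)$ is visible in the output. Hence no token-wise ReLU attention head, whose nonsmooth locus lies in a hyperplane arrangement, can equal it.
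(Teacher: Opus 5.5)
Your proposal is correct and follows essentially the paper's route. Containment is obtained by setting $R^{(1)}=R^{(2)}=I$. Strictness uses the same $t=2$, $X=[e_1;e_2]$, rank-one DPLR instance as the paper's separation argument (Example~1 and the proof of Proposition~\ref{prop:param_budget_efficiency}), with the roles of the two slots swapped relative to the paper's $R^{(1)}(x)=I_2+e_1\phi(cx_1)e_2^\top$. Your Baire/analyticity argument that the curved zero set lies in no finite union of hyperplanes fills in a step the paper only asserts. One small gap: if $\beta(x;v_2)\neq 0$, the second slot also has a kink, on $\{x_2=0\}$. That set meets $\Gamma$ only in codimension two, so you should argue on $\Gamma\setminus\{x_2=0\}$, or choose $L^{(2)}$ with $\beta(x;v_2)=0$ as the paper does.
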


\begin{proposition}[Paying for temporal mixing by shrinking QK yields a strict budgeted expressivity gain]
\label{prop:param_budget_efficiency}
Fix $(d,t)$ and $\sigma_t(z)=\mathrm{ReLU}(\mathrm{L2Norm}_t(z))$, where $\mathrm{L2Norm}_t(z)=z/\rho_t(z)$ with $\rho_t(z)>0$
(sign patterns are preserved).

\paragraph{Baseline (token-wise ReLU attention, no sequence mixing).}
For one head, consider
\begin{equation}
\label{eq:prop:attn}
o_{\mathrm{att}}(x;X)
=
\sigma_t\!\big(x\,W_qW_k^\top X^\top\big)\;X\,W_vW_o^\top,
\qquad
W_q,W_k\in\mathbb{R}^{d\times d_{qk}},\ \ W_v,W_o\in\mathbb{R}^{d\times d_{vo}}.
\end{equation}
We count per-head parameters as
\begin{equation}
\label{eq:prop:count_attn}
P_{\mathrm{att}}(d_{qk},d_{vo})
:=2d\,d_{qk}+2d\,d_{vo}.
\end{equation}

\paragraph{HyperMLP (minimal form: same feature factors, plus temporal mixing).}
Consider the restriction of HyperMLP in which feature mixing is still low-rank but \tbd{static} (we set the diagonal cores
$M^{(1)}(x)\equiv I$ and $M^{(2)}(x)\equiv I$ for the budget argument), while sequence mixing is learned:
\begin{equation}
\label{eq:prop:hyp}
o_{\mathrm{hyp}}(x;X)
=
\sigma_t\!\big(x\,\widetilde W_q\widetilde W_k^\top X^\top R^{(1)}(x)\big)\;R^{(2)\top}(x)\,X\,\widetilde W_v\widetilde W_o^\top,
\end{equation}
with $\widetilde W_q,\widetilde W_k\in\mathbb{R}^{d\times \widetilde d_{qk}}$ and
$\widetilde W_v,\widetilde W_o\in\mathbb{R}^{d\times d_{vo}}$ (we keep $d_{vo}$ unchanged).
Each $R^{(j)}(x)\in\mathbb{R}^{t\times t}$ is diagonal-plus-low-rank:
\begin{equation}
\label{eq:prop:R}
R^{(j)}(x)=D^{(j)}+A^{(j)}S^{(j)}(x)B^{(j)\top},
\qquad
D^{(j)}=I+\mathrm{Diag}(p^{(j)}),\quad
S^{(j)}(x)=\mathrm{Diag}(\phi(xW_S^{(j)})),
\end{equation}
where $A^{(j)},B^{(j)}\in\mathbb{R}^{t\times r_s}$, $p^{(j)}\in\mathbb{R}^{t}$, and $W_S^{(j)}\in\mathbb{R}^{d\times r_s}$.
A crude per-head upper bound on the added \tbd{temporal} parameters (for both $j\in\{1,2\}$) is
\begin{equation}
\label{eq:prop:count_temp}
P_{\mathrm{temp}}(t,r_s):=
4t\,r_s+2t+2d\,r_s.
\end{equation}

Assume we shrink QK so that the saved parameters pay for temporal mixing:
\begin{equation}
\label{eq:prop:budget_cond}
2d\,(d_{qk}-\widetilde d_{qk}) \ \ge\ P_{\mathrm{temp}}(t,r_s).
\end{equation}
Then:

\begin{enumerate}
\item \textbf{(Matched-budget feasibility).}
There exists a HyperMLP head \eqref{eq:prop:hyp} with ranks $(\widetilde d_{qk},d_{vo})$ and temporal rank $r_s$ such that
\[
P_{\mathrm{att}}(d_{qk},d_{vo})
\ \ge\
P_{\mathrm{att}}(\widetilde d_{qk},d_{vo}) + P_{\mathrm{temp}}(t,r_s),
\]
i.e., it fits in the same per-head parameter budget as the baseline \eqref{eq:prop:attn} while preserving $d_{vo}$.

\item \textbf{(Strictly richer function class at matched budget: curved routing).}
If $\widetilde d_{qk}\ge 2$ and $r_s\ge 1$, then for some $X$ there exists a choice of HyperMLP parameters satisfying
\eqref{eq:prop:budget_cond} such that the map $x\mapsto o_{\mathrm{hyp}}(x;X)$ cannot be realized by any
\tbd{token-wise ReLU-attention head with static (input-independent) feature projections}, i.e., any head for which, for fixed $X$,
the score/pre-activation vector is affine in $x$.
In particular, it cannot be realized by any token-wise head of the form~\eqref{eq:prop:attn} with \tbd{fixed} (input-independent)
projection matrices $W_q,W_k,W_v,W_o$ (for any choice of ranks).
\end{enumerate}

Moreover, shrinking $d_{vo}$ instead of $d_{qk}$ imposes a hard output-subspace restriction: for any head (either
\eqref{eq:prop:attn} or \eqref{eq:prop:hyp}) the output lies in the fixed subspace $\mathrm{col}(W_o)$ (or $\mathrm{col}(\widetilde W_o)$),
so in a residual block $f(x;X)=x+o(x;X)$ every direction $u\perp \mathrm{col}(W_o)$ is invariant
($\langle f(x;X)-x,u\rangle\equiv 0$). Hence, under a fixed budget, it is more expressivity-preserving to keep $d_{vo}$ large and
pay for temporal mixing by shrinking $d_{qk}$.
\end{proposition}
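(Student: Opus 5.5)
The proof splits into three independent pieces, each resting on an earlier result.

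\textbf{Part 1: matched-budget feasibility.} This is parameter arithmetic. The baseline costs $P_{\mathrm{att}}(d_{qk},d_{vo}) = 2d\,d_{qk} + 2d\,d_{vo}$, while the HyperMLP head costs $P_{\mathrm{att}}(\widetilde d_{qk},d_{vo}) + P_{\mathrm{temp}}(t,r_s)$. Their difference is $2d(d_{qk}-\widetilde d_{qk}) - P_{\mathrm{temp}}(t,r_s)$, which is nonnegative exactly under \eqref{eq:prop:budget_cond}. One also has to note that \eqref{eq:prop:count_temp} really upper-bounds the temporal parameters. For each $j\in\{1,2\}$ the count is $p^{(j)}$ ($t$ entries), $A^{(j)},B^{(j)}$ ($2t r_s$ entries) and $W_S^{(j)}$ ($d r_s$ entries).

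\textbf{Part 2: strictly richer function class at matched budget.} I would fix $t\ge 2$ and a context $X$, and construct an explicit head whose gating boundary is a genuinely curved hypersurface.

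\emph{Choice of parameters.} Take $\widetilde W_q\widetilde W_k^\top$ of rank $2$ projecting onto coordinates $e_1,e_2$. Choose $X$ so that the base scores are $z(x;X) = (x_1, x_2, 0,\dots)$. Let $R^{(1)}(x) = D + a\,\phi(x w)\,b^\top$ with rank one ($r_s=1$), $D=I$, $b=e_1$, $a=e_2$. Then
\[
h_1(x;X) = x_1 + x_2\,\phi(x w).
\]
With $w = c\,e_1$ and $c\neq 0$, the zero set of $h_1$ is
\[
\{x_1 + x_2\,\mathrm{Sigmoid}(c x_1) = 0\}.
\]
Near a point where $x_2\neq 0$, the implicit function theorem (Theorem~\ref{thm:warped}) makes this a smooth hypersurface, $x_2 = -x_1/\mathrm{Sigmoid}(c x_1)$, which is not affine in $x_1$.

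\emph{Showing the kink survives in the output.} Choose $R^{(2)}$ and $\widetilde W_v\widetilde W_o^\top$ so that the readout of slot $1$ is a nonzero fixed vector and no other active slot cancels it. For example, make the other scores have a fixed sign on a neighborhood. Then $o_{\mathrm{hyp}}$ is non-differentiable exactly along this curved set locally.

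\emph{Comparison with the baseline.} For any token-wise head with static projections, the score vector is affine in $x$. Since scalar normalization preserves signs (Lemma~\ref{lem:gate-invariance-reachable}), its output is smooth off a finite hyperplane arrangement (Theorem~\ref{thm:polyhedral}, Remark~\ref{rem:l2norm-breaks-cpwl}). Its non-smooth locus is therefore contained in a finite union of hyperplanes. A curved $C^1$ hypersurface piece cannot be covered by finitely many hyperplanes: by Baire / local-finiteness, some open subarc would lie in a single hyperplane, contradicting non-affineness.

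The budget is matched by taking $d_{qk}$ large enough that \eqref{eq:prop:budget_cond} holds with $\widetilde d_{qk}=2$, $r_s=1$.

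\textbf{Main obstacle.} The hardest step is proving that the output, and not just $h$, is non-smooth along the whole curved arc. This requires showing that no cancellation between slots removes the kink. I would handle it by computing the jump of the directional derivative across the boundary. That jump is $\partial_\nu h_1\cdot\beta(x;v_1)/\rho$ plus smooth terms, and $\partial_\nu h_1\neq 0$ by the gradient condition. Choosing $\rho$'s smooth factor positive then keeps the jump nonzero.

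\textbf{Part 3: the closing ``moreover'' claim.} This follows from Theorem~\ref{thm:lowrank_W2_invariant}, applied with the second-layer factor ending in $W_o^\top$. Every output is a row combination of $W_o^\top$, hence lies in $\mathrm{col}(W_o)$, and Corollary~\ref{cor:W2_invariant_functionals} gives the invariant directions.
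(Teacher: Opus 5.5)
Your proposal is correct and follows the paper's proof essentially step for step: the same budget arithmetic, the same rank-one sigmoid-modulated perturbation of $R^{(1)}$ (with $W_S^{(1)}=c\,e_1$) producing a curved zero set for one score coordinate on a context like $X=[e_1^\top;e_2^\top]$, the same argument that affine scores confine the baseline's nondifferentiability to finitely many hyperplanes, and the same $\mathrm{col}(W_o)$ observation for the closing claim. The differences are cosmetic. The paper takes $t=2$ and warps slot~2 ($h_2=x_2+\phi(cx_1)x_1$). It also sets $R^{(2)}=I$ and $\widetilde W_v=\widetilde W_o=e_2$, so that only the warped slot is read out, and your cancellation worry never arises. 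Your Baire argument, for its part, justifies the claim that the curve cannot be covered by finitely many hyperplanes, which the paper simply asserts.
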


\begin{proof}
\textbf{Budget.}
The baseline uses $2dd_{qk}$ QK parameters; reducing to $\widetilde d_{qk}$ saves $2d(d_{qk}-\widetilde d_{qk})$.
Under \eqref{eq:prop:budget_cond}, this covers the temporal parameters \eqref{eq:prop:count_temp}, yielding the first claim.

\textbf{Strict separation.}
We make explicit the baseline class used in the separation argument.
Call a head \tbd{token-wise ReLU-attention with static feature projections} if, for every fixed context $X$,
its score vector is affine in $x$:
\[
h_{\mathrm{att}}(x;X)=x\,B(X),
\qquad
B(X)\in\mathbb{R}^{d\times t}\ \text{independent of }x,
\]
and the output is of the form $o_{\mathrm{att}}(x;X)=\sigma_t(h_{\mathrm{att}}(x;X))\,C(X)$ with $C(X)$ independent of $x$.
This includes the standard token-wise head~\eqref{eq:prop:attn} (and any variant that only inserts \tbd{fixed} linear transforms
inside the bilinear form), where $B(X)=W_qW_k^\top X^\top$ and $C(X)=XW_vW_o^\top$.
For this baseline class, each coordinate of $h_{\mathrm{att}}(x;X)$ is linear in $x$.
Since $\mathrm{L2Norm}_t$ is a positive scalar rescaling, the gate pattern can change only when some coordinate is $0$; thus the
nondifferentiability set of $x\mapsto o_{\mathrm{att}}(x;X)$ is contained in a finite union of affine hyperplanes
$\bigcup_{j=1}^t\{x:\ (xB(X))_j=0\}$ (Appendix Theorem~\ref{thm:polyhedral}).

We now exhibit a HyperMLP instance with a \tbd{curved} nondifferentiability set.
Take $t=2$ and any $d\ge 2$. Let $X=[e_1^\top;e_2^\top]\in\mathbb{R}^{2\times d}$.
Choose $\widetilde d_{qk}=2$ with $\widetilde W_q=\widetilde W_k=[e_1,e_2]\in\mathbb{R}^{d\times 2}$, so
$x\widetilde W_q\widetilde W_k^\top X^\top=(x_1,x_2)$.
Set $R^{(2)}(x)\equiv I_2$. Let $r_s=1$ and define
\[
R^{(1)}(x)=I_2+e_1\,\phi(c x_1)\,e_2^\top
=
\begin{bmatrix}1 & \phi(c x_1)\\ 0 & 1\end{bmatrix},
\]
which is of the form \eqref{eq:prop:R} with $A=e_1$, $B=e_2$, and $W_S^{(1)}=c e_1$.
Then
\[
h(x;X)=(x_1,x_2)\,R^{(1)}(x)=(x_1,\ x_2+\phi(c x_1)\,x_1).
\]
Finally choose $\widetilde W_v=\widetilde W_o=e_2\in\mathbb{R}^{d\times 1}$ so that
$X\widetilde W_v\widetilde W_o^\top$ has only the second row nonzero and equals $e_2^\top$.
Therefore the head output is
\[
o_{\mathrm{hyp}}(x;X)
=
\frac{1}{\rho_2(h(x;X))}\,\mathrm{ReLU}\!\big(x_2+\phi(c x_1)x_1\big)\,e_2^\top .
\]
Because $\partial_{x_2}\big(x_2+\phi(c x_1)x_1\big)\equiv 1$, this function is nondifferentiable along the entire smooth curve
$\{x:\ x_2+\phi(c x_1)x_1=0\}$, which is not contained in any finite union of affine hyperplanes.
Hence it cannot equal any head in the above token-wise static-projection (polyhedral) class: in particular, it cannot equal any
token-wise head of the form~\eqref{eq:prop:attn} with fixed projections, whose nondifferentiability set is always contained in a
finite hyperplane union.

\textbf{Why not shrink VO.}
For either \eqref{eq:prop:attn} or \eqref{eq:prop:hyp}, the output has the form $o(x;X)=z(x;X)\,W_o^\top$
(or $z\,\widetilde W_o^\top$), so $o(x;X)\in\mathrm{col}(W_o)$ for all $(x,X)$.
Thus in $f(x;X)=x+o(x;X)$ every $u\perp\mathrm{col}(W_o)$ is invariant, and reducing $d_{vo}$ shrinks this update subspace.
\end{proof}

\subsection{Examples of additional functions realized by HyperMLP}
\label{sec:app:examples}

We list three concrete families that lie outside the polyhedral/token-wise ReLU-attention class (for fixed $X$) but are
realizable by HyperMLP via (S) and/or (G).

\paragraph{Example 1: sigmoid-warped halfspaces (curved boundary in $\mathbb{R}^2$).}
Let $d=t=2$, $X=I_2=[e_1^\top;e_2^\top]$, $L^{(1)}(x)\equiv I$, and $R^{(2)}(x)\equiv I$, $L^{(2)}(x)\equiv I$.
Choose a rank-$1$ dynamic sequence mixing
\[
R^{(1)}(x)=I_2 + e_1\,\phi(c x_1)\,e_2^\top,\qquad \phi=\mathrm{Sigmoid},\ c>0,
\]
so $h(x;X)=xR^{(1)}(x)=(x_1,\ x_2+\phi(c x_1)x_1)$.
Then the function
\[
f(x):=\frac{1}{\rho_t(h(x;X))}\,\mathrm{ReLU}\!\big(x_2+\phi(c x_1)x_1\big)
\]
has a curved boundary $\{x:\ x_2=-\phi(c x_1)x_1\}$ and therefore cannot arise from any finite hyperplane arrangement.

\paragraph{Example 2: moving-normal halfspaces (query-dependent address direction).}
Fix $d\ge 2$ and $t=2$ with context rows $X[1,:]=a^\top$ and $X[2,:]=b^\top$ (linearly independent).
Construct a slot
\[
u_2(x;X)=b+\phi(w^\top x)\,a
\]
using $R^{(1)}(x)=I_2 + e_1\,\phi(w^\top x)\,e_2^\top$ (so that $u_2$ is a context-wide mixture).
Then $\alpha(x;u_2)=x b^\top + \phi(w^\top x)\,x a^\top$, and the gating boundary
$\{x:\ \alpha(x;u_2)=0\}$ generically has nonconstant normal direction (hence is not polyhedral).

\paragraph{Example 3: multi-knot warped splines via $r_s>1$.}
With $R^{(1)}(x)=D+A\,S(x)\,B^\top$ and $S(x)=\mathrm{Diag}(\phi(xW_S))$, one can realize pre-activations containing sums of
sigmoid-modulated linear forms, e.g.
\[
g(x;X)=x\,b^\top + \sum_{k=1}^{r_s}\phi(w_k^\top x)\,x\,a_k^\top,
\]
yielding warped boundaries $\{x:\ g(x;X)=0\}$ with multiple bends/inflections as $r_s$ grows.
This induces a spline geometry qualitatively different from a hyperplane arrangement.

\subsection{Implications for canonical context tasks}
\label{sec:app:task-implications}

We now connect the extra function classes to six tasks frequently used to probe ``in-context memory''.
The guiding principle is an approximation argument: for a task distribution $\mathcal{D}$ over $(x,X,y)$,
a strict function-class superset can achieve strictly smaller population risk whenever the Bayes predictor is closer to the
larger class. Here, HyperMLP's advantage is most pronounced when the target requires either
\tbd{latent slot construction} (S) or \tbd{warped routing} (G).

\begin{table}[t]
\centering
\begin{tabular}{lccc}
\toprule
Task & Expected gain & Mechanism & Structural reason \\
\midrule
Compression & high & (S) & needs low-rank summaries $c^\top X$ \\
Noisy in-context recall & high & (S)+margin & suppress spurious active slots; improve SNR \\
Fuzzy in-context recall & high & (G) & curved acceptance/routing regions in $x$ \\
Selective copying & med--high & (S) & learn sequence operators (shift/offset) in $R^{(2)}$ \\
Clean in-context recall & low & -- & token-wise basis already aligned \\
Memorizing training data & low & -- & mostly parametric memory (not context readout) \\
\bottomrule
\end{tabular}
\caption{Qualitative predictions under matched parameter budgets, based on the expressivity mechanisms (S) and (G).}
\label{tab:app_task_predictions}
\end{table}

\subsubsection{Noisy in-context recall: spurious activation scales with context length}
\label{sec:app:noisy_recall}

A minimal noisy-recall model is: among $t$ slots, only a small subset $S$ contains useful signal and the rest are noise.
In token-wise ReLU attention, each noise token is an independent slot and can be (spuriously) activated.

\begin{proposition}[Token-wise spurious activation under symmetric noise]
\label{prop:app_spurious_activation}
Consider token-wise ReLU attention \eqref{eq:app_relu_attn} and fix $(x,X)$.
Let $N\subset[t]$ be indices of ``noise'' tokens such that $\alpha(x;X[i,:])$ is symmetric about $0$ for each $i\in N$
(e.g.\ $x^\top X[i,:]\sim \mathcal{N}(0,\sigma^2)$ conditionally on $x$).
Then the expected number of activated noise slots satisfies
\[
\mathbb{E}\Big[\#\{i\in N:\ \alpha(x;X[i,:])>0\}\Big] = \frac{|N|}{2}.
\]
Equivalently, the activated-pool TV mass contributed by noise grows linearly in $|N|$:
\[
\mathbb{E}\big[\|\mu^{\mathrm{act}}_{X,x}\|_{\mathrm{TV}} \ \text{from noise}\big]=|N|/2.
\]
\end{proposition}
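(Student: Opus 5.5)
The plan is to use linearity of expectation over indicator variables, after first reducing the activation event to a sign event on the unnormalized scores. For each $i\in N$ write $\alpha_i:=\alpha(x;X[i,:])$. This is the $i$-th coordinate of the token-wise pre-activation $h_{\mathrm{att}}(x;X)=xLX^\top$, since with no sequence mixing $u_i=X[i,:]$. By Lemma~\ref{lem:gate-invariance-reachable}, $\mathrm{L2Norm}_t$ is a positive scalar rescaling, so $\mathbf{1}\{\sigma_t(h)_i>0\}=\mathbf{1}\{\alpha_i>0\}$. Hence the number of activated noise slots is exactly $\sum_{i\in N}\mathbf{1}\{\alpha_i>0\}$, and normalization plays no role. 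This is where the coupling of all coordinates through $\rho_t$ is removed, so no independence across $i$ is needed.

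Next I take expectations termwise:
\[
\mathbb{E}\Big[\sum_{i\in N}\mathbf{1}\{\alpha_i>0\}\Big]=\sum_{i\in N}\Pr(\alpha_i>0).
\]
For a fixed $i$, symmetry about $0$ means $\alpha_i\overset{d}{=}-\alpha_i$, so $\Pr(\alpha_i>0)=\Pr(\alpha_i<0)$. Together with $\Pr(\alpha_i>0)+\Pr(\alpha_i<0)+\Pr(\alpha_i=0)=1$, this gives $\Pr(\alpha_i>0)=\tfrac12\big(1-\Pr(\alpha_i=0)\big)$.

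The one genuine obstacle is the atom at zero. A general symmetric law may put mass at $0$, and then the identity only holds as the inequality $\le |N|/2$. I would make explicit that symmetry is meant together with $\Pr(\alpha_i=0)=0$, i.e.\ a continuous symmetric law, which is the case in the stated Gaussian example $\mathcal{N}(0,\sigma^2)$ with $\sigma>0$. Under that reading $\Pr(\alpha_i>0)=1/2$ for each $i$, and summing gives $|N|/2$. If the authors intend arbitrary symmetric laws, I would state the conclusion as $\le|N|/2$, with equality under no atom at zero.

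For the TV reformulation I use Remark~\ref{rem:mech-tv-atomic}. For the atomic restricted measure $\mu^{\mathrm{act}}_{X,x}=g_x\mu^{\mathrm{pool}}_{X,x}$, the total variation equals the number of activated atoms counted with multiplicity. The restriction of $\mu^{\mathrm{act}}_{X,x}$ to atoms indexed by $N$ therefore has TV mass $\sum_{i\in N}\mathbf{1}\{\alpha_i>0\}$. Its expectation is the same $|N|/2$, which grows linearly in $|N|$.
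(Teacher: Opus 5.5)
Your proof is correct and follows the paper's argument: linearity of expectation over the indicators $\mathbf{1}\{\alpha_i>0\}$ (no independence needed), $\Pr(\alpha_i>0)=1/2$ from symmetry, and Remark~\ref{rem:mech-tv-atomic} for the TV reformulation. Your observation that symmetry alone only gives $\Pr(\alpha_i>0)=\tfrac12\big(1-\Pr(\alpha_i=0)\big)$ is a valid sharpening: equality needs a law with no atom at $0$ (as in the Gaussian example with $\sigma>0$), which the paper's one-line proof assumes without saying so.
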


\begin{proof}
For symmetric $\alpha$, $\mathbb{P}(\alpha>0)=1/2$ and linearity of expectation gives the claim.
The TV identity follows from the atomic-measure interpretation (Appendix Remark~\ref{rem:mech-tv-atomic}).
\end{proof}

\paragraph{Why HyperMLP helps.}
HyperMLP can reduce spurious activations by \tbd{changing the pool before gating}:
sequence mixing constructs context-wide slots (S) so that many noisy tokens can be aggregated into a few ``dummy'' slots with
small variance, while signal is concentrated into a small number of high-margin slots.
Concretely, averaging $m$ independent noise tokens reduces the address variance by $1/m$, increasing the gate margin.
Under a margin, the activated set is stable (Appendix Lemma~\ref{lem:ntk-gate-stability-margin}), so noisy recall benefits
directly from (S) via improved SNR and fewer spurious activated atoms.

\subsubsection{Compression: targets that depend on low-rank context summaries}
\label{sec:app:compression}

Many compression-style objectives can be idealized as depending on a low-dimensional statistic $s(X)\in\mathbb{R}^k$ with $k\ll t$,
e.g.\ $s(X)=C^\top X$ for some $C\in\mathbb{R}^{t\times k}$.

\paragraph{Why HyperMLP helps.}
HyperMLP can implement such summaries directly through $R^{(2)\top}(x)X$:
choosing $R^{(2)}$ so that a small set of rows of $V=R^{(2)\top}(x)X$ equals $C^\top X$ yields $k$ ``summary slots'' that can then
be selectively gated/read out.
Token-wise ReLU attention, by contrast, instantiates one slot per token; it can represent low-rank summaries only indirectly
(by distributing the computation across many heads/layers).
Under matched parameter budgets, the ability to instantiate and route through a small number of summary atoms
corresponds to a smaller approximation error for compression objectives.

\subsubsection{Fuzzy in-context recall: curved acceptance regions and warped routing}
\label{sec:app:fuzzy_recall}

Fuzzy recall often means the model should retrieve when the query lies in a \tbd{neighborhood} of a key or a cluster,
rather than requiring exact matching.
A stylized form is thresholded similarity:
\[
\text{retrieve key }k \text{ if } g(x,k)\ge 0,
\]
where the boundary $\{x:\ g(x,k)=0\}$ is typically curved (e.g.\ spheres/ellipsoids for distance thresholds).

\paragraph{Why HyperMLP helps.}
For fixed $X$, token-wise ReLU attention yields polyhedral routing boundaries \eqref{eq:app_poly_boundary}.
Approximating curved boundaries by a hyperplane arrangement generally requires many pieces (many gates/regions),
whereas HyperMLP can realize curved boundaries directly via input-dependent mixing (G)
(Appendix Theorem~\ref{thm:warped} and Examples~1--3).
Thus fuzzy recall is a canonical setting where ``warped orthant partitions'' are a better inductive match than polyhedral ones.

\subsubsection{Selective copying: sequence operators inside the readout path}
\label{sec:app:selective_copying}

Selective copying requires producing (parts of) the context, but with structured selection rules.
A particularly revealing subcase is \tbd{copy-with-offset}:
select a position $i^\star$ but output the token at $i^\star+\Delta$ (in lag coordinates, a fixed shift).

\paragraph{Why HyperMLP helps.}
HyperMLP can implement shift/offset behavior directly by choosing $R^{(2)\top}$ to be a (possibly learned) shift operator $J_\Delta$:
\[
V = R^{(2)\top}(x)X \equiv J_\Delta X
\quad\Rightarrow\quad
\beta(x;v_i)\ \text{reads from } X[i+\Delta,:].
\]
The gate can still be driven by $u_i$ (possibly different from $v_i$), so one head can implement ``match here, copy there''.
This is awkward for token-wise attention because it ties each gating coordinate and readout coordinate to the same position basis;
emulating an offset typically requires additional heads/layers or explicit positional feature engineering.
Hence selective copying benefits when the selection rule involves nontrivial sequence transforms.

\subsubsection{Clean in-context recall: token-wise routing already matches the task}
\label{sec:app:clean_recall}

In clean/exact in-context recall, the correct memory item is typically a \tbd{single} token/span that should dominate the gate
(e.g.\ a unique match with large positive margin).
This aligns well with the token-wise basis of ReLU attention: a single coordinate can activate and directly read out the
corresponding value.
Therefore HyperMLP's extra capacity (S,G) is not required for representation and may yield only modest gains in this regime,
absent additional confounders (e.g.\ span aggregation, noise, or fuzzy matching).

\subsubsection{Memorizing training data: primarily parametric, not context-readout limited}
\label{sec:app:parametric_mem}

Memorizing training data is often dominated by \tbd{parametric memory} (information stored in $\theta$) rather than the
context-read channel $\Omega\to\mu^{\mathrm{pool}}\to\mu^{\mathrm{act}}$.
While HyperMLP changes the hypothesis class, the expressivity gap analyzed here is most directly about \tbd{how context is
instantiated and routed}.
Thus, without additional assumptions on optimization or inductive bias, the present analysis does not predict a strong and
universal advantage for parametric memorization itself; any differences are more likely to arise indirectly (e.g.\ through
regularization, optimization dynamics, or interactions with depth).

\paragraph{Takeaway.}
The math suggests the largest gains should appear when the task requires either
(i) constructing a small number of meaningful \tbd{context-wide} slots from many tokens (compression, noisy recall), or
(ii) learning \tbd{curved} acceptance/routing regions in query space (fuzzy recall).
Selective copying can also benefit when it requires explicit sequence operators (shifts/offsets) in the readout path.
In contrast, clean recall is already well matched by token-wise routing, and parametric memorization is not primarily constrained
by the context-read mechanism studied here.

\section{Efficient Implementation}
\label{sec:impl}

HyperMLP is designed to be an expressive \emph{superset} of quadratic attention while preserving efficient
autoregressive inference and avoiding any explicit materialization of dense $t\times t$ sequence-mixing matrices.
Our implementation follows three principles:
(i) \textbf{lag-ordered, length-agnostic parameter tensors} so that extending context corresponds to taking longer
prefix-slices without reindexing;
(ii) \textbf{offset-aligned (skewed) layouts} to make causal prefixes contiguous in memory when processing row-blocks;
and (iii) \textbf{DPLR mixing via two low-rank contractions} with fused epilogues to minimize memory traffic.

\subsection{Length-agnostic DPLR parameters in lag order}
\label{sec:impl:length_agnostic}

Recall the DPLR (diagonal-plus-low-rank) parameterization used for temporal mixing:
\[
R^{(j)}_t(x)=D^{(j)}_{1:t}+A^{(j)}_{1:t}\,\mathrm{Diag}(s^{(j)}(x))\,B^{(j)\top}_{1:t},
\qquad s^{(j)}(x)\in\mathbb{R}^{r_s}.
\]
To support variable context lengths without reparameterizing across $t$, we store \emph{maximum-length} tensors
$A^{(j)},B^{(j)}\in\mathbb{R}^{L\times r_s}$ and diagonal parameters $d^{(j)}\in\mathbb{R}^{L}$
(for all heads, in per-head form), and at runtime we slice correspondingly to the current length $T$:
\[
s=L-T,\quad
A_{T:1}\leftarrow A[s\!:\!],\quad
B_{T:1}\leftarrow B[s\!:\!],\quad
d_{T:1}\leftarrow d[s\!:\!].
\]
This ``slicing in lag order'' implements the canonical prefix-extension rule and directly matches the
extension-consistency construction (Appendix Lemma~\ref{lem:dplr-extension-consistency}).
As a result, long-context training and evaluation do not require any special handling beyond slicing.

\subsection{Offset-aligned (skew) layout for row-blocked causal computation}
\label{sec:impl:offset_layout}

In full-sequence training (teacher forcing), we process the score-like matrices in \emph{row blocks} of size $M$
(similar in spirit to FlashAttention-style tiling) to control peak memory.
Let $X_{\mathrm{left}}\in\mathbb{R}^{B\times H\times M\times T}$ denote a block of $M$ query rows against $T$ key columns
in the \emph{left layout}, where the causal mask corresponds to a lower-triangular region in the $(\text{row},\text{col})$ plane.
Many operations we need (temporal DPLR right-mixing, normalization across the length-$T$ axis, and elementwise gating)
are most efficient when each row's \emph{valid causal prefix} is contiguous.

We therefore use an \emph{offset-aligned (skew) layout} transform:
for a block starting at global row index $\texttt{row\_start}$ (0-based), define $r=\texttt{row\_start}+m$ and
\begin{equation}
\label{eq:impl:skew_def}
X_{\mathrm{off}}[m,\tau]
=
\begin{cases}
X_{\mathrm{left}}[m,c], & c=\tau+r-(T-1)\ge 0,\\
0, & \text{otherwise}.
\end{cases}
\end{equation}
In this offset layout, the diagonal element $c=r$ is aligned to $\tau=T-1$ for \emph{all} rows, and each row's causal prefix
becomes a right-aligned contiguous segment (length $r+1$), with left padding filled by zeros.
This makes causal masking a simple predicate on $(m,\tau)$ that can be applied (or re-applied) cheaply after mixing.

\subsection{DPLR right-mixing without materializing $R$}
\label{sec:impl:dplr_mix}

A key efficiency point is that we never form $R^{(j)}_t(x)\in\mathbb{R}^{t\times t}$ explicitly.
Instead, we use the standard vector--DPLR identity (Appendix Lemma~\ref{lem:app_cost_vec_dplr}):
for any row-batched tensor $Y\in\mathbb{R}^{B\times H\times M\times T}$ in offset layout and
$R=D+A\,\mathrm{Diag}(s)\,B^\top$,
\[
Y R
=
Y\odot d
+
\big((Y A)\odot s^\top\big)\,B^\top,
\]
where $A\in\mathbb{R}^{T\times r_s}$, $B^\top\in\mathbb{R}^{r_s\times T}$, $d\in\mathbb{R}^{T}$, and $s\in\mathbb{R}^{r_s}$
(typically broadcastable over $(B,H,M)$).
This reduces mixing to two low-rank contractions along the sequence axis:
\[
H := Y A \in \mathbb{R}^{B\times H\times M\times r_s},
\qquad
Y_{\mathrm{lr}} := (H\odot s^\top) B^\top \in \mathbb{R}^{B\times H\times M\times T},
\]
followed by an elementwise diagonal term $Y\odot d$ (and, when desired, a shortcut/residual addition).
The arithmetic cost per block is $O(BHMT\,r_s)$ and the working-set overhead is $O(BHM\,r_s)$, matching the analysis in
Appendix~\ref{sec:app:cost}.

\subsection{Kernel fusion: packing, epilogues, and masked residual mixing}
\label{sec:impl:fusion}

The dominant practical bottleneck for DPLR mixing is \emph{memory bandwidth}, not FLOPs.
To reduce memory traffic, we fuse ``cheap'' post-ops into a single epilogue kernel.
Concretely, after computing the low-rank expansion $Y_{\mathrm{lr}}$, we apply:
(i) an optional shortcut ($Y_{\mathrm{lr}}\leftarrow Y_{\mathrm{lr}}+Y$),
(ii) the diagonal term ($Y_{\mathrm{lr}}\leftarrow Y_{\mathrm{lr}}+Y\odot d$), and
(iii) an optional causal mask in offset layout (zero out the left-padded region),
all in one Triton kernel (\texttt{lr\_epilogue\_fwd\_kernel}).
This avoids extra reads/writes of intermediate tensors and keeps the hot path bandwidth-efficient.

We wrap the overall operation in a custom autograd Function
(\texttt{FusedLowRankResidualMixFn}) so the backward can reuse the same structure
(and, for the layout transform, the backward is simply the inverse transform).
When Triton is unavailable, or when the input dtype is not supported, we fall back to a reference PyTorch implementation,
ensuring portability without changing model semantics.

\paragraph{Autoregressive inference.}
At inference step $t$, HyperMLP uses the same caching philosophy as attention:
the only step-dependent quantities needed for the DPLR mixes are the gate vectors $s^{(1)}_t,s^{(2)}_t\in\mathbb{R}^{r_s}$,
so the extra per-head state is $O(r_s)$, and the per-step additional compute is $O(t\,r_s)$ (Appendix
Proposition~\ref{prop:app_cost_infer}).
Thus HyperMLP preserves the same asymptotic inference scaling as quadratic attention, up to a low-order $r_s$ factor.

\paragraph{Summary.}
Overall, the implementation realizes the intended design goals:
(i) no dense $t\times t$ sequence-mixing matrices are instantiated;
(ii) DPLR mixing is implemented via two low-rank contractions plus fused epilogues; and
(iii) offset-aligned layouts make causal prefixes contiguous and masking cheap when processing row blocks.

\section{Additional Experimental Results}

\textbf{Computational Cost.}
\label{app:comp_cost}
As shown in Table \ref{tab:overall-efficiency}, we evaluate the training and inference speed of HyperGLU on an NVIDIA L40S GPU. To minimize confounding factors, we adopt a GPT-small configuration with 12 attention heads and a hidden dimension of 768. All experiments are conducted on randomly generated data with a context length of 1K and a batch size of 8. Specifically, we replace the attention module in a GPT-2–style Transformer (hidden size 768, 12 heads, 12 layers, with the number of heads of HyperGLU set to 2) with different attention variants and evaluate all models at a sequence length of 1024. We compare against RWKV7~\citep{peng2025rwkv}, HGRN2~\citep{qin2024hgrn}, a naive PyTorch implementation of softmax attention, FlashAttention~\citep{dao2022flashattention}, Gated Linear Attention~\citep{yang2024gated}, Mamba~\citep{gu2023mamba}, Gated Slot Attention~\citep{zhang2024gated}, and PaTH attention~\citep{yang2025path}. 

With PyTorch compilation enabled for all methods, our block implementation of HyperGLU achieves runtime performance comparable to recent efficient attention baselines. In particular, HyperGLU attains a forward latency of 39.5 ms and a training latency of 119.0 ms at sequence length 1024, which is on par with expressive variants such as Gated Slot Attention and PaTH Attention. These results indicate that despite its richer expressivity, HyperGLU can be efficiently realized with modern compiler optimizations, without incurring prohibitive overhead compared to recent attention alternatives.

\begin{table*}[t]
\centering
\setlength{\tabcolsep}{2pt}
\renewcommand{\arraystretch}{0.9}
\caption{Overall efficiency comparison (SeqLen=1024, Dim=768, Batch=8). 
FLOPs are reported in T ($10^{12}$) and FLOPs per token in M ($10^{6}$), where lower is better. 
Latency is measured in seconds (lower is better), throughput in K tokens/s (higher is better), and peak memory in GB (lower is better). All of the models use the default GPT-2-small setting (hidden size 768, 12 heads, 12 layers) while we fix the number of heads of HyperGLU to $2$.}
\label{tab:overall-efficiency}
\scriptsize
\resizebox{\textwidth}{!}{%
\begin{tabular}{lcccccccccc}
\toprule
\textbf{Model} &
\shortstack{\textbf{Fwd}\\\textbf{FLOPs (T)}} &
\shortstack{\textbf{Train}\\\textbf{FLOPs (T)}} &
\shortstack{\textbf{Fwd}\\\textbf{FLOPs/token (M)}} &
\shortstack{\textbf{Train}\\\textbf{FLOPs/token (M)}} &
\shortstack{\textbf{Fwd}\\\textbf{Lat. (s)}} &
\shortstack{\textbf{Train}\\\textbf{Lat. (s)}} &
\shortstack{\textbf{Fwd}\\\textbf{TPS (K)}} &
\shortstack{\textbf{Train}\\\textbf{TPS (K)}} &
\shortstack{\textbf{Fwd}\\\textbf{Mem (GB)}} &
\shortstack{\textbf{Train}\\\textbf{Mem (GB)}} \\
\midrule
HyperGLU & 1.50 & 4.50 & 183.27 & 549.76 & 0.0395 & 0.1190 & 207.4 & 68.8 & 8.40 & 8.54 \\
RWKV7 & 1.47 & 4.41 & 179.35 & 538.09 & 0.0532 & 0.1711 & 154.0 & 47.9 & 10.59 & 11.14 \\
LinAttn & 1.39 & 4.18 & 169.94 & 509.75 & 0.0264 & 0.0775 & 310.0 & 105.7 & 7.00 & 7.11 \\
HGRN2 & 1.39 & 4.18 & 169.89 & 509.65 & 0.0295 & 0.0880 & 277.4 & 93.1 & 7.87 & 7.98 \\
SM(Naive) & 1.70 & 5.11 & 207.84 & 623.41 & 0.0552 & 0.2689 & 148.5 & 30.5 & 11.18 & 12.10 \\
SM(Flash) & 1.39 & 4.18 & 169.89 & 509.65 & 0.0300 & 0.0880 & 272.9 & 93.1 & 6.66 & 6.76 \\
GLA & 1.70 & 5.11 & 208.03 & 623.90 & 0.0929 & 0.2945 & 88.2 & 27.8 & 16.80 & 18.06 \\
Mamba & 1.28 & 3.84 & 156.33 & 468.97 & 0.0306 & 0.0899 & 267.6 & 91.1 & 7.51 & 7.61 \\
\makecell[l]{GatedSlot\\Attention} & 1.51 & 4.52 & 184.04 & 552.11 & 0.0378 & 0.1244 & 216.9 & 65.8 & 9.77 & 9.88 \\
\makecell[l]{PaTH\\Attention} & 1.40 & 4.21 & 171.29 & 513.84 & 0.0349 & 0.1280 & 234.7 & 64.0 & 7.38 & 7.59 \\
\bottomrule
\end{tabular}%
}
\end{table*}

\begin{figure}[t]
\begin{center}
\centerline{\includegraphics[width=\textwidth]{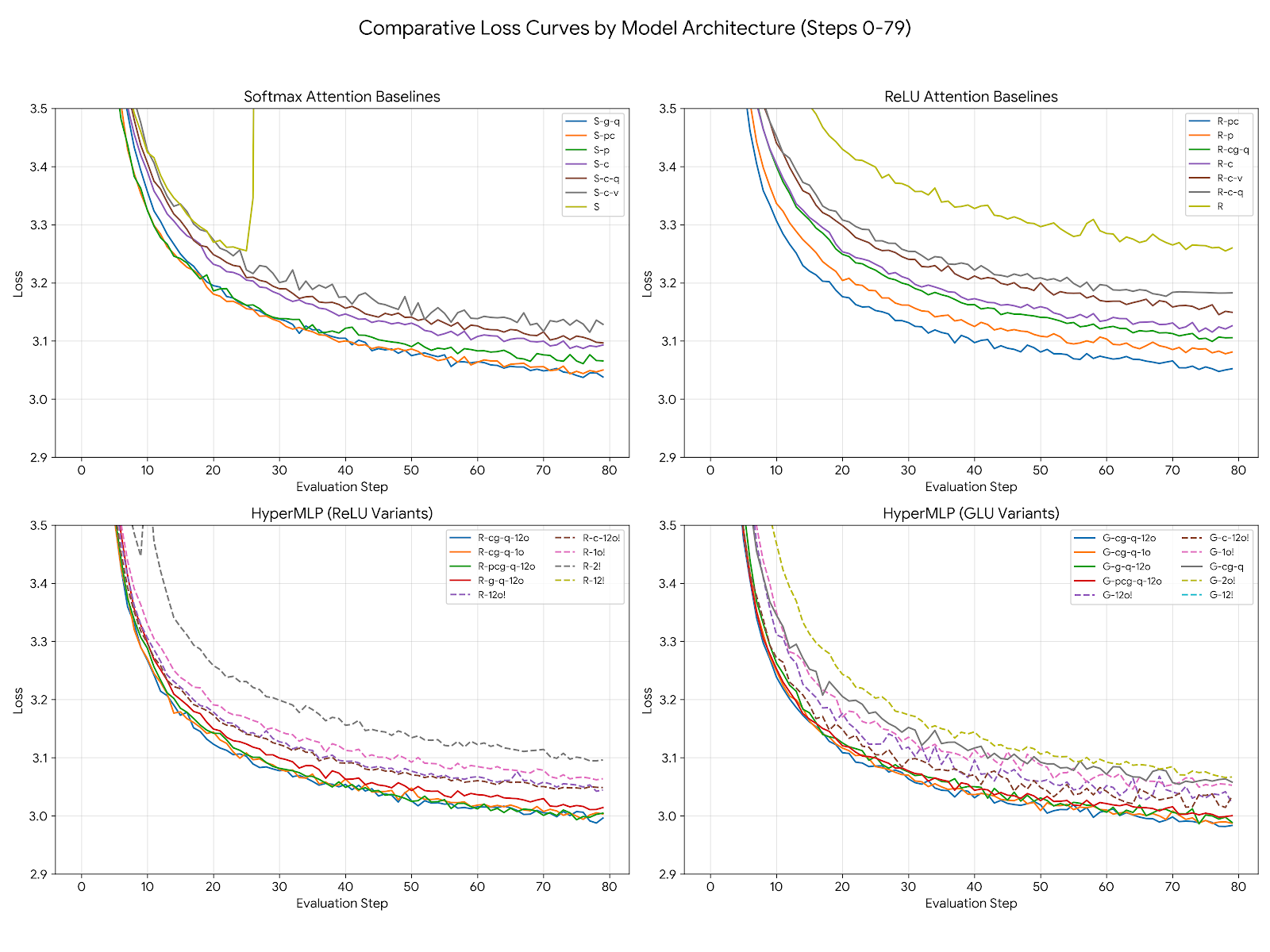}}
\caption{The detailed training loss of the NanoGPT experiment settings. The ``training step'' corresponds to the evaluation steps with each of them contain 500 steps of training iterations.}
\label{fig:loss}
\end{center}
\vskip -0.4in
\end{figure}

\section{Additional Implementation Details}\label{app:implementation}

\paragraph{Language Modeling Baselines.}
We include several strong recent baselines: DeltaNet \citep{yang2024parallelizing}, Gated Slot Attention (GSA) \citep{zhang2024gated}, RetNet \citep{sun2023retentivenetworksuccessortransformer}, HGRN \citep{qin2023hierarchically}, HGRN2 \citep{qin2024hgrn}, Gated Linear Attention \citep{yang2024gated}, Differential Transformer (DiffTrans) \citep{ye2025differential}, and GatedDeltaNet \citep{yang2024gateddeltanet}.

Our full experimental setup is provided in the accompanying code repository. All language modeling experiments on FineWeb-Edu and OpenWebText2, including both training and inference, were run on 8× NVIDIA H100 GPUs. The MAD benchmark tasks were trained on 8× NVIDIA L40S GPUs. All experiments used mixed-precision training and inference with bfloat16.

For each task, we replaced the standard Transformer block with a HyperMLP/HyperGLU-based Transformer block by substituting the attention layer with the proposed sequence modeling block, while keeping all other components and hyperparameters unchanged to ensure a controlled and consistent experimental setting.

Unless otherwise specified, all linear projections were initialized from a zero-mean normal distribution with standard deviation 0.02. All biases and embeddings, when present, were initialized to zero.

\textbf{Default Settings. } We instantiate HyperGLU with default model width $d$ for different experimental settings and a fixed $n_{\mathrm{head}}=2$. We use the default feature ranks on the readout sides, denoted by $d_{vo}=d/n_{\mathrm{head}}$, with compressed routing side rank $d_{qk}=d/4n_{\mathrm{head}} \to d_{qk}=d/8n_{\mathrm{head}}$ to match the parameter budgets. If specified in the experiment setting, we employ a depthwise causal convolution with kernel size $k_{\mathrm{conv}}=4$ for lightweight local mixing. 

For gating, we use $\phi=\mathrm{Sigmoid}$ in the input-conditioned diagonal cores and set the L2-normalization stabilizer to $\varepsilon=10^{-12}$ in $\rho_t(z)$ (cf.\ $\mathrm{L2Norm}_t(z)=z/\rho_t(z)$). HyperGLU computes two first-layer score vectors $h_t^{\mathrm{gate}}$ and $h_t^{\mathrm{scale}}$ (equivalently implemented via $2n_{\mathrm{head}}$ QK branches), following the formulation in Eq.~\eqref{eq:hyperglu}. 

Sequence-space mixing is implemented by two DPLR residual operators $R^{(1)}_t(x_t)$ and $R^{(2)}_t(x_t)$ with low-rank dimension $r_s=16$, applied with causal masking. We includes learnable diagonal terms $D^{(j)}$, and both are used in shortcut form consistent with $R^{(j)}=I+\cdots$. For efficiency, we use chunked execution with chunk size $M=128$ to $M=512$ for different contexts.

\section{Additional Dataset Description}\label{app:dataset}

\paragraph{Language Model Evaluation Setup.}
Our evaluation follows the Open LLM Leaderboard (OLL) protocol and is supplemented with an additional set of widely adopted general-capability benchmarks. The OLL core suite includes MMLU-Pro (5-shot, accuracy), GPQA (0-shot, normalized accuracy), BBH (3-shot, normalized accuracy), MATH (4-shot, exact match), and MuSR (0-shot, normalized accuracy), together with IFEval for assessing instruction-following ability. For IFEval, we report strict pass rates at both the instruction level and the prompt level \citep{mmlu-pro,gpqa,bbh,math,musr,ifeval}. Following standard OLL practice, we employ the normalized accuracy metric $acc_n$ for multiple-choice tasks, which subtracts the random-guess baseline and rescales scores to enable fair comparison across tasks \citep{oll-normalization}.

To further broaden evaluation coverage, we additionally consider commonly used general-ability benchmarks, including ARC (Easy and Challenge), HellaSwag, PIQA, BoolQ, WinoGrande, COPA, OpenBookQA, and SciQ. We report either accuracy or normalized accuracy according to standard conventions, and unless otherwise specified, all evaluations are conducted in the 0-shot setting \citep{arc,hellaswag,piqa,boolq,winogrande,copa,sciq}. All experiments are carried out using the lm-evaluation-harness framework \citep{gao2021framework}.

\paragraph{MAD.}
We assess our proposed architecture using the Mechanistic Architecture Design (MAD) framework, which is a methodology for compute-efficient evaluation of deep learning models \cite{poli2024mechanisticdesignscalinghybrid}. MAD consists of a collection of capability-oriented synthetic benchmarks, including in-context recall, fuzzy recall, selective copying, and compression, that are designed to probe fundamental sequence modeling behaviors. The framework has been validated on more than 500 language models ranging from 70M to 7B parameters and demonstrates a strong correlation between performance on these tasks and compute-optimal perplexity at scale. By using MAD as a proxy for large-scale behavior, we are able to identify architectural advantages without incurring the substantial computational cost of full-scale training.

\paragraph{OpenWebText2.}
OpenWebText2\footnote{\url{https://openwebtext2.readthedocs.io/en/latest}} is a large-scale, cleaned, and deduplicated web-text corpus intended as an open reproduction of OpenAI's WebText dataset. The dataset is constructed by extracting URLs from Reddit submissions with a combined score greater than 3, followed by web scraping, filtering, and deduplication at both the URL and document levels using MinHash-LSH to remove low-quality or redundant content. The resulting release contains 17,103,059 documents, corresponding to approximately 65.86 GB of uncompressed text, and spans Reddit submissions from 2005 through April 2020. Due to its diversity and temporal coverage, OpenWebText2 serves as a suitable pretraining corpus for large language models. We implement training on this dataset using the codebase and execution environment provided by nanoGPT\footnote{\url{https://github.com/karpathy/nanoGPT}}.

\paragraph{FineWeb-Edu.} FineWeb-Edu is a large-scale English web text dataset designed for pretraining large language models, with a focus on high-quality educational content. It is curated from the FineWeb corpus by applying an educational-quality classifier to Common Crawl data, retaining documents that are more informative, structured, and knowledge-dense. The dataset contains approximately 1.3 trillion tokens and spans a wide range of academic, technical, and instructional domains, making it particularly suitable for improving reasoning and knowledge-intensive capabilities in language models. \footnote{\url{https://huggingface.co/datasets/HuggingFaceFW/fineweb-edu}}

\section{Statement of LLM Usage}
Large language models (LLMs) were used in this work as supportive tools, primarily for writing- and presentation-related tasks. These include grammar checking, wording refinement, length reduction, structural reorganization, text and formula formatting, as well as the generation of table templates and the formatting of theoretical derivations.

LLMs were also used to help identify existing methods and relevant references, with the goal of avoiding duplication and over-claiming. They were not used to perform literature reviews or to replace the authors' understanding of prior work. All cited papers were read and assessed by the authors themselves, rather than solely relying on LLM-generated summaries.

During the experimental phase, LLMs assisted in drafting and refining experimental code and scripts, particularly for debugging and improving implementation efficiency.

LLMs were not used to formulate research questions, generate core ideas, design methodologies, develop theoretical insights, or create algorithms or model architectures.

Since this paper adopts notation and conceptual conventions that differ from much of the prior attention literature (e.g., we do not emphasize queries/keys/values as semantic primitives but treat them as intermediate computations), an LLM’s pretrained knowledge may not fully align with these choices. Readers who use LLMs to assist with reading or interpretation are therefore encouraged to carefully verify that any LLM-generated explanations are consistent with the definitions and arguments presented in the paper.


\end{document}